\pgfplotsset{compat=newest} 
\pgfplotsset{plot coordinates/math parser=false} 
\pgfplotsset{ 
  legend style =
  {font=\small\sffamily},
  label style = {font=\footnotesize\sffamily},
	tick label style = {font=\footnotesize}
}
\newlength\figureheight 
\newlength\figurewidth 
\newtheorem{theorem}{Theorem}
\newtheorem{proposition}{Proposition}
\newtheorem{corollary}{Corollary}
\newcommand{\C}{\mathbb{C}}
\newcommand{\mbf}{\bm}
\newcommand{\bs}{\boldsymbol}
\DeclareMathOperator*{\argmin}{arg\,min}
\title{Off-the-Grid Recovery of Piecewise Constant Images\\ from Few Fourier Samples\thanks{This work is supported by grants NSF CCF-0844812, NSF CCF-1116067, NIH 1R21HL109710-01A1, ACS RSG-11-267-01-CCE, ONR-N000141310202, NIH 1R01EB019961-01A1.}}
\author{Greg Ongie\thanks{Department of Mathematics, University of Iowa, Iowa City, Iowa.
(\emph{gregory-ongie@uiowa.edu}).}~~and Mathews Jacob\thanks{Department of Electrical and Computer Engineering, University of Iowa, Iowa City, Iowa.
(\emph{mathews-jacob@uiowa.edu})}}
\begin{document}

\maketitle

\begin{abstract}
We introduce a method to recover a continuous domain representation of a piecewise constant two-dimensional image from few low-pass Fourier samples. Assuming the edge set of the image is localized to the zero set of a trigonometric polynomial, we show the Fourier coefficients of the partial derivatives of the image satisfy a linear annihilation relation. We present necessary and sufficient conditions for unique recovery of the image from finite low-pass Fourier samples using the annihilation relation. We also propose a practical two-stage recovery algorithm which is robust to model-mismatch and noise. In the first stage we estimate a continuous domain representation of the edge set of the image. In the second stage we perform an extrapolation in Fourier domain by a least squares two-dimensional linear prediction, which recovers the exact Fourier coefficients of the underlying image.  We demonstrate our algorithm on the super-resolution recovery of MRI phantoms and real MRI data from low-pass Fourier samples, which shows benefits over standard approaches for single-image super-resolution MRI.
\end{abstract}

\section{Introduction}
Many studies in image processing show that natural images can be represented efficiently as piecewise smooth functions \cite{mallat1989theory,starck2002curvelet,do2005contourlet}. For instance, the Mumford-Shah formulation poses the recovery of a piecewise smooth image as a variational optimization problem that jointly estimates a segmentation of the image and the smooth functions on the regions defined by the segmentation \cite{mumford1988boundary,mumford1989optimal}. This problem is typically solved using iterative algorithms that alternate between the evolution a level-set curve and the recovery of the full signal \cite{chan2001level,tsai2001curve,vese2002multiphase}. These constrained level-set methods have been demonstrated to yield considerably improved reconstructions in challenging inverse problems \cite{ye2002self,Jacob:2006p2905,storath2014jump,burger2001level,burger2005survey,dorn2006level,schweiger20103d}. Unfortunately, these algorithms are often slow and vulnerable to local minima problems, which restrict their utility in applications. 

At the same time, the recovery of piecewise smooth one-dimensional signals using harmonic retrieval/linear prediction has been actively investigated in signal processing \cite{liang1989high,vetterli2002sampling,maravic2005sampling,dragotti2007sampling,blu2008sparse}. This includes the so-called finite-rate-of-innovation (FRI) framework \cite{blu2008sparse} which extends Prony's method \cite{stoica1997introduction,cheng2003review} to the recovery of a much wider class of signals, including non-uniform 1-D splines with unknown knot locations. The FRI framework is also related to the earlier work \cite{haacke1989super,haacke1989constrained,liang1989high,haacke1990image}, where Haacke and Liang proposed a super-resolution reconstruction technique for MRI using piecewise 1-D polynomial modeling. However, all of these techniques are fundamentally tied to the recovery of 1-D signals. While some extensions of 1-D FRI schemes to multidimensional polygonal signals are available \cite{milanfar1995reconstructing,shukla2007sampling,chen20122d}, these specialized models do not provide good approximations of natural images. Recent work on atomic norm minimization can also be applied to the recovery of a weighted linear combination of 1-D Diracs at arbitrary locations from few Fourier samples \cite{candes2013super,tang2013near,candes2014towards}; these methods are off-the-grid continuous domain generalizations of compressed sensing theory \cite{donoho2006compressed,baraniuk2007compressive}. While the framework can account for piecewise constant 1-D signals by analyzing the derivative of the signal \cite{candes2013super}, and linear combination of Diracs in 2-D \cite{xu2014precise}, its extension to piecewise constant signals in two or higher dimensions is not straightforward. For instance, the partial derivatives piecewise constant images have singularities supported on curves that bound the constant regions. We observe that this violates two of the main assumptions required for super-resolution recovery in \cite{candes2013super}: (1) the number of singularities is finite, and (2) the singularities are well-separated. Moreover, the 2-D setting in \cite{xu2014precise,jin2015general} fails to capture the extensive structure that is present in imaging applications. Namely, the image discontinuities are not often isolated, but are instead localized to smooth curves in 2-D.

Recently, Pan \emph{et al.} \cite{pan2013sampling} proposed an extension of FRI methods to recover piecewise complex analytic images in continuous domain. Specifically, they assume that the complex derivative of the image is supported on the zero level-set of a band-limited periodic function, called the annihilating polynomial. The authors rely on the annihilation of the Fourier transform of the derivatives by convolution with the Fourier coefficients of the annihilating polynomial to recover the edge set, and eventually the image. The work in this paper is mainly motivated by \cite{pan2013sampling}, and aims to overcome several of its limitations. For instance, the recovery of an arbitrary piecewise complex analytic signal from fintely many Fourier coefficients is ill-posed, since this class of functions has infinite degrees of freedom. Even in recovering the edge set, no sampling conditions that guarantee perfect recovery were available in \cite{pan2013sampling}. Another problem is that the complex analytic signal model in \cite{pan2013sampling} is not realistic for natural images.

The main focus of this paper is to develop a novel theoretical framework and efficient algorithms to recover piecewise constant images, whose discontinuities are localized to zero level-sets of band-limited functions, from few Fourier samples. We introduce a two step recovery scheme, which has similarities to Prony's method and traditional FRI recovery. The first step involves the determination of the discontinuities of the image as the zero set of a band-limited function, which translates to a set of linear annihilation conditions in Fourier domain. Once the edge set is recovered, we recover the full image as the solution to an optimization problem. This work connects the classical Mumford-Shah algorithm, which alternates between level-set estimation and signal estimation, with FRI theory, which provides recovery guarantees from a finite number of Fourier samples.

Unlike the piecewise analytic model in \cite{pan2013sampling}, the number of degrees of freedom of our proposed piecewise constant representation is finite, which enables the successful recovery of the signal from finite number of Fourier coefficients. We introduce necessary and sufficient conditions for the recovery of the edge set and the edge set aware recovery of the signal. The one-to-one correspondence between the degree of the annihilating polynomial and the number of singularities in the signal, which is used to establish guarantees in the traditional FRI settings \cite{vetterli2002sampling}, cannot be exploited in our setting since the number of singularities is no longer finite. Instead, we introduce novel proof techniques based on the algebraic geometry of zero sets of trigonometric polynomials to establish the uniqueness guarantees. 

We also introduce several refinements to make the two-step algorithm robust to noise and model mismatch. Since the model order--the number of coefficients necessary to describe the edge set--is not typically known in advance, we rely on a singular value decomposition of the block Toeplitz matrix corresponding to the linear annihilation relation for model order determination. Once the null-space of the matrix is estimated, we pose the extrapolation of the Fourier coefficients of the signal from the known samples as an optimization problem. The algorithm determines the signal such that the energy of the convolution between its Fourier coefficients and the null-space filters is minimized. We show that this approach is approximately equivalent to estimating the edge set as a sum of squares polynomial built from a basis of null-space filters. This approach is similar to the projection step in the multiple signal classification algorithm (MUSIC) algorithm \cite{schmidt1986multiple}, used in direction-of-arrival estimation and other signal processing applications.  

We demonstrate the proposed two-stage recovery scheme on the the super-resolution of magnetic resonance (MR) images from low-resolution Fourier samples. The experiments show our scheme has benefits over other standard discrete spatial domain approaches to single-image super-resolution in MRI. In particular, our scheme is better able to resolve fine image features, preserve strong edges, and suppress noise artifacts. Our work is also fundamentally different from several super-resolution approaches in image processing and MRI, where edge information from other sources (co-registered high resolution datasets, anatomical information, \emph{etc.}) is used to constrain reconstructions from few measurements; see e.g.\ \cite{hu1988slim,xiang2005accelerating,eslami2010robust,jacob2007improved,khalidov2007bslim,haldar2008anatomically,vaswani2010modified,luo2012mri,gong2015promise}. Specifically, we seek to estimate the edge locations using only the partial Fourier data itself rather than using additional prior information. This approach can be viewed as a fully 2-D version of the 1-D super-resolution approach investigated in \cite{haacke1989super,haacke1989constrained,liang1989high,haacke1990image}.

Our work also has conceptual similarities to recent low-rank structured matrix completion methods in MRI \cite{sake,haldar2014low,jin2015general}. In particular, \cite{haldar2014low} and \cite{jin2015general} exploit the low-rank structure of a multi-fold Hankel matrix obtained built from Fourier samples. The low-rank property of the structured matrix results from assuming signal sparsity and other constraints. For example, in \cite{haldar2014low} MR images are modeled as signals with limited spatial support and smoothly varying phase. We observe that these image models are far less constrained than the piecewise constant model considered in this work, which makes them less suitable for super-resolution recovery. The model in \cite{jin2015general} assumes that the signal sparse in a transform domain, i.e.\ consists of isolated singularities, which allows them to adapt the theory in \cite{chen2014robust} for recovery. In contrast, we model the partial derivatives of the continuous signal to be localized to curves. The direct extension of the theory from \cite{chen2014robust} as well as the traditional FRI setting is not possible in our context. Since our model captures the smoothness of the edge contours often found in images, it is more constrained than models like \cite{jin2015general} that assume isolated discontinuities.

Preliminary versions of this work were presented in conference papers \cite{isbi2015} and \cite{sampta2015}. In particular, Theorems \ref{thm:unique1} and \ref{thm:unique2} were stated in \cite{sampta2015} without proof. In addition to supplying the proofs of these results, we provide sampling guarantees for edge aware signal recovery that were absent in \cite{sampta2015}. While the basic version of the two step algorithm presented here was introduced in \cite{isbi2015}, it is considerably generalized and validated using MRI data in this work. 

\subsection{Notation}
We collect notation used throughout the paper here for easy reference. Bold lower-case letters $\mbf x$ are used to indicate vector quantities, bold upper-case $\mbf X$ to denote matrices, and calligraphic script $\mathcal{X}$ for general linear operators. We typically reserve lower-case greek letters $\mu, \gamma$, \emph{etc.}\ for trigonometric polynomials \eqref{eq:trigpoly} and upper-case greek letters $\Lambda, \Gamma,$ \emph{etc}.\ for their coefficient index sets, i.e.\ finite subsets of the integer lattice $\mathbb{Z}^2$, with cardinality denoted by $|\Lambda|$. A set of complex coefficients indexed by $\Lambda$ is denoted by $(c[\mbf k] : \mbf k \in \Lambda)$, which we also treat as a vector in $\mathbb{C}^{|\Lambda|}$ when convenient. We write $\Lambda + \Gamma$ for the dilation of the index set $\Gamma$ by $\Lambda$, i.e.\ the Minkowski sum $\{\mbf k + \mbf \ell : \mbf k\in\Lambda,~\bs\ell \in \Gamma\}$, and write $2\Lambda$ to mean $\Lambda+\Lambda$, $3\Lambda = 2\Lambda +\Lambda$, \emph{etc}. We also denote the contraction of $\Gamma$ by $\Lambda$ by $\Gamma\,{:}\,\Lambda = \{\mbf \ell \in \Gamma\,{:}\, \mbf \ell - \mbf k \in \Gamma \text{ for all } \mbf k \in \Lambda\}$. For two coefficient vectors $\mbf c = (c[\mbf k] : \mbf k \in \Lambda)$ and $\mbf d = (d[\mbf k] : \mbf k \in \Lambda)$, we denote their Hermitian inner product by $\langle \mbf c,\mbf d\rangle = \sum_{\mbf k \in \Lambda} c[\mbf k] \overline{d[\mbf k]}$. For any square-integrable functions $f$ and $g$ we use $\langle f, g\rangle$, to denote the usual Hermitian inner product $\langle f, g\rangle = \int f(\mbf r)\overline{g(\mbf r)} d \mbf r$. Finally, $\langle D,\varphi\rangle$ is also used to denote the dual pairing for a tempered distribution $D$ acting on the Schwartz space $\mathcal{S}$ of (complex) test functions $\varphi$.

\section{Overview}
We introduce a two step method to recover a piecewise constant image from few low-pass Fourier samples, assuming the edge set of the image is well-approximated by the zero level-set of a bandlimited function. In the first step, we estimate the edge set by solving for the level-set function. In the second step, we use the level-set function to perform an edge-aware recovery of the full signal. To make the presentation self-contained, and to motivate our proposed 2-D recovery scheme, we start with a brief review of FRI recovery of piecewise constant signals in 1-D. 

\subsection{Review of 1-D FRI theory}
\label{sec:1dFRI}
The FRI framework introduced in \cite{vetterli2002sampling} can be viewed as an extension of Prony's method to the recovery of a wide class of signals from sub-Nyquist sampling rates, including piecewise polynomial and non-uniform spline signals (similar extensions of Prony's method for the recovery of 1-D piecewise polynomials was considered in Haacke and Liang's earlier work \cite{haacke1989super,haacke1989constrained,liang1989high,haacke1990image}). In order to draw parallels to our 2-D recovery scheme, we focus here on the FRI recovery of a continuous domain 1-D piecewise constant signal $f:[0,1]\rightarrow\mathbb{C}$, given as
\begin{equation}
f(x) = \sum_{i=1}^{K-1} a_i~ 1_{[x_i,x_{i+1})}(x); 
\label{eq:1dpwc}
\end{equation}
from its low-pass Fourier coefficients
\[
\widehat{f}[k] = \int_0^1 f(x) e^{-j2\pi k x} dx; ~~k\in\mathbb{Z},\,|k| \leq N.
\]
for some fixed $N$. In \eqref{eq:1dpwc}, $1_U$ denotes the characteristic function of the set $U$, $a_i\in\mathbb{C}$ are the signal amplitudes, and $0 < x_1 < x_2 < \cdots < x_K < 1$ are the edge locations. See Figure \ref{fig:1d2dfri} for an illustration of this signal model and the two-step FRI recovery described below.

First, observe that the derivative $\partial f$ (in the sense of distributions) is a stream of Diracs:
\begin{equation}
\partial f(x) = \sum_{i=1}^K b_i\,\delta(x-x_i);
\label{eq:steamdiracs}
\end{equation}
where $b_i = a_{i}-a_{i-1}$, for $i=1,\ldots,K$ with $a_K = a_0 = 0$. 
Therefore, the Fourier coefficients of $\partial f$ are a linear combination of exponentials
\begin{equation}
\widehat{\partial f}[k] = \sum_{i=1}^K b_i\,e^{-j2\pi k x_i},
\label{eq:1dparamodel}
\end{equation}
which can be computed from the known samples $\widehat{f}[k]$ by the Fourier domain relation $\widehat{\partial f}[k] = j2\pi k\widehat{f}[k]$. Recovering $f$ now reduces to estimating the $2K$ unknown parameters $\{b_i,x_i\}_{i=1}^{K}$ from the known $2N+1$ samples. This can be achieved using Prony's method, or one of its robust variants \cite{cheng2003review}. The solution relies on determining the so-called \emph{annihilating polynomial}, i.e.\ the trigonometric polynomial defined as
\begin{equation}
\mu(x) = \prod_{i=1}^K (e^{j2\pi x}-e^{j2\pi x_i}) = \sum_{k=0}^K c[k]~ e^{j2\pi kx}.
\label{eq:1dmu}
\end{equation}
which is zero at each of the edge locations $\{x_i\}_{i=1}^K$, and these are its only zeros. An easy computation shows that the Fourier coefficients $\widehat{\partial f}[k]$ are annihilated by convolution with the annihilating polynomial coefficients $c[k]$:
\begin{equation}
\sum_{k=0}^K c[k]~ \widehat{\partial f}[\ell-k] = 0,~~\text{for all}~~\ell\in\mathbb{Z}.
\label{eq:1dannsys}
\end{equation}
This can also be seen from fact that in spatial domain,
\[
\mu\,\partial f = 0,
\]
in the sense of distributions. Therefore, provided we have at least $2K$ contiguous samples of $\widehat{\partial f}$ (i.e.\ $N \geq K$) we can form a linear Toeplitz system from \eqref{eq:1dannsys}, and solve for the $K+1$ unknown coefficients $c[k]$. This allows us to recover the edge locations $\{x_i\}_{i=1}^K$ as the roots of $\mu(x)$. Then we may solve for the coefficients $b_i$, and hence the amplitudes $a_i$, by substituting the $\{x_i\}_{i=1}^K$ into the model \eqref{eq:1dparamodel} and solving an overdetermined linear system.

The main tool enabling FRI recovery is the existence of a trigonometric polynomial whose zero set coincides with the edge locations of the piecewise constant signal, or equivalently, the singularities of its derivative. Similar to \cite{pan2013sampling}, we propose extending this scheme to 2-D by considering the infinite one-dimensional zero sets of two-variable trigonometric polynomials.

\begin{figure*}
\centering
\subfloat[][FRI recovery of 1-D piecewise constant signals]{
\includegraphics[width=0.85\textwidth]{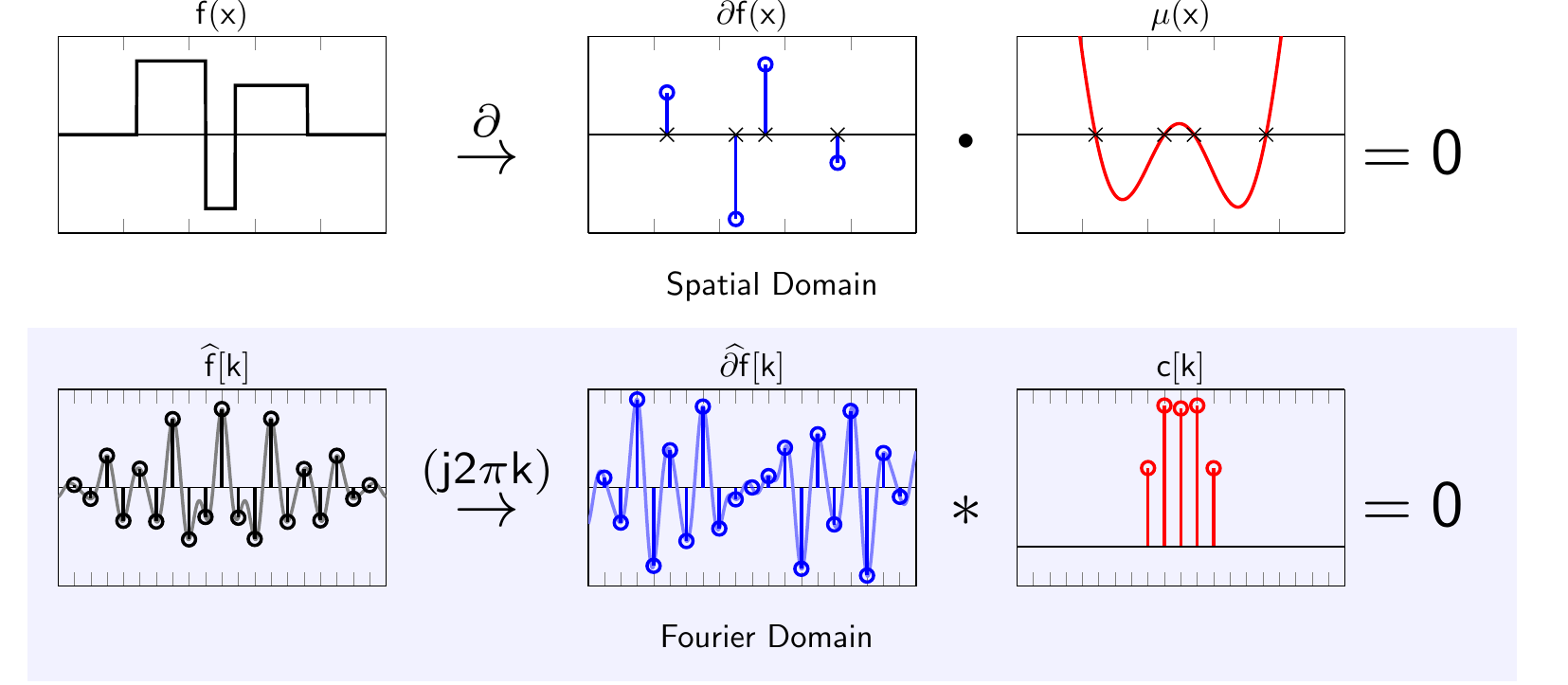}
}\\

\subfloat[][Proposed extension to 2-D piecewise constant images]{
\includegraphics[width=0.85\textwidth]{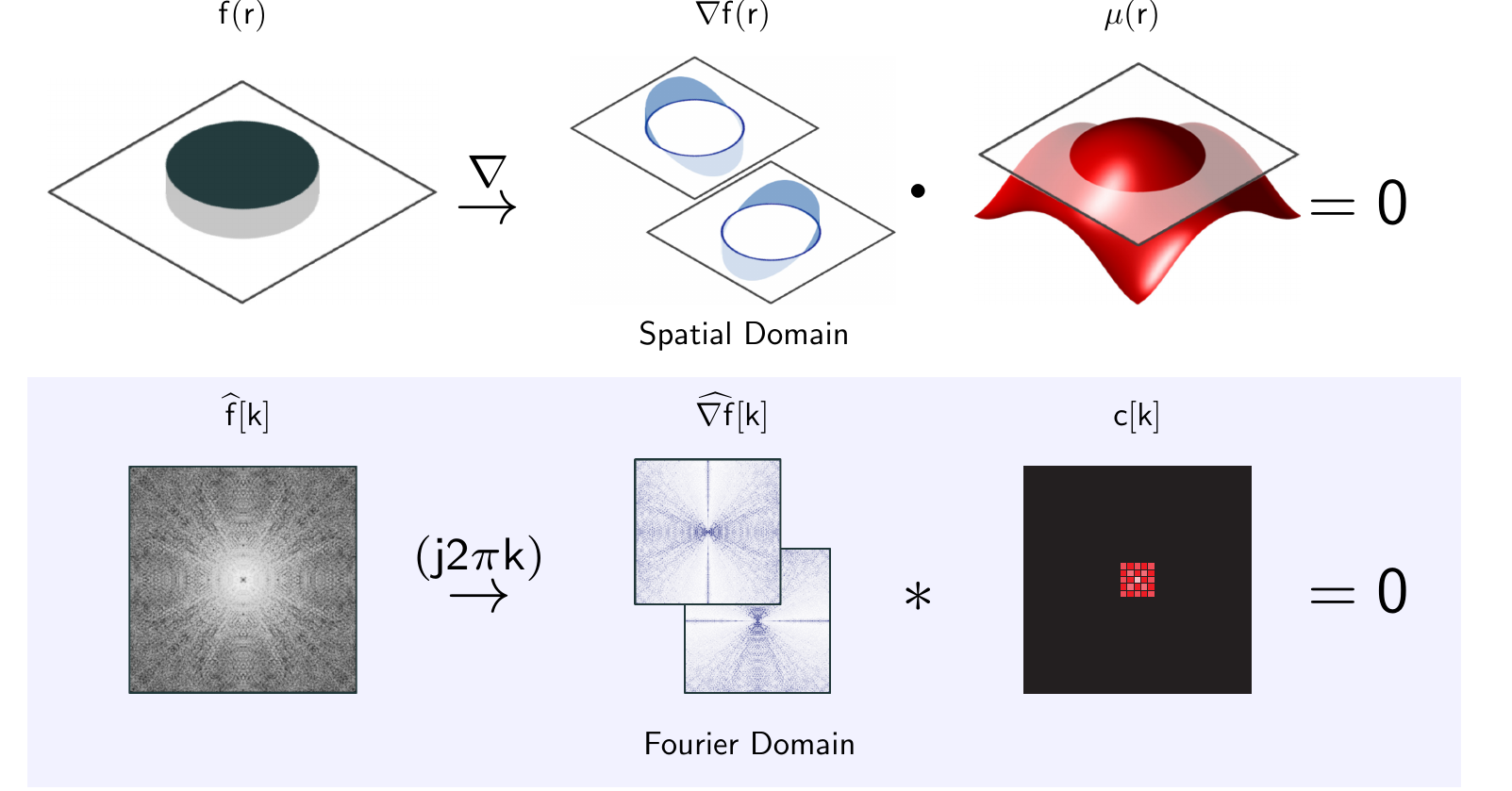}
}
\caption{\small \emph{Overview of proposed extension of FRI to 2-D piecewise constant images.} In (a) we illustrate the FRI recovery of a 1-D continuous domain piecewise constant signal $f(x)$, as described in \S\ref{sec:1dFRI}. The distributional derivative $\partial f(x)$ is a stream of Diracs supported on the edge set of the signal. Hence, multiplication by a trigonometric polynomial $\mu(x)$ which is zero on the edge set will annihilate $\partial f(x)$ in spatial domain. This descends to a linear convolution relation in Fourier domain. Therefore, given sufficiently many uniform low-pass Fourier samples $\widehat{f}[k]$ we may compute $\partial\widehat{f}[k]$, and form a linear system to solve for the finite Fourier coefficients $c[k]$ of $\mu(x)$. In (b) we illustrate the proposed extension of this scheme to 2-D presented in \S\ref{sec:annprop}. Here $f(\mbf r)$, $\mbf r = (x,y)$, is now the characteristic function for a region $U$ in the plane. We show the distributional gradient of $f$ is supported on the one-dimensional edge set $\partial U$. Multiplication by a trigonometric polynomial $\mu(\mbf r)$ which vanishes on $\partial U$ will annihilate $\nabla f$ in spatial domain, and this likewise translates to a linear convolution annihilation relation in Fourier domain, which we may use to solve for coefficients $c[\mbf k]$ of $\mu(\mbf r)$.}
\label{fig:2dfri}
\end{figure*}
\subsection{Image Model}

We focus on a continuous domain representation of images, that is, as functions $f:\mathbb{R}^2\rightarrow\mathbb{C}$ with compact support. For simplicity we will assume the support is always contained within $[0,1]^2$. We are particularly interested in images that can be modeled as 2-D \emph{piecewise constant} functions 
\begin{equation}
\label{eq:pwc}
f(\mbf r) = \sum_{i=1}^n a_i\,1_{U_i}(\mbf r),~~\text{ for all }~~\mbf r = (x,y)\in{[0,1]}^2,
\end{equation}
where $a_i \in \mathbb{C}$, and $1_{U}$ is the characteristic function of the region $U$:
\begin{equation}
1_U(\mathbf r) = 
\begin{cases}
1 & \text{if } \mbf r \in U\\
0 & \text{else}.
\end{cases}
\label{eq:char_func}
\end{equation} 
Here we assume each region $U_i\subseteq [0,1]^2$ has piecewise smooth boundary $\partial U_i$. We also assume the representation \eqref{eq:pwc} is expressed with the fewest number of characteristic functions such that the regions $U_i$ are pairwise disjoint. In this case the set of discontinuous points of $f$ coincides with $E = \cup_{i=1}^n \partial U_i$, which we call the \emph{edge set} of the image.

Our aim is to recover the piecewise constant image $f(\mbf r)$ from few of its Fourier coefficients 
\begin{equation}
\widehat f [\mbf k] = \int_{[0,1]^2} f(\mbf r) e^{-j 2\pi \mbf k\cdot\mbf r} d\mbf r,~~\mbf k\in\Gamma \subseteq \mathbb{Z}^2
\label{eq:Fourier}
\end{equation}
where $\Gamma$ is some finite sampling set. Without any further assumptions on the edge set, unique recovery of images in the form \eqref{eq:pwc} from finitely many Fourier samples is clearly ill-posed since the class of piecewise smooth curves has infinite degrees of freedom. Hence, we restrict the edge set to belong to a parametric model described by finitely many parameters. Motivated by the FRI curves framework \cite{pan2013sampling}, we will constrain the edge set to be the zero set of a 2-D band-limited periodic function:
\begin{equation}
C =\{\mathbf r \in[0,1]^2 : \mu(\mbf r)=0\}, ~\mbox{ where } ~\mu(\mbf r) = \sum_{\mbf k\in{ \Lambda}} c[\mbf k]\, e^{j2\pi\mbf k \cdot \mbf r},\quad \forall \mbf r\in{[0,1]}^2,
\label{eq:trigpoly}
\end{equation}
where the coefficients $c[\mbf k]\in\mathbb{C}$, and ${\Lambda}$ is any finite subset of $\mathbb{Z}^2$. We call any function $\mu$ described by \eqref{eq:trigpoly} a \emph{trigonometric polynomial}, and the zero set $C = \{\mu = 0\}$ a \emph{trigonometric curve}, provided $C$ is infinite and has no isolated points. Note a piecewise constant image cannot have isolated points in its edge set, so this restriction on the zero set is fully compatible with our image model. We elaborate more on the properties of trigonometric curves in \S \ref{sec:trigcurves}; see also Fig.\ \ref{fig:trigcurves}.

\subsection{Annihilation property for piecewise constant images}
\label{sec:annprop}
The annihilation relation \eqref{eq:1dannsys} for 1-D piecewise constant signals is derived from the fact that the derivative of the signal can be represented as a sum of isolated Diracs. Similar annihilation relations extend to multi-dimensional signals that can expressed as sum of isolated Diracs in a transform domain \cite{shukla2007sampling,chen20122d,jin2015general}. However, the derivative of a piecewise constant image cannot be represented as a sum of isolated Diracs, which prohibits the direct extension of these methods to our setting. However, \cite{pan2013sampling} showed it is still possible to derive an annihilation relationship for 2-D complex analytic signals having singularities supported on trigonometric curves \eqref{eq:trigpoly}. Similarly, we now show that the Fourier coefficients of the partial derivatives of piecewise constant images \eqref{eq:pwc} having edge set described by a trigonometric curve satisfy a 2-D annihilation relation in Fourier domain.

First consider the case of a single characteristic function $1_U$ where $U$ is a simply connected region with piecewise smooth boundary $\partial U$. We will show that the product of the gradient of $1_U$ by any smooth periodic function $\psi$ vanishing on $\partial U$ is identically zero:
\begin{equation}
\psi\, \nabla 1_U = 0.
\label{eq:dist_ann}
\end{equation}
However, since $1_U$ is non-smooth, the above equality can only be understood in a weak sense. In particular, we will find it useful to interpret the partial derivatives of $1_U$ as tempered distributions, i.e.\ as linear functionals acting on test functions of Schwartz class $\mathcal{S}$ (see, e.g., \cite{strichartz2003guide} or \cite{edwards2012fourier}). Since the domain $[0,1]^2$ is compact, in this case $\mathcal{S}$ coincides with all $C^\infty$ smooth complex-valued periodic functions on $[0,1]^2$. Letting $\varphi \in \mathcal{S}$ denote any such test function, we have:
\begin{equation}
\langle \partial_x 1_U, \varphi\rangle  = - \langle 1_U, \partial_x \varphi\rangle =  -\int_U \partial_x \varphi \, d\mbf r = -\oint_{\partial U} \varphi\, dy
\label{eq:partx}
\end{equation}
where the last step follows by Green's theorem. Likewise, one can show
\begin{equation}
\langle \partial_y 1_U, \varphi\rangle  = \oint_{\partial U} \varphi\, dx.
\label{eq:party}
\end{equation}
We can generalize the above relation to the gradient $\nabla 1_U = (\partial_x 1_U, \partial_y 1_U)$ by considering it to be a distribution acting on test fields $\boldsymbol\varphi = (\varphi_1,\varphi_2)$ where $\varphi_1,\varphi_2\in\mathcal{S}$. For all such $\mbf \varphi$ we have
\begin{equation}
\langle \nabla 1_U, \boldsymbol\varphi \rangle = -\langle 1_U, \nabla \cdot \boldsymbol\varphi \rangle = -\int_U \nabla \cdot \boldsymbol\varphi\, d\mathbf r = -\oint_{\partial U} \boldsymbol\varphi \cdot \mbf n\, ds
\label{eq:continuum}
\end{equation}
where $\mbf n$ is the unit outward normal to the curve $\partial U$.

Now suppose $\psi$ is any smooth function that vanishes on $\partial U$, i.e. $\psi(\mbf r) = 0$ for all $\mbf r \in \partial U$. Since $\psi$ is smooth the product $\psi \nabla 1_{U}$ is well-defined as a tempered distribution, and for all test fields $\bs \varphi$ we have
\begin{equation}
\langle \psi \nabla 1_U, \bs\varphi \rangle =\langle \nabla 1_U, \psi\,\bs\varphi \rangle =  -\oint_{\partial U} \psi\,(\bs\varphi \cdot \mbf n)\, ds  = 0.
\label{eq:duality}
\end{equation}
Therefore we have shown \eqref{eq:dist_ann} holds in the distributional sense. The preceeding analysis easily generalizes to piecewise constant functions by linearity: if $f$ is a piecewise constant function \eqref{eq:pwc}, and $\phi$ is any smooth function that vanishes on the edge set $E = \cup_{i=1}^n \partial U_i$, then
\begin{equation}
\phi\,\nabla f = \sum^n_{i=1} a_i\,\underbrace{(\phi\,\nabla 1_{U_i})}_{0} = 0.
\label{eq:spacedom2}
\end{equation}

Now assume that $\phi=\mu$ is a trigonometric polynomial, and hence is described by a finite number of Fourier coefficients, which we denote by $(c[\mbf k] : \mbf k \in \Lambda)$, where $\Lambda\subseteq\mathbb{Z}^2$ is its finite Fourier support set. Taking Fourier transforms of \eqref{eq:spacedom2}, and applying the convolution theorem for tempered distributions yields the following annihilation relation:
\begin{proposition}
Let $f = \sum_{i=1}^n a_i\, 1_{U_i}$ be piecewise constant where the edge set $E = \cup_{i}^n \partial U_i$ is a subset of the trigonometric curve $\{\mu = 0\}$. Then the Fourier coefficients of the gradient of $f$ are annihilated by convolution with the Fourier coefficients $(c[\mbf k] : \mbf k \in \Lambda)$ of $\mu$, i.e.\ 
\begin{equation}
  \sum_{\mbf k \in \Lambda} c[\mbf k]~ \widehat{\nabla f}[\bs \ell - \mbf k] = \bs 0,
   ~~\forall ~\bs \ell \in \mathbb{Z}^2,
   \label{eq:annihilation}
\end{equation}
where $\widehat{\nabla f}[\mbf k] = (\widehat{\partial_x f}[\mbf k], \widehat{\partial_y f}[\mbf k])$.
\end{proposition}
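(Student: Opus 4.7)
The spatial-domain annihilation identity $\mu\,\nabla f = 0$ as a tempered distribution has already been established in \eqref{eq:spacedom2}, so the only remaining task is to translate this into the advertised convolution identity on Fourier coefficients. The plan is to compute the Fourier coefficients of the zero distribution $\mu\,\nabla f$ component-wise and unpack what the Fourier transform of the product $\mu \cdot \nabla f$ looks like when $\mu$ is a finite trigonometric polynomial.

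I would proceed as follows. On the torus $[0,1]^2$, the Schwartz class $\mathcal{S}$ coincides with $C^\infty$ periodic functions, and in particular contains the pure exponentials $\varphi_{\bs\ell}(\mbf r) = e^{j2\pi \bs\ell\cdot\mbf r}$. For any tempered distribution $D$, its $\bs\ell$-th Fourier coefficient is $\widehat D[\bs\ell] = \langle D,\overline{\varphi_{\bs\ell}}\rangle$, and a tempered distribution is determined by its Fourier coefficients; hence it suffices to show each Fourier coefficient of $\mu\,\nabla f$ equals the desired convolution. Because $\mu$ is smooth, multiplication by $\mu$ is well-defined on tempered distributions with the usual pairing $\langle \mu D,\varphi\rangle = \langle D,\mu\varphi\rangle$. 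Substituting $\mu(\mbf r) = \sum_{\mbf k\in\Lambda} c[\mbf k]\,e^{j2\pi\mbf k\cdot\mbf r}$ and exchanging the \emph{finite} sum over $\Lambda$ with the distributional pairing gives, componentwise,
\[
\widehat{\mu\,\nabla f}[\bs\ell] \;=\; \langle \nabla f,\,\overline{\mu\,\varphi_{-\bs\ell}}\rangle \;=\; \sum_{\mbf k\in\Lambda} c[\mbf k]\,\langle \nabla f,\overline{\varphi_{\bs\ell - \mbf k}}\rangle \;=\; \sum_{\mbf k\in\Lambda} c[\mbf k]\,\widehat{\nabla f}[\bs\ell - \mbf k].
\]
By \eqref{eq:spacedom2} the left-hand side is the zero distribution, so every Fourier coefficient on the left vanishes, which is exactly \eqref{eq:annihilation}.

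There is no real analytic obstacle here, only a bit of distributional bookkeeping. The one place that deserves mild care is confirming that $\widehat{\nabla f}[\bs\ell]$ is a well-defined sequence: this follows because each $1_{U_i}$ is bounded and compactly supported, hence a tempered distribution, and its distributional partial derivatives inherit the tempered property; its Fourier coefficients on the torus are then well-defined. The interchange of summation and pairing needed above is trivial because $\Lambda$ is finite, so no convergence argument is required. Thus the proof reduces essentially to the one-line computation displayed, built on the spatial-domain annihilation \eqref{eq:spacedom2} derived earlier in the excerpt.
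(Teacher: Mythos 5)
Your argument is correct and follows essentially the same route as the paper: the paper derives the spatial-domain identity $\mu\,\nabla f = 0$ in \eqref{eq:spacedom2} and then obtains \eqref{eq:annihilation} by ``applying the convolution theorem for tempered distributions,'' which is exactly the step you spell out by pairing $\mu\,\nabla f$ against the exponentials $\varphi_{\bs\ell}$ and exchanging the finite sum over $\Lambda$ with the pairing. Aside from a minor notational slip in the conjugation conventions of your first displayed equality (which does not affect the result), this is the paper's proof made explicit.
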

Due to the above property, we call $\mu$ an \emph{annihilating polynomial} for $f$, and the coefficients $(c[\mbf k] : \mbf k \in \Lambda)$ an \emph{annihilating filter}. We show in \S\ref{sec:edgerecovery} that the above Fourier domain annihilation relation can be used to recover an annihilating polynomial, and hence the edge set, from finitely many low-pass Fourier coefficients of the image.

We note that a similar annihilation relationship to \eqref{eq:annihilation} was derived for piecewise complex analytic signals in \cite{pan2013sampling}. Specifically, in \cite{pan2013sampling} it is shown that functions $f(z)=f(x+jy)$ complex analytic on a region $U$ with boundary $\partial\Omega$ given by a trigonometric curve $\{\mu=0\}$ satisfy the annihilation relation
\begin{equation}
\mu\,\partial_{\overline{z}}f = 0,
\label{eq:friann}
\end{equation}
where $\partial_{\overline{z}}:=\partial_x + j\partial_y$ is the complex Wirtinger derivative. While the proofs in \cite{pan2013sampling} rely on a representation of $f$ as a complex curve integral, \eqref{eq:friann} can also be understood in terms of distributions. Also note that if $f$ is piecewise constant, then it satisfies \eqref{eq:friann}. The main difference with our approach is that we assume that both partial derivatives of $f$ are jointly annihilated by $\mu$. This rules out functions $f$ that are non-constant inside the regions defined by the curve $\{\mu=0\}$.
\subsection{Properties of the edge set model: Trigonometric curves}
\label{sec:trigcurves}
Our image model, which assumes the edge set coincides with a trigonometric curve \eqref{eq:trigpoly}, may seem overly restrictive at first. However, trigonometric curves can represent a wide variety of curve topologies, even with relatively few coefficients. For instance, trigonometric curves can have multiple connected components, non-smooth cusps, and can self-intersect in complicated ways. Moreover, any compactly supported plane curve can be approximated to an arbitrary degree of accuracy by a trigonometric curve, provided the number of coefficients is large enough; see Fig.\ \ref{fig:trigcurves}.

Trigonometric curves can be characterized in terms similar to algebraic plane curves, i.e.\ the zero sets of polynomials in two complex variables. In particular, one can associate with every algebraic plane curve a unique minimal degree polynomial whose zero set generates the curve. Similarly, we prove there is a unique minimal degree trigonometric polynomial associated with any trigonometric curve $C$, which we call the \emph{minimal polynomial} for $C$. Here we define the \emph{degree} of a trigonometric polynomial $\mu$ to be the dimensions of the smallest rectangle that contains the frequency support set ${\Lambda}$, which we denote as $deg(\mu) = (K,L)$. In Appendix \ref{sec:appendix_trig} we prove the following:
\begin{proposition}
\label{prop:minpoly}
For every trigonometric curve $C$ there is a unique (up to scaling) real-valued trigonometric polynomial $\mu_0$ with $C=\{\mu_0 = 0\}$ such that for any other trigonometric polynomial $\mu$ with $C=\{\mu = 0\}$ we have $deg(\mu_0) \leq deg (\mu)$ componentwise.
\end{proposition}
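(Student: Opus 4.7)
The plan is to translate the problem into the polynomial ring $\mathbb{C}[z,w]$ via the substitution $z = e^{j2\pi x}$, $w = e^{j2\pi y}$, and exploit unique factorization there. Any trigonometric polynomial $\mu$ with minimal enclosing Fourier support $[k_1,k_2]\times[\ell_1,\ell_2]$ can be written uniquely as $\mu(\mbf r) = z^{k_1} w^{\ell_1} P(z,w)$, where $P\in\mathbb{C}[z,w]$ has bidegree $(K,L) = \deg(\mu)$ and is not divisible by $z$ or $w$. Since $z$ and $w$ are nowhere vanishing on the real $2$-torus $\mathbb{T}^2=\{(z,w):|z|=|w|=1\}$, the zero set of $\mu$ on $[0,1]^2$ corresponds bijectively to $V_{\mathbb{T}}(P) := V(P)\cap\mathbb{T}^2$, so the task reduces to finding a polynomial $P_0\in\mathbb{C}[z,w]$ of componentwise-minimal bidegree with $V_{\mathbb{T}}(P_0) = C$.

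Next I would construct $P_0$ using the UFD property of $\mathbb{C}[z,w]$. Factor $P = \prod_i P_i^{a_i}$ into distinct irreducibles; each $V(P_i)\subset\mathbb{C}^2$ is an irreducible algebraic curve, and its real slice $V_{\mathbb{T}}(P_i)$ is empty, finite, or real $1$-dimensional. Call $P_i$ \emph{curve-contributing} in the last case, let $I$ index these, and set $P_0 = \prod_{i\in I} P_i$. A brief bookkeeping step shows $V_{\mathbb{T}}(P_0) = C$ exactly: any finite contributions from non-curve factors would otherwise appear as isolated points of $C$, violating the standing hypothesis that a trigonometric curve has no isolated points. The technical heart of the proof is then the following lemma: if $Q\in\mathbb{C}[z,w]$ satisfies $V_{\mathbb{T}}(Q)\supseteq V_{\mathbb{T}}(P_i)$ for a curve-contributing $P_i$, then $P_i\mid Q$. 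Indeed, $V(P_i)\cap V(Q)$ is a closed subvariety of the irreducible complex curve $V(P_i)$ and contains the infinite real set $V_{\mathbb{T}}(P_i)$; by the dimension theorem, a proper closed subvariety of an irreducible $1$-dimensional variety is $0$-dimensional (finite), so we must have $V(Q)\supseteq V(P_i)$, and Hilbert's Nullstellensatz yields $P_i\mid Q$.

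Applying this lemma to any $\mu$ with $\{\mu=0\}=C$, the associated polynomial $P$ is divisible by each $P_i$ for $i\in I$, hence by $P_0$, yielding $\deg\mu\geq\deg\mu_0$ componentwise; equality forces $P = cP_0$ for a scalar, proving uniqueness of $P_0$ up to scaling (modulo the trivial monomial factor on the trigonometric side). For real-valuedness, the conjugate $\overline{\mu_0}$ is itself a trigonometric polynomial with zero set $C$ and the same minimal degree, so by the uniqueness just established $\overline{\mu_0} = c\, e^{-j2\pi\mbf k_0\cdot\mbf r}\mu_0$ for some scalar $c$ with $|c|=1$ and integer shift $\mbf k_0$; an appropriate rescaling by a unit monomial $\alpha\, e^{j2\pi\mbf k_1\cdot\mbf r}$ absorbs the phase and the shift to produce a real-valued minimal representative with Fourier support symmetric about the origin. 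The main obstacle I anticipate is the real-algebraic content of the key lemma, namely verifying that curve-contributing intersections with $\mathbb{T}^2$ are genuinely infinite so that complex dimension theory transfers cleanly, together with the parity bookkeeping needed to absorb both the scalar phase and the monomial shift in the final real-valuedness step.
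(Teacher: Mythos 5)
Your proposal is correct and follows essentially the same route as the paper's proof of Proposition~\ref{prop:minpoly} (given as Proposition~\ref{prop:minpolystrong} in Appendix~\ref{sec:appendix_trig}): pass to $\mathbb{C}[z,w]$, factor into irreducibles, discard the factors meeting the torus in at most finitely many points, and use ``infinite common zero set plus irreducibility implies divisibility'' to get minimality and uniqueness. The only cosmetic differences are that the paper derives the key divisibility lemma from B\'ezout's theorem rather than the dimension theorem plus the Nullstellensatz, and obtains real-valuedness by applying uniqueness to $\mathrm{Re}[\mu_0]=\tfrac{1}{2}(\mu_0+\mu_0^*)$ rather than to $\overline{\mu_0}$; both variants are sound.
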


This proposition shows trigonometric curves can always be described as the zero set of a \emph{real-valued} trigonometric polynomial, a property we will use implicitly from now on. Moreover, it allows us to uniquely define the \emph{degree} of a trigonometric curve as the degree of its associated minimal polynomial. 
Intuitively, the degree also gives some measure of inherent complexity of a trigonometric curve. For example, the degree puts bounds on how many non-smooth points the curve may have and the number of connected curve components. One particularly important bound for this work is the following:
\begin{proposition}
If $C$ is a trigonometric curve of degree $(K,L)$, then the number of connected components of the complement set $[0,1]^2/C$ is at most $2KL$.
\label{prop:connectedcomp}
\end{proposition}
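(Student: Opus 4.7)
The plan is to combine a variational observation with an algebraic intersection count via Bezout's theorem. By Proposition \ref{prop:minpoly} I may take $\mu$ to be a real-valued trigonometric polynomial with $\deg(\mu)=(K,L)$ and $C=\{\mu=0\}$. Since $\mu$ is doubly periodic I will work on the torus $\mathbb{T}^2$ obtained from $[0,1]^2$ by identifying opposite edges; the count of connected components is only bounded more generously by this identification.

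The first ingredient is that every connected component of $\mathbb{T}^2\setminus C$ contains at least one critical point of $\mu$. Fix such a component $\Omega$: $\mu$ is continuous and nowhere zero on $\Omega$, hence of constant sign, say $\mu>0$. By compactness of $\overline\Omega\subseteq\mathbb{T}^2$, $\mu$ attains its maximum over $\overline\Omega$ at some point $\mbf p$. Since $\mu=0$ on $\partial\Omega\subseteq C$ while $\mu(\mbf p)>0$, the point $\mbf p$ lies in the interior $\Omega$ and is therefore a local maximum of $\mu$ on $\mathbb{T}^2$; the case $\mu<0$ is symmetric. Hence the number of components is bounded above by the number of common zeros of $\partial_x\mu$ and $\partial_y\mu$.

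The second ingredient bounds these common zeros via Bezout's theorem on $\mathbb{P}^1\times\mathbb{P}^1$. Under the substitution $z=e^{j2\pi x},\,w=e^{j2\pi y}$, after clearing an appropriate monomial $z^{k_0}w^{l_0}$, $\mu$ corresponds to a polynomial $P(z,w)$ of bidegree at most $(K-1,L-1)$; a direct computation shows $\partial_x\mu$ and $\partial_y\mu$ correspond to polynomials $Q_x,Q_y$ of the same bidegree. If $Q_x$ and $Q_y$ share no nontrivial common factor, Bezout gives at most $(K-1)(L-1)+(K-1)(L-1)=2(K-1)(L-1)\leq 2KL$ common zeros in $\mathbb{C}^2$, and restricting to the real torus $|z|=|w|=1$ only decreases the count.

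The main obstacle is the degenerate case $\gcd(Q_x,Q_y)=R\neq 1$, in which $\nabla\mu$ vanishes along an entire trigonometric curve $\Gamma=\{R=0\}$; then $\mu$ is constant along each component of $\Gamma$, equal to some value $c\in\mathbb{R}$. When $c=0$, $\Gamma\subseteq C$ and $\mu$ has a double zero along $\Gamma$, forcing $R^2\mid\mu$; writing $\mu=R^2\tilde\mu$, the polynomial $R\tilde\mu$ has the same zero set $C$ but strictly smaller degree, contradicting the minimality of $\mu$ from Proposition \ref{prop:minpoly}. When $c\neq 0$, a bidegree comparison of $R^2$ and $\mu-c$ restricts the situation to trivial low-degree cases that can be handled by inspection. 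I anticipate this case-splitting to be the delicate part of the argument; an alternative route is a perturbation argument, replacing $\mu$ by a generic $\mu+\epsilon\eta$ of the same degree to enforce $\gcd(Q_x,Q_y)=1$, applying the Morse--Bezout bound, and transferring it to $\mu$ in the limit $\epsilon\to 0$.
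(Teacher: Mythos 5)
Your overall strategy is the same as the paper's: first show that every connected component of the complement contains a critical point of the minimal polynomial $\mu$ (the interior max/min argument), then bound the number of common zeros of $\partial_x\mu$ and $\partial_y\mu$ by an intersection count. Your B\'ezout bound on $\mathbb{P}^1\times\mathbb{P}^1$, namely $a_1b_2+a_2b_1$ for curves of bidegrees $(a_i,b_i)$, is exactly the paper's Corollary \ref{bkk_trig} (a rectangular-support specialization of Bernstein's theorem). Note only that a trigonometric polynomial of degree $(K,L)$ corresponds to a polynomial of bidegree $(K,L)$, not $(K-1,L-1)$; with the correct bidegrees the count is $KL+KL=2KL$, which is precisely the claimed bound, so nothing is lost.

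The genuine gap is the degenerate case $\gcd(Q_x,Q_y)=R\neq 1$. Your sub-case $c=0$ is fine (it is essentially Proposition \ref{prop:musquared}), but the sub-case $c\neq 0$ is not resolved: from $R^2\mid(\mu-c)$ you only get $2\deg(R)\leq\deg(\mu)$ componentwise, i.e.\ $\deg(R)\leq(K/2,L/2)$, which admits nontrivial examples for all $K,L\geq 2$ and does not reduce to ``low-degree cases handled by inspection.'' Moreover $\{R=0\}$ need not meet the real torus in a curve at all, so even the dichotomy $c=0$ versus $c\neq 0$ does not exhaust this case. The paper closes exactly this hole with a perturbation: for a generic small $\mbf v\in\mathbb{R}^2$ the set $\{\nabla\mu=\mbf v\}$ is finite by the regular value theorem, the function $\widetilde\mu(\mbf r)=\mu(\mbf r)-\langle\mbf r,\mbf v\rangle$ still attains an interior extremum on each component $U_i$ (since $\mu$ vanishes on $\partial U_i$ while $|\mu|>0$ inside), and $\partial_x\mu-v_1$, $\partial_y\mu-v_2$ still have degree at most $(K,L)$, so the Bernstein bound applies verbatim. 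This is cleaner than your proposed $\mu+\epsilon\eta$ perturbation, which changes the zero set $C$ and hence the very regions being counted; the linear perturbation leaves the $U_i$ fixed and only moves the critical-point equation. Finally, your reduction to the torus is stated with the inequality backwards: identifying opposite edges can only merge components, so the torus count is a lower bound, not an upper bound, for the count on $[0,1]^2$ (the paper's argument implicitly lives on the torus as well, since it uses $\mu=0$ on all of $\partial U_i$, so this subtlety is shared rather than introduced by you).
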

Consequently, a piecewise constant image with edge set given by a trigonometric curve of degree $(K,L)$ cannot consist of more than $2KL$ distinct regions, i.e.\ the number of terms in the sum \eqref{eq:pwc} is bounded by $2KL$. This means the total degrees of freedom of such an image is at most roughly $3KL$, since we need at most $(K+1)(L+1)$ parameters to describe the edge set, and at most $2KL$ parameters for the region amplitudes.

Some further topological and algebraic properties of trigonometric curves and minimal polynomials can be found in Appendix \ref{sec:appendix_proofs}, which are used in proofs of the sampling theorems in \S\ref{sec:edgerecovery} and \S\ref{sec:imagerecovery}.

\begin{figure*}[ht!]
\centering
\subfloat[][$13$$\times$$13$ coefficients]{
\begin{tikzpicture}[every node/.style={anchor=south west,inner sep=0pt}]
\node[draw=black, thick] (image) at (0,0)
{\includegraphics[height=0.29\textwidth]{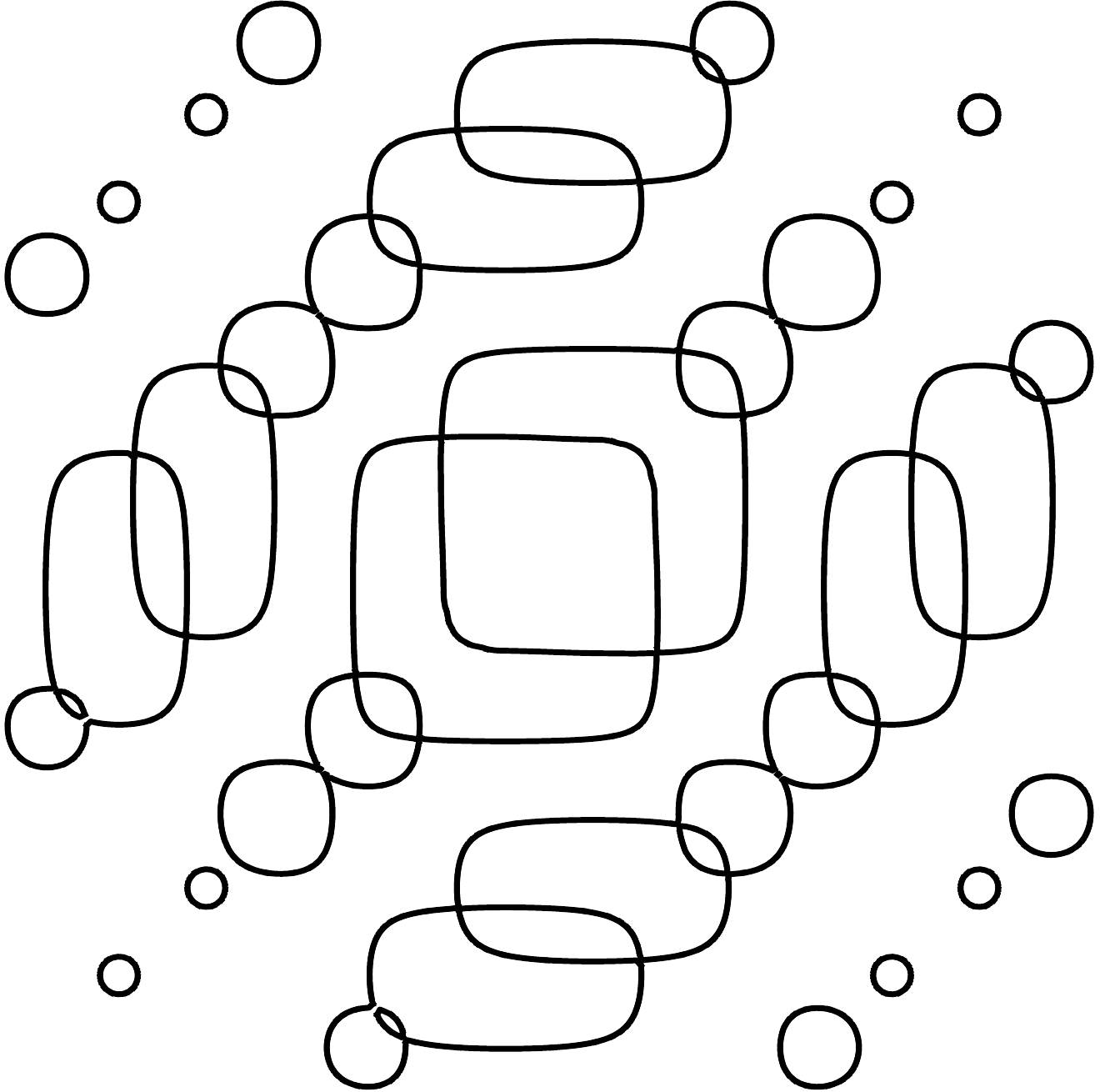}};
\node[draw=black] (fig2) at (2.5,-0.5)
    {\includegraphics[width=0.15\textwidth,trim = 0cm -1cm 0cm 0cm]{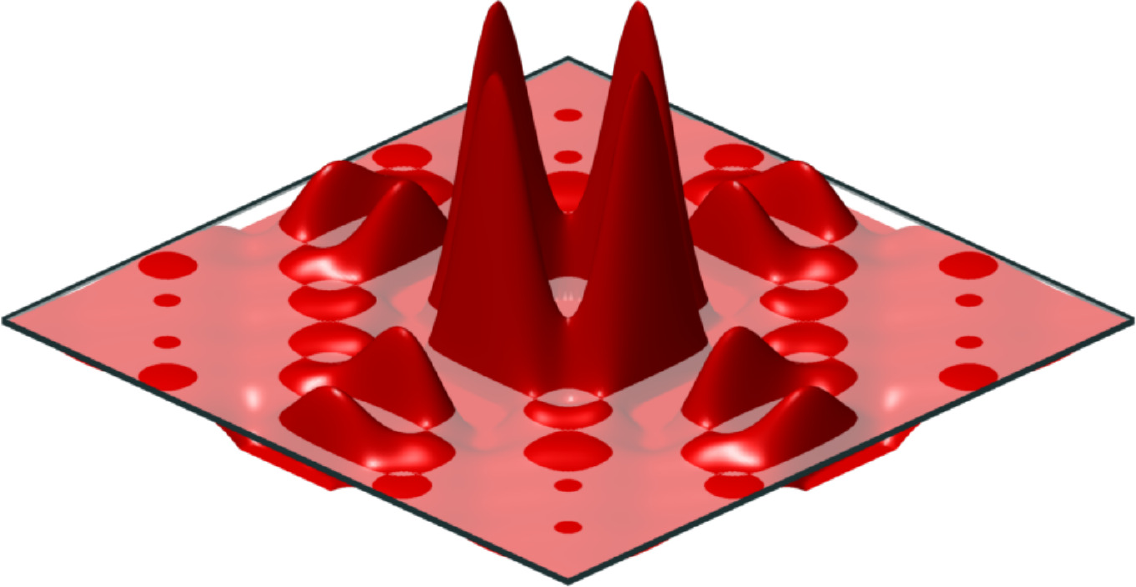}}; 
\end{tikzpicture}
}
\subfloat[][$7$$\times$$9$ coefficients]{
\begin{tikzpicture}[every node/.style={anchor=south west,inner sep=0pt}]
\node[draw=black, thick] (image) at (0,0)
	{\includegraphics[height=0.29\textwidth]{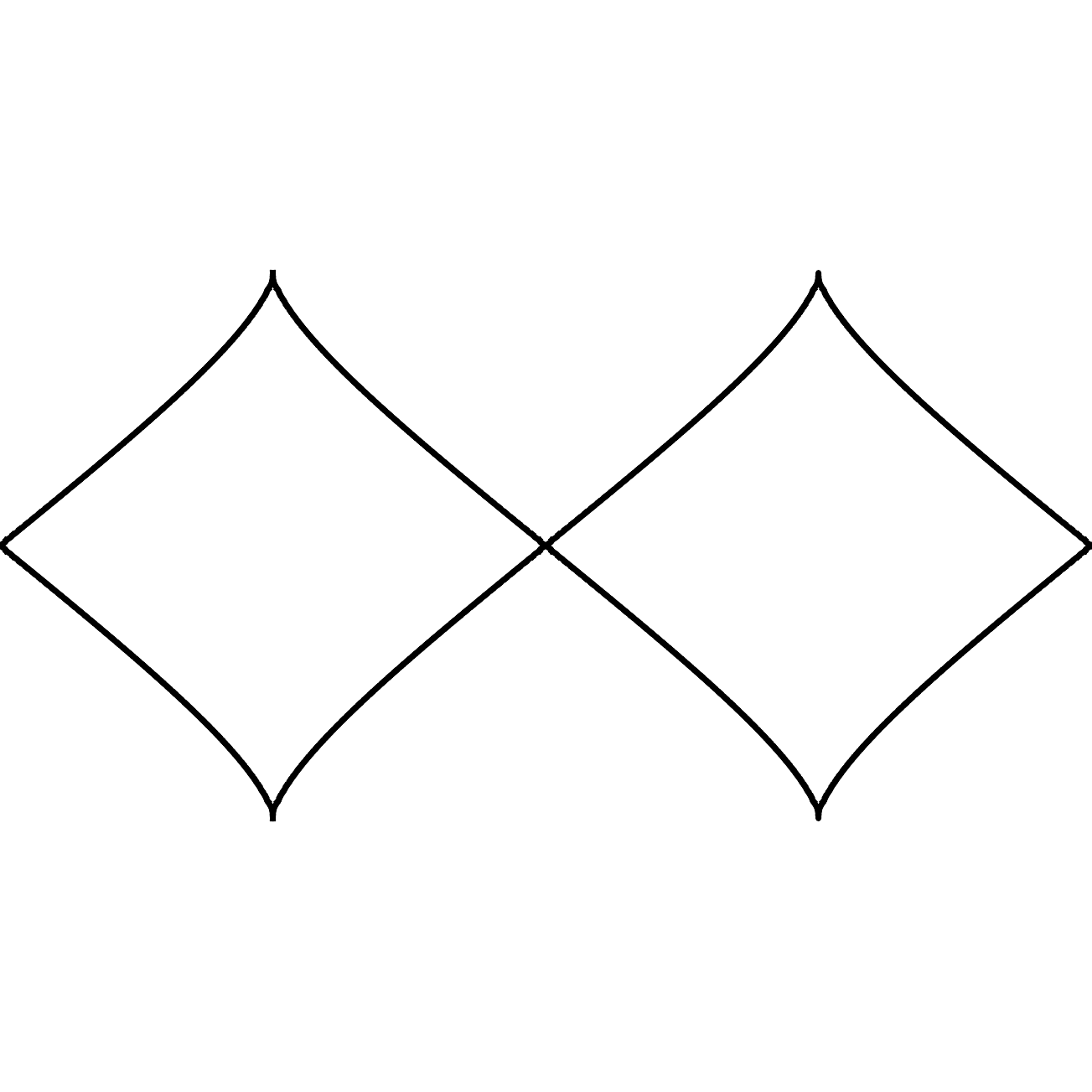}};
\begin{scope}[x={(image.south east)},y={(image.north west)}]
    \draw[red,ultra thick,-latex] (0.40,0.95) -- (0.25,0.78);
    \draw[red,ultra thick,-latex] (0.63,0.75) -- (0.52,0.56);    
\end{scope}
\node[draw=black]  (fig2) at (2.5,-0.5)
    {\includegraphics[width=0.15\textwidth]{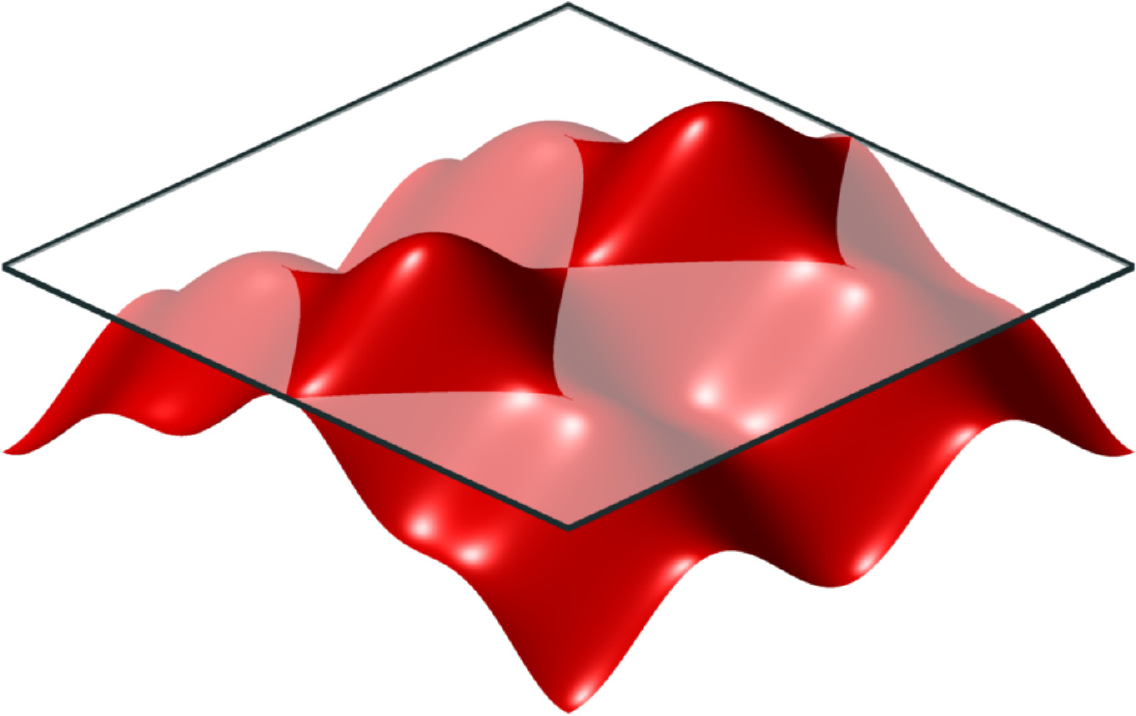}}; 
\end{tikzpicture}
}
\subfloat[][$25$$\times$$25$ coefficients]{
\begin{tikzpicture}[every node/.style={anchor=south west,inner sep=0pt}]
\node[draw=black, thick] (image) at (0,0)
{\includegraphics[height=0.29\textwidth]{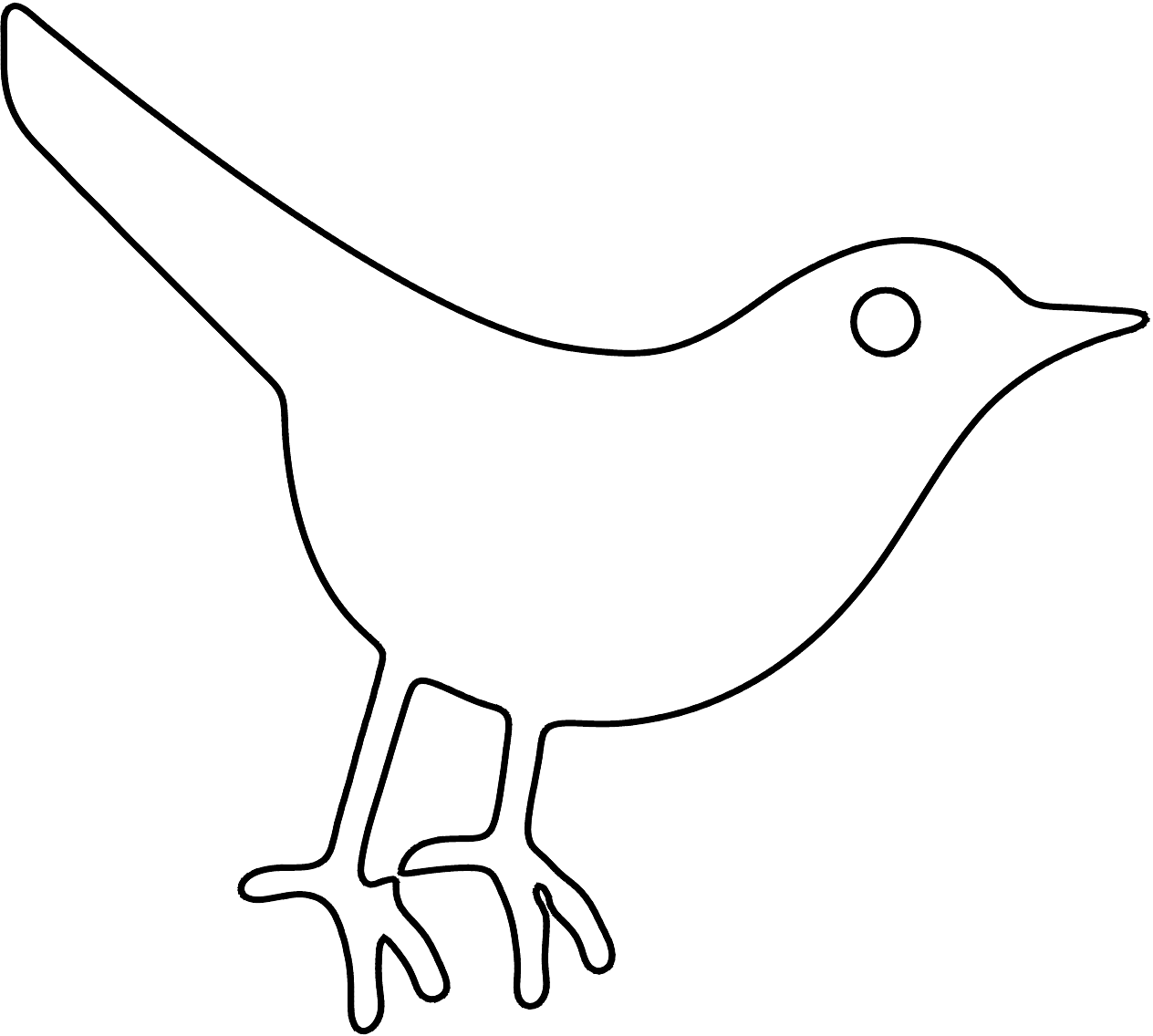}};
\node[draw=black] (fig2) at (3.0,-0.5)
    {\includegraphics[width=0.15\textwidth]{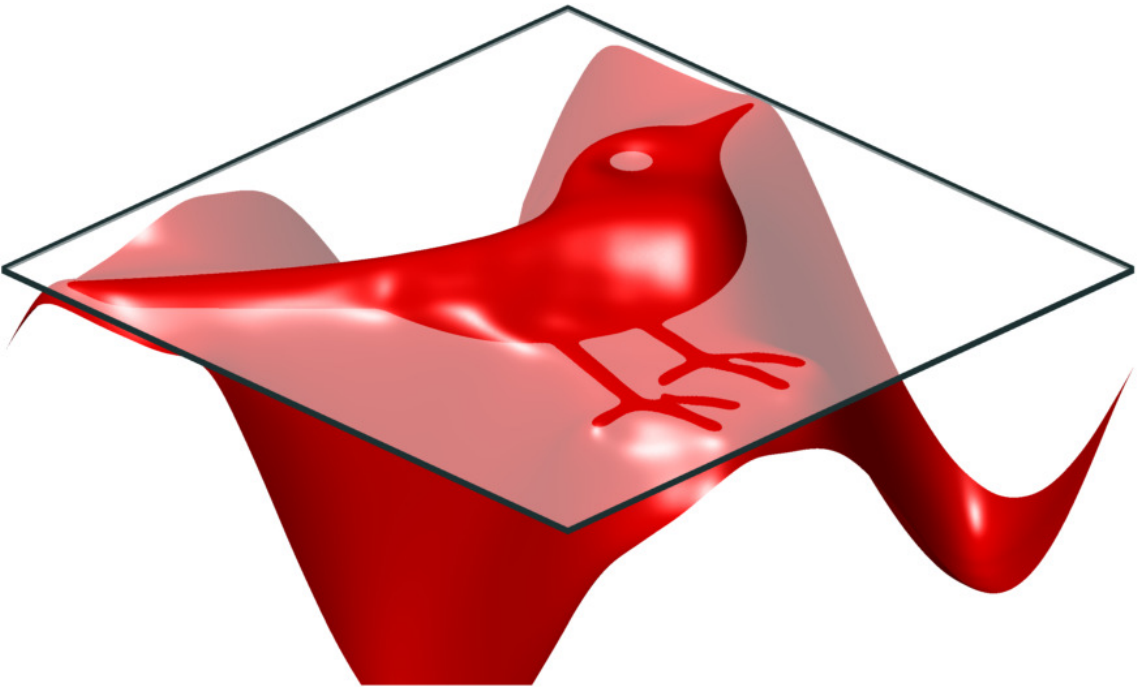}}; 
\end{tikzpicture}
}
\caption{\small \emph{Examples of trigonometric curves.} The one-dimensional zero sets of two-variable trigonometric polynomials are a topologically diverse class of plane curves, which can (a) have many distinct curve components, (b) non-smooth cusps and self-intersections (indicated by arrows), and (c) approximate arbitrary plane curves using few coefficients. Shown inset is a surface plot of the trigonometric polynomial whose zero set gives the corresponding trigonometric curve.}
\label{fig:trigcurves}
\end{figure*}

\section{Recovery of edge set from finite Fourier samples}
\label{sec:edgerecovery}

In this section, we focus on sampling guarantees for the first step of our recovery scheme. We give necessary and sufficient conditions on the number of samples required for unique recovery of the edge set of the signal from the annihilation relation. Similar sampling guarantees for edge-aware recovery of the full signal are provided in the next section.

Let $f$ be any piecewise constant image \eqref{eq:pwc} whose edge set is described by a trigonometric curve $\{\mu = 0\}$ where $\mu$ is a trigonometric polynomial with coefficients $(c[\mbf k] : \mbf k \in \Lambda)$. If we have access to all Fourier samples $\widehat{f}[\mbf k]$ for all $\mbf k \in {\Gamma}$, then from the annihilation relation \eqref{eq:annihilation} we can form the \emph{finite} linear system of equations:
\begin{equation}
\sum_{\mbf k \in {\Lambda}} c[\mbf k]\, \widehat{\nabla f}[\bs\ell - \mbf k] = \bs 0, 
   ~~\forall ~\bs \ell\in \Gamma\,{:}\,\Lambda
   \label{eq:annsys}
\end{equation}
where the index set $\Gamma\,{:}\,\Lambda\subseteq\mathbb{Z}^2$ is the set $\bs\ell\in\mathbb{Z}^2$ such that $\bs\ell-\mbf k \in \Gamma$ for all $\mbf k \in \Lambda$. Assuming $\Lambda$ is symmetric, i.e.\ for all $\mbf k \in \Lambda$ we have $-\mbf k \in \Lambda$, then $\Gamma\,{:}\,\Lambda$ is the simply the set of all integer shifts $\mbf \ell$ such that the translated support $\Lambda+\mbf\ell$ is contained in $\Gamma$. Here the expression $\widehat{\nabla f}[\mbf k] = (\widehat{\partial_x f}[\mbf k], \widehat{\partial_y f}[\mbf k])$ is computed from the known samples using the Fourier domain relations $\widehat{\partial_x f}[\mbf k] = j2\pi k_x \widehat{f}[\mbf k]$, and $\widehat{\partial_y f}[\mbf k] = j2\pi k_y \widehat{f}[\mbf k]$, for any $\mbf k = (k_x,k_y) \in \mathbb{Z}^2$. Provided we have enough samples of $\widehat{f}$ to construct a determined linear system of equations \eqref{eq:annsys}, we can potentially solve for the unknown coefficients $(c[\mbf k] : \mbf k \in \Lambda)$. We now derive conditions for the necessary and sufficient number of samples needed for unique recovery via the linear system \eqref{eq:annsys}. 

\subsection{Necessary condition for recovery of edge set}
Suppose we know the exact Fourier support $\Lambda\subseteq\mathbb{Z}^2$ of the minimal polynomial $\mu$ describing the edge set of the image, and let $\Gamma\subseteq\mathbb{Z}^2$ be an arbitrary sampling set. To allow for the unique recovery of the coefficients $(c[k]:\mbf k \in \Lambda)$ of $\mu$ by solving \eqref{eq:annsys}, we need enough equations so that the nullspace of the system is at most one-dimensional. This means we need the number of equations to match $|\Lambda|-1$. For every shift of the filter support $\Lambda$ contained in the sampling set $\Gamma$, i.e.\ every $\mbf \ell \in \Gamma\,{:}\,\Lambda$, we obtain two equations corresponding to each partial derivative. Therefore, a necessary condition for the recovery of the $(c[k]:\mbf k \in \Lambda)$ is
\begin{equation}
\label{eq:numshifts}
2|\Gamma\,{:}\,\Lambda| \geq |\Lambda|-1.
\end{equation}
However, in general we will not know in advance the exact Fourier support of the minimal polynomial. To gain some intuition on the number of samples required to recover the signal, we focus on the case of rectangular Fourier supports. In particular, we assume the degree of the minimal polynomial is known, i.e.\ the dimensions of the smallest rectangle containing the true Fourier support. Recall that the degree also gives some measure of the complexity of the edge-set (see Prop. \ref{prop:minpoly} \& Prop. \ref{prop:connectedcomp}). To recover the minimal polynomial coefficients, we can instead solve \eqref{eq:annsys} for all coefficients within the minimal rectangular support $\Lambda_R$. When the sampling set $\Gamma$ is also rectangular, by counting shifts of $\Lambda_R$ in $\Gamma$, the expression in \eqref{eq:numshifts} reduces to the following necessary condition:
\begin{proposition}
\label{prop:necessity}
Let $f$ be piecewise constant with edge set described by a trigonometric curve of degree $(K,L)$. A necessary condition to recover the edge set from the annihilation equations \eqref{eq:annsys} is to collect samples of $\widehat{f}$ on a $(K'+1)\times(L'+1)$ rectangular grid in $\mathbb{Z}^2$ with $K'\geq K$ and $L'\geq L$ such that $$2\,(K'-K+1)(L'-L+1) \geq (K+1)(L+1)-1.$$
\end{proposition}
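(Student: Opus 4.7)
The plan is to specialize the general necessary condition \eqref{eq:numshifts} to the case where both the filter support and the sampling set are rectangular, and then interpret what this means in terms of $K, L, K', L'$.

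First I would fix the filter support to be $\Lambda_R$, the smallest axis-aligned rectangle in $\mathbb{Z}^2$ containing the Fourier support of the minimal polynomial $\mu$. Since $\mu$ has degree $(K,L)$, by definition $\Lambda_R$ has dimensions $(K+1)\times(L+1)$, so $|\Lambda_R|=(K+1)(L+1)$. This is the right choice when the exact support of $\mu$ is unknown but only its degree is available: solving \eqref{eq:annsys} over $\Lambda_R$ recovers $\mu$ (padded with zeros where necessary), and no smaller rectangle is guaranteed to contain $\mathrm{supp}\,\mu$.

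Next I would carry out the shift count. For a rectangular sampling set $\Gamma$ of dimensions $(K'+1)\times(L'+1)$, the contraction $\Gamma\,{:}\,\Lambda_R$ consists of those integer shifts $\bs\ell$ for which the translate $\Lambda_R + \bs\ell$ is contained in $\Gamma$. If either $K'<K$ or $L'<L$ there is no such shift and the system \eqref{eq:annsys} is empty, giving no constraints on the coefficients; hence we must have $K'\geq K$ and $L'\geq L$. Assuming these inequalities, a straightforward one-dimensional count in each coordinate yields
\begin{equation*}
|\Gamma\,{:}\,\Lambda_R| = (K'-K+1)(L'-L+1).
\end{equation*}

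Finally I would substitute into \eqref{eq:numshifts}. Each shift $\bs\ell \in \Gamma\,{:}\,\Lambda_R$ contributes two scalar equations, one from $\widehat{\partial_x f}$ and one from $\widehat{\partial_y f}$, so the total number of linear equations in the $(K+1)(L+1)$ unknown coefficients is $2(K'-K+1)(L'-L+1)$. For the nullspace of \eqref{eq:annsys} to be one-dimensional, hence for the coefficients of $\mu$ to be determined up to a global scalar, this count must be at least $(K+1)(L+1)-1$, yielding the stated inequality.

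There is no real obstacle here: the argument is a counting step applied on top of \eqref{eq:numshifts}. The only mildly subtle point worth emphasizing is that this is strictly a \emph{necessary} condition, since \eqref{eq:numshifts} controls the dimension of the ambient equation system but not the linear independence of the equations; sufficiency will be addressed separately in the next subsection.
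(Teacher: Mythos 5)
Your proposal matches the paper's own derivation: the paper obtains Proposition~\ref{prop:necessity} exactly by specializing \eqref{eq:numshifts} to the rectangular support $\Lambda_R$ of size $(K+1)\times(L+1)$ and a rectangular $\Gamma$ of size $(K'+1)\times(L'+1)$, counting $|\Gamma\,{:}\,\Lambda_R|=(K'-K+1)(L'-L+1)$ shifts with two equations each against $(K+1)(L+1)-1$ required equations. Your additional remarks on the necessity of $K'\geq K$, $L'\geq L$ and on the condition being only necessary are consistent with the paper's treatment.
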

To illustrate this bound, suppose $\mu$ has degree $(K,L)$, and we take Fourier samples from a rectangular grid of size $(1+\alpha) K \times (1+\alpha) L$, $\alpha>0$. Then it is necessary that $\alpha \geq \frac{1}{\sqrt{2}}$, or that we collect roughly $3KL$ Fourier samples to recover the edge set. 

Our numerical experiments on simulated data (see Fig.\ \ref{fig:edgerecovery}) indicate the above necessary conditions might also be \emph{sufficient} for unique recovery; that is, we hypothesize the coefficients of the minimal polynomial are the only non-trivial solution to the system of equations \eqref{eq:annsys} whenever it is fully determined.

\begin{figure*}[!ht]
\centering
\subfloat[Original image]{\includegraphics[height=0.25\textwidth]{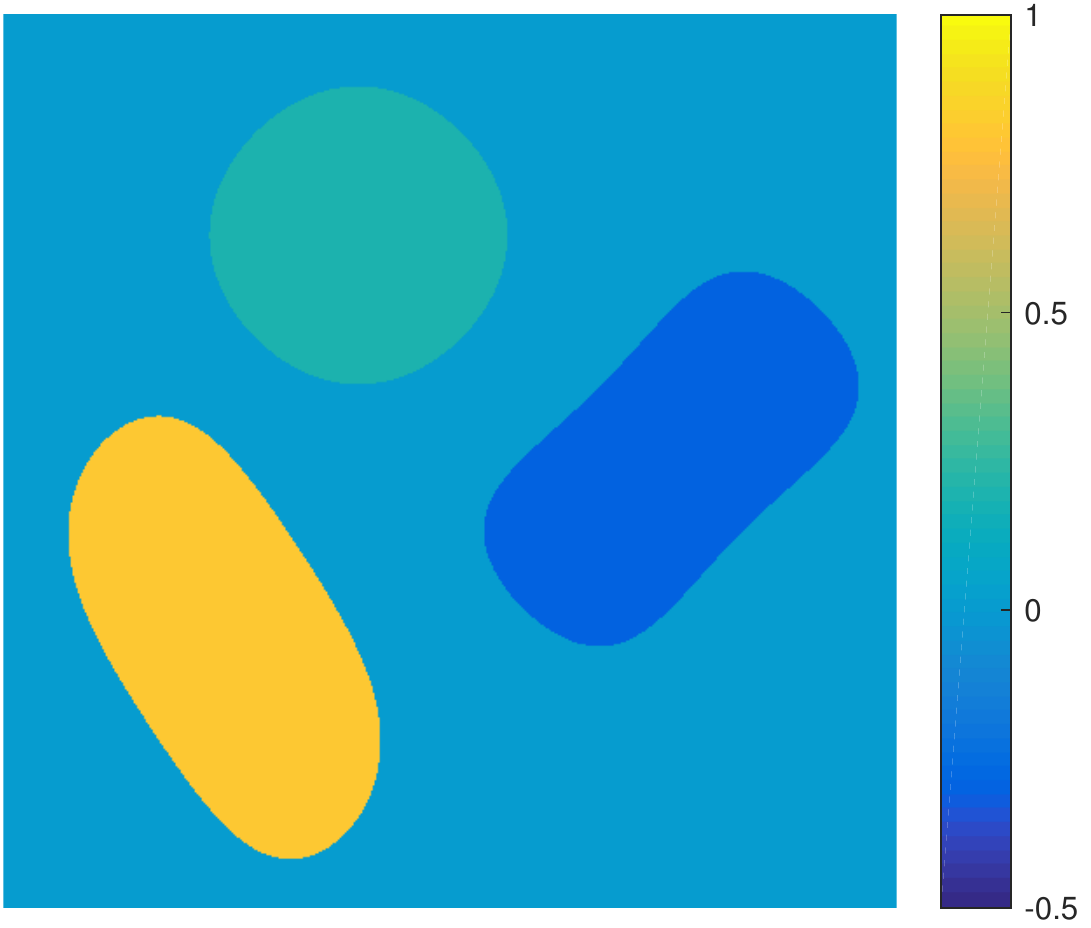}}~
\subfloat[Recovery from necessary number of samples]{\includegraphics[height=0.25\textwidth]{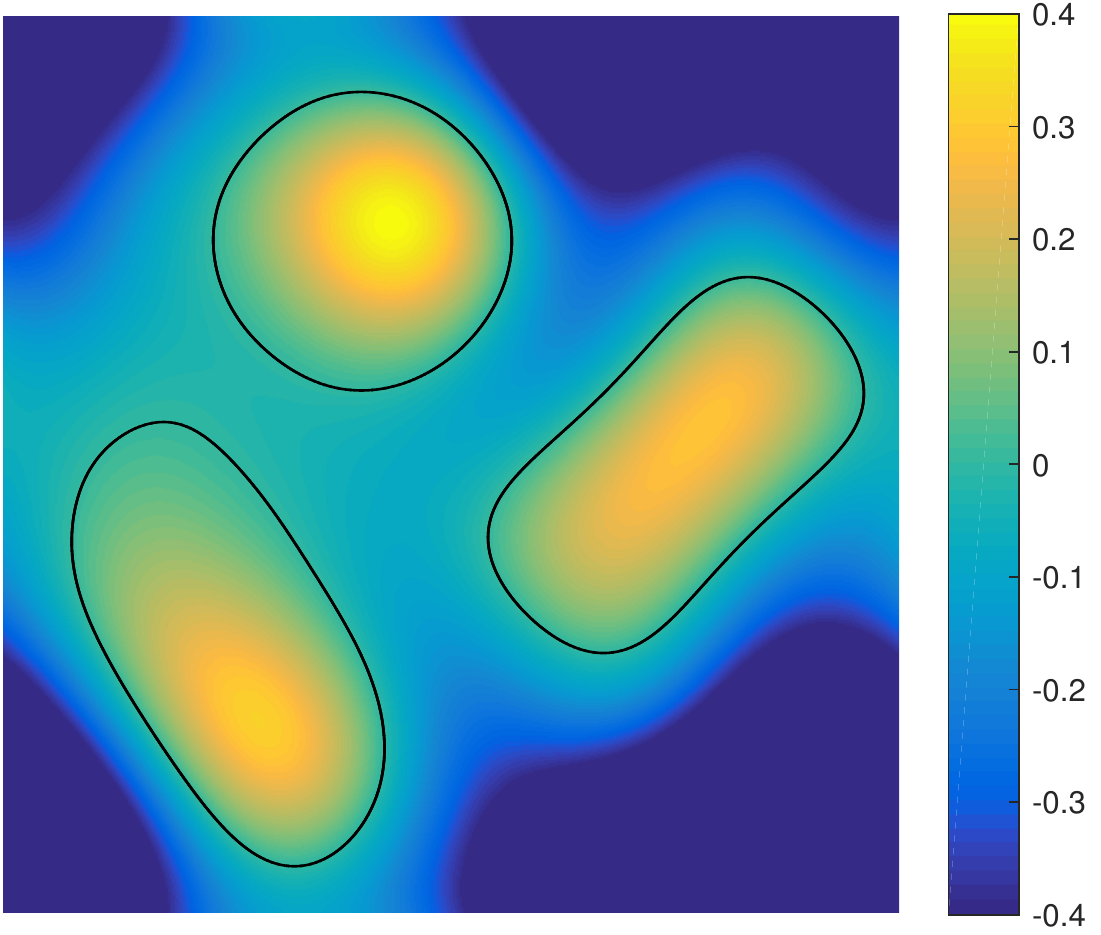}}~
\subfloat[Recovery from fewer than necessary samples]{\includegraphics[height=0.25\textwidth]{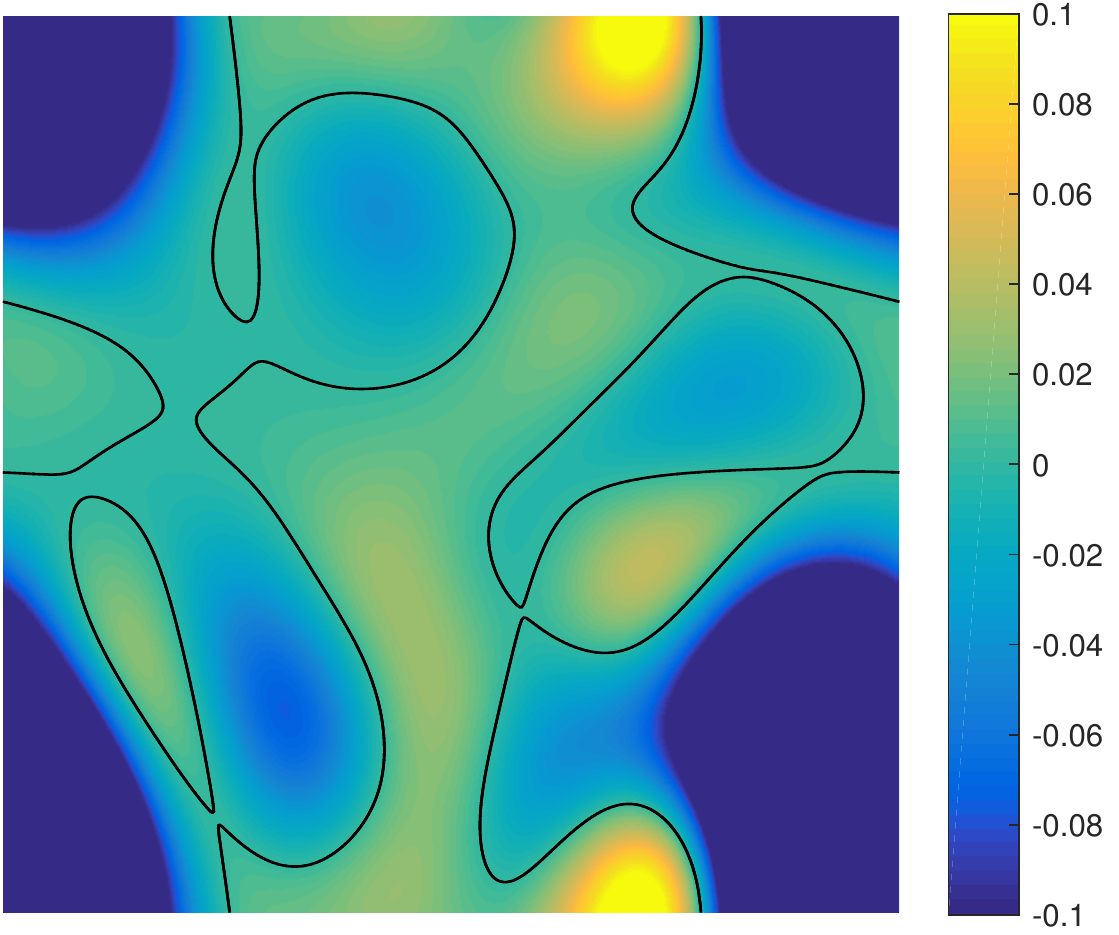}}\\
\subfloat[Original image]{\includegraphics[height=0.25\textwidth]{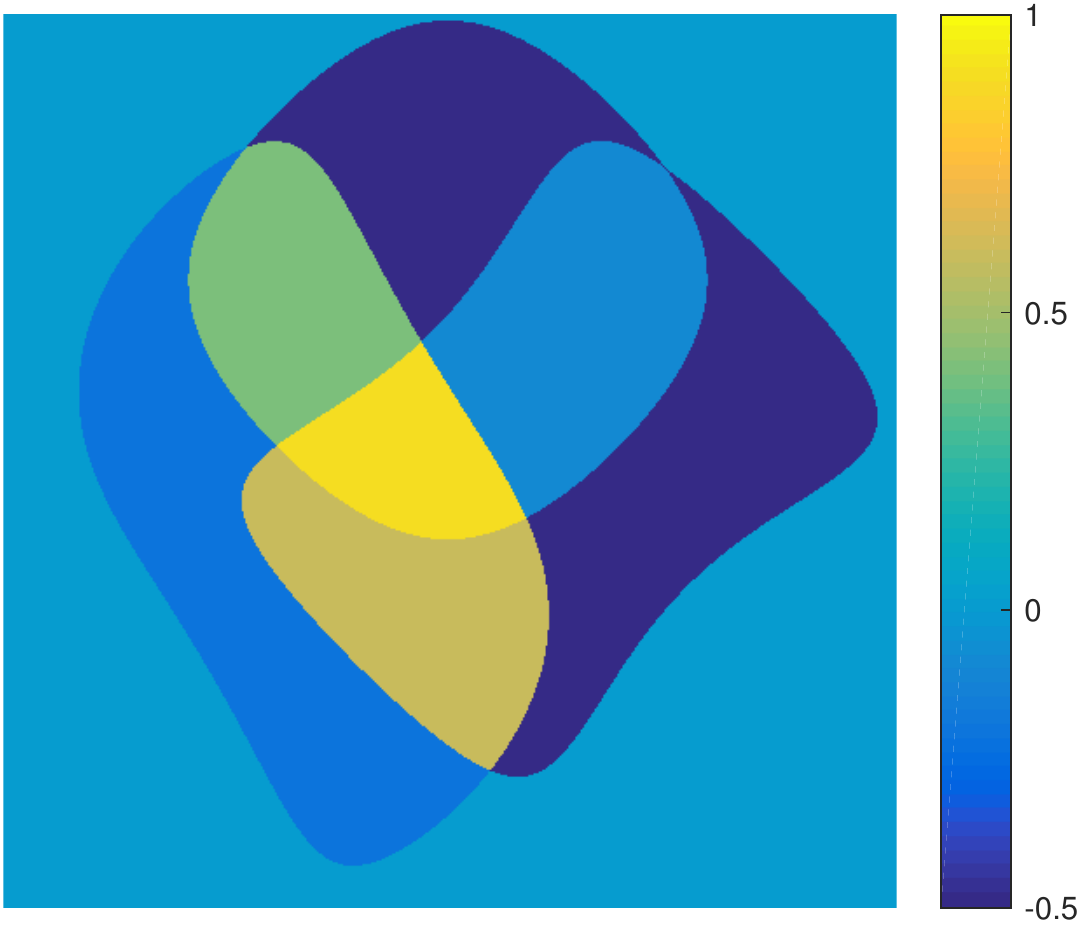}}~
\subfloat[Recovery from necessary number of samples]{\includegraphics[height=0.25\textwidth]{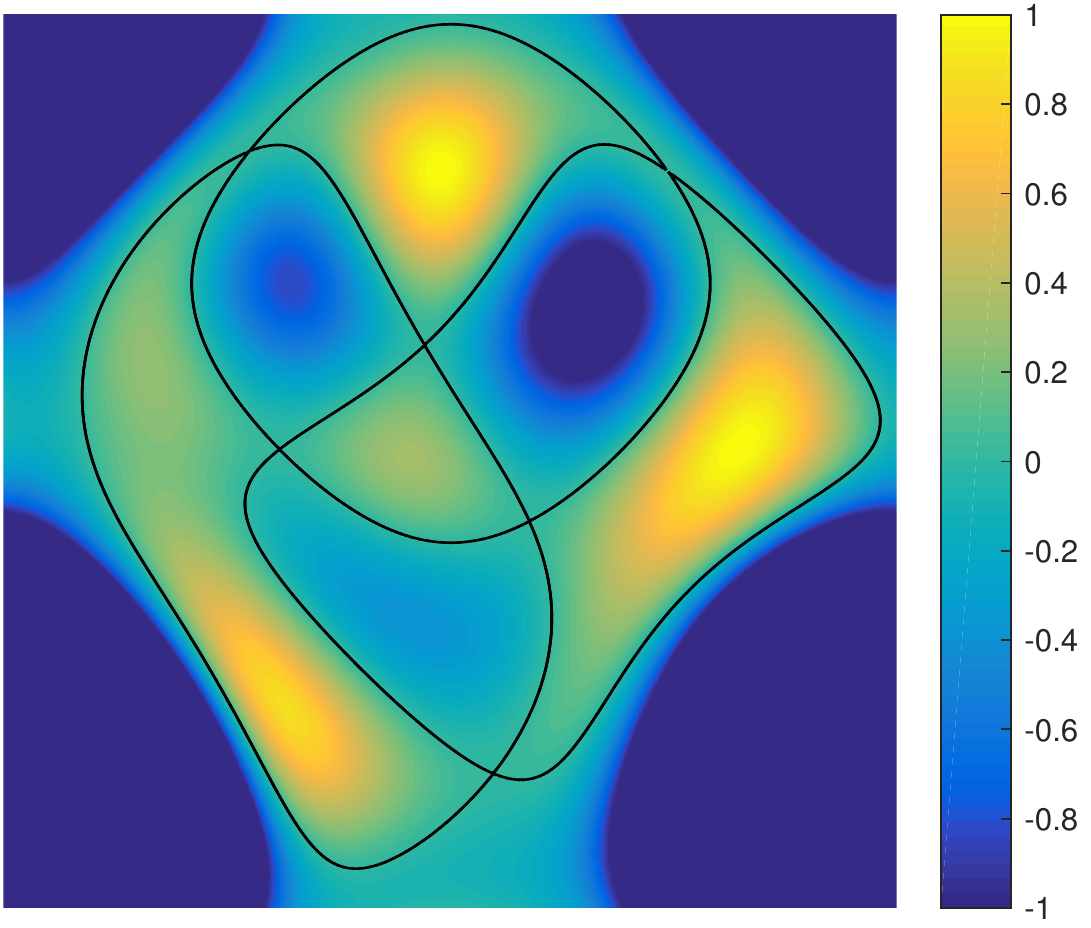}}~
\subfloat[Recovery from fewer than necessary samples]{\includegraphics[height=0.25\textwidth]{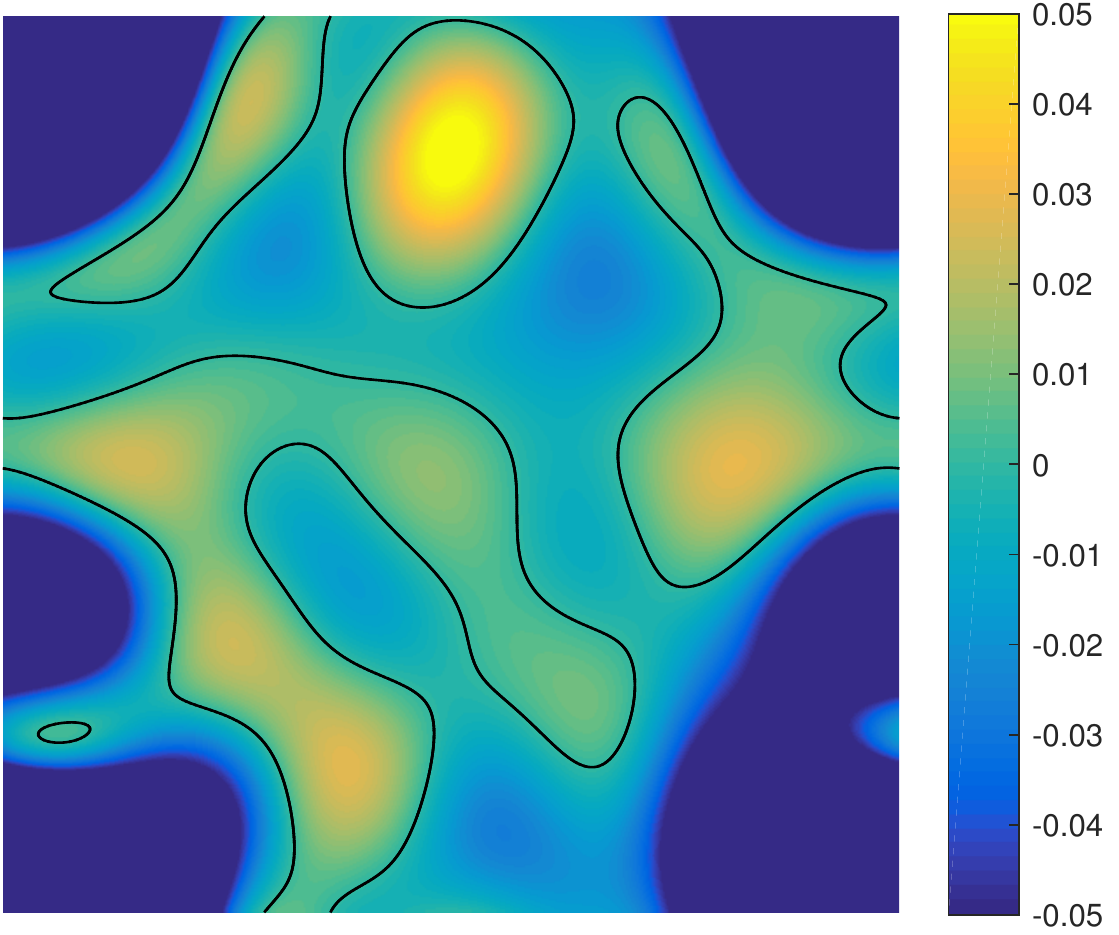}}\\
\caption{\small \emph{Exact recovery of edge set from the minimum necessary number of Fourier samples}. The piecewise constant images shown in (a)\&(d) were generated to have edge set given by the trigonometric curve $\{\mu =0\}$ of degree $(6,6)$, i.e.\ the known minimal polynomial $\mu$ is described by $7$$\times$$7$ coefficients. We recover $\mu$ by solving the linear system \eqref{eq:annsys} built from $11\times11$ Fourier samples of the original image, which were approximated numerically to high precision. This is the necessary minimum number of samples specified by Prop. \ref{prop:necessity}, and is less than the sufficient number of samples specified by Thm.\ \ref{thm:unique2} ($19$$\times$$19$ samples). The recovered minimal polynomial $\mu$ is shown in (b)\&(e) with $\{\mu=0\}$ plotted in black; we observe the recovered coefficients match the known minimal polynomial coefficients to within machine precision. In (c)\&(f) we show a solution obtained from fewer than the necessary minimum number of samples. Note that in this case the true edge set does not coincide with the zero set of the recovered trigonometric polynomial.}
\label{fig:edgerecovery}
\end{figure*}

\subsection{Sufficient conditions for recovery of edge set}
We now focus on sufficient conditions for the recovery of the edge set of a piecewise constant image, or equivalently, the coefficients of its minimal polynomial, from \eqref{eq:annsys}. 
For simplicity, we first state our result in the case of a single characteristic function. We give the proof of this result, and all the remaining proofs in \S\ref{sec:edgerecovery}--\ref{sec:alg}, in Appendix \ref{sec:appendix_proofs}.

\begin{theorem}
\label{thm:unique1}
Let $f = 1_U$ be the characteristic function of a simply connected region $U$ with boundary $\partial U = \{\mu = 0\}$, where $\mu$ is the minimal polynomial. Then the coefficients  $\mbf c = (c[\mbf k]:\mbf k \in {\Lambda})$ of $\mu$ can be uniquely recovered (up to scaling) from samples of $\widehat{f}$ in $3\Lambda$ as the only non-trivial solution to the equations
\begin{equation}
\sum_{\mbf k \in {\Lambda}} c[\mbf k]\, \widehat{\nabla f}[\bs\ell - \mbf k] = \bs 0,
   ~~\text{for all}~~\bs\ell \in 2 {\Lambda}.
  \label{eq:unique1}  
\end{equation}
\end{theorem}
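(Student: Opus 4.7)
The plan is to show that the only trigonometric polynomials with coefficient support $\Lambda$ satisfying \eqref{eq:unique1} are scalar multiples of the minimal polynomial $\mu$. Let $\gamma$ be such a polynomial with coefficients $\mbf d$. I would first reduce the problem to showing that $\gamma$ must vanish pointwise on the edge set $\partial U$, and then close the argument with Prop.~\ref{prop:minpoly} together with a Bezout-type divisibility result for trigonometric polynomials.

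First I would rewrite \eqref{eq:unique1} in distributional form as $\widehat{\gamma\nabla 1_U}[\bs\ell]=\bs 0$ for all $\bs\ell\in 2\Lambda$, using the same convolution-theorem argument as in the derivation of the annihilation relation \eqref{eq:annihilation}. Next, since Prop.~\ref{prop:minpoly} lets $\mu$ be taken real (so $\Lambda=-\Lambda$ and $\overline{\gamma}$ also has Fourier support in $\Lambda$), I would multiply by $\overline{\gamma}$: expanding $\widehat{\overline\gamma\cdot(\gamma\nabla 1_U)}$ as a convolution and carrying out a direct Minkowski-sum computation yields $\widehat{|\gamma|^2\nabla 1_U}[\bs\ell]=\bs 0$ for every $\bs\ell\in\Lambda$. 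Via the duality identity \eqref{eq:continuum}, this reads $\oint_{\partial U}|\gamma(\mbf r)|^2 e^{-j2\pi\bs\ell\cdot\mbf r}\,\mbf n\,ds=\bs 0$ for every $\bs\ell\in\Lambda$.

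The hard part will be bootstrapping this finite family of moment identities into the pointwise vanishing $|\gamma|^2\equiv 0$ on $\partial U$. My plan is to combine these identities with the full annihilation $\mu\nabla 1_U\equiv 0$: multiplying through by $\mu$ yields $\widehat\mu\ast\widehat{|\gamma|^2\nabla 1_U}=\bs 0$ on all of $\mathbb{Z}^2$, a recurrence on the coefficients of the vector-valued distribution $|\gamma|^2\nabla 1_U$. Combining this recurrence with the initial vanishing on $\Lambda$, and leveraging the algebraic structure of trigonometric curves and the uniqueness properties of minimal polynomials developed in Appendix~\ref{sec:appendix_trig}, I expect to propagate the vanishing to all frequencies, giving $|\gamma|^2\nabla 1_U=0$ as a distribution. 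Testing against vector fields $\bs\varphi$ whose normal component $\bs\varphi\cdot\mbf n$ can be varied freely on an open arc of $\partial U$ then forces $|\gamma|^2\equiv 0$ on $\partial U$, and by positivity $\gamma|_{\partial U}=0$.

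To finish, I would apply a divisibility result (to be stated in Appendix~\ref{sec:appendix_proofs}): any trigonometric polynomial vanishing on $\{\mu=0\}=\partial U$ whose coefficient support is contained in $\Lambda=\mathrm{supp}(\widehat\mu)$ must be a scalar multiple of $\mu$, by the minimality clause of Prop.~\ref{prop:minpoly}. Lastly, the sampling hypothesis that $\widehat f$ be known on $3\Lambda$ is exactly what is needed to compute $\widehat{\nabla f}[\bs\ell-\mbf k]$ for every $\bs\ell\in 2\Lambda$ and $\mbf k\in\Lambda$, since those differences range over $2\Lambda+\Lambda=3\Lambda$ when $\Lambda$ is symmetric about the origin.
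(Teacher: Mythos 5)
Your setup is right and matches the paper's: rewriting \eqref{eq:unique1} as the vanishing of the Fourier coefficients of $\gamma\,\nabla 1_U$ on $2\Lambda$, dualizing to $\oint_{\partial U}\gamma\,(\bs\psi\cdot\mbf n)\,ds=0$ for all test fields $\bs\psi$ bandlimited to $2\Lambda$, and closing with the divisibility property of the minimal polynomial once $\gamma|_{\partial U}=0$ is known. But the middle of your argument has a genuine gap, and it sits exactly where you flagged ``the hard part.'' Multiplying by $\overline{\gamma}$ alone leaves you with the \emph{vector-valued} moment conditions $\oint_{\partial U}|\gamma|^2 e^{-j2\pi\bs\ell\cdot\mbf r}\,\mbf n\,ds=\bs 0$ for $\bs\ell\in\Lambda$; the integrand is not sign-definite because $\mbf n$ rotates around the curve, so positivity of $|\gamma|^2$ cannot yet be exploited. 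Your proposed bootstrap --- using the recurrence $\widehat{\mu}\ast\widehat{|\gamma|^2\nabla 1_U}=\bs 0$ to propagate the vanishing from $\Lambda$ to all of $\mathbb{Z}^2$ --- does not work: unlike the 1-D Prony recurrence, the 2-D annihilation relation $\mu D=0$ has an infinite-dimensional solution space (essentially all distributions supported on $\{\mu=0\}$ satisfy it), so prescribing finitely many vanishing Fourier coefficients cannot force $D=0$. If such propagation were valid it would show that any density on $\partial U$ with $|\Lambda|$ vanishing moments is zero, which is false.

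The missing idea is a specific choice of test field that converts the moment conditions into a \emph{scalar} integral with nonnegative integrand in one step: take $\bs\psi=\gamma^*\nabla\mu$, which lies in $B^2_{2\Lambda}$ because $\Lambda=-\Lambda$ for the real-valued minimal polynomial and $\nabla\mu$ has the same coefficient support as $\mu$. Since $\partial U$ is a level set of $\mu$, one has $\mbf n=\pm\nabla\mu/|\nabla\mu|$ almost everywhere on $\partial U$ (the exceptional set where $\nabla\mu=0$ is finite by Prop.~\ref{prop:musquared}), and the sign is globally fixed because $\mu$ changes sign across $\partial U$ by Prop.~\ref{prop:coloring}. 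Substituting this $\bs\psi$ into the duality identity yields $\oint_{\partial U}|\gamma|^2\,|\nabla\mu|\,ds=0$, which forces $\gamma=0$ on $\partial U$ immediately. From there your concluding divisibility argument is exactly the paper's. So the overall architecture of your proof is sound, but as written it cannot be completed without replacing the propagation step by this choice of test field (or an equivalent device that produces a sign-definite integrand).
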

The proof of Theorem \ref{thm:unique1} entails showing any other trigonometric polynomial $\eta$ having coefficients $(d[\mbf k] : \mbf k \in {\Lambda})$ satisfying \eqref{eq:unique1} must vanish on $\partial U$. From this it follows that $\eta$ is a scalar multiple of $\mu$ by properties of minimal polynomials. We now extend the above result to piecewise constant signals, provided the boundaries of the regions do not intersect:
\begin{theorem}
\label{thm:unique2}
Let $f = \sum_{i=1}^n a_i\,1_{U_i}$ be piecewise constant such that the boundary curves $\partial U_i, i=1,\ldots,n$, are connected, pairwise disjoint, and given by trigonometric curves: $\partial U_i = \{\mu_i = 0\}$, where $\mu_i$ is the minimal polynomial. Then, the coefficients $\mbf c = (c[\mbf k]:\mbf k \in {\Lambda})$ of $\mu=\mu_1\cdots\mu_n$, and equivalently the edge set $E=\cup_{i=1}^n \partial U_i$, can be uniquely recovered (up to scaling) from samples of $\widehat{f}$ in $3\Lambda$ as the only non-trivial solution of
\begin{equation}
\sum_{\mbf k \in {\Lambda}} c[\mbf k]\, \widehat{\nabla f}[\bs\ell - \mbf k] = \bs 0,
   ~~\text{for all}~~\bs \ell \in 2\Lambda.
   \label{eq:unique2}
\end{equation}
\end{theorem}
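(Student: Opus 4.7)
The plan is to extend the strategy used for Theorem~\ref{thm:unique1}. By Proposition~1, the coefficients of $\mu = \mu_1\cdots\mu_n$ satisfy \eqref{eq:unique2}, so existence is immediate; the content is uniqueness. First I note the sampling set $3\Lambda$ is exactly what is needed to form every equation in \eqref{eq:unique2}, since each involves $\widehat{\nabla f}[\bs\ell - \mbf k]$ with $\bs\ell\in 2\Lambda$ and $\mbf k\in \Lambda$, so the indices lie in $2\Lambda - \Lambda = 3\Lambda$ (using that $\Lambda$ is symmetric because $\mu$ is real). Now let $\eta$ be any nontrivial trigonometric polynomial with coefficients $(d[\mbf k]:\mbf k\in\Lambda)$ satisfying \eqref{eq:unique2}. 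The goal is to show $\eta$ vanishes on the whole edge set $E = \bigcup_{i=1}^n \partial U_i$, and then invoke minimal polynomial theory to conclude $\eta$ is a scalar multiple of $\mu$.

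Rewriting \eqref{eq:unique2} in the spatial domain via the duality relation \eqref{eq:continuum} applied termwise to $f = \sum_i a_i 1_{U_i}$, the annihilation hypothesis translates into
\[
\sum_{i=1}^n a_i \oint_{\partial U_i} \eta\,(\bs\varphi\cdot \mbf n_i)\,ds \;=\; 0
\]
for every test field $\bs\varphi$ whose two components are trigonometric polynomials with Fourier support in $2\Lambda$. To decouple this sum, fix an index $i$ and choose the \emph{specific} test field
\[
\bs\varphi_i \;=\; \bar\eta\,\nabla \mu_i \prod_{j\neq i} \mu_j.
\]
Its components are trigonometric polynomials with Fourier support in $\Lambda + \Lambda = 2\Lambda$ (using symmetry of $\Lambda$ to bound the support of $\bar\eta$ by $\Lambda$), so $\bs\varphi_i$ is admissible. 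On each $\partial U_k$ with $k\neq i$, the factor $\mu_k$ appearing in $\prod_{j\neq i}\mu_j$ vanishes identically, killing the entire $k$-th boundary integral. On $\partial U_i = \{\mu_i=0\}$, the outward normal satisfies $\mbf n_i = \pm \nabla\mu_i/|\nabla\mu_i|$ away from the (measure-zero) set of singular points of the curve, so $\bs\varphi_i\cdot\mbf n_i = \pm\,\bar\eta\,|\nabla\mu_i|\prod_{j\neq i}\mu_j$. Since $\partial U_i$ is connected and disjoint from every $\partial U_j$ with $j\neq i$, each $\mu_j$ (being continuous and nonvanishing on $\partial U_i$) has constant sign there, and hence so does $\prod_{j\neq i}\mu_j$. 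The annihilation equation thus reduces to
\[
a_i \oint_{\partial U_i} |\eta|^2\,|\nabla \mu_i|\,\bigl|\textstyle\prod_{j\neq i}\mu_j\bigr|\,ds \;=\; 0,
\]
and since $a_i\neq 0$ and the integrand is nonnegative with every factor other than $|\eta|^2$ strictly positive almost everywhere on $\partial U_i$, we conclude $\eta$ vanishes on $\partial U_i$. Varying $i$ gives $\eta|_E = 0$.

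The algebraic finish is then routine: by the divisibility properties of minimal polynomials collected in Appendix~\ref{sec:appendix_proofs}, vanishing of $\eta$ on $\partial U_i = \{\mu_i=0\}$ forces $\mu_i \mid \eta$ in the ring of trigonometric polynomials; pairwise disjointness of the curves $\partial U_i$ translates into pairwise coprimeness of the $\mu_i$, so the product $\mu = \mu_1\cdots\mu_n$ also divides $\eta$. Since $\eta$ and $\mu$ both have Fourier support in $\Lambda$, comparing supports forces the quotient $\eta/\mu$ to be a nonzero constant. The step I expect to be the main obstacle is the decoupling argument: one must carefully verify that the chosen test field $\bs\varphi_i$ genuinely has Fourier support in $2\Lambda$, that the singular points of $\partial U_i$ form a set of arc-length measure zero, and that the pairwise-coprimeness of the $\mu_i$ really follows from the purely geometric disjointness of their zero sets—all of which should be underwritten by the topological and algebraic facts about minimal polynomials gathered in Appendix~\ref{sec:appendix_proofs}.
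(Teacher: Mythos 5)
Your proposal is correct and follows essentially the same route as the paper's proof: the decoupling test field $\bar\eta\,\nabla\mu_i\prod_{j\neq i}\mu_j$ is exactly the paper's choice $\bs\psi_j = \eta^*\,\widetilde{\mu}_j\nabla\mu_j$ with $\widetilde{\mu}_j = \mu/\mu_j$, and the sign/normal argument and the concluding appeal to minimal-polynomial divisibility match the paper's (which invokes minimality of $\mu$ for the union directly rather than coprimeness of the factors, a purely presentational difference).
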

We remark that the proof of Theorem \ref{thm:unique2} can be adapted to many instances where the boundary curves are allowed to intersect, provided we take slightly more samples than $3\Lambda$. The number of additional samples required depends on the precise intersection geometry, and is difficult to characterize for arbitrary curves, and so we do not state a general result here.

Note the sampling requirement of $3\Lambda$ in Theorems \ref{thm:unique1} and \ref{thm:unique2} is greater than the necessary number of samples given in Prop.\ \ref{prop:necessity}. For example, if the edge set has degree $(K,L)$, the necessary number of samples is roughly $3KL$ whereas the sufficient number is roughly $9KL$. In other words, there is a gap in our results between the necessary and sufficient number of samples needed for unique recovery of the edge set. We conjecture that Theorems \ref{thm:unique1} and \ref{thm:unique2} can in fact be sharpened to the necessary number of samples, and extended to edge sets with arbitrary intersection.

Finally, we provide a generalization of Theorem \ref{thm:unique2} that will be useful in practical applications when the degree of the edge set is unknown. Specifically, assume that the system \eqref{eq:annsys} is built with a larger coefficient support set $\Lambda'$ than the minimal support $\Lambda$. In this case the solution to \eqref{eq:annsys} is not necessarily unique: any multiple of the minimal polynomial $\eta = \mu \gamma$ such that $\gamma$ has coefficients in $\Lambda':\Lambda$ is also a solution. This is because $\eta$ satisfies $\{\mu = 0\} \subseteq \{\eta = 0\}$, hence is also an annihilating polynomial. We now show that \emph{every} solution to \eqref{eq:annsys} must be the coefficients of a multiple of the minimal polynomial, under the same restriction as Theorem \ref{thm:unique2}:
\begin{theorem}
\label{thm:unique_3}
Let $f$ be as in Theorem \ref{thm:unique2}. Suppose $\mu = \mu_1\cdots\mu_n$ has coefficients in ${\Lambda}$, and let $\Lambda'$ be another index set with ${\Lambda} \subseteq \Lambda'$. If the coefficients $(d[\mbf k] : \mbf k \in {\Lambda'})$ of a trigonometric polynomial $\eta$ are a non-trivial solution of
\begin{equation}
\sum_{\mbf k \in {\Lambda'}} d[\mbf k]\, \widehat{\nabla f}[\bs\ell - \mbf k] = \bs 0,
   ~~\text{for all}~~~\bs \ell \in \Lambda' + \Lambda,
   \label{eq:unique3}
\end{equation}
then $\mu$ divides $\eta$, that is $\eta = \mu\,\gamma$ where $\gamma$ is another trigonometric polynomial with coefficients supported in $\Lambda':\Lambda$.
\end{theorem}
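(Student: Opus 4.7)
The plan is to reduce the claim to the pointwise vanishing of $\eta$ on the edge set $E=\bigcup_{i=1}^n\partial U_i$: once $\eta|_E\equiv 0$, the minimality of each $\mu_i$ (Proposition \ref{prop:minpoly}) yields $\mu_i\mid\eta$ as trigonometric polynomials, and the pairwise disjointness of the curves together with the algebraic properties of trigonometric polynomials from Appendix \ref{sec:appendix_trig} forces the $\mu_i$ to be pairwise coprime, so that $\mu=\mu_1\cdots\mu_n$ divides $\eta$. A Fourier-support comparison in the convolution identity for the coefficients of $\eta=\mu\gamma$ then places the coefficients of the quotient $\gamma$ in $\Lambda':\Lambda$, as claimed.

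To establish $\eta|_E\equiv 0$, I would first translate \eqref{eq:unique3} into a family of boundary integral identities. By Parseval, the vanishing of $\widehat{\eta\nabla f}[\bs\ell]$ on $\Lambda'+\Lambda$ is equivalent to $\langle\eta\nabla f,\bs\varphi\rangle=0$ for every vector-valued trigonometric polynomial $\bs\varphi$ whose scalar components have Fourier support in $\Lambda'+\Lambda$. Applying the boundary integral representation \eqref{eq:continuum} region-by-region then gives
\[
\sum_{i=1}^n a_i\oint_{\partial U_i}\eta\,(\bs\varphi\cdot\mbf n_i)\,ds=0
\]
for every such $\bs\varphi$.

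Next, to decouple the sum, I would introduce the separating polynomials $\tau_j := \prod_{i\neq j}\mu_i$. By the pairwise disjointness hypothesis, $\tau_j$ vanishes on every $\partial U_i$ with $i\neq j$ while remaining strictly nonzero on the connected curve $\partial U_j$. For any admissible $\bs\alpha$---meaning one for which $\tau_j\bs\alpha$ still has coefficients supported in $\Lambda'+\Lambda$---substituting $\bs\varphi=\tau_j\bs\alpha$ collapses the sum to the single term
\[
a_j\oint_{\partial U_j}\eta\,\tau_j\,(\bs\alpha\cdot\mbf n_j)\,ds=0.
\]
This is an identity of precisely the form that drives the proof of Theorem \ref{thm:unique1}, but now with the product $\tilde\eta := \eta\tau_j$ playing the role of the trigonometric polynomial to be annihilated on a single connected boundary. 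Appealing to the density argument from that proof, the admissible family of $\bs\alpha$ is rich enough to force $\tilde\eta\equiv 0$ on $\partial U_j$, and since $\tau_j$ is nowhere zero there, we conclude $\eta\equiv 0$ on $\partial U_j$ for each $j$.

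The main obstacle I anticipate is verifying, in the preceding step, that the restricted class of admissible test fields---those $\bs\alpha$ whose product with $\tau_j$ remains within the budget $\Lambda'+\Lambda$---is still large enough for the density argument of Theorem \ref{thm:unique1} to apply. This requires careful tracking of how the Fourier support of $\tau_j$ interacts with $\Lambda$ and $\Lambda'$, and the algebraic bookkeeping for products of minimal polynomials provided in Appendix \ref{sec:appendix_trig} should be the key technical input that makes the shifted version of the density argument work.
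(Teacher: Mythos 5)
Your overall architecture coincides with the paper's: translate \eqref{eq:unique3} into the boundary identity $\sum_i a_i\oint_{\partial U_i}\eta\,(\bs\varphi\cdot\mbf n)\,ds=0$ for test fields bandlimited to $\Lambda'+\Lambda$, insert the separating polynomials $\tau_j=\prod_{i\neq j}\mu_i$ (the paper's $\widetilde{\mu}_j=\mu/\mu_j$) to kill all terms but the $j$-th, conclude $\eta=0$ on each $\partial U_j$, and finish with minimality plus a support count for $\gamma$. The gap is in the one step you flag but do not resolve. The argument in Theorem \ref{thm:unique1} is not a density argument over a rich family of test fields; it is a single well-chosen test field $\bs\psi=\eta^*\nabla\mu$, whose virtue is that it turns the integrand into the nonnegative quantity $|\eta|^2|\nabla\mu|$. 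Transplanting that choice to your reduction, with $\widetilde\eta=\eta\tau_j$ playing the role of $\eta$ on $\partial U_j$, forces $\bs\alpha=\eta^*\tau_j^*\nabla\mu_j$, i.e.\ $\bs\varphi=\tau_j\bs\alpha=\eta^*|\tau_j|^2\nabla\mu_j$. The factor $|\tau_j|^2=\tau_j\tau_j^*$ has Fourier support in the difference set of the support of $\tau_j$ with itself, so the components of this $\bs\varphi$ live in $-\Lambda'+(\Lambda_{\tau_j}-\Lambda_{\tau_j})+\Lambda_{\mu_j}$, which generically exceeds the budget $\Lambda'+\Lambda$. So the step "appeal to the density argument from that proof" does not go through as stated, and this is precisely where the theorem would fail to be proved.

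The fix, which is what the paper does in \eqref{eq:choice}, is to drop the extra conjugate factor: take $\bs\alpha=\eta^*\nabla\mu_j$, so that $\bs\varphi_j=\eta^*\,\widetilde{\mu}_j\nabla\mu_j$ has components supported in $-\Lambda'+\Lambda\subseteq\Lambda'+\Lambda$ (since $\widetilde{\mu}_j\nabla\mu_j$ is, up to constants, a piece of $\nabla\mu$ and hence supported in $\Lambda$). The price is that the surviving integrand is $|\eta|^2\,\widetilde{\mu}_j(\nabla\mu_j\cdot\mbf n)$ rather than a manifestly nonnegative $|\widetilde\eta|^2$ term, and the positivity must instead come from geometry: by the product rule $\widetilde{\mu}_j\nabla\mu_j=\nabla\mu$ on $\partial U_j$, and because each $U_j$ is connected with non-intersecting boundary, $\mu$ has a fixed sign on each side of $\partial U_j$ (Prop.~\ref{prop:coloring}), so $\nabla\mu\cdot\mbf n=\pm|\nabla\mu|$ with a fixed sign along all of $\partial U_j$. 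The integral then reduces to $\pm a_j\oint_{\partial U_j}|\eta|^2|\nabla\mu|\,ds=0$, and Prop.~\ref{prop:musquared} (finiteness of $\{\nabla\mu=0\}$ on the curve) gives $\eta=0$ on $\partial U_j$ directly, with no need to pass through $\widetilde\eta=\eta\tau_j$. With that replacement your remaining steps (minimality of $\mu$, hence $\mu\mid\eta$, and the Newton-polytope/support comparison placing $\gamma$ in $\Lambda':\Lambda$) are sound and match the paper.
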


\section{Edge set aware recovery of the image}
\label{sec:imagerecovery}
In this section we establish conditions for the unique recovery of a piecewise constant image from few of its low-pass Fourier samples with knowledge of the minimal polynomial describing its edge set. We will study the problem of identifiability of the signal in continuous domain: given ideal Fourier samples $\widehat{f}[\mbf k], \mbf k \in {\Gamma}$, and the minimal polynomial $\mu$, when is $f$ is the unique function satisfying the annihilation relation, $\mu \nabla f = 0$?
Here we will restrict solutions $f$ to belong to $L^1([0,1]^2)$, and interpret $\nabla f$ in a a distributional sense. First, we show any function $g\in L^1([0,1]^2)$ satisfying the annihilation relation can be compactly parametrized as a piecewise constant function:
\begin{proposition}
\label{thm:piecewisecst}
Let $\mu$ be any trigonometric polynomial. Suppose $g\in L^1([0,1]^2)$ satisfies the annihilation relation
\begin{equation}
\mu\, \nabla g = 0
\end{equation}
where equality holds in the distributional sense. Then, $g$ can be parameterized as the piecewise constant function
\begin{equation}
\label{eq:parametric}
g = \sum_{i=1}^n b_i\, 1_{U_i}
\end{equation}
almost everywhere, where $b_i \in \mathbb C$ and $U_i$ are the connected components of $\{\mu=0\}^C$.
\end{proposition}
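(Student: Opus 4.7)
The plan is to reduce the distributional equation $\mu\,\nabla g = 0$ to the stronger-looking statement that $\nabla g = 0$ as a distribution on the open set $\Omega = \{\mu \neq 0\}$, and then invoke the standard fact that a locally integrable function with vanishing distributional gradient is constant on each connected component.

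First I would unpack the hypothesis: $\mu\,\nabla g = 0$ in $\mathcal{S}'$ means that for every test field $\bs\varphi = (\varphi_1,\varphi_2)$ with $\varphi_i\in\mathcal{S}$,
\[
\langle \nabla g,\,\mu\,\bs\varphi\rangle = 0.
\]
The key step is then to show $\nabla g = 0$ on $\Omega$. Given any test field $\bs\psi$ compactly supported in $\Omega$, I would set $\bs\varphi = \bs\psi/\mu$. Because $\mu$ is smooth and bounded away from zero on the compact set $\mathrm{supp}(\bs\psi)$, this quotient is smooth there and extends by zero to an element of $\mathcal{S}^2$ (viewing $\mathcal{S}$ as $C^\infty$ periodic functions on $[0,1]^2$). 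Plugging this $\bs\varphi$ into the hypothesis gives $\langle \nabla g,\bs\psi\rangle = \langle\nabla g,\mu\,\bs\varphi\rangle = 0$, so $\nabla g$ annihilates every test field supported in $\Omega$.

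Once $\nabla g \equiv 0$ distributionally on $\Omega$, the standard Poincar\'e-type result for distributions yields that $g$ is almost everywhere constant on each connected component $U_i$ of $\Omega$; let $b_i \in \mathbb{C}$ be this constant. Since $\mu$ is a nonzero real-analytic function, its zero set $\{\mu=0\} = [0,1]^2\setminus \Omega$ has Lebesgue measure zero, so these component-wise constants describe $g$ almost everywhere on all of $[0,1]^2$, giving
\[
g = \sum_{i=1}^n b_i\, 1_{U_i} \quad\text{a.e.}
\]
Finiteness of $n$ follows from Proposition~\ref{prop:connectedcomp} applied to the trigonometric-curve portion of $\{\mu=0\}$ (isolated zeros of $\mu$ do not disconnect $[0,1]^2$), or equivalently from the semi-algebraic structure of $\{\mu=0\}$ under the substitution $z = e^{j2\pi x},\, w = e^{j2\pi y}$.

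The main technical obstacle is the division-by-$\mu$ step: one must verify carefully that $\bs\psi/\mu$ is actually a valid Schwartz test field so that pairing it with $\nabla g$ is legitimate. This can be handled by choosing a smooth cutoff $\chi$ equal to $1$ on $\mathrm{supp}(\bs\psi)$ and compactly supported in an open subset of $\Omega$ on which $|\mu|$ is bounded below, and then setting $\bs\varphi := \chi\bs\psi/\mu$; smoothness and periodicity of $\bs\varphi$ are then immediate, and the identity $\mu\bs\varphi = \bs\psi$ still holds on $\mathrm{supp}(\bs\psi)$, which is all that is needed. Beyond this, the remaining steps (measure-zero nature of $\{\mu=0\}$ and the Poincar\'e-type lemma) are classical and introduce no real difficulty.
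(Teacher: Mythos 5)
Your proposal is correct and follows essentially the same route as the paper: divide a test field supported in a component of $\{\mu\neq 0\}$ by $\mu$ to conclude $\nabla g=0$ there distributionally, then deduce local constancy (the paper carries out the mollification argument you cite as the standard Poincar\'e-type lemma). Your explicit remarks on the measure-zero nature of $\{\mu=0\}$ and the finiteness of the number of components are welcome details the paper leaves implicit, but they do not change the argument.
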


The above proposition implies that the full recovery of the piecewise constant image now simplifies to the estimation of the unknown amplitudes $b_i$, provided the regions $U_i$ are known. One approach of recovering the signal then would be to substitute the parametric model \eqref{eq:parametric} back into the annihilation system \eqref{eq:annsys}, and solve for the unknown amplitudes, analogous to the amplitude recovery step in Prony's method. However, this direct approach is not feasible in practice, since to identify the regions $U_i$ would require us to factor a high degree multivariate polynomial having arbitrary complex coefficients. For this reason, obtaining an exact off-the-grid parametric representation of the full signal in spatial domain is especially challenging in our setting. 

We avoid this direct approach and instead pose recovery as the solution to the following optimization problem in spatial domain:
\begin{equation}
\min_{g\in L^1([0,1]^2)} \|\mu \nabla g \| ~~~ \mbox{such that}~~ \hat g[\mbf k] =  \hat f[\mbf k], \text{ for all } \mbf k\in\Gamma,
\label{eq:recovery_space}
\end{equation}
where $\|\cdot\|$ is any suitable norm. Alternatively, we can solve for the solution entirely in Fourier domain:
\begin{equation}
\min_{\widehat g[\mbf k],\mbf k\in\mathbb{Z}^2}~ \left\|\mbf c\ast\widehat{\nabla g}\right\| ~~~ \mbox{such that}~~ \hat g[\mbf k] =  \hat f[\mbf k], \text{ for all } \mbf k\in\Gamma.
\label{eq:recovery_Fourier}
\end{equation}
where $\mbf c = (c[\mbf k] : \mbf k \in \Lambda)$ are the finitely supported coefficients of the minimal polynomial $\mu$, $\ast$ denotes discrete 2-D convolution, and again $\|\cdot\|$ is some suitable norm. In the formulation \eqref{eq:recovery_Fourier}, the solution is an extrapolation in Fourier domain from the known samples in $\Gamma$ to the unknown coefficients, which can be viewed as a linear prediction in 2-D. Note that \eqref{eq:recovery_space} and \eqref{eq:recovery_Fourier} are still infinite-dimensional optimization problems, and not solvable in practice. However, \eqref{eq:recovery_space} and \eqref{eq:recovery_Fourier} will motivate the practical algorithms we introduce in \S\ref{sec:alg}. We now derive necessary and sufficient conditions for the unique recovery of the signal from \eqref{eq:recovery_space} or equivalently \eqref{eq:recovery_Fourier}.
\subsection{Necessary conditions}
\label{sec:necessaryamp}
A piecewise constant function $g$ in the form \eqref{eq:parametric} with known regions $\{U_i\}_{i=1}^n$ has $n$ remaining degrees of freedoms corresponding to the unknown region amplitudes $\{b_i\}_{i=1}^n$. Hence the necessary minimum number of samples for unique recovery of the full signal---using \emph{any} algorithm---is $n$. However, typically this quantity will be much smaller than the number of available samples required to estimate the edge set. For instance, if the edge set of the image is a trigonometric curve of degree $(K,L)$, then by Proposition\ \ref{prop:connectedcomp}, then the number of distinct regions of the image is at most $2KL$. However, by Proposition\ \ref{prop:necessity} the necessary minimum number of samples required to uniquely reconstruct the edge set from the annihilation equations is roughly $3KL$, which is more than the maximum bound on the number of regions.

Here we note the parallels with Prony's method and 1-D FRI theory: To recover a signal with $K$ singularities it is necessary (and sufficient) to collect $2K+1$ uniform Fourier samples \cite{vetterli2002sampling}. In the first stage, the full $2K+1$ samples are used to identify the locations of the singularities. However, in the second stage only $K$ of the $2K+1$ samples are necessary (and sufficient) to recover the signal amplitudes. Likewise, in our setting, once the necessary number of samples is obtained to reconstruct the edge set, no further samples should be needed to recover the amplitudes.
\subsection{Sufficient conditions}
\label{sec:sufficientamp}
Under the same non-intersecting assumption on the edge set of the piecewise constant function $f$ as in the Theorems of \S\ref{sec:edgerecovery}, we can show the low-pass samples of $\widehat{f}$ are sufficient to uniquely identify $f$ from the annihilation relation with knowledge of minimal polynomial. Specifically, we show that $f$ is identifiable from samples of $\widehat{f}$ in $\Lambda$, the coefficient support set of the minimal polynomial:
\begin{theorem}
\label{thm:uniqueamplitudes}
Let $f$ be as in Theorem \ref{thm:unique2}, with known minimal polynomial $\mu$ having coefficients in $\Lambda$, and let the Fourier sampling set $\Gamma \supseteq \Lambda$. If $g \in L^1([0,1]^2)$ satisfies
\begin{equation}
\mu\, \nabla g = 0\quad and\quad \widehat{g}[\mbf k] = \widehat{f}[\mbf k], \mbf k \in {\Gamma}
\label{eq:infiniterecovery}
\end{equation}
then $g = f$ almost everywhere.
\end{theorem}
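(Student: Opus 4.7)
The plan is to use Proposition \ref{thm:piecewisecst} to reduce the claim to a finite-dimensional injectivity problem, and then to establish the required injectivity via a dual-basis construction that leverages the minimality of $\mu$.

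By Proposition \ref{thm:piecewisecst}, the hypothesis $\mu\,\nabla g = 0$ forces the parametric representation $g = \sum_{j=1}^m b_j\, 1_{V_j}$ a.e., where the $V_j$ are the connected components of $\{\mu = 0\}^C$. Under the assumptions of Theorem \ref{thm:unique2}, the $n$ pairwise-disjoint Jordan curves $\partial U_i$ partition $[0,1]^2$ into $m = n+1$ such regions. Since the edge set of $f$ itself lies in $\{\mu = 0\}$, $f$ is likewise piecewise constant on these same regions, say $f = \sum_j a_j\,1_{V_j}$, and the difference $h := g - f = \sum_j c_j 1_{V_j}$ satisfies $\widehat h[\mbf k] = 0$ for all $\mbf k \in \Gamma \supseteq \Lambda$. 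The theorem reduces to showing that these linear constraints force $c_j = 0$ for every $j$.

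Equivalently, I must show that the sampling matrix $M_{\mbf k, j} = \widehat{1_{V_j}}[\mbf k]$, $\mbf k \in \Lambda$, $1 \leq j \leq m$, has trivial kernel. By duality, it is enough to exhibit a family $\{\nu_j\}_{j=1}^m$ of trigonometric polynomials with Fourier supports contained in $\Lambda$ whose moment matrix $\bigl(\int_{V_k}\overline{\nu_j}\, d\mbf r\bigr)_{k,j}$ is non-singular, since Parseval then gives
\[
0 \;=\; \sum_{\mbf k \in \Lambda} \widehat h[\mbf k]\,\overline{\widehat{\nu_j}[\mbf k]} \;=\; \int h\,\overline{\nu_j}\, d\mbf r \;=\; \sum_k c_k\int_{V_k}\overline{\nu_j}\, d\mbf r,
\]
and non-singularity forces $\mbf c = \bs 0$.

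The main obstacle is constructing such a dual family. I would exploit both the factorization $\mu = \mu_1 \cdots \mu_n$ and the tree-like nesting of the disjoint curves $\partial U_i$: natural candidates are the constant function together with the partial products $\tilde\nu_i := \prod_{j \neq i}\mu_j$, each of which vanishes on every $\partial U_j$ with $j \neq i$ yet remains sign-definite on the two regions $V_k$ adjacent to $\partial U_i$. Ordering the regions $V_k$ according to the nesting tree should render the resulting moment matrix triangular with non-vanishing diagonal entries, establishing non-singularity. Carrying out this sketch rigorously requires verifying (i) that each $\tilde\nu_i$, possibly after a shift or correction by a lower-degree polynomial, has Fourier support contained in $\Lambda$, and (ii) that the diagonal moments $\int_{V_k}\overline{\tilde\nu_i}\, d\mbf r$ are indeed non-zero; both steps rely crucially on the minimality of $\mu$ and the disjointness of the $\partial U_i$. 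As an alternative route, one could attempt a 2-D Fourier-extrapolation argument, propagating the vanishing $\widehat h|_\Lambda = 0$ outward via the annihilation recurrence $\sum_{\mbf k\in\Lambda}c[\mbf k]\,\widehat{\nabla h}[\bs\ell-\mbf k] = \bs 0$ to conclude $\widehat h \equiv 0$ and hence $h = 0$ by Fourier inversion, although correctly handling the corners of $\Lambda$ in this 2-D recurrence is itself a delicate step.
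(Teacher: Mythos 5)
Your opening reduction is exactly the paper's: set $h = g - f$, invoke Proposition \ref{thm:piecewisecst} to write $h = \sum_j c_j 1_{V_j}$ over the connected components of $\{\mu=0\}^C$, and reduce to showing that $\widehat{h}[\mbf k]=0$ for $\mbf k\in\Lambda$ forces all $c_j=0$. The gap is in how you establish that injectivity. Your dual family $\{1,\tilde\nu_1,\dots,\tilde\nu_n\}$ with $\tilde\nu_i=\prod_{j\neq i}\mu_j$ is paired with $h$ itself, so the relevant functionals are the \emph{area} integrals $\int_{V_k}\overline{\tilde\nu_i}\,d\mbf r$. The fact that $\tilde\nu_i$ vanishes on the curves $\partial U_j$, $j\neq i$ --- sets of measure zero --- says nothing about these two-dimensional integrals, which are generically nonzero on \emph{every} region $V_k$ (e.g.\ for two nested disjoint curves, $\tilde\nu_1=\mu_2$ integrates to something nonzero over all three regions). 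So the moment matrix is not triangular under any ordering of the regions, and its non-singularity is left entirely unproved; I do not see how to salvage it with this family of scalar test functions.

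The paper closes exactly this gap by differentiating first: from $\widehat{h}[\mbf k]=0$ on $\Gamma$ one gets $\widehat{\nabla h}[\mbf k]=0$ on $\Gamma$, and $\nabla h=\sum_i b_i\nabla 1_{V_i}$ is a distribution supported on the boundary \emph{curves}, acting on test fields by $\bs\varphi\mapsto-\sum_i b_i\oint_{\partial U_i}\bs\varphi\cdot\mbf n\,ds$. Now vanishing of the test field on a curve genuinely kills the corresponding term. Testing against the vector fields $\bs\varphi_j=\widetilde{\mu}_j\nabla\mu_j\in B^2_{\Lambda}$ (with $\widetilde{\mu}_j=\mu/\mu_j$, so $\bs\varphi_j=0$ on $\partial U_i$ for $i\neq j$ and $\bs\varphi_j=\nabla\mu$ on $\partial U_j$) isolates $b_j\oint_{\partial U_j}|\nabla\mu|\,ds=0$, and Proposition \ref{prop:musquared} guarantees $|\nabla\mu|>0$ off a finite subset of $\partial U_j$, so the line integral is strictly positive and $b_j=0$. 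In short: your partial-product test functions are the right objects, but they must be paired with the gradient of $h$ (line integrals on the edge set), not with $h$ (area integrals), for the ``vanishes on the other curves'' mechanism to do any work. Your alternative Fourier-extrapolation route is not what the paper does either, and as you note its 2-D recurrence step is itself unresolved.
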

Again, we conjecture that this result also holds for piecewise constant functions having an edge set described by an arbitrary trigonometric curve $C$, not just those consisting of non-intersecting curves. 

This result shows one benefit of our piecewise constant image model versus the piecewise complex analytic signal model considered in \cite{pan2013sampling}: in the piecewise constant case,  recovery from the annihilation relation given finitely many Fourier samples is well-posed. However, this is not the case for piecewise analytic signals using the annihilation relation \eqref{eq:friann}, as considered in \cite{pan2013sampling}. This is because, in analogy with Proposition \ref{thm:piecewisecst}, one can show
\begin{equation}
\mu\, \partial_{\overline{z}}g = 0
\label{eq:cmplxann}
\end{equation}
if and only if $g$ is complex analytic on each connected component $U_i$ of $\{\mu=0\}^C$. Or, equivalently:
\begin{equation}
g(z) = \sum_{i=1}^n h_i(z) 1_{U_i}(z),
\label{eq:cmplxpara}
\end{equation}
where $h_i$ is any function complex analytic on $U_i$. Signals in this class have infinite degrees of freedom since each $h_i(z)$ is described by infinitely many complex parameters, namely its power series coefficients. Therefore, recovery of the signal $g$ from the annihilation relation \eqref{eq:cmplxann} given finitely many Fourier samples is ill-posed. In particular, for any finite Fourier sampling set $\Gamma$, there are infinitely many signals $g$ in the form \eqref{eq:cmplxpara} satisfying $\widehat{f}[\mbf k] = \widehat{g}[\mbf k]$, for all $\mbf k \in \Gamma$.

\subsection{Recovery from finitely many annihilation equations}
The conditions derived in \S \ref{sec:necessaryamp} and \S\ref{sec:sufficientamp} settle the problem of unique recovery from using the infinite dimensional annihilation relation \eqref{eq:infiniterecovery}. A more subtle question, relevant to practical applications, is whether the Fourier extrapolation solution is uniquely determined from finitely many annihilation equations, i.e.\ for some large extrapolation set $\Delta \subseteq \mathbb{Z}^2$, and some small sampling set $\Gamma\subseteq \mathbb{Z}^2$ do the conditions
\begin{equation}
\label{eq:finiterecovery}
\sum_{\mbf k \in \Lambda} c[\mbf k] ~\nabla g [\bs\ell - \mbf k] = 0~~\text{for all}~~\bs\ell \in\Delta:\Lambda\quad and \quad \widehat{g}[\mbf k] = \widehat{f}[\mbf k]~~\text{for all}~~\mbf k \in {\Gamma}.
\end{equation}
imply $\widehat{g}[\mbf k] = \widehat{f}[\mbf k]$ for all $\mbf k \in \Delta$? In other words, given few samples of $\widehat{f}$ and knowledge of the minimal polynomial coefficients $(c[\mbf k] : \mbf k \in \Lambda)$, can we uniquely extrapolate $\widehat{f}$ to a larger grid $\Delta \subseteq \mathbb{Z}^2$ using the Fourier domain annihilation relation \eqref{eq:finiterecovery}? Note that \eqref{eq:infiniterecovery} may be thought of as the limiting case of \eqref{eq:finiterecovery}, where the extrapolation set $\Delta$ expands to all of $\mathbb{Z}^2$. Our numerical experiments (see Fig.\ \ref{fig:amp_recovery}) indicate that, given the minimal polynomial, unique recovery is in fact possible from \eqref{eq:finiterecovery} with the necessary minimum number of samples given in \S\ref{sec:necessaryamp}. We hypothesize that a sufficient condition analogous to Theorem \ref{thm:uniqueamplitudes} also holds in this case. However, establishing this condition is less straightforward than in the infinite dimensional setting \eqref{eq:infiniterecovery} since in this context we lack the parameterization result \eqref{eq:parametric}. We plan to pursue this question in a future work.
\begin{figure*}[ht!]
\hspace{-1.5em}
\subfloat[][Recovery of $n=4$ regions (including background)]{
\begin{tabular}{cccc}
 & $N=2$ & $N=4$ & \\
\hspace{2em}
& 
\includegraphics[height=0.18\textwidth]{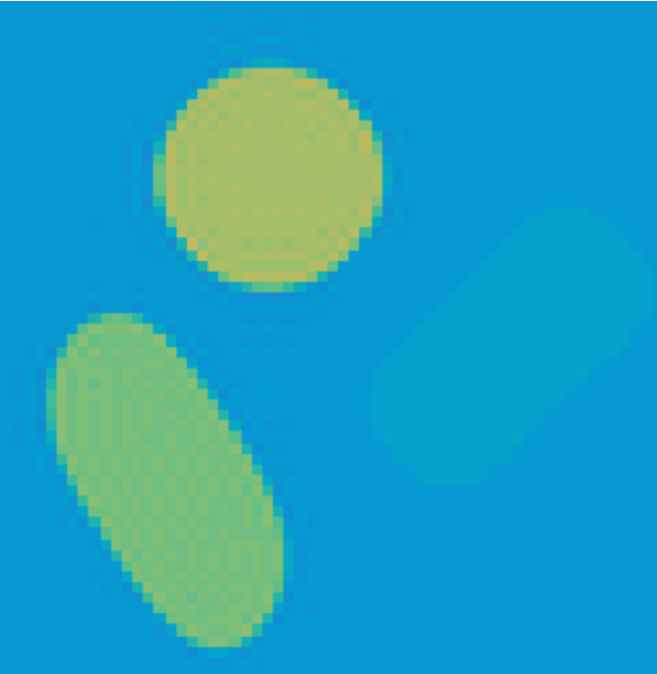}
\hspace{-1em}
&
\includegraphics[height=0.18\textwidth]{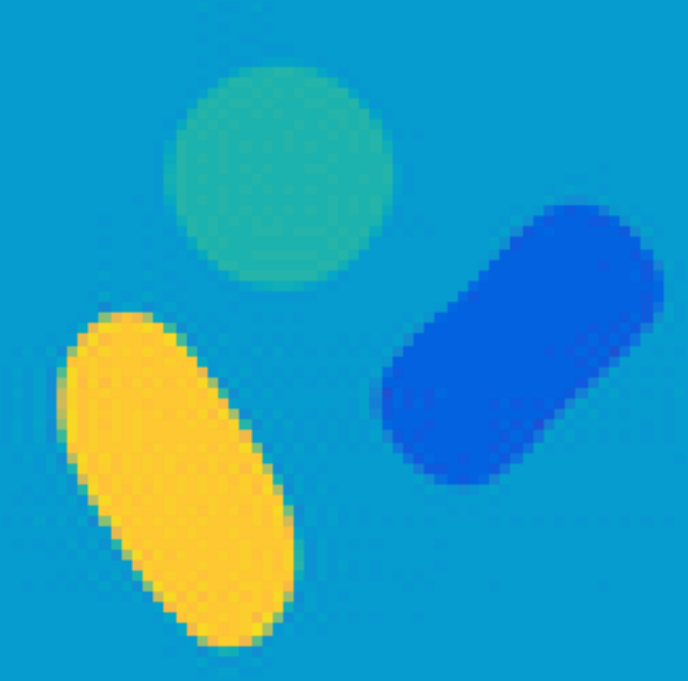}
&
\raisebox{-0.5em}{
\hspace{-2em}
\setlength\figureheight{0.18\textwidth}
%
%
\begin{tikzpicture}

\pgfplotsset{
    colormap={parula}{
        rgb=(0.2081,0.1663,0.5292)
        rgb=(0.2116,0.1898,0.5777)
        rgb=(0.2123,0.2138,0.627)
        rgb=(0.2081,0.2386,0.6771)
        rgb=(0.1959,0.2645,0.7279)
        rgb=(0.1707,0.2919,0.7792)
        rgb=(0.1253,0.3242,0.8303)
        rgb=(0.0591,0.3598,0.8683)
        rgb=(0.0117,0.3875,0.882)
        rgb=(0.006,0.4086,0.8828)
        rgb=(0.0165,0.4266,0.8786)
        rgb=(0.0329,0.443,0.872)
        rgb=(0.0498,0.4586,0.8641)
        rgb=(0.0629,0.4737,0.8554)
        rgb=(0.0723,0.4887,0.8467)
        rgb=(0.0779,0.504,0.8384)
        rgb=(0.0793,0.52,0.8312)
        rgb=(0.0749,0.5375,0.8263)
        rgb=(0.0641,0.557,0.824)
        rgb=(0.0488,0.5772,0.8228)
        rgb=(0.0343,0.5966,0.8199)
        rgb=(0.0265,0.6137,0.8135)
        rgb=(0.0239,0.6287,0.8038)
        rgb=(0.0231,0.6418,0.7913)
        rgb=(0.0228,0.6535,0.7768)
        rgb=(0.0267,0.6642,0.7607)
        rgb=(0.0384,0.6743,0.7436)
        rgb=(0.059,0.6838,0.7254)
        rgb=(0.0843,0.6928,0.7062)
        rgb=(0.1133,0.7015,0.6859)
        rgb=(0.1453,0.7098,0.6646)
        rgb=(0.1801,0.7177,0.6424)
        rgb=(0.2178,0.725,0.6193)
        rgb=(0.2586,0.7317,0.5954)
        rgb=(0.3022,0.7376,0.5712)
        rgb=(0.3482,0.7424,0.5473)
        rgb=(0.3953,0.7459,0.5244)
        rgb=(0.442,0.7481,0.5033)
        rgb=(0.4871,0.7491,0.484)
        rgb=(0.53,0.7491,0.4661)
        rgb=(0.5709,0.7485,0.4494)
        rgb=(0.6099,0.7473,0.4337)
        rgb=(0.6473,0.7456,0.4188)
        rgb=(0.6834,0.7435,0.4044)
        rgb=(0.7184,0.7411,0.3905)
        rgb=(0.7525,0.7384,0.3768)
        rgb=(0.7858,0.7356,0.3633)
        rgb=(0.8185,0.7327,0.3498)
        rgb=(0.8507,0.7299,0.336)
        rgb=(0.8824,0.7274,0.3217)
        rgb=(0.9139,0.7258,0.3063)
        rgb=(0.945,0.7261,0.2886)
        rgb=(0.9739,0.7314,0.2666)
        rgb=(0.9938,0.7455,0.2403)
        rgb=(0.999,0.7653,0.2164)
        rgb=(0.9955,0.7861,0.1967)
        rgb=(0.988,0.8066,0.1794)
        rgb=(0.9789,0.8271,0.1633)
        rgb=(0.9697,0.8481,0.1475)
        rgb=(0.9626,0.8705,0.1309)
        rgb=(0.9589,0.8949,0.1132)
        rgb=(0.9598,0.9218,0.0948)
        rgb=(0.9661,0.9514,0.0755)
        rgb=(0.9763,0.9831,0.0538)
    }
}

\begin{axis}[
    hide axis,
    scale only axis,
    width=1em,
    height=3em,
    point meta min=-0.5,
    point meta max=1,
    colorbar,
    colorbar style={
        height=\figureheight,
        width=0.7em,
        ytick={-0.5,0,0.5,1},
        yticklabel style={
            xshift = -0.5ex,
            font = \tiny
        }
    }]
    \addplot [draw=none] coordinates {(0,0)};
\end{axis}
\end{tikzpicture}%
}\\
\multicolumn{4}{c}{
\hspace{-1.5em}
\setlength\figureheight{0.15\textwidth}
\setlength\figurewidth{0.35\textwidth}
%
%
\begin{tikzpicture}

\begin{axis}[%
width=\figurewidth,
height=\figureheight,
scale only axis,
xmin=1,
xmax=20,
xlabel={$N$ (\# samples)},
ymin=0,
ymax=1,
ylabel={NRMSE},
axis background/.style={fill=white}
]
\addplot [color=blue,solid,mark=*,mark options={solid},forget plot]
  table[row sep=crcr]{%
1	0.965853554057879\\
2	0.770805037259404\\
3	0.578875256076272\\
4	0.000308955268323639\\
5	5.31135894989929e-05\\
6	5.49640562599588e-05\\
7	5.77761464927063e-05\\
8	5.36274389435898e-05\\
9	5.66754078089743e-05\\
10	5.70045004672561e-05\\
11	5.72073273500332e-05\\
12	5.97846948051261e-05\\
13	5.62421546777863e-05\\
14	5.86225463162513e-05\\
15	5.93574247969205e-05\\
16	5.67265899351559e-05\\
17	5.90283231072981e-05\\
18	6.25577640537818e-05\\
19	5.71148616775324e-05\\
20	6.22752438820113e-05\\
21	5.75402820863904e-05\\
22	5.7943351638053e-05\\
23	5.72306694905311e-05\\
24	5.97911501209693e-05\\
25	6.11618910208437e-05\\
26	5.66519395715117e-05\\
27	5.7696515623838e-05\\
28	5.97559977493634e-05\\
29	5.79523122076399e-05\\
30	5.89759353131175e-05\\
31	5.8780300660555e-05\\
32	5.76992249547344e-05\\
33	5.71877649646798e-05\\
34	5.82133638816878e-05\\
35	5.64500562467608e-05\\
36	5.76440686500386e-05\\
37	5.7114725773268e-05\\
38	5.639371188301e-05\\
39	5.53793536499916e-05\\
40	5.55811828107237e-05\\
41	5.69019797739484e-05\\
42	5.68803300013567e-05\\
43	5.72534450009365e-05\\
44	5.69584202879275e-05\\
45	5.7082172992802e-05\\
46	5.76672745262964e-05\\
47	5.79400475656909e-05\\
48	5.40953416136952e-05\\
49	5.18747268794299e-05\\
};
\end{axis}
\end{tikzpicture}%
}
\end{tabular}
}
\hspace{-2em}
\subfloat[][Recovery of $n=8$ regions (including background)]{
\begin{tabular}{cccc}
 & $N=4$ & $N=8$ & \\
\hspace{2em}
& 
\includegraphics[height=0.18\textwidth]{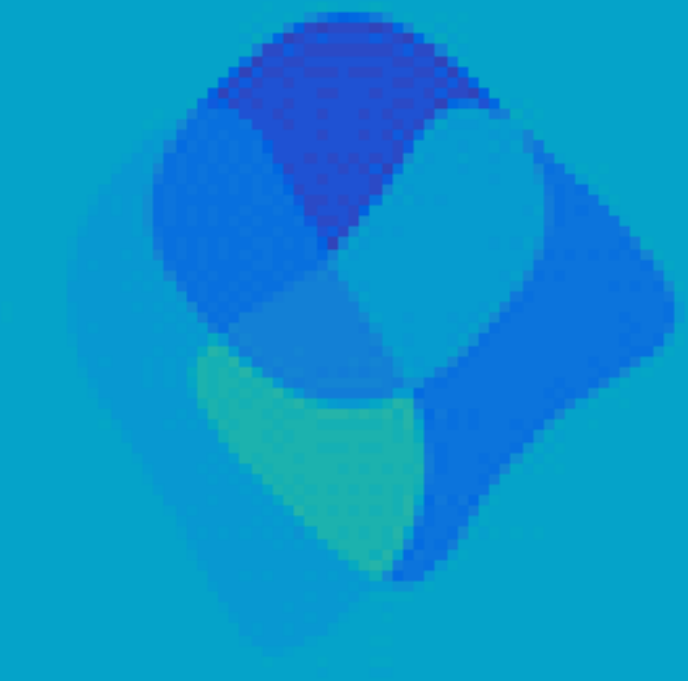}
\hspace{-1em}
& 
\includegraphics[height=0.18\textwidth]{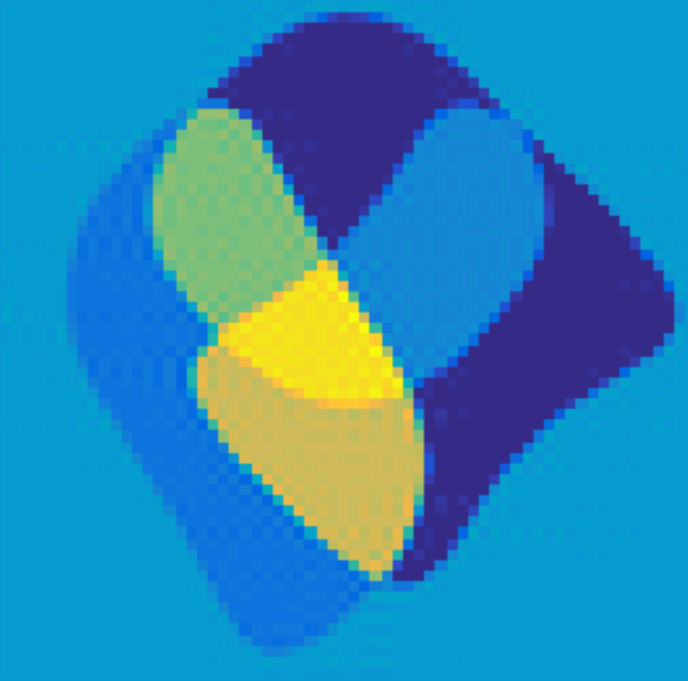}
& 
\raisebox{-0.5em}{
\hspace{-2em}
\setlength\figureheight{0.18\textwidth}
%
%
\begin{tikzpicture}

\pgfplotsset{
    colormap={parula}{
        rgb=(0.2081,0.1663,0.5292)
        rgb=(0.2116,0.1898,0.5777)
        rgb=(0.2123,0.2138,0.627)
        rgb=(0.2081,0.2386,0.6771)
        rgb=(0.1959,0.2645,0.7279)
        rgb=(0.1707,0.2919,0.7792)
        rgb=(0.1253,0.3242,0.8303)
        rgb=(0.0591,0.3598,0.8683)
        rgb=(0.0117,0.3875,0.882)
        rgb=(0.006,0.4086,0.8828)
        rgb=(0.0165,0.4266,0.8786)
        rgb=(0.0329,0.443,0.872)
        rgb=(0.0498,0.4586,0.8641)
        rgb=(0.0629,0.4737,0.8554)
        rgb=(0.0723,0.4887,0.8467)
        rgb=(0.0779,0.504,0.8384)
        rgb=(0.0793,0.52,0.8312)
        rgb=(0.0749,0.5375,0.8263)
        rgb=(0.0641,0.557,0.824)
        rgb=(0.0488,0.5772,0.8228)
        rgb=(0.0343,0.5966,0.8199)
        rgb=(0.0265,0.6137,0.8135)
        rgb=(0.0239,0.6287,0.8038)
        rgb=(0.0231,0.6418,0.7913)
        rgb=(0.0228,0.6535,0.7768)
        rgb=(0.0267,0.6642,0.7607)
        rgb=(0.0384,0.6743,0.7436)
        rgb=(0.059,0.6838,0.7254)
        rgb=(0.0843,0.6928,0.7062)
        rgb=(0.1133,0.7015,0.6859)
        rgb=(0.1453,0.7098,0.6646)
        rgb=(0.1801,0.7177,0.6424)
        rgb=(0.2178,0.725,0.6193)
        rgb=(0.2586,0.7317,0.5954)
        rgb=(0.3022,0.7376,0.5712)
        rgb=(0.3482,0.7424,0.5473)
        rgb=(0.3953,0.7459,0.5244)
        rgb=(0.442,0.7481,0.5033)
        rgb=(0.4871,0.7491,0.484)
        rgb=(0.53,0.7491,0.4661)
        rgb=(0.5709,0.7485,0.4494)
        rgb=(0.6099,0.7473,0.4337)
        rgb=(0.6473,0.7456,0.4188)
        rgb=(0.6834,0.7435,0.4044)
        rgb=(0.7184,0.7411,0.3905)
        rgb=(0.7525,0.7384,0.3768)
        rgb=(0.7858,0.7356,0.3633)
        rgb=(0.8185,0.7327,0.3498)
        rgb=(0.8507,0.7299,0.336)
        rgb=(0.8824,0.7274,0.3217)
        rgb=(0.9139,0.7258,0.3063)
        rgb=(0.945,0.7261,0.2886)
        rgb=(0.9739,0.7314,0.2666)
        rgb=(0.9938,0.7455,0.2403)
        rgb=(0.999,0.7653,0.2164)
        rgb=(0.9955,0.7861,0.1967)
        rgb=(0.988,0.8066,0.1794)
        rgb=(0.9789,0.8271,0.1633)
        rgb=(0.9697,0.8481,0.1475)
        rgb=(0.9626,0.8705,0.1309)
        rgb=(0.9589,0.8949,0.1132)
        rgb=(0.9598,0.9218,0.0948)
        rgb=(0.9661,0.9514,0.0755)
        rgb=(0.9763,0.9831,0.0538)
    }
}

\begin{axis}[
    hide axis,
    scale only axis,
    width=1em,
    height=3em,
    point meta min=-0.5,
    point meta max=1,
    colorbar,
    colorbar style={
        height=\figureheight,
        width=0.7em,
        ytick={-0.5,0,0.5,1},
        yticklabel style={
            xshift = -0.5ex,
            font = \tiny
        }
    }]
    \addplot [draw=none] coordinates {(0,0)};
\end{axis}
\end{tikzpicture}%
}\\
\multicolumn{4}{c}{
\hspace{-1.5em}
\setlength\figureheight{0.15\textwidth}
\setlength\figurewidth{0.35\textwidth} 
%
%
\begin{tikzpicture}

\begin{axis}[%
width=\figurewidth,
height=\figureheight,
scale only axis,
xmin=1,
xmax=20,
xlabel={$N$ (\# samples)},
ymin=0,
ymax=1,
ylabel={NRMSE},
axis background/.style={fill=white}
]
\addplot [color=blue,solid,mark=*,mark options={solid},forget plot]
  table[row sep=crcr]{%
1	0.996371830447585\\
2	0.914173975925766\\
3	0.933495046890445\\
4	0.796994889155684\\
5	0.666813922748149\\
6	0.603303343556904\\
7	0.538333920617707\\
8	0.0235525819205099\\
9	0.0146544257473361\\
10	0.00908077923694146\\
11	0.0046283905379797\\
12	0.00402390365894079\\
13	0.00557619204965219\\
14	0.00431493121825542\\
15	0.00346402849868462\\
16	0.00313457513397094\\
17	0.00267649801819359\\
18	0.00285940002811256\\
19	0.00275482929286738\\
20	0.00278466741149466\\
};
\end{axis}
\end{tikzpicture}%
}
\end{tabular}
}
\caption{\small \emph{Edge set aware recovery of synthetic data}. Piecewise constant images with $n=4$ and $n=8$ distinct connected regions (including the background region) were generated with edge set given by a known minimal annihilating polynomial having $7$$\times$$7$ coefficients (as shown in Fig.\ \ref{fig:edgerecovery}). Choosing $N$ samples uniformly at random from the $7$$\times$$7$ center Fourier coefficients of the image, we extrapolate the image in Fourier domain to $65$$\times$$65$ coefficients by solving \eqref{eq:finiterecovery} in a least-squares fashion. We plot the normalized root mean square error (NRMSE) of the recovery averaged over ten trials. Pictured above the plots is the inverse DFT of a reconstruction obtained from the indicated number of samples $N$. Note that in both cases we are able to recover the image nearly exactly from the necessary minimum number of samples, $N=4$ and $N=8$ respectively, corresponding to the number of regions. Below this number, we still recover the Fourier coefficients of a piecewise constant image, however one with the incorrect region amplitudes.}
\label{fig:amp_recovery}
\end{figure*}

\section{Algorithms}
\label{sec:alg}
We rely on a two step algorithm similar to Prony's method, which involves (1) the estimation of the edge set of an image (2) the edge set aware recovery of the image. Theorems \ref{thm:unique2} and \ref{thm:unique_3} together guarantee the unique recovery of the edge set from the linear annihilation relation \eqref{eq:annsys}, while Theorem \ref{thm:uniqueamplitudes} guarantees the unique recovery of the full piecewise constant image given its edge set. However, these guarantees only hold under ideal conditions, making direct estimation of the signal from \eqref{eq:annsys} infeasible in practice. Here we introduce robust refinements of this procedure to enable recovery of the image in the presence of noise or model-mismatch.

\subsection{Step 1: Estimation of an annihilating subspace}
Directly estimating the edge set via the linear system \eqref{eq:annsys} in practical applications is challenging due to the following problems:
\begin{enumerate}
	\item \emph{Model-order selection}: Theorems \ref{thm:unique2} and \ref{thm:unique_3} assume the knowledge of the degree of the minimal polynomial describing the edge set of the image. However, this is often unknown in practical applications. 
	\item \emph{Noise and model-mismatch}: The available Fourier samples are often corrupted by noise in practical settings. In addition, the assumption that the underlying image is piecewise constant with edge set described by a trigonometric curve will only be approximately true in practice.
\end{enumerate}
We now introduce efficient approaches to overcome the above problems. The key idea is that rather than estimating a single annihilating filter, we estimate an entire \emph{annihilating subspace}---a collection of linearly independent annihilating filters which jointly encode the edge set. This can be thought of as an extension of \emph{subspace methods} in spectral estimation to the 2-D setting. Specifically, we draw parallels between our algorithm and the well-known MUSIC algorithm \cite{schmidt1986multiple}. This gives us an efficient means to address the model-order selection problem. We also introduce an efficient algorithm for the robust estimation of the annihilating subspace in \S\ref{denoising}, which address the problem of model-mismatch and noisy measurements. This scheme is related to Cadzow denoising methods \cite{cadzow1988signal,condat2015cadzow} used for robust spectral estimation, and recent low-rank structured matrix completion approaches in MRI \cite{haldar2014low,jin2015general}. Finally, once the annihilating subspace is estimated, in \S\ref{sec:edgeawarerecovery} we propose recovering the piecewise constant signal in Fourier domain by constraining its extrapolated Fourier coefficients to be orthogonal to the annihilating subspace. 

For convenience, we rewrite the annihilation system \eqref{eq:annsys} in matrix notation. First, we fix rectangular index sets $\Gamma,\Lambda\subseteq \mathbb{Z}^2$, which represent the sampling set, and annihilating filter support, respectively, and set $N = |\Gamma\,{:}\,\Lambda|$, $M = |\Lambda|$. Define $\mathcal{T}_x$ and $\mathcal{T}_y$ to be the linear operators mapping samples $(\widehat{f}[\mbf k] : \mbf k \in \Gamma)$ to matrices $\mathcal{T}_x(\widehat{f}) \in \mathbb{C}^{N\times M}$ and $\mathcal{T}_y(\widehat{f})\in \mathbb{C}^{N\times M}$ which represent the discrete convolution of $\widehat{\partial_x f}$ and $\widehat{\partial_y f}$ with filters $(c[\mbf k]: \mbf k\in\Lambda)$, whose output is restricted to the index set $\Gamma\,{:}\,\Lambda$. In particular, $\mathcal{T}_x(\widehat{f})$ and $\mathcal{T}_y(\widehat{f})$ are block Toeplitz matricies with Toeplitz blocks, where the rows of $\mathcal{T}_x(\widehat{f})$ indexed by $\bs \ell \in \Gamma\,{:}\,\Lambda$ are given by $(\widehat{\partial_x f}[\bs \ell -\mbf k] : \mbf k \in \Lambda)$ in vectorized form, and similarly for $\mathcal{T}_y(\widehat{f})$; see Fig.\ \ref{fig:annmatrix}.  Finally, we define the linear operator $\mathcal{T}$ as the vertical concatenation of $\mathcal{T}_x$ and $\mathcal{T}_y$, i.e. 
\begin{equation}
\mathcal{T}(\widehat{f}) =
\begin{bmatrix}
\mathcal{T}_x(\widehat{f})\\
\mathcal{T}_y(\widehat{f})
\end{bmatrix} \in \mathbb{C}^{2N\times M}.
\label{eq:annmatrix}
\end{equation}
Treating $\mbf c = (c[\mbf k] : \mbf k \in \Lambda)$ as vector in $\mathbb{C}^{M}$ we may reproduce the annihilation system \eqref{eq:annsys} by writing
\[
\mathcal{T}(\widehat{f})\,\mbf c = \bs 0
\]
Therefore, recovering an annihilating polynomial for $f$ is equivalent to finding a vector in the nullspace of the structured matrix $\mathcal{T}(\widehat{f})$, which we call the \emph{annihilation matrix}. 

We note that our annihilation matrix is structurally similar to the to the multi-fold Hankel matrix construction in the matrix enhancement pencil matrix (MEMP) method, used for 2-D harmonic retrieval \cite{hua1992estimating,chen2014robust}. The main difference in our construction is that we apply weights $j2\pi \mbf k$ to the Fourier data before creating the structured matrix. Also, similar multi-fold Toeplitz/Hankel liftings of Fourier data have been considered in MRI reconstruction problems to model spatially limited and phase constrained images \cite{sake,haldar2014low}, and to model transform sparse images \cite{jin2015general}. Similar to us, \cite{jin2015general} also applies weights to Fourier data before constructing the structured matrix. However, our construction \eqref{eq:annmatrix} is unique in that we consider joint annihilation of both partial derivatives with respect to a common filter.
\begin{figure*}[ht!]
\centering
\subfloat[][]{
       \includegraphics[height=0.3\linewidth]{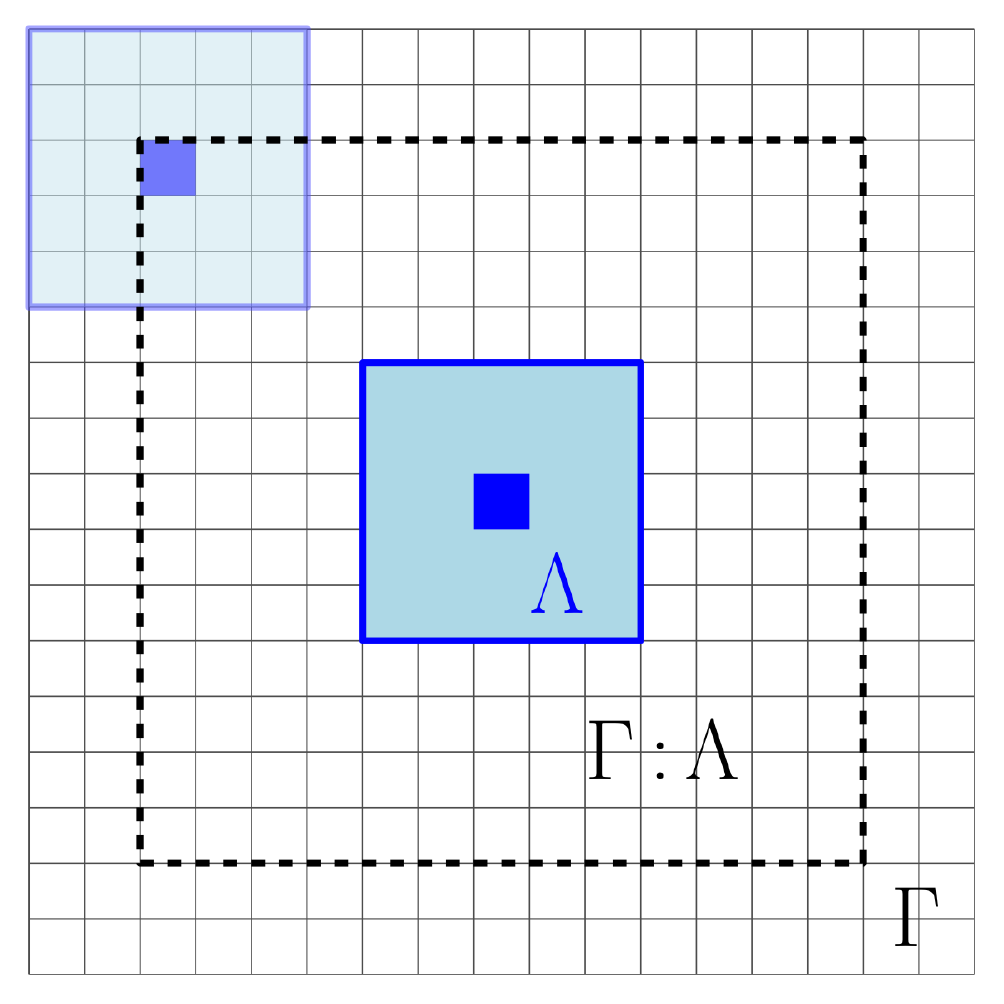}
}
\hspace{3em}
\subfloat[][]{
       \includegraphics[height=0.3\linewidth]{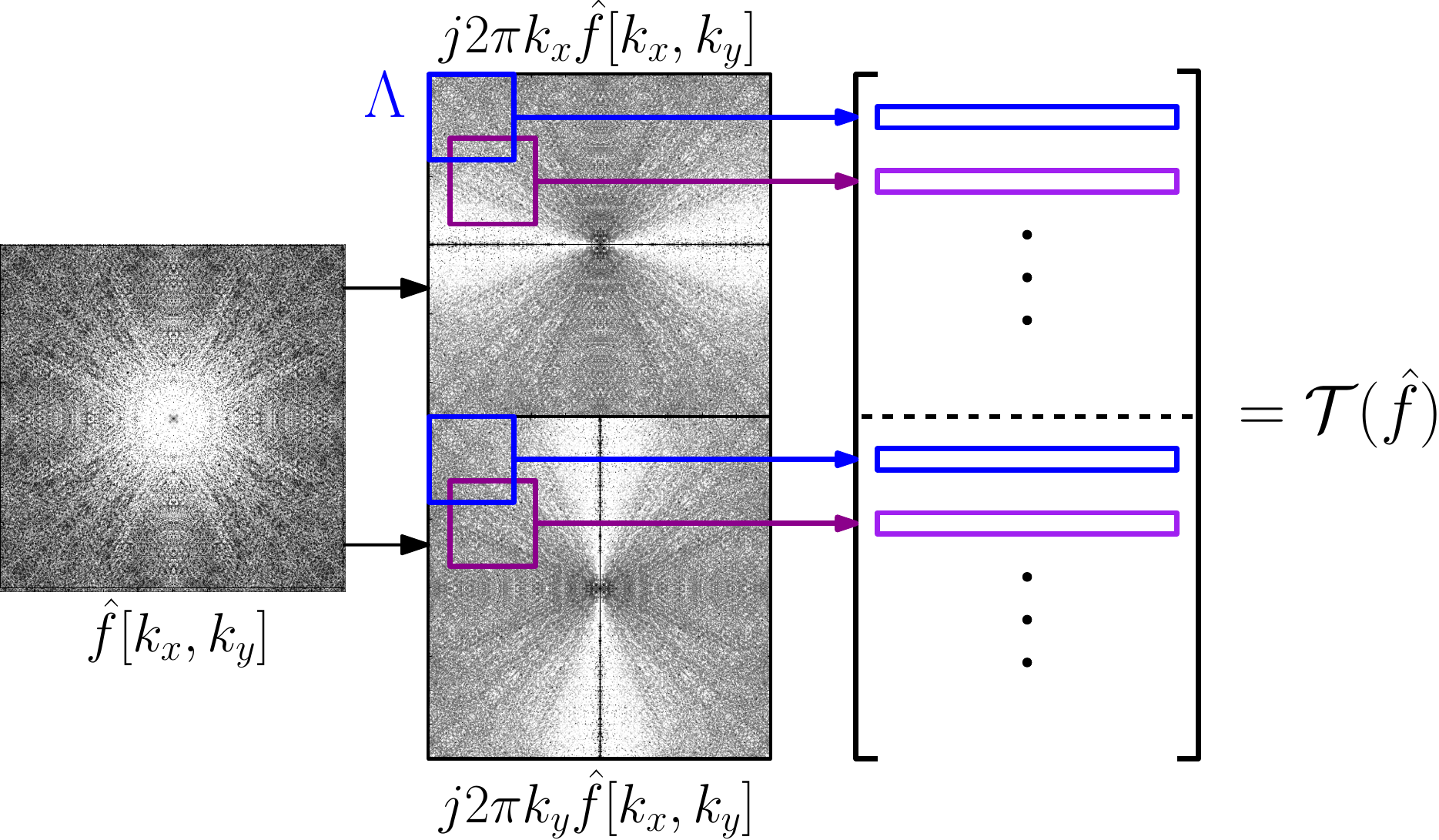}	
}	
\caption{\small\emph{Construction of the annihilation matrix}. Shown in (a) are the index sets in $\mathbb{Z}^2$ used to construct the annihilation matrix $\mathcal{T}(\widehat{f})$ defined in \eqref{eq:annmatrix}. Here $\Gamma$ in is the set of Fourier sampling locations, $\Lambda$ (in blue) is the assumed annihilating filter support, and $\Gamma\,{:}\,\Lambda$ (interior of dashed line) is the set of all integer shifts $\bs \ell$ such that $\Lambda$ shifted by $\bs \ell$ is contained in $\Gamma$. The construction of $\mathcal{T}(\widehat{f})$ is shown schematically in (b). From known samples $\widehat{f}[k_x,k_y]$ for $\mbf (k_x,k_y) \in\Gamma$ we compute $\widehat{\partial_x f}[k_x,k_y] = j2\pi k_x \hat{f}[k_x,k_y]$ and $\widehat{\partial_y f}[k_x,k_y] = j2\pi k_y \hat{f}[k_x,k_y]$ by a Fourier domain multiplication. Then each row of $\mathcal{T}(\widehat{f})$ is obtained by vectorizing a rectangular Fourier domain patch of size $\Lambda$ from the computed samples of $\widehat{\partial_x f}$ and $\widehat{\partial_y f}$.}
\label{fig:annmatrix}
\end{figure*}

\subsubsection{Annihilating subspace estimation}
When the degree of the minimal polynomial associated with the edge set is unknown in advance---which will typically be the case---we cannot build the annihilation matrix $\mathcal{T}(\widehat{f})$ with the correct filter support to ensure its null space is one-dimensional. For instance, if the assumed filter support $\Lambda'$ is strictly larger than support $\Lambda$ of the minimal filter $(c[\mbf k] : \mbf k \in \Lambda)$, the annihilation matrix will have a larger null space. This is because any filter of the form $\mbf c\ast \mbf d$ having Fourier support contained in $\Lambda'$ is also an annihilating filter by the associativity of convolution. In fact, Theorem \ref{thm:unique_3} says that under certain conditions on $f$, \emph{every} null space vector of $\mathcal{T}(\widehat{f})$ is of this form. Hence, under these conditions, $\mathcal{T}(\widehat{f})$ is rank deficient with the following bound:
\begin{proposition}
\label{prop:rank}
Let $f$ be a piecewise constant image with edge set $C = \{\mu=0\}$, where $\mu$ is the minimal polynomial with coefficients in $\Lambda$. If the assumed annihilating filter support $\Lambda'$ strictly contains $\Lambda$, then 
\begin{equation}
\text{rank}~\mathcal{T}(\widehat{f}) \leq |\Lambda'| - |\Lambda'{:}\,\Lambda|.
\label{eq:rankbound}
\end{equation}
Furthermore, equality holds if $f$ satisfies the conditions of Thm.\ \ref{thm:unique2} and if the sampling set $\Gamma \subseteq 2\Lambda  + \Lambda'$.
\end{proposition}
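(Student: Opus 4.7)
The plan is to split the argument into an upper bound via explicit construction of null vectors, and an equality argument via Theorem \ref{thm:unique_3}.

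For the upper bound, I would exhibit a subspace of dimension $|\Lambda'{:}\,\Lambda|$ lying in the nullspace of $\mathcal{T}(\widehat{f})$. Let $\mbf c = (c[\mbf k] : \mbf k \in \Lambda)$ be the coefficients of the minimal polynomial $\mu$, and consider
\[
V := \{\mbf c \ast \mbf g : \mathrm{supp}(\mbf g) \subseteq \Lambda'{:}\,\Lambda\} \subseteq \mathbb{C}^{|\Lambda'|}.
\]
Each element $\mbf c \ast \mbf g$ is indeed supported in $\Lambda'$, since by definition of the contraction $(\Lambda'{:}\,\Lambda) + \Lambda \subseteq \Lambda'$. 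Moreover, each $\mbf c \ast \mbf g$ corresponds to a trigonometric polynomial $\mu \gamma$ that vanishes on $\{\mu = 0\} \supseteq E$, so by the annihilation relation of Proposition 1 applied to $\mu\gamma$ we obtain $(\mbf c \ast \mbf g)\ast \widehat{\nabla f}[\bs \ell] = \bs 0$ for every $\bs \ell \in \mathbb{Z}^2$. Restricting to $\bs\ell \in \Gamma\,{:}\,\Lambda'$ shows $\mathcal{T}(\widehat{f})(\mbf c \ast \mbf g) = \bs 0$, so $V \subseteq \mathrm{null}(\mathcal{T}(\widehat{f}))$.

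Next I would argue $\dim V = |\Lambda'{:}\,\Lambda|$. The linear map $\mbf g \mapsto \mbf c \ast \mbf g$ is injective because convolution of finitely supported 2-D sequences corresponds to multiplication in the Laurent polynomial ring $\mathbb{C}[z_1,z_1^{-1},z_2,z_2^{-1}]$ (under the substitution $z_i = e^{j2\pi r_i}$), which is an integral domain; since $\mu \neq 0$, multiplication by its polynomial representation has no nonzero kernel. Hence the map is an isomorphism onto $V$, and by rank-nullity
\[
\mathrm{rank}\,\mathcal{T}(\widehat{f}) \leq |\Lambda'| - \dim V = |\Lambda'| - |\Lambda'{:}\,\Lambda|,
\]
which establishes \eqref{eq:rankbound}.

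For the equality statement, I would invoke Theorem \ref{thm:unique_3} to show that $\mathrm{null}(\mathcal{T}(\widehat{f}))$ is \emph{contained} in $V$, giving equality. Under the hypotheses of Theorem \ref{thm:unique2}, Theorem \ref{thm:unique_3} asserts that every nontrivial coefficient vector $\mbf d = (d[\mbf k]:\mbf k\in\Lambda')$ satisfying the annihilation equations \eqref{eq:unique3} for all shifts $\bs\ell \in \Lambda' + \Lambda$ factors as $\mbf c \ast \mbf g$, hence lies in $V$. I therefore need the rows of $\mathcal{T}(\widehat{f})$ indexed by $\Gamma\,{:}\,\Lambda'$ to cover the shifts $\Lambda' + \Lambda$, which translates (for symmetric $\Lambda'$) into the sampling containment stated in the proposition: $\Gamma$ must contain $2\Lambda' + \Lambda$, ensuring $\Gamma\,{:}\,\Lambda' \supseteq \Lambda' + \Lambda$. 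Under this condition any $\mbf d \in \mathrm{null}(\mathcal{T}(\widehat{f}))$ satisfies \eqref{eq:unique3} and hence $\mbf d \in V$, so $\mathrm{null}(\mathcal{T}(\widehat{f})) = V$, yielding equality.

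The main obstacle is not really the rank-nullity manipulation but the index-set bookkeeping: carefully matching up the filter supports, the shift sets of the annihilation equations, and the sampling set so that (i) the constructed null vectors are supported in $\Lambda'$, and (ii) the sampling set provides enough rows to invoke Theorem \ref{thm:unique_3}. The integral-domain argument for injectivity of convolution with $\mbf c$ is the single clean algebraic input; everything else reduces to tracking Minkowski sums and contractions of rectangular index sets.
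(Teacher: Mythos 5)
Your proposal is correct and follows essentially the same route as the paper: exhibit the $|\Lambda'{:}\,\Lambda|$-dimensional space of multiples of the minimal filter inside the nullspace of $\mathcal{T}(\widehat{f})$, then invoke Theorem \ref{thm:unique_3} under the sampling condition $\Gamma \supseteq 2\Lambda'+\Lambda$ (which you correctly identify; the inclusion as printed in the proposition statement is a typo) to get the reverse containment. The only cosmetic difference is that the paper spans this null subspace by the shifted filters $\mbf c_{\bs\ell}$, $\bs\ell\in\Lambda'{:}\,\Lambda$, and asserts their linear independence directly, whereas you prove injectivity of $\mbf g \mapsto \mbf c \ast \mbf g$ via the integral-domain property of the Laurent polynomial ring --- an equivalent parametrization whose independence argument is, if anything, slightly cleaner.
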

We call the null space of $\mathcal{T}(\widehat{f})$ the \emph{annihilating subspace}. Since identifying the minimal annihilating filter within the annihilating subspace when it has large dimension is challenging, we propose to leverage the entire annihilating subspace for both edge set estimation and image reconstruction.

Observe that the edge set is still uniquely recoverable from the annihilating subspace under ideal conditions. To see this, let $\mbf D = [\mbf d_1,\ldots,\mbf d_R]$ be any orthonormal basis for the annihilating subspace. Define $\mu_i$ to be the trigonometric polynomial having coefficients given by $\mbf d_i$. By the preceding discussion $\mu_i$ must always have the form $\mu_i = \gamma_i\,\mu$ for $\mu$ the minimal polynomial and $\gamma_i$ another trigonometric polynomial. We define a function $\overline{\mu}$ as the following sum-of-squares average
\begin{equation}
\overline{\mu}(\mbf r) = \sqrt{\sum_{i=1}^R |\mu_i (\mbf r) |^2} = \sqrt{\sum_{i=1}^R |{\gamma}_i (\mbf r) \mu(\mbf r) |^2}.
\label{eq:mubar}
\end{equation}
Note we always have $\overline{\mu}(\mbf r) = 0$ when $\mu(\mbf r) = 0$. We can  also prove the following:
\begin{proposition} The function $\overline{\mu}$ has at most finitely many zeros on $[0,1]^2/\{\mu = 0\}$.
\label{prop:finitezeros}
\end{proposition}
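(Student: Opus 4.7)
The plan is to exploit the factorization
\begin{equation*}
\overline{\mu}(\mbf r) \;=\; |\mu(\mbf r)|\,\sqrt{\textstyle\sum_{i=1}^R |\gamma_i(\mbf r)|^2},
\end{equation*}
which is immediate from $\mu_i = \gamma_i\mu$. Since $|\mu(\mbf r)| > 0$ on $[0,1]^2\setminus\{\mu=0\}$, zeros of $\overline{\mu}$ on that set are exactly the common zeros of the trigonometric polynomials $\gamma_1,\ldots,\gamma_R$. The goal therefore reduces to bounding those common zeros, and I expect in fact to prove the stronger statement that they have \emph{no} common zero at all.

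The crux of the argument is to observe that the constant polynomial $1$ belongs to the span $W := \mathrm{span}\{\gamma_1,\ldots,\gamma_R\}$. To justify this, I would first note that the zero-padded coefficient vector of $\mu$ itself lies in $\ker \mathcal T(\widehat f)$: Proposition~1 gives the annihilation relation at every shift $\bs\ell\in\mathbb Z^2$, which in particular covers all rows of $\mathcal T(\widehat f)$ (indexed by $\bs\ell \in \Gamma\,{:}\,\Lambda'$), and the assumption $\Lambda\subseteq\Lambda'$ ensures $\mu$ can be viewed as a vector in $\mathbb{C}^{|\Lambda'|}$. Expanding this vector in the orthonormal basis $\mbf d_1,\ldots,\mbf d_R$ of the annihilating subspace produces scalars $\alpha_i$ with
\begin{equation*}
\mu \;=\; \textstyle\sum_i \alpha_i\, \mu_i \;=\; \mu\bigl(\textstyle\sum_i \alpha_i\, \gamma_i\bigr)
\end{equation*}
as trigonometric polynomials. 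Since the ring of trigonometric polynomials in two variables is an integral domain (via the identification with Laurent polynomials in $z_1 = e^{j2\pi x}$, $z_2 = e^{j2\pi y}$), the nonzero factor $\mu$ cancels, yielding $1 = \sum_i \alpha_i \gamma_i \in W$.

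The conclusion is then immediate: if $\mbf r_0 \in [0,1]^2$ were a common zero of all of the $\gamma_i$, then every element of $W$ would vanish at $\mbf r_0$, including the constant $1$, which is absurd. Hence $\overline{\mu}(\mbf r) > 0$ for every $\mbf r \notin \{\mu=0\}$, which in particular gives finitely many (in fact zero) such zeros. The only delicate step is the cancellation, but this is purely algebraic; alternatively it can be argued analytically, since $\{\mu \neq 0\}$ is a nonempty open set on which the identity $1 = \sum_i \alpha_i \gamma_i$ holds pointwise, and two real-analytic functions that agree on an open set agree everywhere. No Bezout-type theorem for trigonometric curves is required.
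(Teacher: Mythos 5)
Your proof is correct, and it takes a genuinely different route from the paper's. The paper argues by contradiction through B\'ezout's theorem: if the common zero set $\cap_i\{\gamma_i=0\}$ were infinite, the $\gamma_i$ would share a non-constant factor $\phi$, and writing $\gamma_i=\rho_i\phi$ one finds that the $\rho_i$ are linearly independent (they are orthogonal under the weighted inner product with weight $|\phi\mu|^2$ inherited from the orthonormality of the $\mbf d_i$) yet live in a coefficient space of dimension less than $R$ --- a contradiction; B\'ezout then yields \emph{finiteness} of the common zero set. You instead observe that the zero-padded minimal filter $\mbf c$ is itself a null-space vector of $\mathcal{T}(\widehat f)$ (exactly the fact the paper uses in proving Proposition~\ref{prop:rank}), expand it in the basis $\{\mbf d_i\}$, cancel $\mu$ in the integral domain of trigonometric polynomials, and conclude $1=\sum_i\alpha_i\gamma_i$, so the $\gamma_i$ have \emph{no} common zero at all. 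Your argument avoids B\'ezout and the dimension count entirely, rests only on the same standing assumption both proofs need (that every null-space vector factors as $\gamma_i\mu$, i.e.\ the hypotheses of Theorem~\ref{thm:unique_3}), and delivers a strictly stronger conclusion: $\overline{\mu}>0$ off $\{\mu=0\}$, so in fact no isolated spurious zeros need to be removed when identifying the edge set. The only steps worth stating carefully are the ones you already flag --- that the relevant shift of $\mbf c$ lies in the assumed support $\Lambda'$ so it really is a column-compatible null vector, and the cancellation of $\mu$, for which either the Laurent-polynomial integral-domain argument or the density/continuity argument suffices.
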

Therefore, we can identify the edge set by finding the zeros of $\overline{\mu}(\mbf r)$, and removing any isolated points. This method of estimating the edge set can also be interpreted as a higher dimensional analogue of the technique used in spectral MUSIC to estimate the locations of isolated spectral peaks \cite{schmidt1986multiple}. Recall that with spectral MUSIC one performs this estimate by computing the distance of pure exponential signals from a particular eigenspace of the signal covariance matrix. The following result shows we may also interpret $\overline{\mu}(\mbf r)$ in an analogous way:
\begin{proposition}
Let $\mbf D\in\mathbb{C}^{M\times R}$ be an orthonormal basis of the nullspace of $\mathcal{T}(\widehat{f})$, and for all $\mbf r \in [0,1]^2$, define the filter $\mbf e_{\mbf r} \in \mathbb{C}^{M}$ by $\mbf e_{\mbf r}[\mbf k] = e^{j2\pi \mbf k\cdot \mbf r}$ for all $\mbf k \in \Lambda$. Then we have
\[
\overline{\mu}(\mbf r) = \|\mbf D\mbf D^H \mbf e_{\mbf r} \|_2
\] 
that is, $\overline{\mu}(\mbf r)$ is the $\ell^2$-norm of the projection of $\mbf e_{\mbf r}$ onto the annihilating subspace.
\label{prop:MUSIC}
\end{proposition}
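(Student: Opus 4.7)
The plan is a direct computation that reduces the claim to the fact that $\mbf D\mbf D^H$ is an orthogonal projector. Since $\mbf D$ has orthonormal columns, $\mbf D^H \mbf D = \mbf I_R$, so $\mbf P := \mbf D\mbf D^H$ satisfies $\mbf P^2 = \mbf P = \mbf P^H$. This immediately yields
\[
\|\mbf D\mbf D^H \mbf e_{\mbf r}\|_2^2 = \mbf e_{\mbf r}^H \mbf P^H \mbf P\, \mbf e_{\mbf r} = \mbf e_{\mbf r}^H \mbf D\mbf D^H \mbf e_{\mbf r} = \|\mbf D^H \mbf e_{\mbf r}\|_2^2 = \sum_{i=1}^R |\mbf d_i^H \mbf e_{\mbf r}|^2,
\]
so the proof reduces to identifying $|\mbf d_i^H \mbf e_{\mbf r}|$ with $|\mu_i(\mbf r)|$ for each $i$.

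For this second step, I expand the inner product entrywise: by the Hermitian convention of the notation section together with the definition of $\mbf e_{\mbf r}$, one has $\mbf d_i^H \mbf e_{\mbf r} = \sum_{\mbf k \in \Lambda} \overline{d_i[\mbf k]}\, e^{j2\pi \mbf k \cdot \mbf r}$. Since $\mu_i$ is by definition the trigonometric polynomial with coefficients $(d_i[\mbf k]: \mbf k\in\Lambda)$ via \eqref{eq:trigpoly}, a routine bookkeeping of complex conjugations (together with the implicit symmetry $\Lambda = -\Lambda$ used throughout \S\ref{sec:edgerecovery}) identifies the modulus of this sum with $|\mu_i(\mbf r)|$. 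Summing over $i$ and taking square roots then gives
\[
\|\mbf D\mbf D^H \mbf e_{\mbf r}\|_2 \;=\; \Bigl(\sum_{i=1}^R |\mu_i(\mbf r)|^2\Bigr)^{1/2} \;=\; \overline{\mu}(\mbf r),
\]
which is the claim.

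There is essentially no substantive obstacle: the content of the proposition is the orthogonal projection identity in the first display, and everything else is bookkeeping. The only minor subtlety lies in matching conjugation conventions so that $|\mbf d_i^H \mbf e_{\mbf r}|$ equals $|\mu_i(\mbf r)|$ exactly rather than $|\overline{\mu_i(\mbf r)}|$ or $|\mu_i(-\mbf r)|$; all three quantities coincide in modulus whenever $\Lambda$ is symmetric about the origin, which is the standard assumption in our construction of $\mathcal{T}(\widehat{f})$.
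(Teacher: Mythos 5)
Your proof is correct and follows essentially the same route as the paper's: the paper likewise reduces to $\sum_i|\langle\mbf e_{\mbf r},\mbf d_i\rangle|^2$ via the orthogonal-projector identity and then identifies each term with $|\mu_i(\mbf r)|^2$, only phrasing that last step through Parseval's theorem (writing $\mbf e_{\mbf r}=P_{\Lambda}\mathcal{F}\{\delta_{\mbf r}\}$ and using $\langle\delta_{\mbf r},\mu_i\rangle=\overline{\mu_i(\mbf r)}$) rather than by direct entrywise expansion. One small caveat: your closing claim that symmetry of $\Lambda$ forces $|\mu_i(-\mbf r)|=|\mu_i(\mbf r)|$ is false for general complex coefficients, but the conjugation/sign mismatch you are trying to absorb there is a convention artifact already present in the proposition's definition of $\mbf e_{\mbf r}$, and it does not affect the substance of the argument.
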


While we do not directly estimate the edge set in this fashion, the function $\overline{\mu}$ will play an important role in the second step of our proposed algorithm in \S\ref{sec:edgeawarerecovery}.

\subsubsection{Annihilation subspace estimation in the presence of noise}
\label{denoising}
When the measurements are corrupted by noise or there is model mismatch, we propose performing a denoising step to better identify the annihilating subspace. This denoising step is similar to Cadzow methods used in FRI algorithms \cite{blu2008sparse,pan2013sampling}, as well as recently proposed algorithms for structured low-rank matrix completion problems \cite{haldar2014low,chen2014robust,pal2014grid,jin2015general}. Given noisy Fourier samples $b[\mbf k] = \widehat{f}[\mbf k] + n[\mbf k]$, $\mbf k \in \Gamma$, where we assume $n[\mbf k]$ is a vector of i.i.d.\ mean-zero Gaussian noise, we attempt find the $\widehat{f}^*$ closest in Euclidean norm that satisfies the rank bound in Proposition \ref{prop:rank}:
\begin{equation}
\label{eq:cadzow}
\widehat{f}^* = \argmin_{\widehat{g}[\mbf k],\mbf k\in\Gamma} \quad \sum_{\mbf k \in \Gamma} \left|\widehat{g}[\mbf k] - b[\mbf k]\right|^2~~\text{such that}\quad \text{rank}[\mathcal{T}(\widehat{g})] \leq r.
\end{equation}
Here the parameter $r$ is estimated according to \eqref{eq:rankbound}. This can be viewed as a maximum likelihood estimation, where the rank constraint on $\mathcal{T}(\widehat{g})$ enforces our modeling assumption; see \cite{condat2015cadzow} for a similar approach in 1-D. Note the rank constraint in \eqref{eq:cadzow} is non-convex, which means in general we are only able to compute an local minimizer of \eqref{eq:cadzow}. One common heuristic for solving problems of the type \eqref{eq:cadzow} is to perform a Cadzow denoising on a lifted matrix. However, as observed in \cite{pan2013sampling}, this procedure is slow to converge in this context. Instead, we attempt to solve \eqref{eq:cadzow} more directly with a variable splitting approach adopted from \cite{haldar2014low}:
\begin{equation}
\label{eq:cadzow_new2}
\min_{\substack{\widehat{g}[\mbf k],\mbf k\in\Gamma\\\mbf T: \text{rank} \mbf T \leq r}} \quad \sum_{\mbf k \in \Gamma} \left|\widehat{g}[\mbf k] - b[\mbf k]\right|^2 + \lambda\, \|\mathcal{T}(\widehat{g})-\mbf T\|_F^2
\end{equation}
where $\lambda$ is a regularization parameter balancing data fidelity and the low-rank penalty term. This suggests the following alternating minimization scheme: set $\widehat{f}^{(0)} = b$ and for $n\geq 1$, solve
\begin{align}
\label{eq:cadzow_step1}
\mbf T^{(n+1)}  &= \argmin_{\mbf T: \text{rank}\mbf T \leq r} \|\mathcal{T}(\widehat{f}^{(n)})-\mbf T\|_F^2 \\
\widehat{f}^{(n+1)} &= \argmin_{\widehat{g}[\mbf k],\mbf k\in\Gamma} \sum_{\mbf k \in \Lambda} \left|\widehat{g}[\mbf k] - b[\mbf k]\right|^2 + \lambda \|\mathcal{T}(\widehat{g})-\mbf T^{(n+1)}\|_F^2
\label{eq:cadzow_step2}
\end{align}
The exact solution to subproblem \eqref{eq:cadzow_step1} is easily computed by taking the singular value decomposition of $\mathcal{T}(\widehat{f}^{(n)}) = \mbf U\bs \Sigma \mbf V^*$, and setting $\mbf T^{n+1} = \mbf U\bs \Sigma_r \mbf V^*$, where $\bs \Sigma_r$ is the diagonal matrix of singular values $\sigma_i$ with $\sigma_i = 0$ for all $i > r$. Also, subproblem \eqref{eq:cadzow_step2} is quadratic in $\widehat{g}$, and has an exact solution given by $\widehat{g} = \left({b + \mathcal{T}^*(\mbf T)}\right)/\left({1 + \lambda\,\text{diag}(\mathcal{T}^*\mathcal{T})}\right)$,
where the operations above are understood element-wise. Here $\mathcal{T}^*$ denotes the adjoint of $\mathcal{T}$, which computes a weighted sum of entries of $\mbf T \in \mathbb{C}^{2N\times M}$. In particular, one can show $\mathcal{T}^*\mathcal{T}$ is a diagonal matrix. Due to the alternating scheme, the cost in \eqref{eq:cadzow_new2} is guaranteed to monotonically decrease with updates \eqref{eq:cadzow_step2}. A detailed convergence analysis of the algorithm is beyond the scope of this work; further analysis can be found in \cite{haldar2014low}. However, we observe the algorithm converges well in practice. In all our experiments conducted in \S\ref{sec:exp} a suitable solution was reached in 5--10 iterations.

\subsection{Step 2: Edge set aware image recovery}
\label{sec:edgeawarerecovery}
We now focus on the edge-aware recovery of the image using \eqref{eq:recovery_Fourier}, which amounts to the Fourier domain extrapolation of the measured samples. This recovery can still be considered as ``off-the-grid'', in the sense that we reconstruct the exact Fourier series coefficients of the underlying continuous domain image. We will also briefly consider the case where we solve for the image ``on-the-grid'' in spatial domain similar to \cite{pan2013sampling}.

\subsubsection{Fourier domain recovery: Extrapolation by linear prediction}
Suppose that we have an estimate of a basis for the annihilating subspace $\mbf D = [\mbf d_1,\ldots,\mbf d_R]$. Motivated by the Fourier domain recovery formulations \eqref{eq:recovery_Fourier} and \eqref{eq:finiterecovery}, we pose the recovery of the Fourier domain of the signal within an arbitrary rectangular domain $\Delta \subseteq \mathbb{Z}^2$ as
\begin{equation}
\label{opt:ell2}
\min_{\widehat{g}[\mbf k],\mbf k \in \Delta} \sum_{i=1}^R\widehat{\|\nabla g}* \mbf d_i\|^2\quad \text{subj. to}\quad \widehat{g}[\mbf k ] = \widehat{f}[\mbf k], \mbf k\in \Gamma,
\end{equation}
i.e.\ we extrapolate the signal by enforcing its Fourier derivatives to be annihilated by filters in the annihilating subspace. Here we restrict the convolution to the valid region $\Delta\,{:}\,\Lambda\subset\mathbb{Z}^2$. When the measurements are corrupted by noise (i.e.\ $b[\mbf k] = \widehat{f}[\mbf k] + n[\mbf k]$, $\mbf k\in\Gamma$, where $n$ is a vector i.i.d.\ complex white Gaussian noise), we relax the equality constraint in \eqref{opt:ell2} and instead solve
\begin{equation}
\label{opt:ell2_lam}
\min_{\widehat{g}[\mbf k],\mbf k \in \Delta} \sum_{i=1}^R\|\widehat{\nabla g}* \mbf d_i\|^2 + \lambda \|P_\Gamma\widehat{g} - \mbf b\|^2.
\end{equation}
Here, $\lambda$ is a tunable parameter balancing annihilation and data fidelity, and $P_\Gamma$ is projection onto the sampling set $\Gamma$. Note that \eqref{opt:ell2_lam} is a linear least-squares problem, and can be solved efficiently with a conjugate gradient iterative solver, provided we have an fast method of computing matrix-vector products with the gradient of the objective in \eqref{opt:ell2_lam}. Towards this end, we rewrite \eqref{opt:ell2_lam} in terms of a linear operator $\mathcal{Q}$ acting on $\widehat{g}$:
\begin{equation}
\label{opt:ell2_Q}
\min_{\widehat{g}[\mbf k],\mbf k \in \Delta} \|\mathcal{Q}\widehat{g}\|_F^2  + \lambda \|P_\Gamma\widehat{g} - \mbf b\|^2.
\end{equation}
where $\|\cdot\|_F$ denotes the Frobenius norm. We can express $\mathcal{Q}$ as
\begin{equation}
\mathcal{Q} = [\mathcal{Q}_{x,1},\mathcal{Q}_{y,1},...,\mathcal{Q}_{x,R},\mathcal{Q}_{y,R}];~~\mathcal{Q}_{x,i} = \mbf P \mbf C_{i} \mbf M_x,~~\mathcal{Q}_{y,i} = \mbf P \mbf C_{i} \mbf M_y\text{ for all } i = 1,...,R,
\label{eq:Qexpand}
\end{equation}
with $\mbf M_x$ and $\mbf M_y$ representing elementwise multiplication by $j2\pi k_x$ and $j2\pi k_y$, respectively, $\mbf C_i$ discrete convolution with the annihilating filter $\mbf d_i$, and $\mbf P$ projection onto the index set $\Delta : \Lambda$. Note that since $\widehat{g}$ is restricted to $\Delta$, we may take $\mbf C_i$ to be a 2-D circular convolution, provided we zero-pad the filter $\mbf d_i$ and the input $\widehat{g}$ sufficiently. Thus, we can write $\mbf C_i = \mbf F \mbf S_i \mbf F^H$ where $\mbf F$ is the 2-D DFT on any rectangular grid containing the set $\Delta+\Lambda$, and $\mbf S_i$ is elementwise multiplication by the inverse DFT of $\mbf d_i$, which is given by a spatial gridding of the annihilating polynomial $\mu_i(\mbf r)$ having coefficients $\mbf d_i$.

Taking the gradient of the objective in \eqref{opt:ell2_Q} and setting it to zero yields
\[
(\mathcal{Q}^*\mathcal{Q} + \lambda P_\Gamma^*P_\Gamma)\widehat{g} = P_\Gamma^*\mbf b.
\]
From \eqref{eq:Qexpand}, the operator $\mathcal{Q}^*\mathcal{Q}$ can be expanded as
\begin{equation}
\mathcal{Q}^*\mathcal{Q} = \sum_{i=1}^R(\mbf M_x^* \mbf C_i^* \mbf P^*\mbf P \mbf C_i \mbf M_x + \mbf M_y^* \mbf C_i^* \mbf P^*\mbf P \mbf C_i \mbf M_y)
\label{eq:QstarQ}
\end{equation}
Unfortunately, when the number of annihilating filters is large (i.e. $R$ is large), applying this operator directly will be computationally costly, requiring $8R$ two-dimensional FFT operations per application. However, if the resolution of the reconstruction grid $\Delta$ is large enough, the projection operator $\mbf P^*\mbf P$ will be close to the identity matrix $\mbf I$. Making the approximation $\mbf P^* \mbf P \approx \mbf I$ in \eqref{eq:QstarQ}, we obtain
\begin{equation}
\mathcal{Q}^*\mathcal{Q} \approx \mbf M_x^* \mbf F^* \mbf R \mbf F \mbf M_x + \mbf M_y^* \mbf F^* \mbf R \mbf F \mbf M_y.
\label{eq:Qapprox}
\end{equation}
where $\mbf R = \sum_{i=1}^R \mbf S_i^*\mbf S_i$ is diagonal and represents pointwise multiplication by spatially gridded samples of the sum-of-squares polynomial $\overline{\mu}^2 = \sum_{i=1}^R |\mu_i|^2$, as defined in \eqref{eq:mubar}. Note that after computing the entries of $\mbf R$, computing matrix-vector products with \eqref{eq:Qapprox} requires only four 2-D FFTs per iteration to evaluate, as opposed to $8R$. Working backwards, we can show approximating $\mathcal{Q}^*\mathcal{Q}$ in this manner is equivalent to minimizing
\begin{equation}
\label{opt:ell2_approx}
\min_{\widehat{g}[\mbf k],\mbf k \in \Delta} \|\overline{\mu}\,\nabla \mathcal{B}(g)\|^2_{L^2} + {\lambda} \|P_\Gamma\,\widehat{g} - \mbf b\|^2
\end{equation}
where here $\mathcal{B}(g)$ is the function obtained by bandlimiting $g$ to the Fourier extrapolation set $\Delta$, and $\|\cdot\|^2_{L^2}$ is the spatial domain $L^2$-norm of vector-valued functions on $[0,1]^2$. This shows that linear prediction using the entire annihilating subspace is roughly equivalent to \eqref{eq:recovery_space}. The main difference is that the minimal polynomial $\mu$ is replaced by the sum-of-squares average polynomial $\overline{\mu}$ defined in \eqref{eq:mubar}. We propose solving \eqref{opt:ell2_approx} as a surrogate for \eqref{opt:ell2_lam} when the dimension of the annihilating subspace is large, and call \eqref{opt:ell2_approx} the least-squares linear prediction (LSLP) approach. 

\subsubsection{Spatial domain recovery: Weighted total variation}
We also consider solving for the discrete samples of a function $g:[0,1]^2\rightarrow\mathbb{C}^2$ on a rectangular grid $\{\mbf r_{i,j}\} \subset [0,1]^2$ in spatial domain:
\begin{equation}
\label{opt:ell1}
\min_{g} \sum_{i,j} \overline{\mu}(\mbf r_{i,j})\,|\nabla g(\mbf r_{i,j})| + {\lambda} \|P_\Gamma\,\widehat{g} - \mbf b\|^2,
\end{equation}
This approach of solving for the image on discrete spatial grid is similar to the approach in \cite{pan2013sampling}, and can be thought of as a discrete approximation of \eqref{eq:recovery_space}. To obtain a fast algorithm, we use finite differences to approximate the gradient $\nabla g$ in \eqref{opt:ell1}, and replace the Fourier projection $P_\Gamma\,\widehat{g}$ with its DFT counterpart. In this case, \eqref{opt:ell1} can be cast as a weighted total variation minimization which can be solved efficiently with a primal-dual algorithm, such as \cite{zhu2008efficient}. We call \eqref{opt:ell1} the weighted total variation approach (WTV). 

We note that several edge-aware recovery formulations similar to our WTV \eqref{opt:ell1} and LSLP approach \eqref{opt:ell2_approx} have been proposed previously in the literature; see e.g.\ \cite{xiang2005accelerating,jacob2007improved,haldar2008anatomically,vaswani2010modified,luo2012mri,gong2015promise}. The difference of the proposed approach lies in the estimation of the edge weights, which are obtained from the low-resolution data itself, rather than from \emph{a priori} known high resolution edge information, as is assumed in the above cited works.

\begin{figure*}
\centering
\input{results/phantom_SL.tex}
\subfloat[][\scriptsize Fully sampled]{
\begin{minipage}{0.16\linewidth}
\begin{tikzpicture}[      
        every node/.style={anchor=south west,inner sep=0pt},
        x=1mm, y=1mm,
      ]   
     \node (fig1) at (0,0)
       {\includegraphics[width=\linewidth,trim = 20mm 5mm 40mm 55mm, clip]{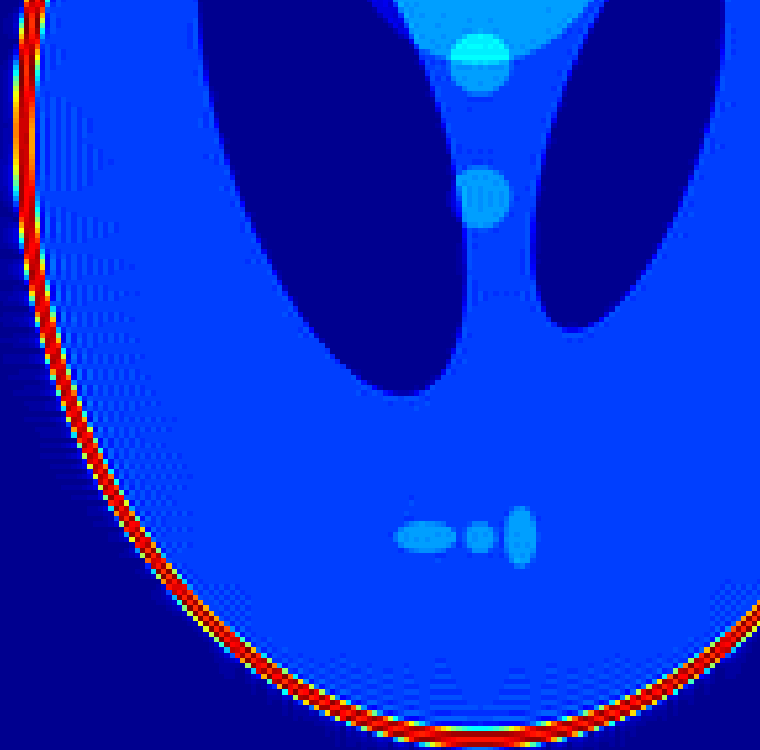}};
     \node [draw=white, thick] (fig2) at (0,0)
       {\includegraphics[width=0.4\linewidth]{results/phantom_SL_orig_jet-eps-converted-to.pdf}};  
\end{tikzpicture}

\includegraphics[width=\linewidth]{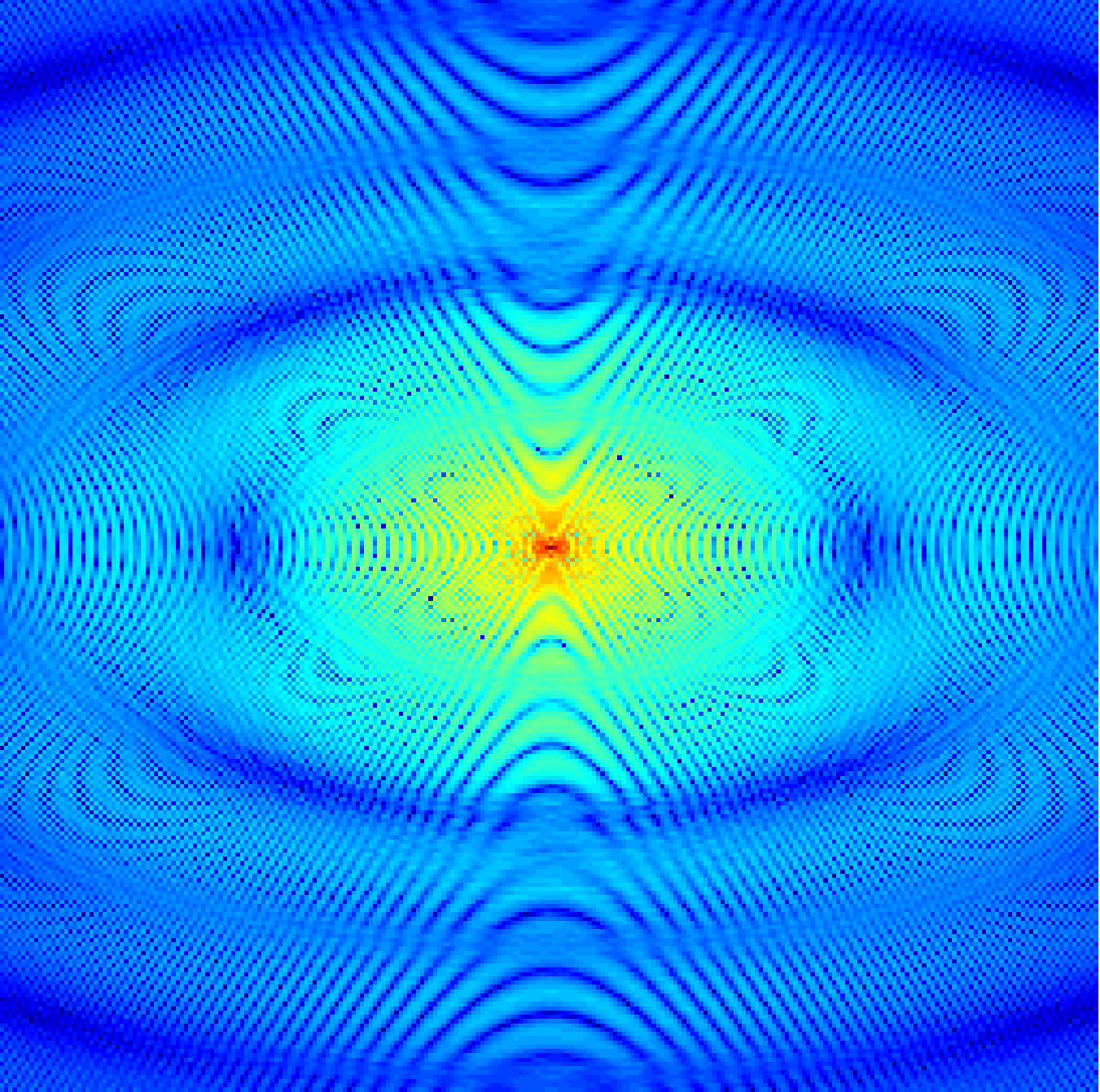}
\end{minipage}
}
\subfloat[][\scriptsize IFFT\\SNR=\SNRifft dB]{
\hspace{-1em}
\begin{minipage}{0.16\linewidth}
\begin{tikzpicture}[      
        every node/.style={anchor=south west,inner sep=0pt},
        x=1mm, y=1mm,
      ]   
     \node (fig1) at (0,0)
       {\includegraphics[width=\linewidth,trim = 20mm 5mm 40mm 55mm, clip]{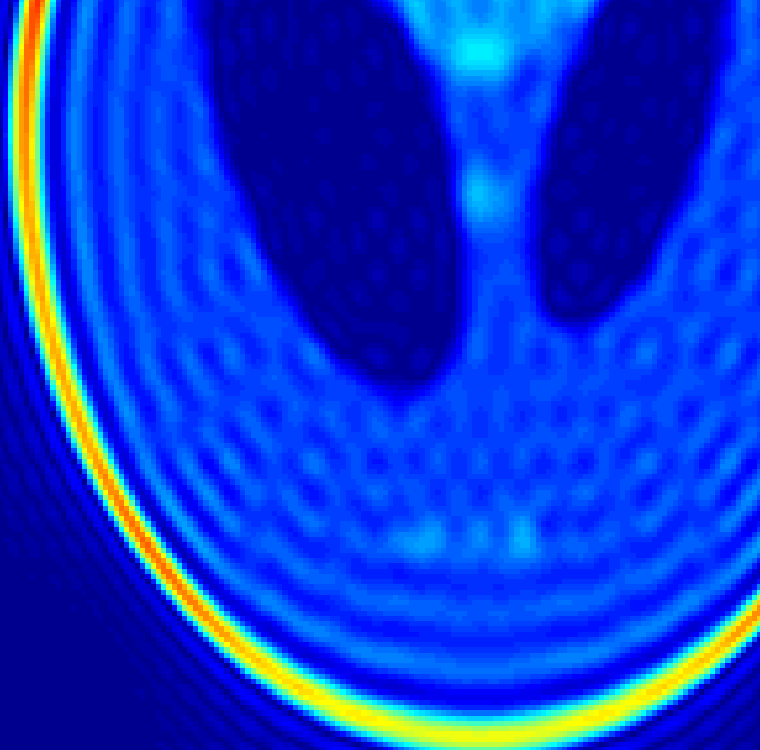}};
     \node [draw=white, thick] (fig2) at (0,0)
       {\includegraphics[width=0.4\linewidth]{results/phantom_SL_ifft_jet-eps-converted-to.pdf}};  
\end{tikzpicture}

\includegraphics[width=\linewidth]{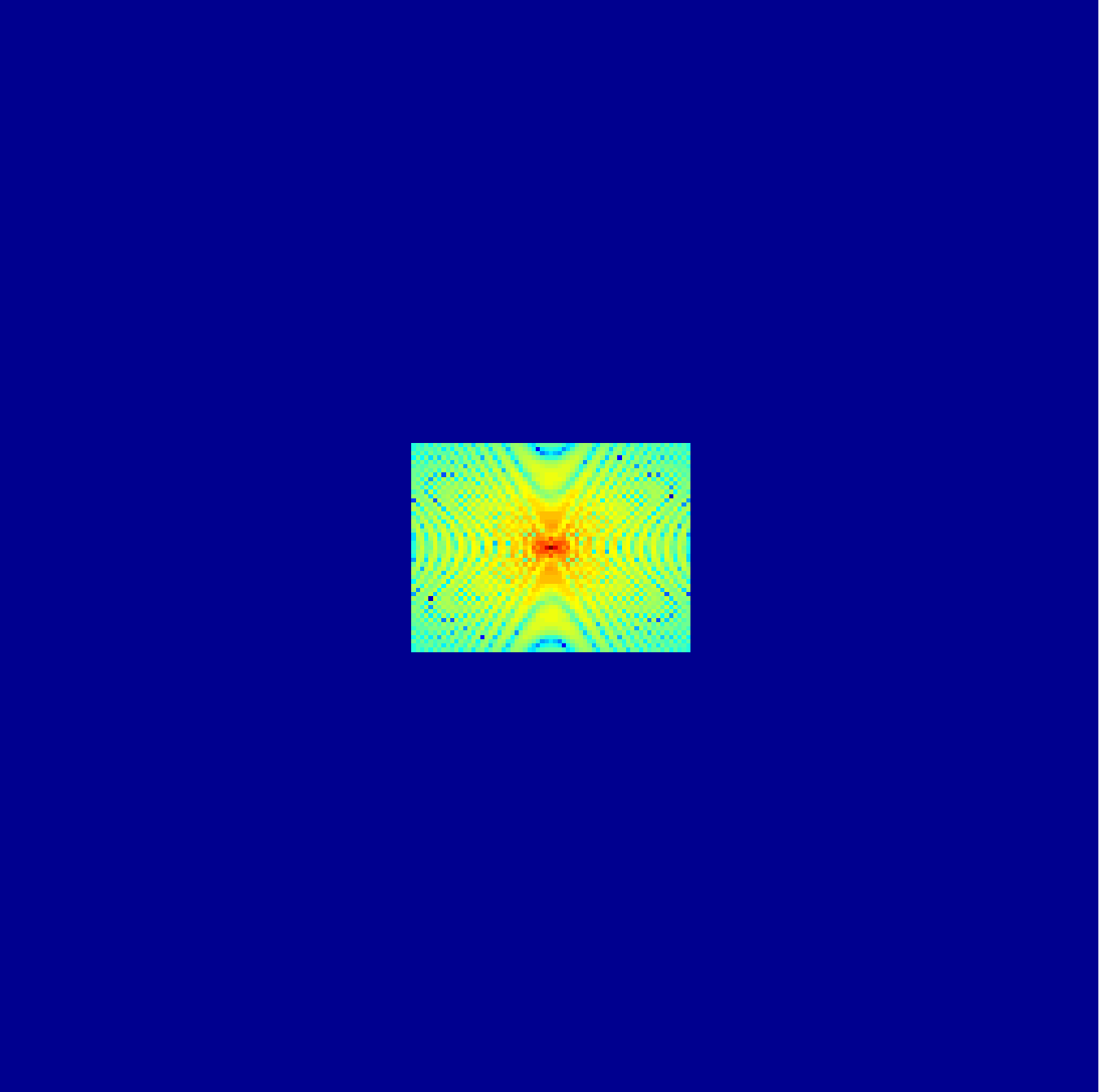}
\end{minipage}
}
\subfloat[][\scriptsize TV\\SNR=\SNRtv dB]{
\hspace{-1em}
\begin{minipage}{0.16\linewidth}
\begin{tikzpicture}[      
        every node/.style={anchor=south west,inner sep=0pt},
        x=1mm, y=1mm,
      ]   
     \node (fig1) at (0,0)
       {\includegraphics[width=\linewidth,trim = 20mm 5mm 40mm 55mm, clip]{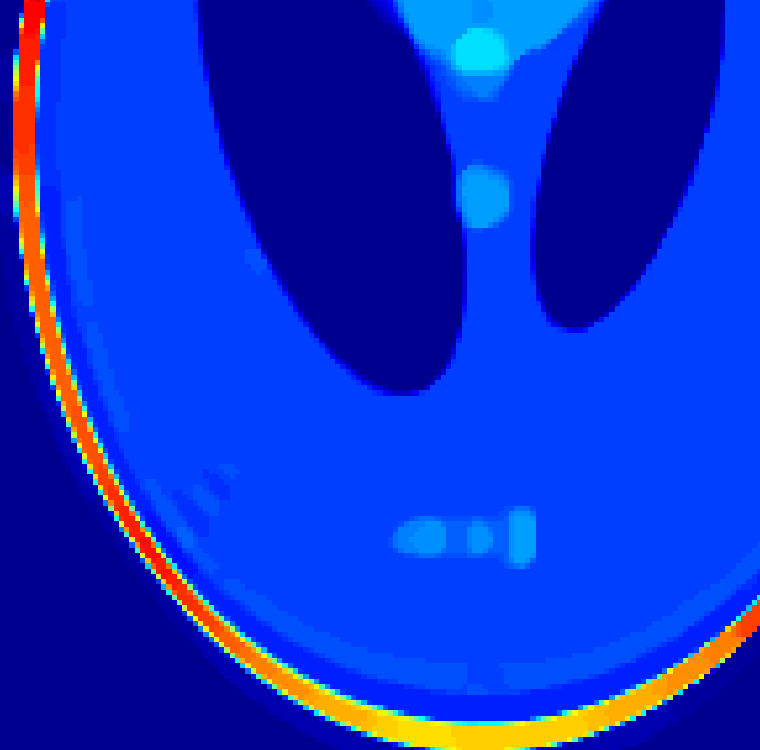}};
     \node [draw=white, thick] (fig2) at (0,0)
       {\includegraphics[width=0.4\linewidth]{results/phantom_SL_tv_jet-eps-converted-to.pdf}};  
\end{tikzpicture}

\includegraphics[width=\linewidth]{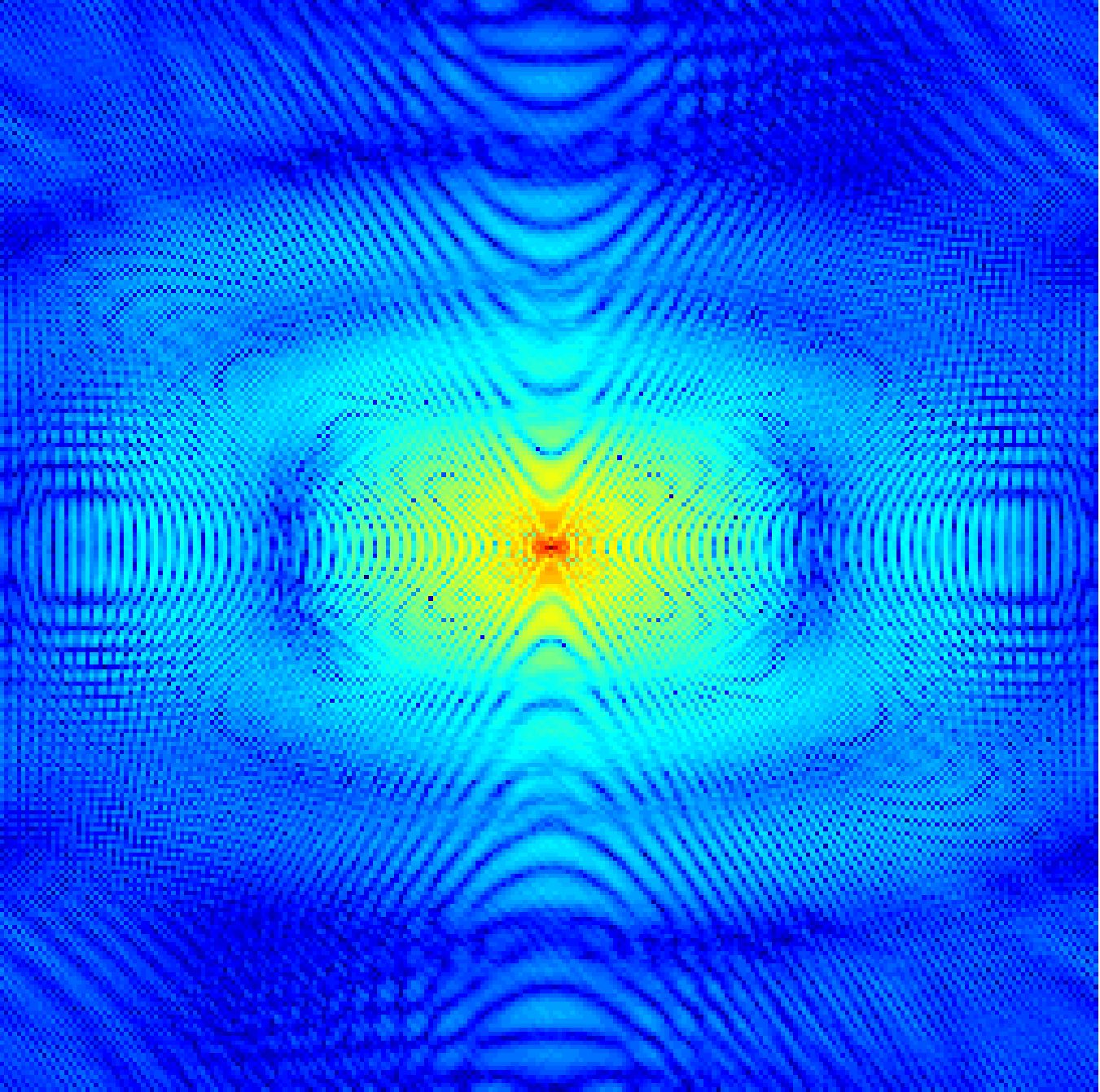}
\end{minipage}
}
\subfloat[][\scriptsize ncvx-TV\\SNR=\SNRncvxtv dB]{
\hspace{-1em}
\begin{minipage}{0.16\linewidth}
\begin{tikzpicture}[      
        every node/.style={anchor=south west,inner sep=0pt},
        x=1mm, y=1mm,
      ]   
     \node (fig1) at (0,0)
       {\includegraphics[width=\linewidth,trim = 20mm 5mm 40mm 55mm, clip]{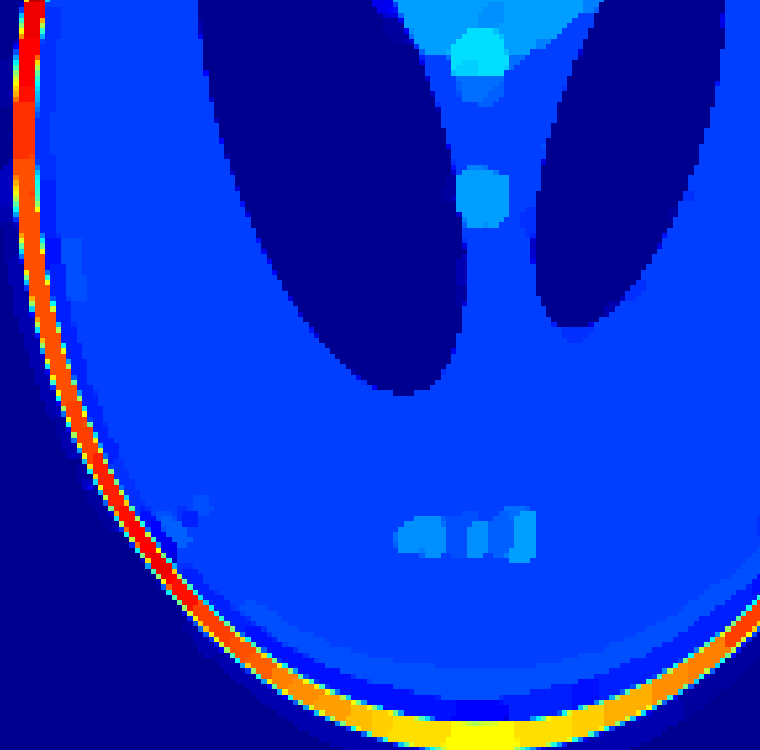}};
     \node [draw=white, thick] (fig2) at (0,0)
       {\includegraphics[width=0.4\linewidth]{results/phantom_SL_ncvxtv_jet-eps-converted-to.pdf}};  
\end{tikzpicture}

\includegraphics[width=\linewidth]{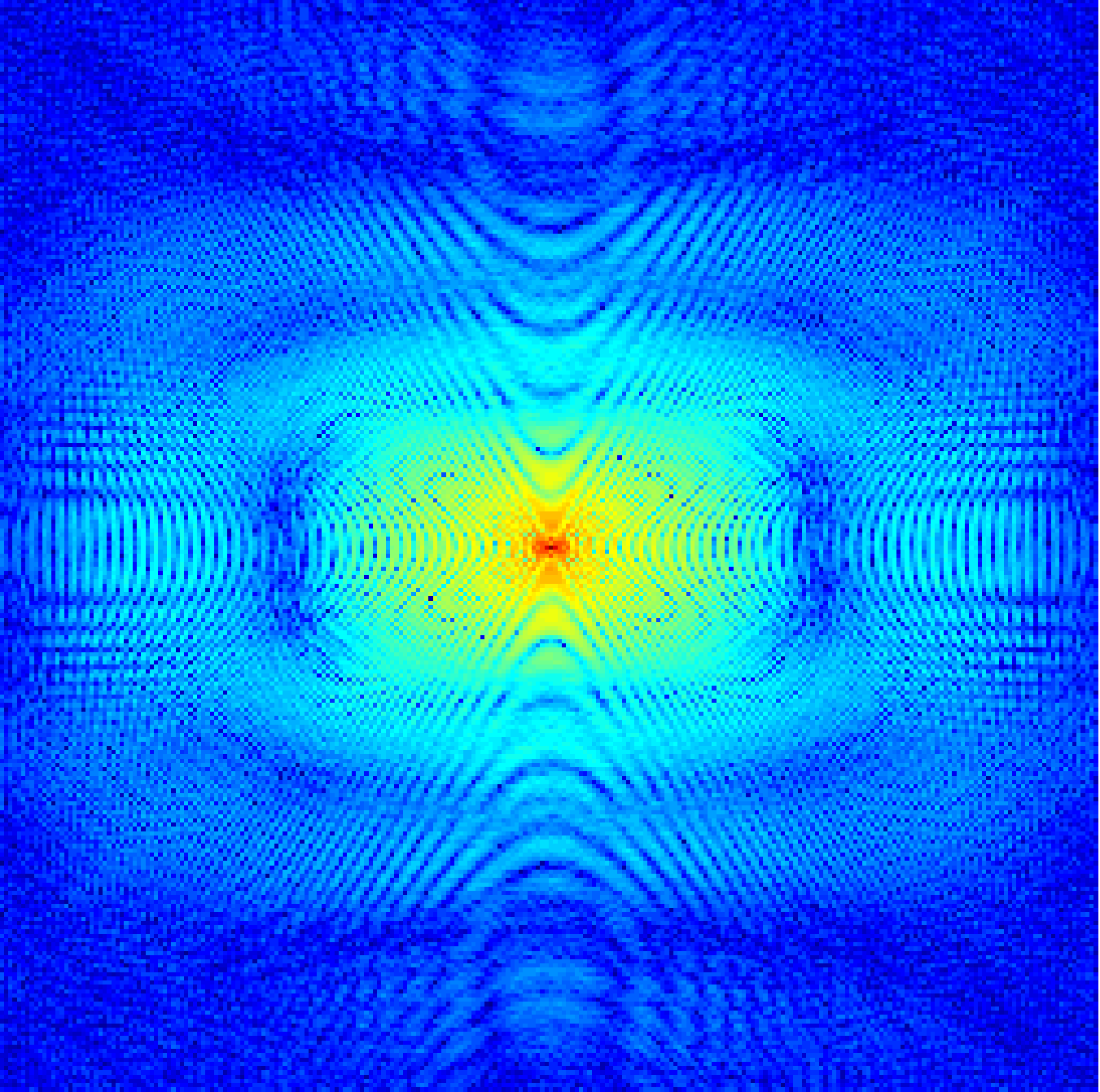}
\end{minipage}
}
\subfloat[][\scriptsize LSLP\\SNR=\SNRlslp dB]{
\hspace{-1em}
\begin{minipage}{0.16\linewidth}
\begin{tikzpicture}[      
        every node/.style={anchor=south west,inner sep=0pt},
        x=1mm, y=1mm,
      ]   
     \node (fig1) at (0,0)
       {\includegraphics[width=\linewidth,trim = 20mm 5mm 40mm 55mm, clip]{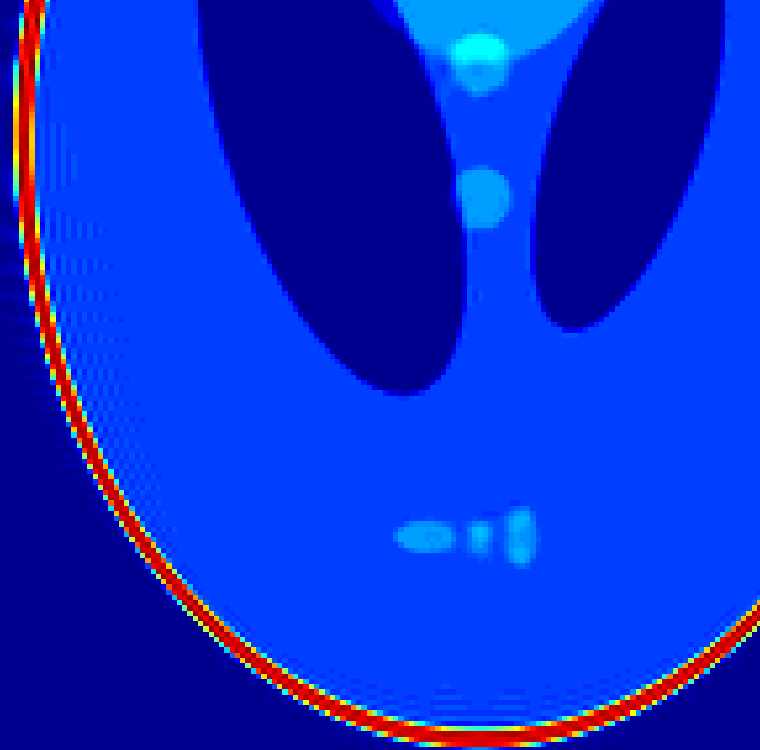}};
     \node [draw=white, thick] (fig2) at (0,0)
       {\includegraphics[width=0.4\linewidth]{results/phantom_SL_lslp_jet-eps-converted-to.pdf}};  
\end{tikzpicture}

\includegraphics[width=\linewidth]{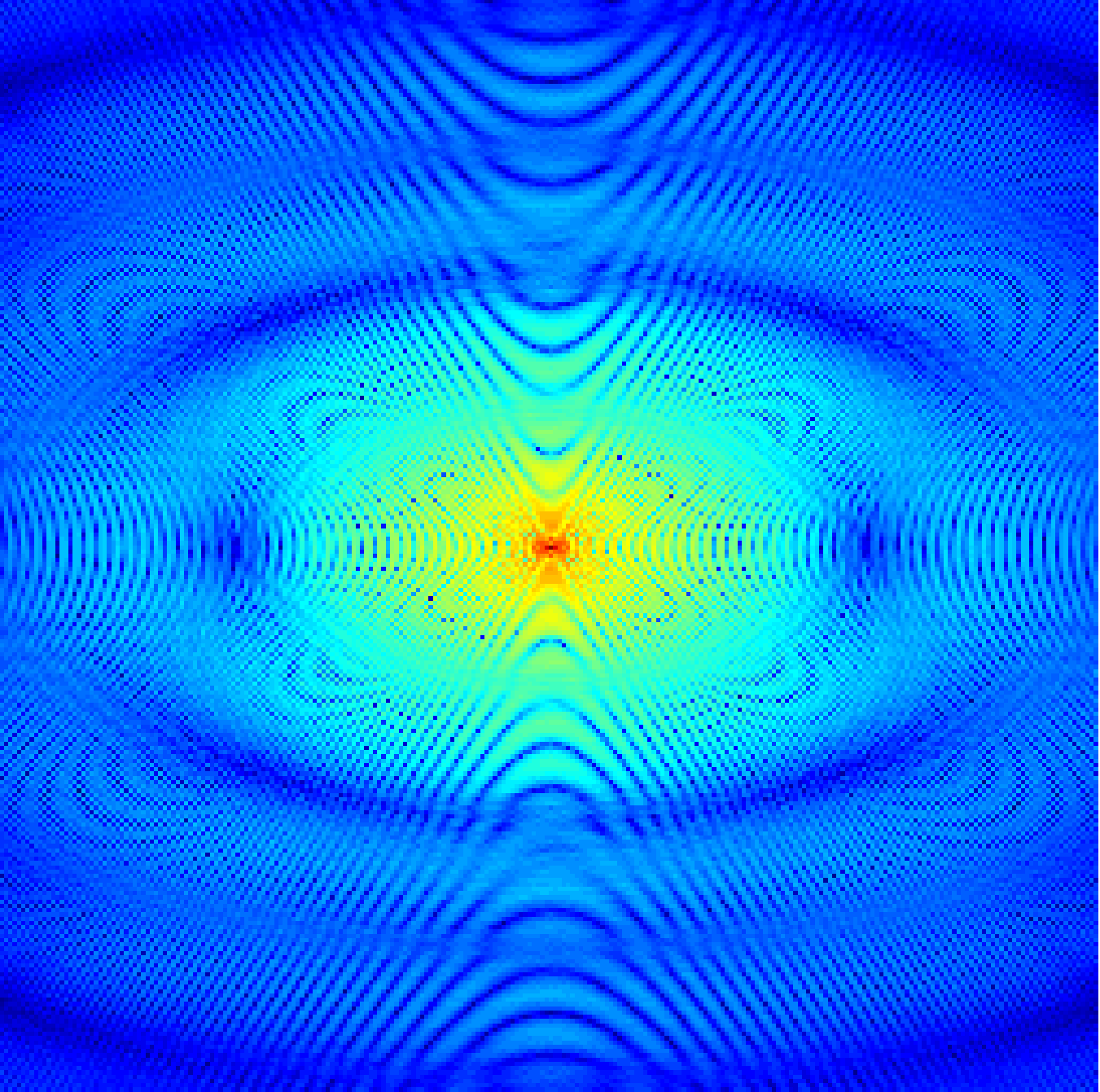}
\end{minipage}
}
\subfloat[][\scriptsize WTV\\SNR=\SNRwtv dB]{
\hspace{-1em}
\begin{minipage}{0.16\linewidth}
\begin{tikzpicture}[      
        every node/.style={anchor=south west,inner sep=0pt},
        x=1mm, y=1mm,
      ]   
     \node (fig1) at (0,0)
       {\includegraphics[width=\linewidth,trim = 20mm 5mm 40mm 55mm, clip]{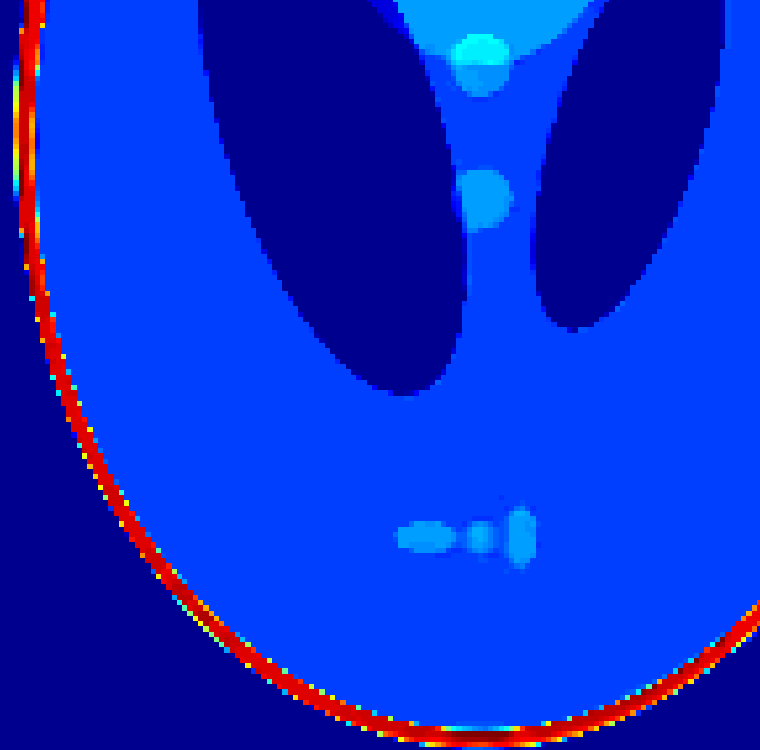}};
     \node [draw=white, thick] (fig2) at (0,0)
       {\includegraphics[width=0.4\linewidth]{results/phantom_SL_wtv_jet-eps-converted-to.pdf}};  
\end{tikzpicture}

\includegraphics[width=\linewidth]{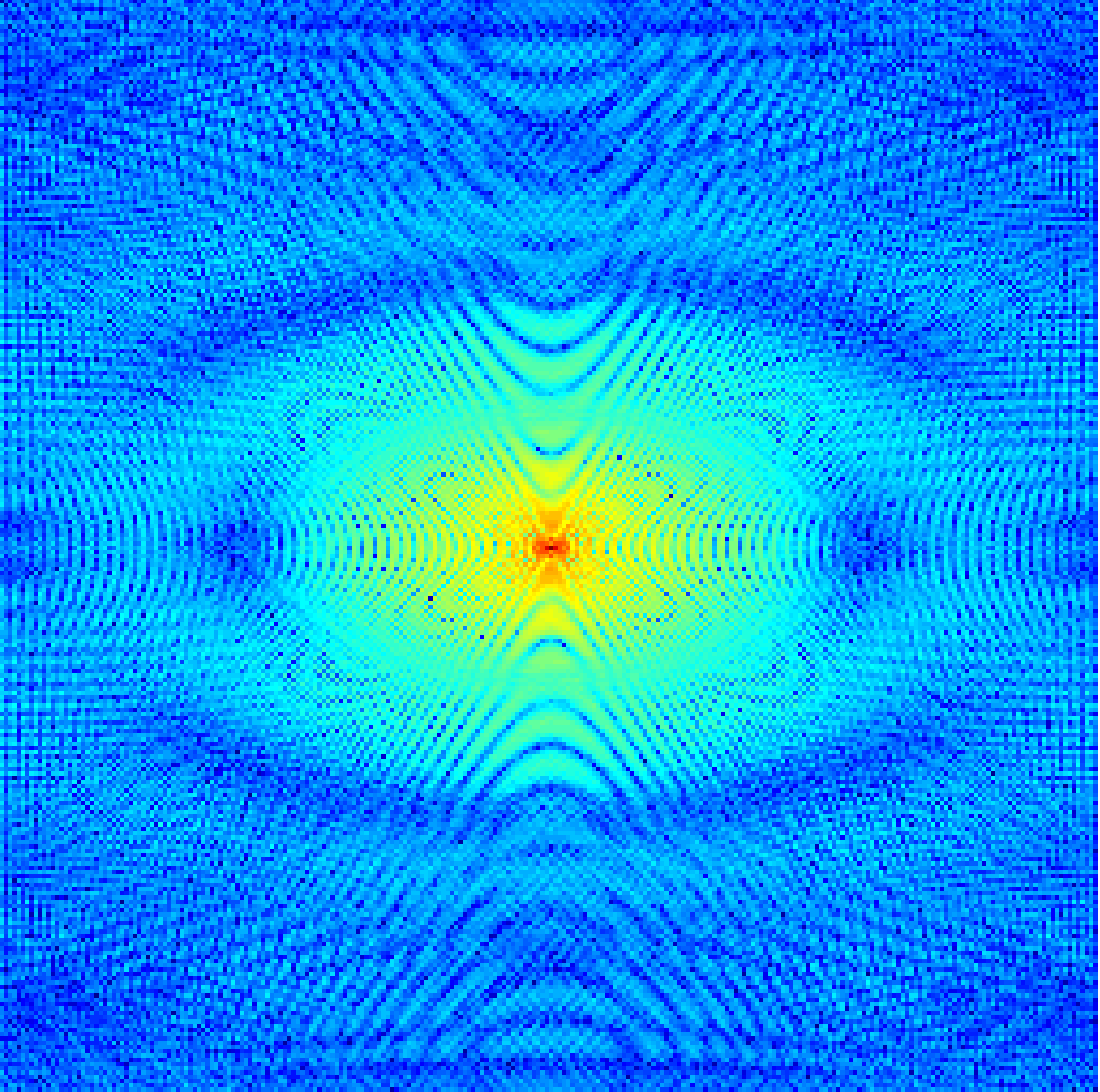}
\end{minipage}
}
\begin{minipage}{0.5cm}
\raisebox{0cm}{
\hspace{-1.5em}
\setlength\figureheight{2.2cm}
%
%
\begin{tikzpicture}

\begin{axis}[
    hide axis,
    scale only axis,
    width=1em,
    height=3em,
    point meta min=0,
    point meta max=1,
    colorbar,
    colormap/jet,
    colorbar style={
        height=\figureheight,
        width=0.7em,
        ytick={0,0.2,...,1},
        yticklabel style={
            xshift = -0.5ex,
            font = \tiny
        }
    }]
    \addplot [draw=none] coordinates {(0,0)};
\end{axis}
\end{tikzpicture}%
}

\raisebox{0.2cm}{
\hspace{-1.5em}
\setlength\figureheight{2.2cm}
%
%
\begin{tikzpicture}

\begin{axis}[
    hide axis,
    scale only axis,
    width=1em,
    height=3em,
    point meta min=0,
    point meta max=9,
    colorbar,
    colormap/jet,
    colorbar style={
        height=\figureheight,
        width=0.7em,
        ytick={0,1,...,9},
        yticklabel style={
            xshift = -0.5ex,
            font = \tiny
        }
    }]
    \addplot [draw=none] coordinates {(0,0)};
\end{axis}
\end{tikzpicture}%
}
\end{minipage}
\\
\input{results/phantom_brain.tex}
\subfloat[Fully sampled]{
\begin{minipage}{0.16\linewidth}
\begin{tikzpicture}[      
        every node/.style={anchor=south west,inner sep=0pt},
        x=1mm, y=1mm,
      ]   
     \node (fig1) at (0,0)
       {\includegraphics[width=\linewidth,height=\linewidth,trim = 45mm 10mm 40mm 75mm, clip]{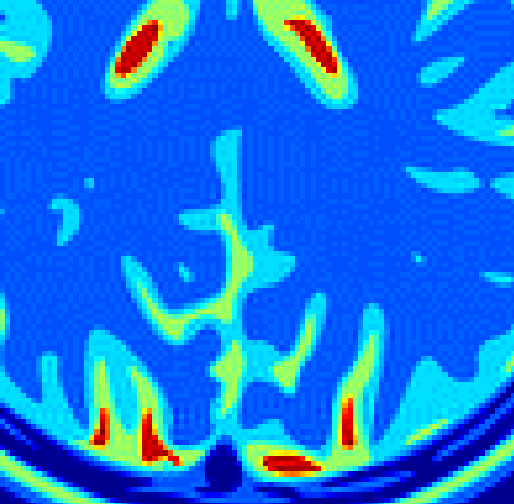}};
     \node [draw=white, thick] (fig2) at (0,0)
       {\includegraphics[width=0.4\linewidth,height=0.4\linewidth]{results/phantom_brain_orig_jet-eps-converted-to.pdf}};  
\end{tikzpicture}

\includegraphics[width=\linewidth,height=\linewidth]{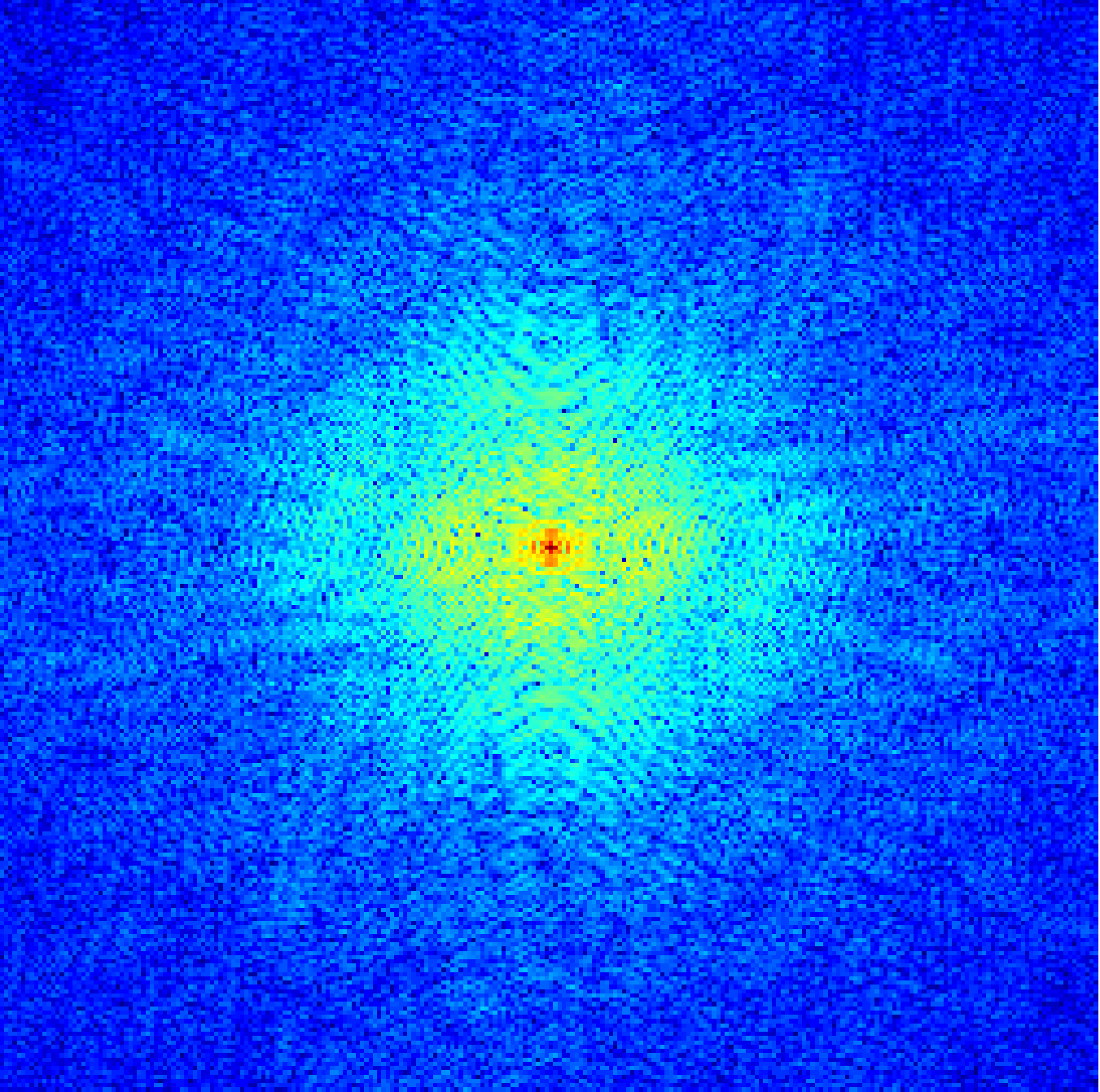}
\end{minipage}
}
\subfloat[][\scriptsize IFFT\\SNR=\SNRifft dB]{
\hspace{-1em}
\begin{minipage}{0.16\linewidth}
\begin{tikzpicture}[      
        every node/.style={anchor=south west,inner sep=0pt},
        x=1mm, y=1mm,
      ]   
     \node (fig1) at (0,0)
       {\includegraphics[width=\linewidth,height=\linewidth,trim = 45mm 10mm 40mm 75mm, clip]{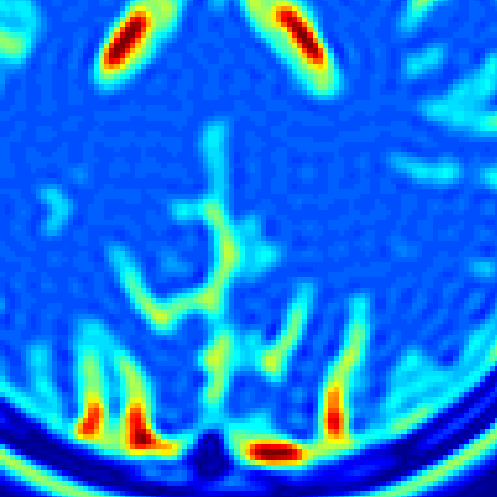}};
     \node [draw=white, thick] (fig2) at (0,0)
       {\includegraphics[width=0.4\linewidth,height=0.4\linewidth]{results/phantom_brain_ifft_jet-eps-converted-to.pdf}};  
\end{tikzpicture}

\includegraphics[width=\linewidth,height=\linewidth]{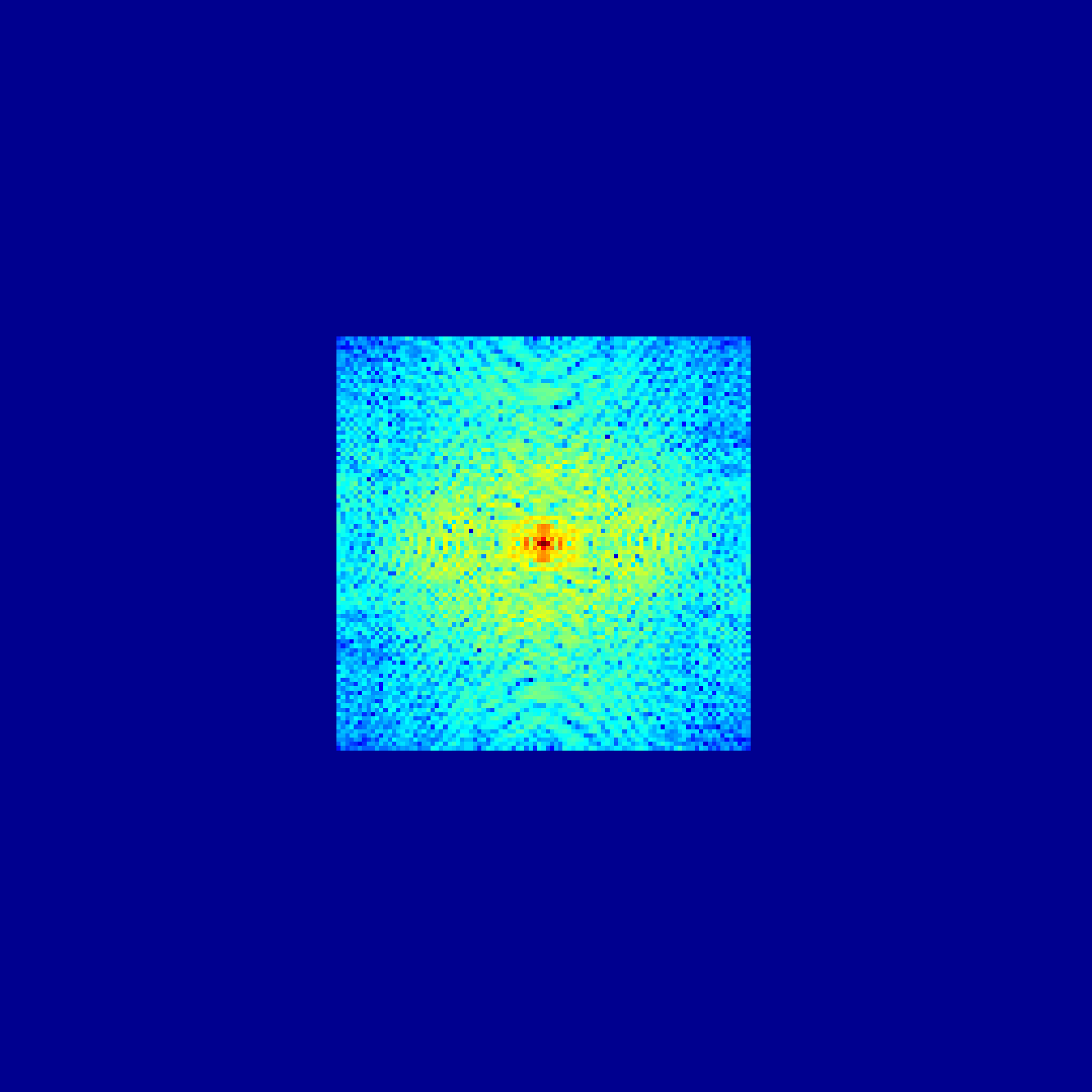}
\end{minipage}
}
\subfloat[][\scriptsize TV\\SNR=\SNRtv dB]{
\hspace{-1em}
\begin{minipage}{0.16\linewidth}
\begin{tikzpicture}[      
        every node/.style={anchor=south west,inner sep=0pt},
        x=1mm, y=1mm,
      ]   
     \node (fig1) at (0,0)
       {\includegraphics[width=\linewidth,height=\linewidth,trim = 45mm 10mm 40mm 75mm, clip]{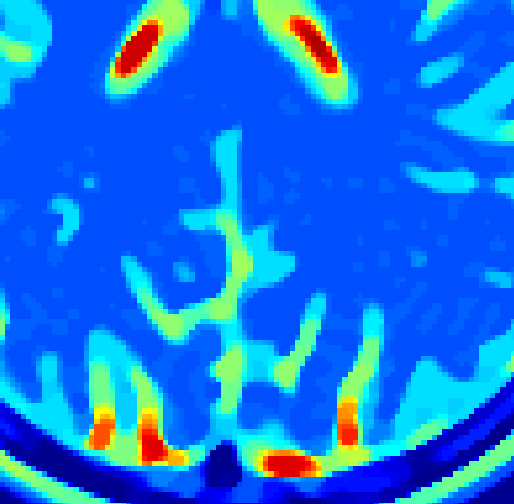}};
     \node [draw=white, thick] (fig2) at (0,0)
       {\includegraphics[width=0.4\linewidth,height=0.4\linewidth]{results/phantom_brain_tv_jet-eps-converted-to.pdf}};  
\end{tikzpicture}

\includegraphics[width=\linewidth,height=\linewidth]{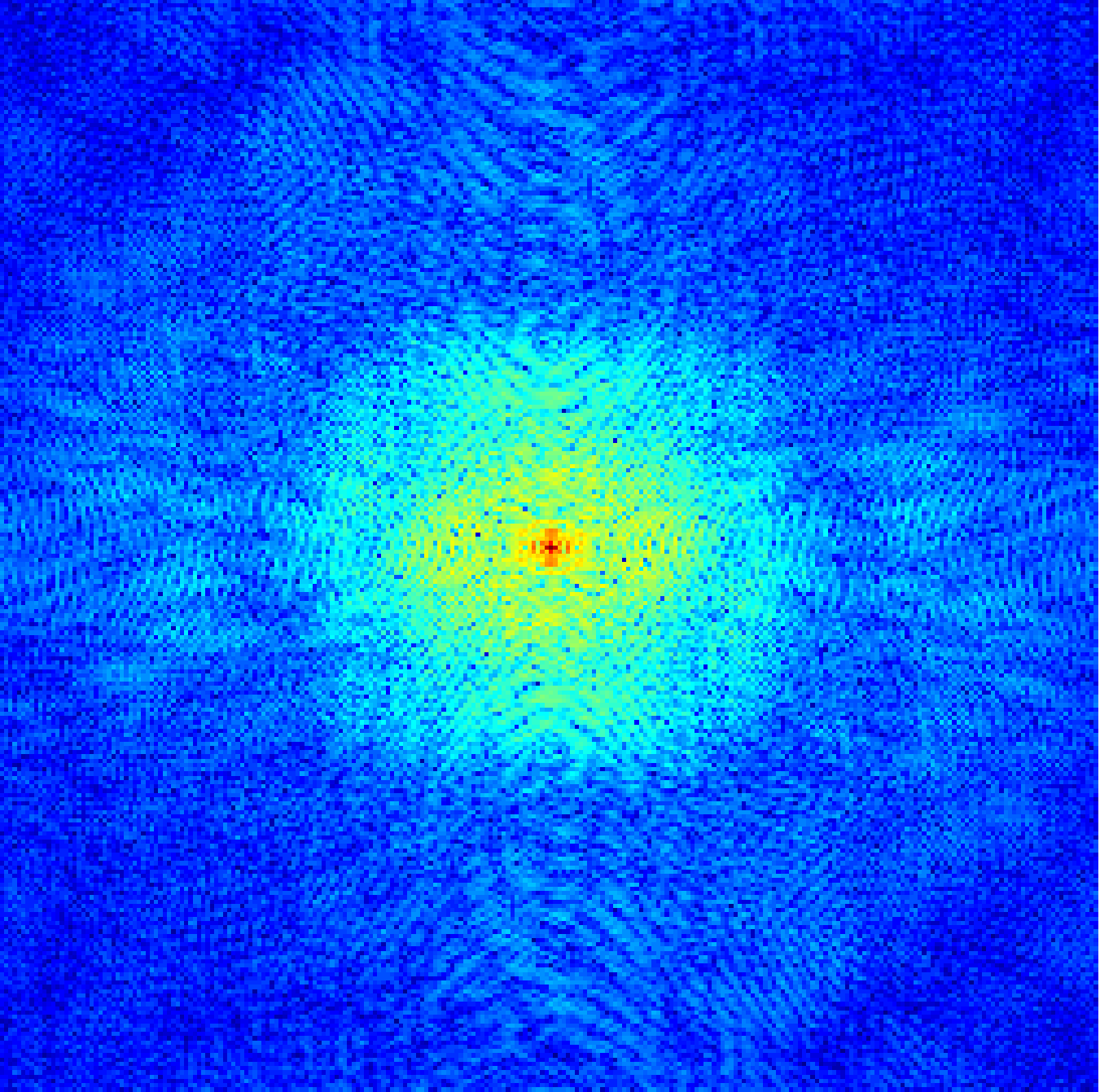}
\end{minipage}
}
\subfloat[][\scriptsize ncvx-TV\\SNR=\SNRncvxtv dB]{
\hspace{-1em}
\begin{minipage}{0.16\linewidth}
\begin{tikzpicture}[      
        every node/.style={anchor=south west,inner sep=0pt},
        x=1mm, y=1mm,
      ]   
     \node (fig1) at (0,0)
       {\includegraphics[width=\linewidth,height=\linewidth,trim = 45mm 10mm 40mm 75mm, clip]{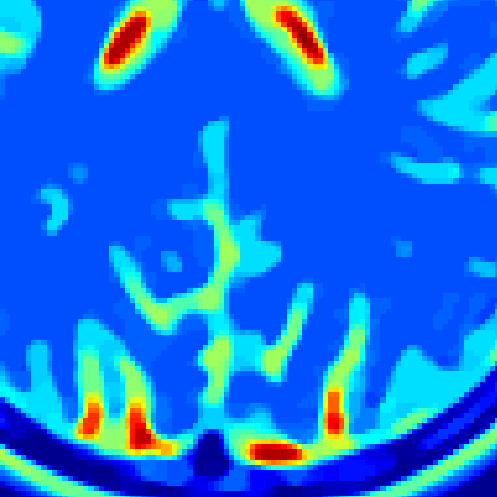}};
     \node [draw=white, thick] (fig2) at (0,0)
       {\includegraphics[width=0.4\linewidth,height=0.4\linewidth]{results/phantom_brain_ncvxtv_jet-eps-converted-to.pdf}};  
\end{tikzpicture}

\includegraphics[width=\linewidth,height=\linewidth]{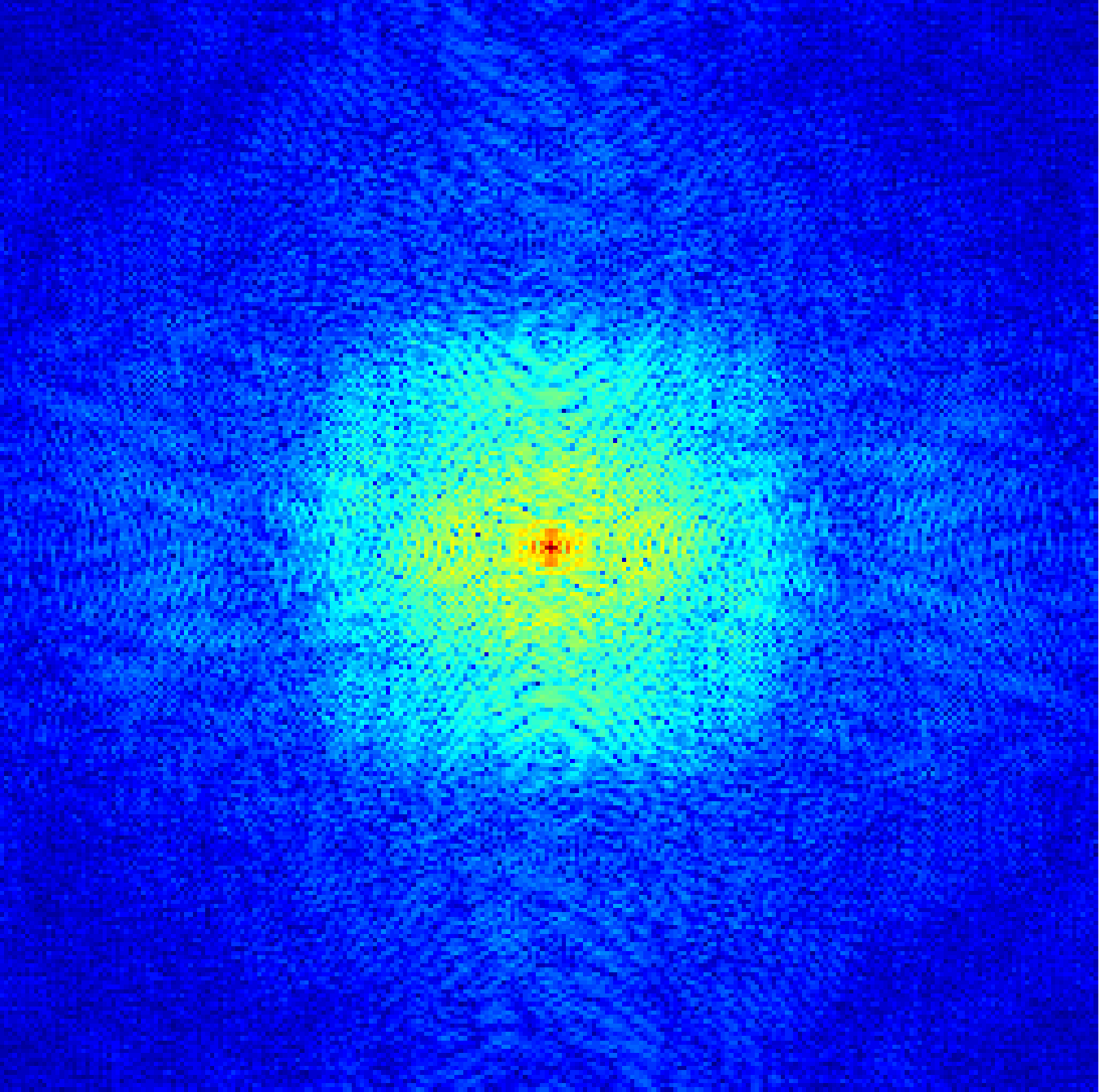}
\end{minipage}
}
\subfloat[][\scriptsize LSLP\\SNR=\SNRlslp dB]{
\hspace{-1em}
\begin{minipage}{0.16\linewidth}
\begin{tikzpicture}[      
        every node/.style={anchor=south west,inner sep=0pt},
        x=1mm, y=1mm,
      ]   
     \node (fig1) at (0,0)
       {\includegraphics[width=\linewidth,height=\linewidth,trim = 45mm 10mm 40mm 75mm, clip]{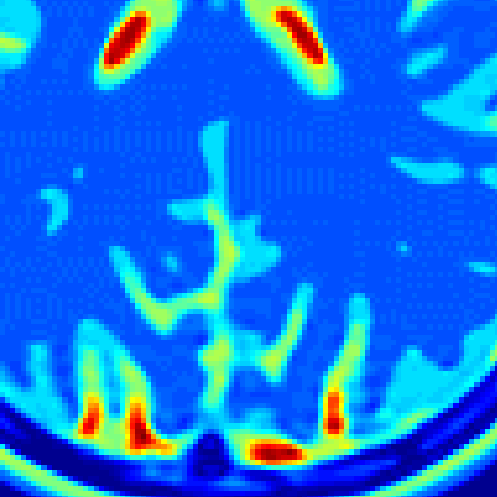}};
     \node [draw=white, thick] (fig2) at (0,0)
       {\includegraphics[width=0.4\linewidth,height=0.4\linewidth]{results/phantom_brain_lslp_jet-eps-converted-to.pdf}};  
\end{tikzpicture}

\includegraphics[width=\linewidth,height=\linewidth]{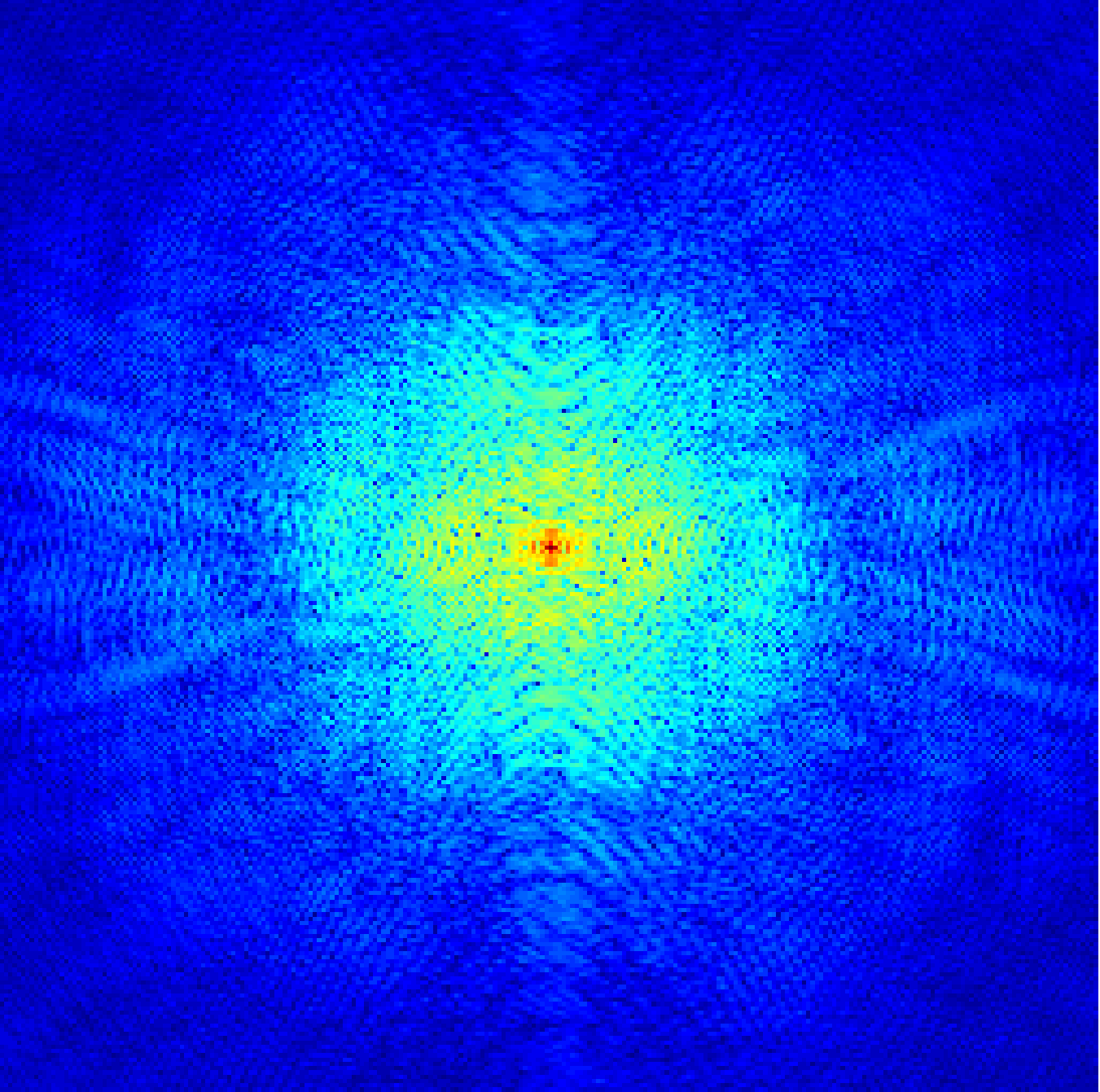}
\end{minipage}
}
\subfloat[][\scriptsize WTV\\SNR=\SNRwtv dB]{
\hspace{-1em}
\begin{minipage}{0.16\linewidth}
\begin{tikzpicture}[      
        every node/.style={anchor=south west,inner sep=0pt},
        x=1mm, y=1mm,
      ]   
     \node (fig1) at (0,0)
       {\includegraphics[width=\linewidth,height=\linewidth,trim = 45mm 10mm 40mm 75mm, clip]{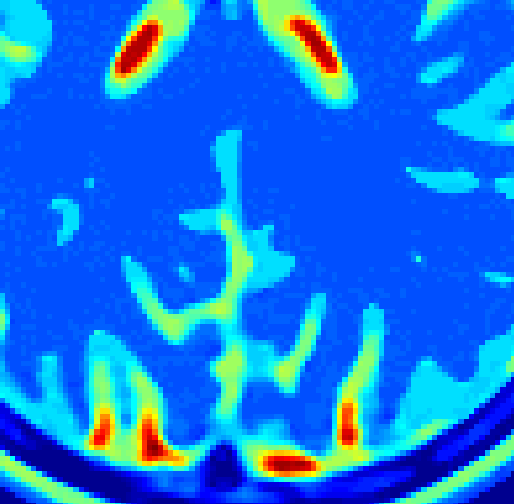}};
     \node [draw=white, thick] (fig2) at (0,0)
       {\includegraphics[width=0.4\linewidth,height=0.4\linewidth]{results/phantom_brain_wtv_jet-eps-converted-to.pdf}};  
\end{tikzpicture}

\includegraphics[width=\linewidth]{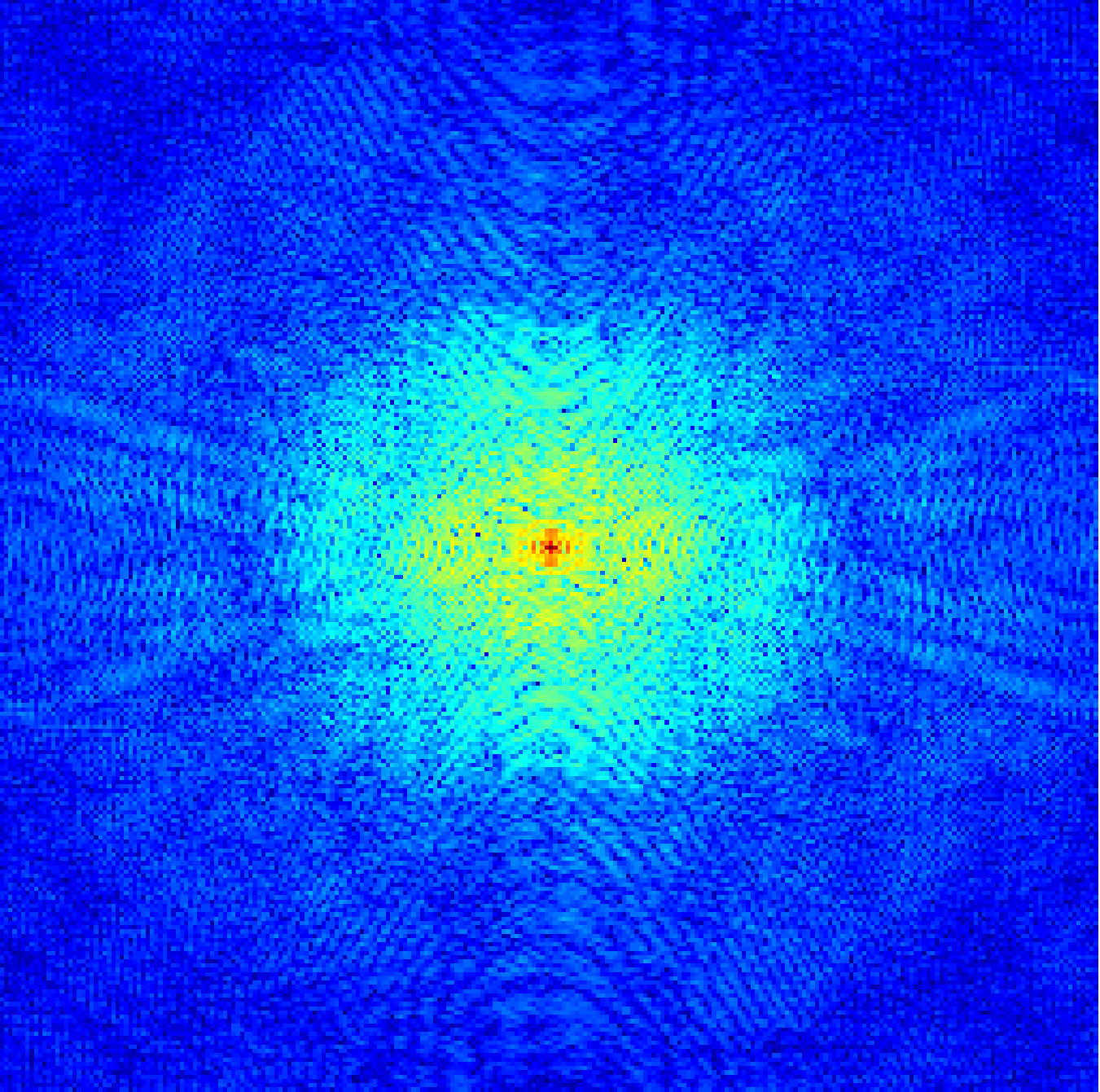}
\end{minipage}
}
\begin{minipage}{0.5cm}
\raisebox{0cm}{
\hspace{-1.5em}
\setlength\figureheight{2.2cm}
%
%
\begin{tikzpicture}

\begin{axis}[
    hide axis,
    scale only axis,
    width=1em,
    height=3em,
    point meta min=0,
    point meta max=1,
    colorbar,
    colormap/jet,
    colorbar style={
        height=\figureheight,
        width=0.7em,
        ytick={0,0.2,...,1},
        yticklabel style={
            xshift = -0.5ex,
            font = \tiny
        }
    }]
    \addplot [draw=none] coordinates {(0,0)};
\end{axis}
\end{tikzpicture}%
}

\raisebox{0.2cm}{
\hspace{-1.5em}
\setlength\figureheight{2.2cm}
%
%
\begin{tikzpicture}

\begin{axis}[
    hide axis,
    scale only axis,
    width=1em,
    height=3em,
    point meta min=0,
    point meta max=9,
    colorbar,
    colormap/jet,
    colorbar style={
        height=\figureheight,
        width=0.7em,
        ytick={0,1,...,9},
        yticklabel style={
            xshift = -0.5ex,
            font = \tiny
        }
    }]
    \addplot [draw=none] coordinates {(0,0)};
\end{axis}
\end{tikzpicture}%
}
\end{minipage}
\\
\subfloat[][\scriptsize Edge set estimate, $33$$\times$$25$ coefficients]{
\hspace{2em}
\begin{tikzpicture}
  \node[anchor=south west,inner sep=0] (image) at (0,0) {\includegraphics[height=0.25\textwidth]{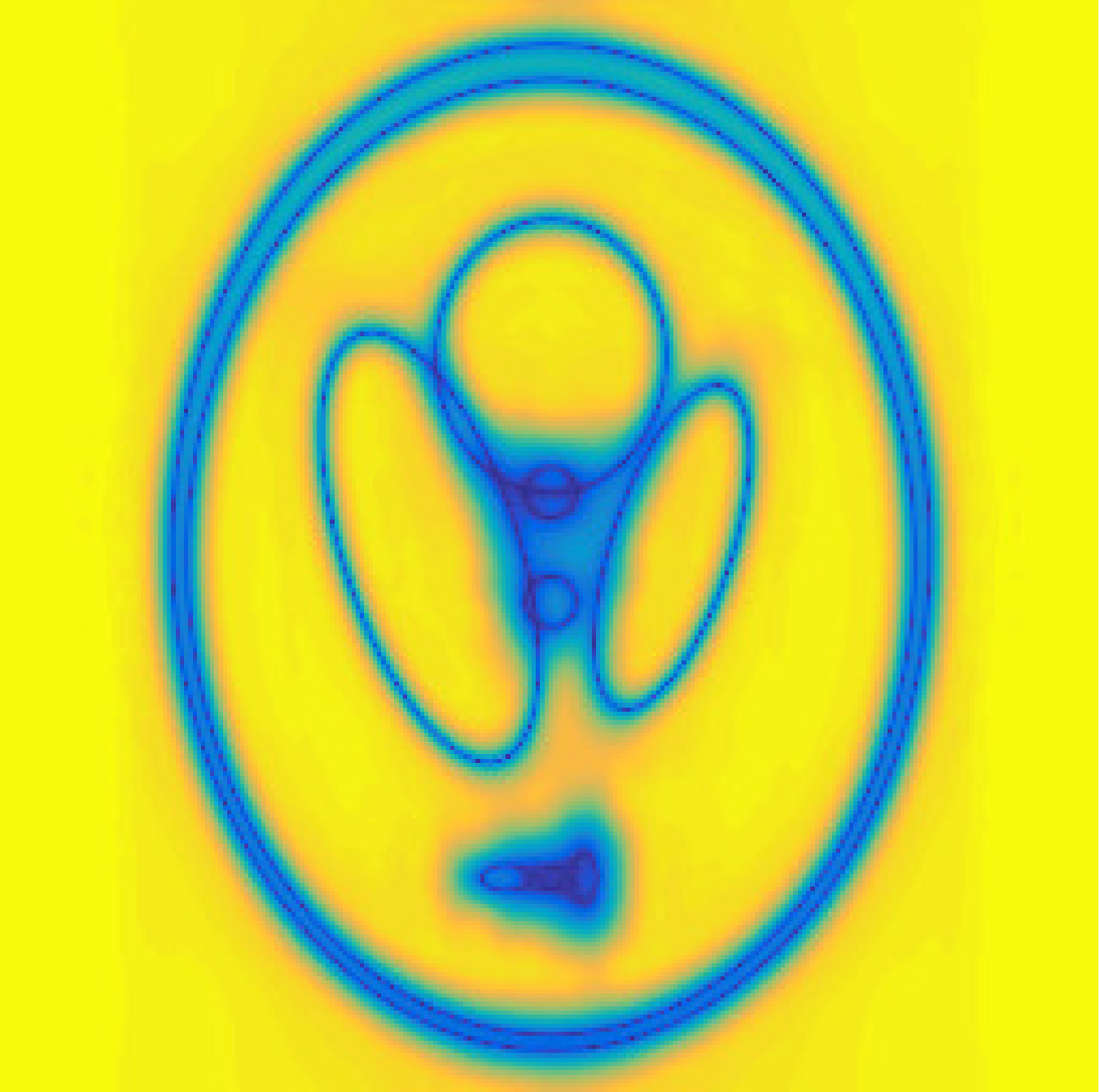}};
\end{tikzpicture}
\raisebox{-0.5em}{
\hspace{-1.5em}
\setlength\figureheight{0.25\textwidth}
%
%
\begin{tikzpicture}

\pgfplotsset{
    colormap={parula}{
        rgb=(0.2081,0.1663,0.5292)
        rgb=(0.2116,0.1898,0.5777)
        rgb=(0.2123,0.2138,0.627)
        rgb=(0.2081,0.2386,0.6771)
        rgb=(0.1959,0.2645,0.7279)
        rgb=(0.1707,0.2919,0.7792)
        rgb=(0.1253,0.3242,0.8303)
        rgb=(0.0591,0.3598,0.8683)
        rgb=(0.0117,0.3875,0.882)
        rgb=(0.006,0.4086,0.8828)
        rgb=(0.0165,0.4266,0.8786)
        rgb=(0.0329,0.443,0.872)
        rgb=(0.0498,0.4586,0.8641)
        rgb=(0.0629,0.4737,0.8554)
        rgb=(0.0723,0.4887,0.8467)
        rgb=(0.0779,0.504,0.8384)
        rgb=(0.0793,0.52,0.8312)
        rgb=(0.0749,0.5375,0.8263)
        rgb=(0.0641,0.557,0.824)
        rgb=(0.0488,0.5772,0.8228)
        rgb=(0.0343,0.5966,0.8199)
        rgb=(0.0265,0.6137,0.8135)
        rgb=(0.0239,0.6287,0.8038)
        rgb=(0.0231,0.6418,0.7913)
        rgb=(0.0228,0.6535,0.7768)
        rgb=(0.0267,0.6642,0.7607)
        rgb=(0.0384,0.6743,0.7436)
        rgb=(0.059,0.6838,0.7254)
        rgb=(0.0843,0.6928,0.7062)
        rgb=(0.1133,0.7015,0.6859)
        rgb=(0.1453,0.7098,0.6646)
        rgb=(0.1801,0.7177,0.6424)
        rgb=(0.2178,0.725,0.6193)
        rgb=(0.2586,0.7317,0.5954)
        rgb=(0.3022,0.7376,0.5712)
        rgb=(0.3482,0.7424,0.5473)
        rgb=(0.3953,0.7459,0.5244)
        rgb=(0.442,0.7481,0.5033)
        rgb=(0.4871,0.7491,0.484)
        rgb=(0.53,0.7491,0.4661)
        rgb=(0.5709,0.7485,0.4494)
        rgb=(0.6099,0.7473,0.4337)
        rgb=(0.6473,0.7456,0.4188)
        rgb=(0.6834,0.7435,0.4044)
        rgb=(0.7184,0.7411,0.3905)
        rgb=(0.7525,0.7384,0.3768)
        rgb=(0.7858,0.7356,0.3633)
        rgb=(0.8185,0.7327,0.3498)
        rgb=(0.8507,0.7299,0.336)
        rgb=(0.8824,0.7274,0.3217)
        rgb=(0.9139,0.7258,0.3063)
        rgb=(0.945,0.7261,0.2886)
        rgb=(0.9739,0.7314,0.2666)
        rgb=(0.9938,0.7455,0.2403)
        rgb=(0.999,0.7653,0.2164)
        rgb=(0.9955,0.7861,0.1967)
        rgb=(0.988,0.8066,0.1794)
        rgb=(0.9789,0.8271,0.1633)
        rgb=(0.9697,0.8481,0.1475)
        rgb=(0.9626,0.8705,0.1309)
        rgb=(0.9589,0.8949,0.1132)
        rgb=(0.9598,0.9218,0.0948)
        rgb=(0.9661,0.9514,0.0755)
        rgb=(0.9763,0.9831,0.0538)
    }
}

\begin{axis}[
    hide axis,
    scale only axis,
    width=1em,
    height=3em,
    point meta min=0,
    point meta max=1,
    colorbar,
    colorbar style={
        height=\figureheight,
        width=0.7em,
        ytick={0,0.2,...,1},
        yticklabel style={
            xshift = -0.5ex,
            font = \tiny
        }
    }]
    \addplot [draw=none] coordinates {(0,0)};
\end{axis}
\end{tikzpicture}%
}
}
\subfloat[][\scriptsize Edge set estimate, $49$$\times$$49$ coefficients]{
\hspace{2em}
\begin{tikzpicture}
  \node[anchor=south west,inner sep=0] (image) at (0,0) {\includegraphics[height=0.25\textwidth]{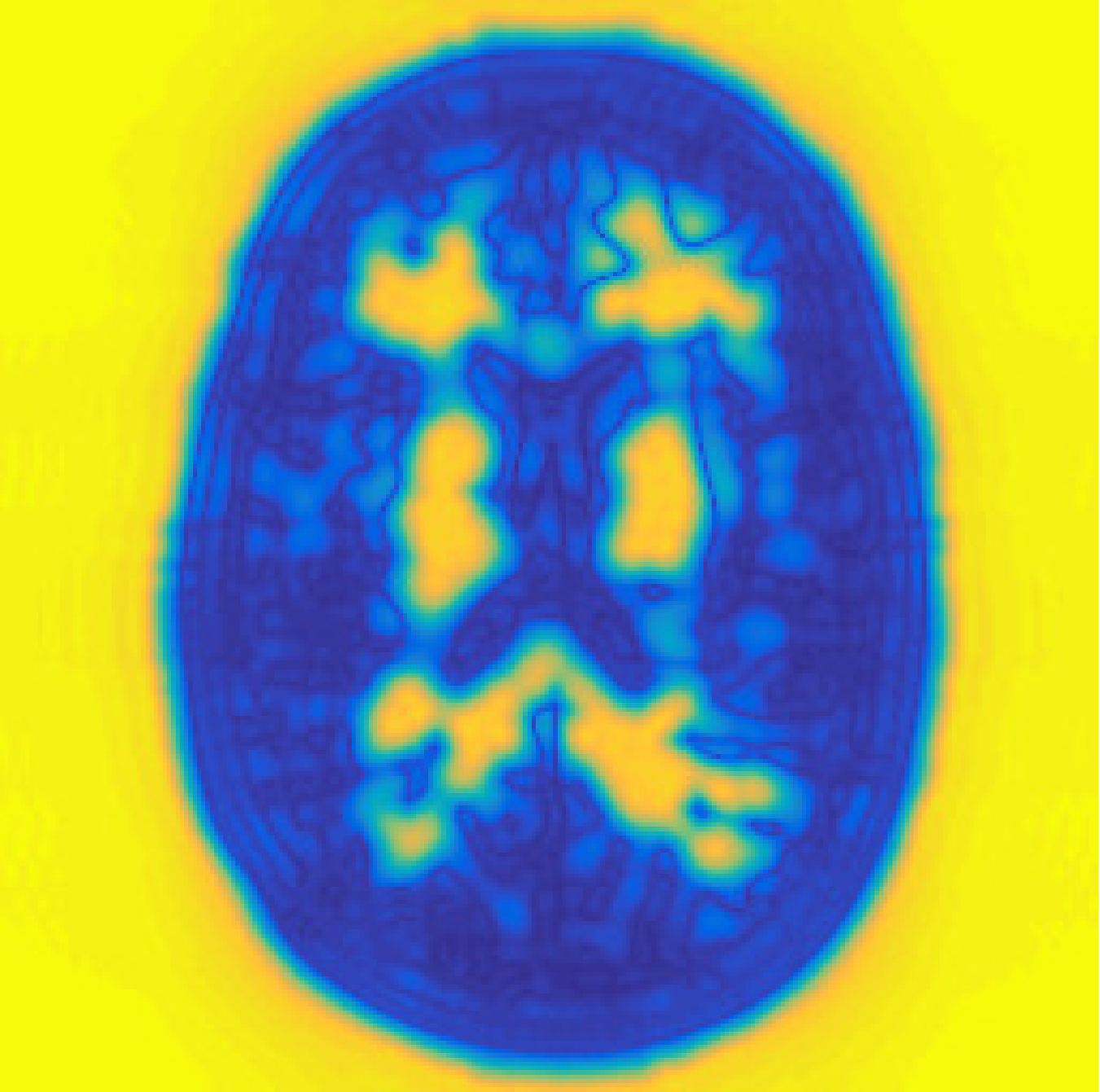}};
\end{tikzpicture}
\raisebox{-0.5em}{
\hspace{-1.5em}
\setlength\figureheight{0.25\textwidth}
%
%
\begin{tikzpicture}

\pgfplotsset{
    colormap={parula}{
        rgb=(0.2081,0.1663,0.5292)
        rgb=(0.2116,0.1898,0.5777)
        rgb=(0.2123,0.2138,0.627)
        rgb=(0.2081,0.2386,0.6771)
        rgb=(0.1959,0.2645,0.7279)
        rgb=(0.1707,0.2919,0.7792)
        rgb=(0.1253,0.3242,0.8303)
        rgb=(0.0591,0.3598,0.8683)
        rgb=(0.0117,0.3875,0.882)
        rgb=(0.006,0.4086,0.8828)
        rgb=(0.0165,0.4266,0.8786)
        rgb=(0.0329,0.443,0.872)
        rgb=(0.0498,0.4586,0.8641)
        rgb=(0.0629,0.4737,0.8554)
        rgb=(0.0723,0.4887,0.8467)
        rgb=(0.0779,0.504,0.8384)
        rgb=(0.0793,0.52,0.8312)
        rgb=(0.0749,0.5375,0.8263)
        rgb=(0.0641,0.557,0.824)
        rgb=(0.0488,0.5772,0.8228)
        rgb=(0.0343,0.5966,0.8199)
        rgb=(0.0265,0.6137,0.8135)
        rgb=(0.0239,0.6287,0.8038)
        rgb=(0.0231,0.6418,0.7913)
        rgb=(0.0228,0.6535,0.7768)
        rgb=(0.0267,0.6642,0.7607)
        rgb=(0.0384,0.6743,0.7436)
        rgb=(0.059,0.6838,0.7254)
        rgb=(0.0843,0.6928,0.7062)
        rgb=(0.1133,0.7015,0.6859)
        rgb=(0.1453,0.7098,0.6646)
        rgb=(0.1801,0.7177,0.6424)
        rgb=(0.2178,0.725,0.6193)
        rgb=(0.2586,0.7317,0.5954)
        rgb=(0.3022,0.7376,0.5712)
        rgb=(0.3482,0.7424,0.5473)
        rgb=(0.3953,0.7459,0.5244)
        rgb=(0.442,0.7481,0.5033)
        rgb=(0.4871,0.7491,0.484)
        rgb=(0.53,0.7491,0.4661)
        rgb=(0.5709,0.7485,0.4494)
        rgb=(0.6099,0.7473,0.4337)
        rgb=(0.6473,0.7456,0.4188)
        rgb=(0.6834,0.7435,0.4044)
        rgb=(0.7184,0.7411,0.3905)
        rgb=(0.7525,0.7384,0.3768)
        rgb=(0.7858,0.7356,0.3633)
        rgb=(0.8185,0.7327,0.3498)
        rgb=(0.8507,0.7299,0.336)
        rgb=(0.8824,0.7274,0.3217)
        rgb=(0.9139,0.7258,0.3063)
        rgb=(0.945,0.7261,0.2886)
        rgb=(0.9739,0.7314,0.2666)
        rgb=(0.9938,0.7455,0.2403)
        rgb=(0.999,0.7653,0.2164)
        rgb=(0.9955,0.7861,0.1967)
        rgb=(0.988,0.8066,0.1794)
        rgb=(0.9789,0.8271,0.1633)
        rgb=(0.9697,0.8481,0.1475)
        rgb=(0.9626,0.8705,0.1309)
        rgb=(0.9589,0.8949,0.1132)
        rgb=(0.9598,0.9218,0.0948)
        rgb=(0.9661,0.9514,0.0755)
        rgb=(0.9763,0.9831,0.0538)
    }
}

\begin{axis}[
    hide axis,
    scale only axis,
    width=1em,
    height=3em,
    point meta min=0,
    point meta max=1,
    colorbar,
    colorbar style={
        height=\figureheight,
        width=0.7em,
        ytick={0,0.2,...,1},
        yticklabel style={
            xshift = -0.5ex,
            font = \tiny
        }
    }]
    \addplot [draw=none] coordinates {(0,0)};
\end{axis}
\end{tikzpicture}%
}
}
\caption{\small \emph{Super-resolution of piecewise constant phantoms from ideal Fourier samples}. Recovery of Shepp-Logan phantom from $65\times 49$ ideal low-pass Fourier samples, and a brain phantom reconstructed onto $256$$\times$$256$ spatial grid reconstructed from $97\times 97$ samples. In (a)-(l), spatial domain is shown above, and Fourier domain below (in log scale). Shown in (a)\&(g) are the fully sampled images, (b)\&(h) is the IFFT of the zero-padded samples, (c)\&(i) is a standard TV regularized recovery, (d)\&(j) is the non-convex TV used in \cite{chartrand2011frequency}, (e)\&(k) is the recovery using the proposed least squares linear prediction (LSLP) method, and (f)\&(l) the recovery using the proposed weighted-TV (WTV) formulation. Note the the LSLP more accurately extrapolates the true Fourier coefficients of the image. The sum-of-squares polynomial used as the edge set estimate for the proposed schemes in shown (m) and (n).
}
\label{fig:phantoms}
\end{figure*}

\section{Experiments}
\label{sec:exp}
In this section we demonstrate the utility of our proposed algorithms for super-resolution in magnetic resonance (MR) imaging. A single-coil MR acquisition restricted to a single plane with sampling locations on a uniform Cartesian grid can be accurately modeled as the Fourier coefficients of the underlying 2-D image  \cite{fessler2010model}. This suggests we can apply our proposed two-stage recovery scheme to recover MR data from low-pass Fourier samples at high resolution, provided the underlying image is well-approximated as piecewise constant.

First, to test the validity of the proposed scheme, we experiment on simulated data generated from piecewise constant phantoms. To ensure resolution-independence, we compute ideal Fourier samples from analytical Fourier domain expressions of the MR phantoms, as done in \cite{guerquin2012realistic}. Taking low-pass samples located within a centered rectangular window, we solve for an annihilating filter of the maximum possible size allowable using the procedure outlined in Section \ref{sec:alg}. We then recover the image using the proposed algorithms in \S\ref{sec:alg}, and compare against a standard total variation (TV) regularized recovery, and a non-convex version of TV (ncvx-TV). For ncvx-TV, we adopt the approach in \cite{chartrand2011frequency,chartrand2009fast} using a non-convex $\ell^p$, $p < 1$, penalty in place of the $\ell^1$ penalty, and fix $p=1/2$ in our experiments. For all recovery schemes, we tune regularization parameters to optimize the signal-to-noise ratio (SNR), defined as $\text{SNR} = 20\log_{10}(\|\mbf x_0\|_2/\|\mbf x - \mbf x_0\|_2)$ where $\mbf x$ is the reconstructed image, and $\mbf x_0$ is the ground truth image.

\begin{figure*}[ht!]
\centering
\input{results/realdata_cor2_cc_noisy2.tex}
\begin{minipage}{0.30\linewidth}
\subfloat[][Fully-sampled]{
\begin{tikzpicture}
    \node[anchor=south west,inner sep=0] (image) at (0,0) {\includegraphics[height=\linewidth,width=1.2\linewidth,angle=-90,origin=c]{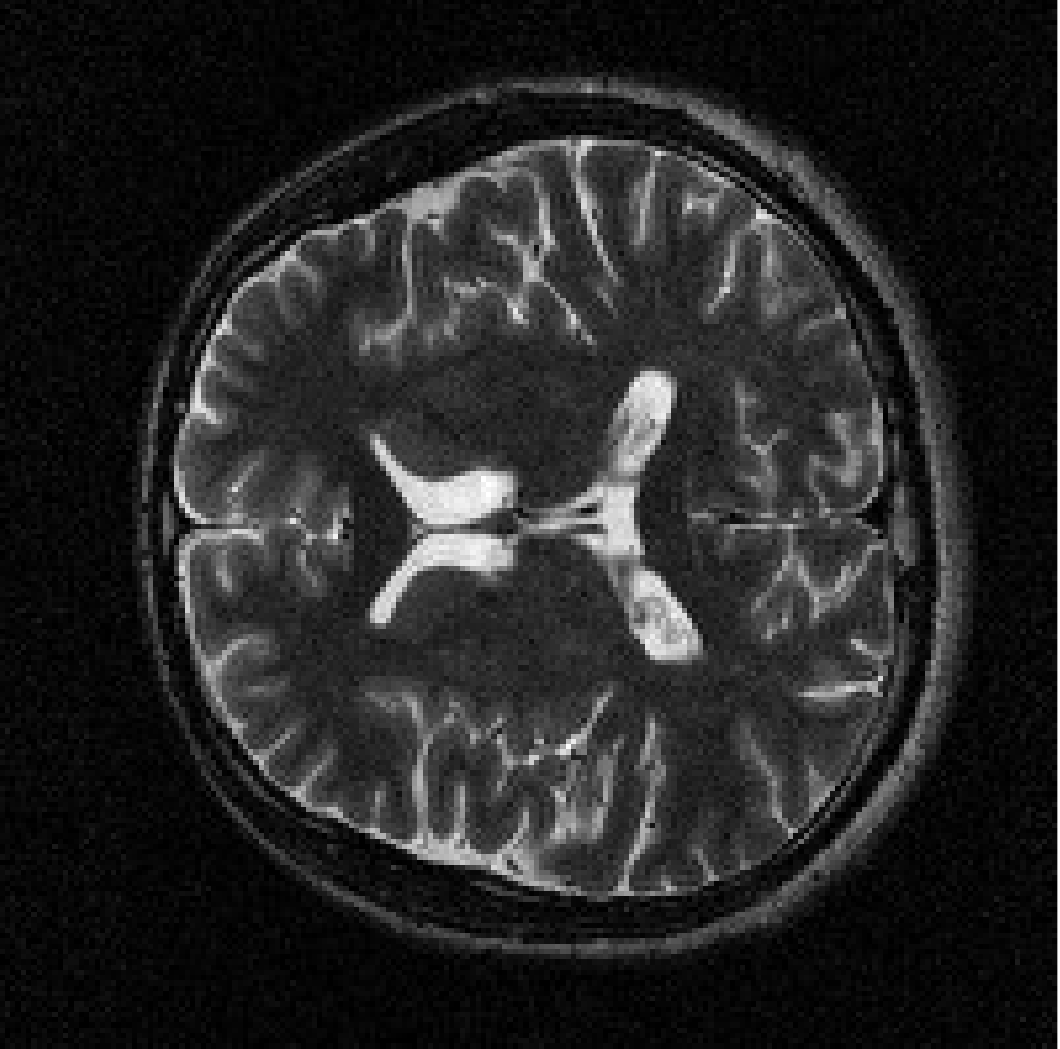}};
    \begin{scope}[x={(image.south east)},y={(image.north west)}]
        \draw[cyan,thick] (0.17,0.30) rectangle (0.76,0.80);
    \end{scope}   
\end{tikzpicture}
}

\subfloat[][Edge set estimate\\($50$$\times$$50$ coefficients)]{
	\includegraphics[width=0.96\linewidth,height=0.8\linewidth,angle=-90,origin=c]{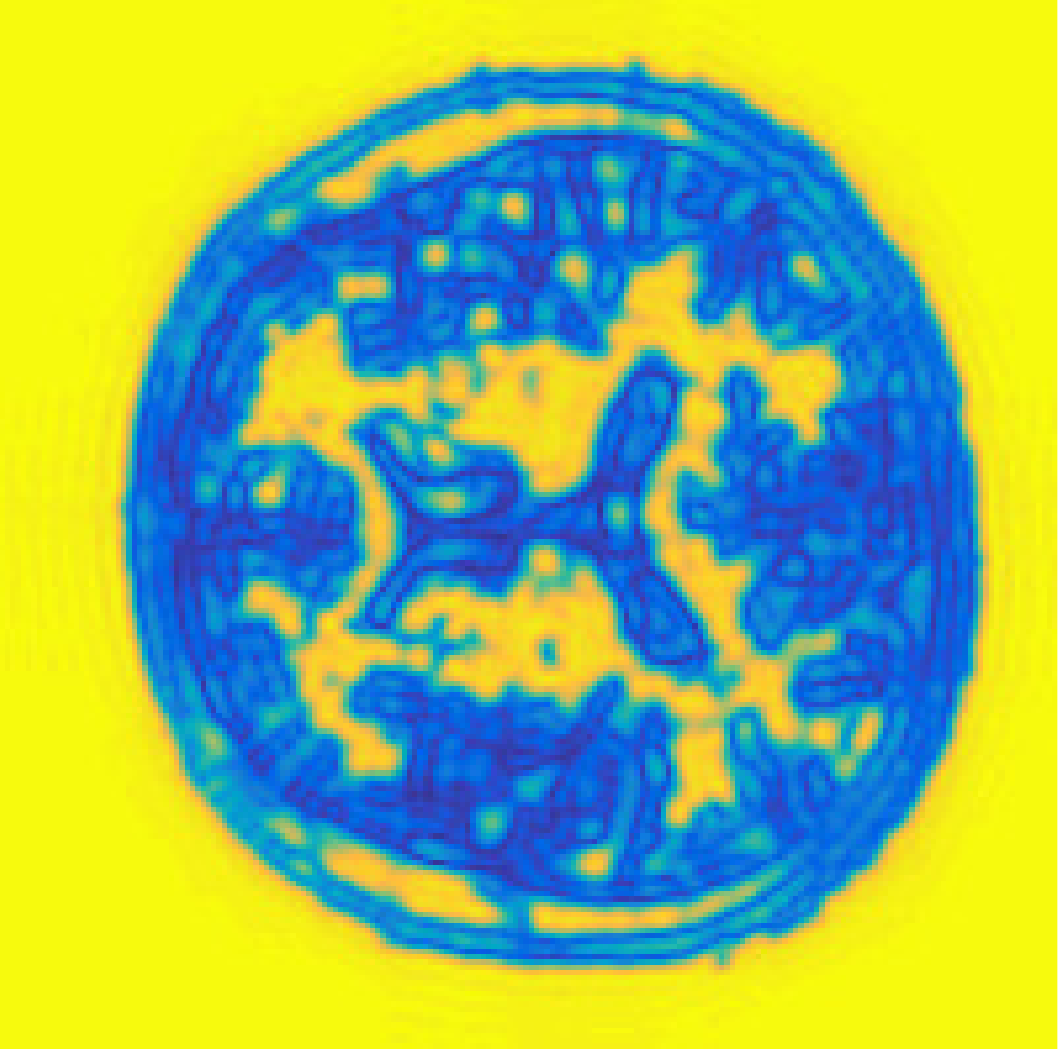} 
	\raisebox{-1.5em}{
	\hspace{-1.5em}
	\setlength\figureheight{0.96\linewidth}
%
%
\begin{tikzpicture}

\pgfplotsset{
    colormap={parula}{
        rgb=(0.2081,0.1663,0.5292)
        rgb=(0.2116,0.1898,0.5777)
        rgb=(0.2123,0.2138,0.627)
        rgb=(0.2081,0.2386,0.6771)
        rgb=(0.1959,0.2645,0.7279)
        rgb=(0.1707,0.2919,0.7792)
        rgb=(0.1253,0.3242,0.8303)
        rgb=(0.0591,0.3598,0.8683)
        rgb=(0.0117,0.3875,0.882)
        rgb=(0.006,0.4086,0.8828)
        rgb=(0.0165,0.4266,0.8786)
        rgb=(0.0329,0.443,0.872)
        rgb=(0.0498,0.4586,0.8641)
        rgb=(0.0629,0.4737,0.8554)
        rgb=(0.0723,0.4887,0.8467)
        rgb=(0.0779,0.504,0.8384)
        rgb=(0.0793,0.52,0.8312)
        rgb=(0.0749,0.5375,0.8263)
        rgb=(0.0641,0.557,0.824)
        rgb=(0.0488,0.5772,0.8228)
        rgb=(0.0343,0.5966,0.8199)
        rgb=(0.0265,0.6137,0.8135)
        rgb=(0.0239,0.6287,0.8038)
        rgb=(0.0231,0.6418,0.7913)
        rgb=(0.0228,0.6535,0.7768)
        rgb=(0.0267,0.6642,0.7607)
        rgb=(0.0384,0.6743,0.7436)
        rgb=(0.059,0.6838,0.7254)
        rgb=(0.0843,0.6928,0.7062)
        rgb=(0.1133,0.7015,0.6859)
        rgb=(0.1453,0.7098,0.6646)
        rgb=(0.1801,0.7177,0.6424)
        rgb=(0.2178,0.725,0.6193)
        rgb=(0.2586,0.7317,0.5954)
        rgb=(0.3022,0.7376,0.5712)
        rgb=(0.3482,0.7424,0.5473)
        rgb=(0.3953,0.7459,0.5244)
        rgb=(0.442,0.7481,0.5033)
        rgb=(0.4871,0.7491,0.484)
        rgb=(0.53,0.7491,0.4661)
        rgb=(0.5709,0.7485,0.4494)
        rgb=(0.6099,0.7473,0.4337)
        rgb=(0.6473,0.7456,0.4188)
        rgb=(0.6834,0.7435,0.4044)
        rgb=(0.7184,0.7411,0.3905)
        rgb=(0.7525,0.7384,0.3768)
        rgb=(0.7858,0.7356,0.3633)
        rgb=(0.8185,0.7327,0.3498)
        rgb=(0.8507,0.7299,0.336)
        rgb=(0.8824,0.7274,0.3217)
        rgb=(0.9139,0.7258,0.3063)
        rgb=(0.945,0.7261,0.2886)
        rgb=(0.9739,0.7314,0.2666)
        rgb=(0.9938,0.7455,0.2403)
        rgb=(0.999,0.7653,0.2164)
        rgb=(0.9955,0.7861,0.1967)
        rgb=(0.988,0.8066,0.1794)
        rgb=(0.9789,0.8271,0.1633)
        rgb=(0.9697,0.8481,0.1475)
        rgb=(0.9626,0.8705,0.1309)
        rgb=(0.9589,0.8949,0.1132)
        rgb=(0.9598,0.9218,0.0948)
        rgb=(0.9661,0.9514,0.0755)
        rgb=(0.9763,0.9831,0.0538)
    }
}

\begin{axis}[
    hide axis,
    scale only axis,
    width=1em,
    height=3em,
    point meta min=0,
    point meta max=1,
    colorbar,
    colorbar style={
        height=\figureheight,
        width=0.7em,
        ytick={0,0.2,...,1},
        yticklabel style={
            xshift = -0.5ex,
            font = \tiny
        }
    }]
    \addplot [draw=none] coordinates {(0,0)};
\end{axis}
\end{tikzpicture}%
	}
}
\end{minipage}
\subfloat[][Fully-sampled]{
\begin{minipage}{0.20\linewidth}
	\includegraphics[width=\linewidth,height=\linewidth,trim = 60 50 90 70,clip,angle=-90,origin=c]{results/realdata_cor2_cc_noisy2_orig_gray-eps-converted-to.pdf}
\includegraphics[width=\linewidth,height=\linewidth,angle=-90,origin=c]{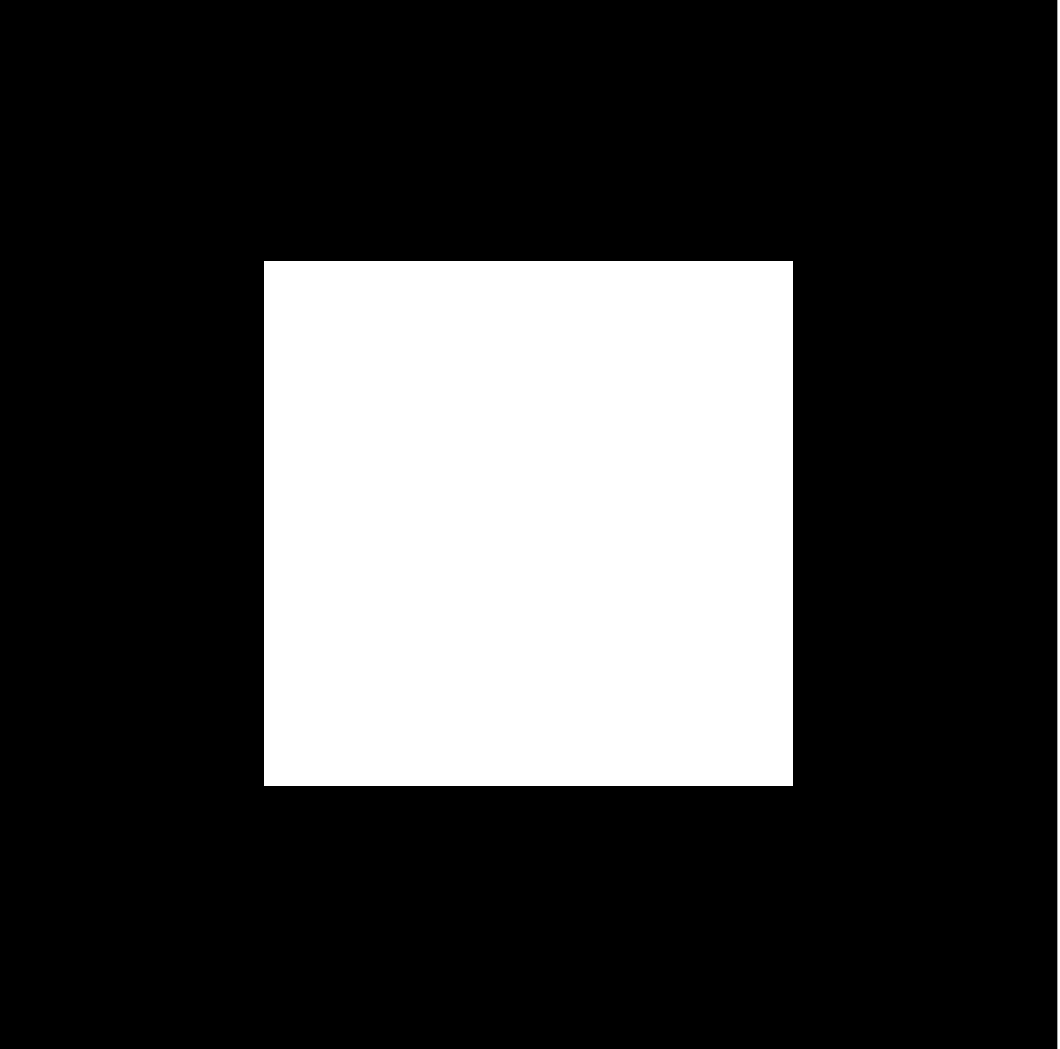}
\includegraphics[width=\linewidth,height=\linewidth,angle=-90,origin=c]{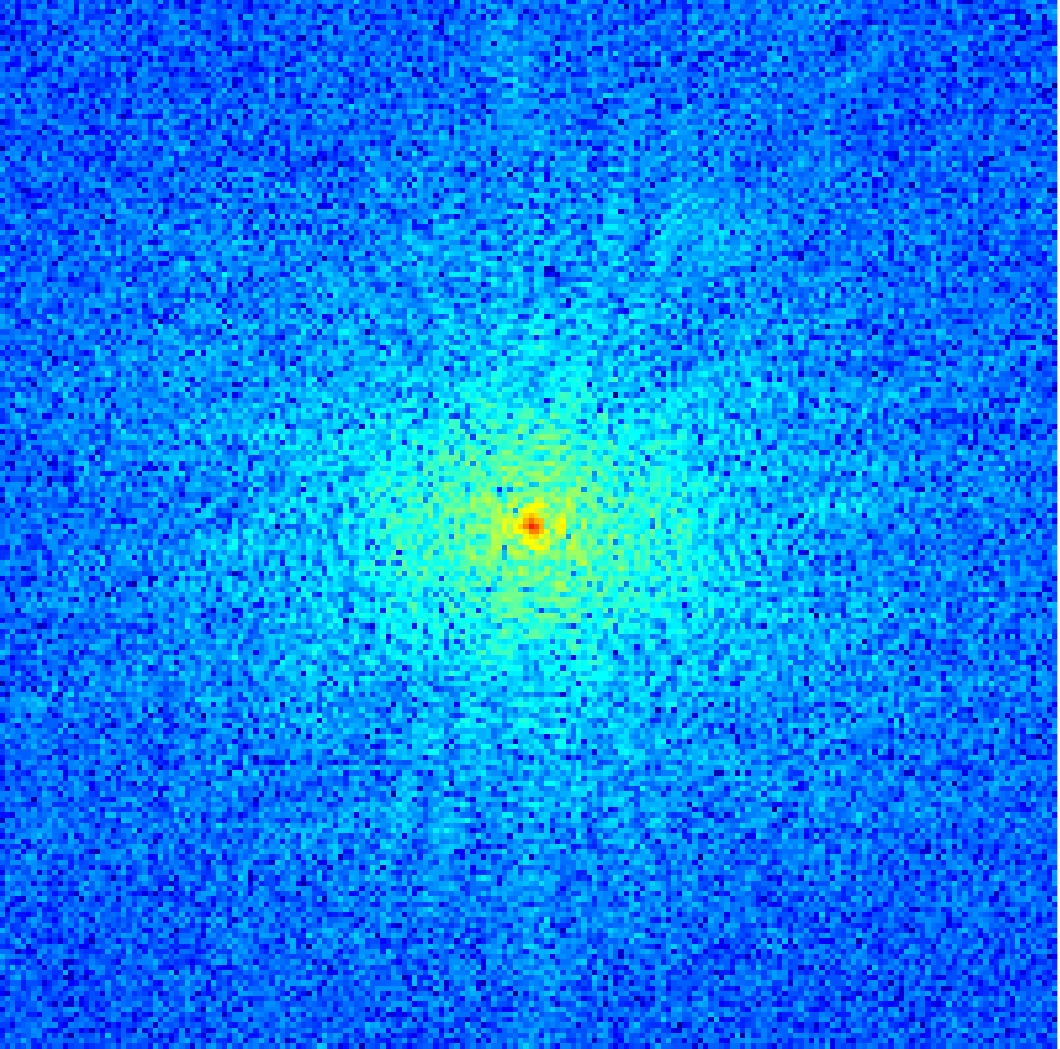}
\end{minipage}
}
\subfloat[][ncvx-TV\\SNR=\SNRncvxtv dB\\SSIM=\SSIMncvxtv]{
\begin{minipage}{0.20\linewidth}
\includegraphics[width=\linewidth,height=\linewidth,trim = 60 50 90 70,clip,angle=-90,origin=c]{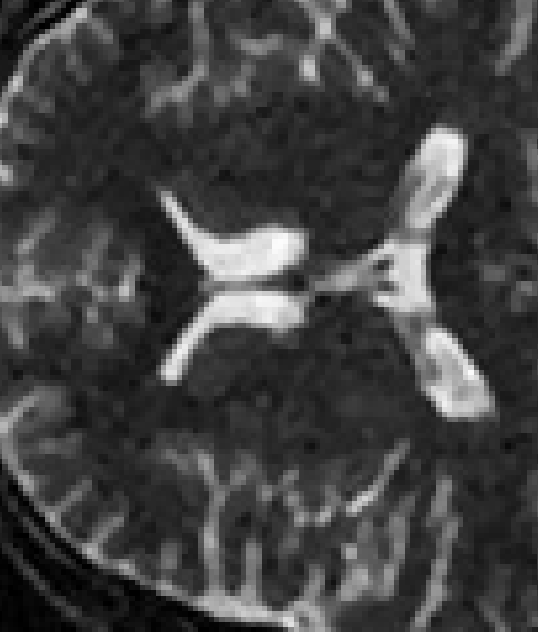}

\includegraphics[width=\linewidth,height=\linewidth,trim = 60 50 90 70,clip,angle=-90,origin=c]{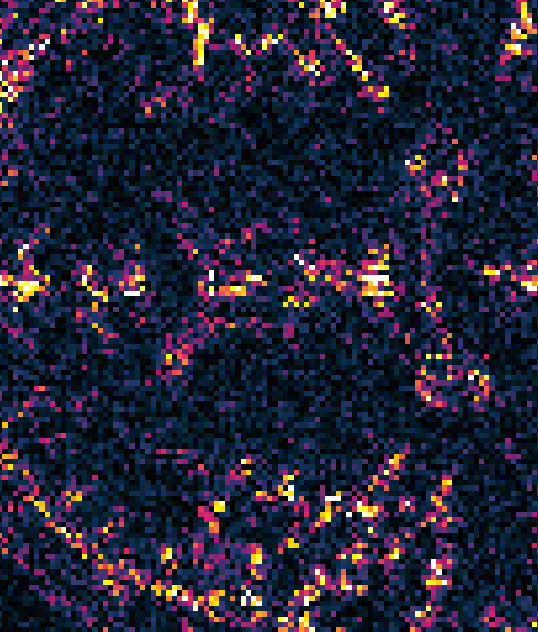}

\includegraphics[width=\linewidth,height=\linewidth,angle=-90,origin=c]{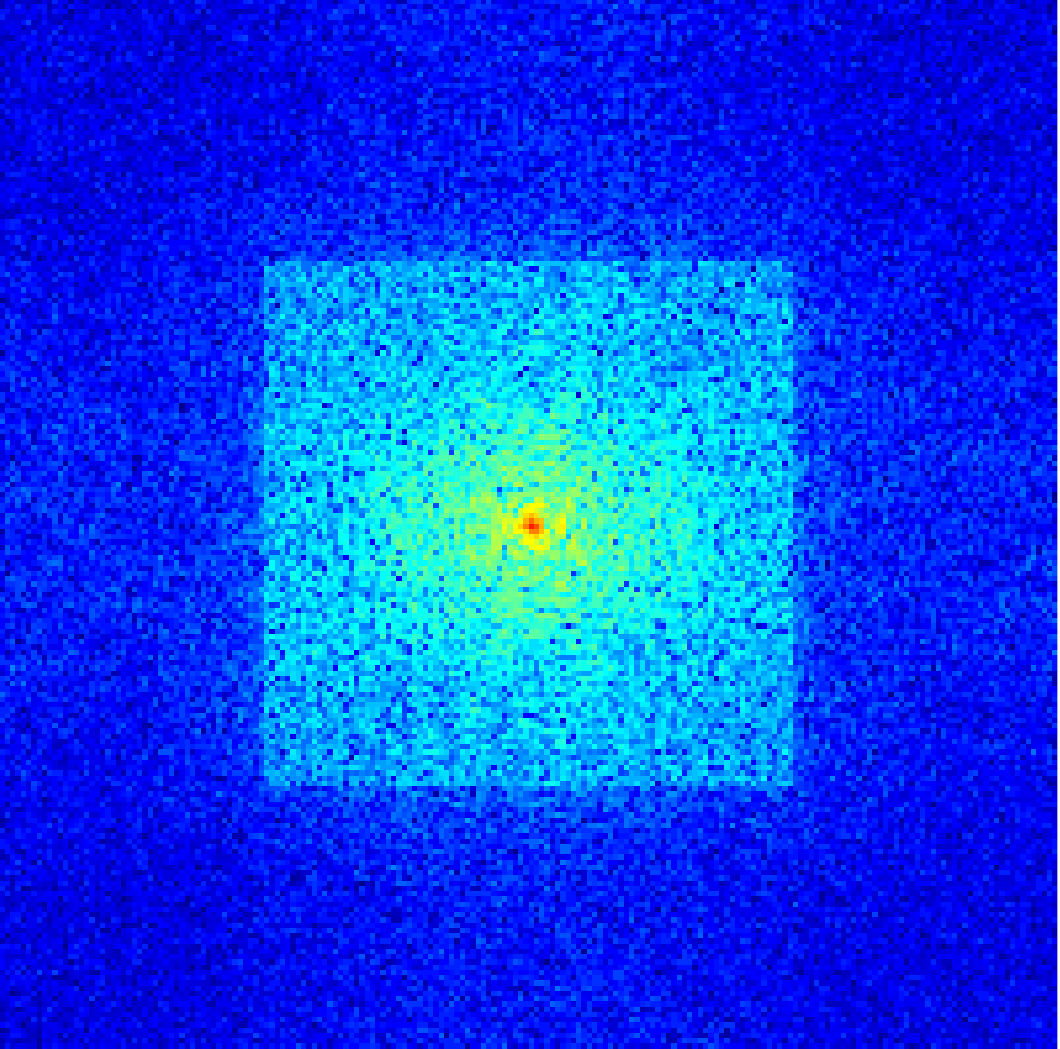}
\end{minipage}
}
\subfloat[][Proposed, LSLP\\SNR=\SNRlslp dB\\SSIM=\SSIMlslp]{
\begin{minipage}{0.20\linewidth}
\includegraphics[width=\linewidth,height=\linewidth,trim = 60 50 90 70,clip,angle=-90,origin=c]{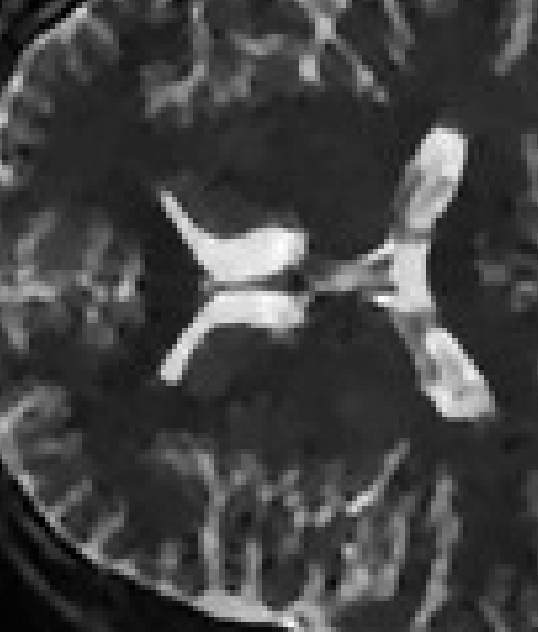}

\includegraphics[width=\linewidth,height=\linewidth,trim = 60 50 90 70,clip,angle=-90,origin=c]{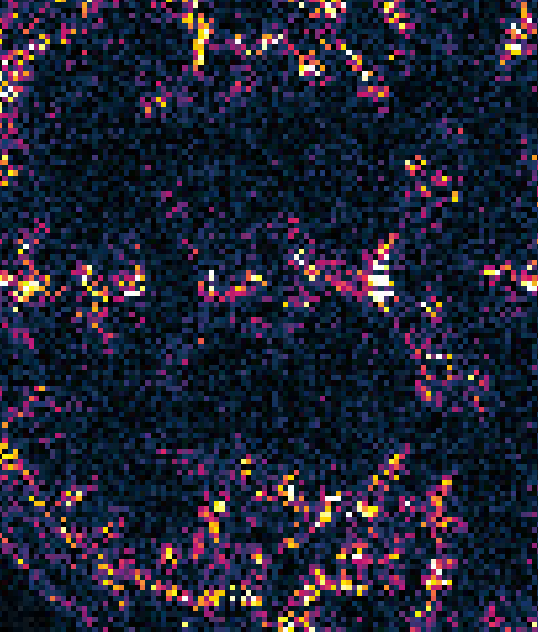}

\includegraphics[width=\linewidth,height=\linewidth,angle=-90,origin=c]{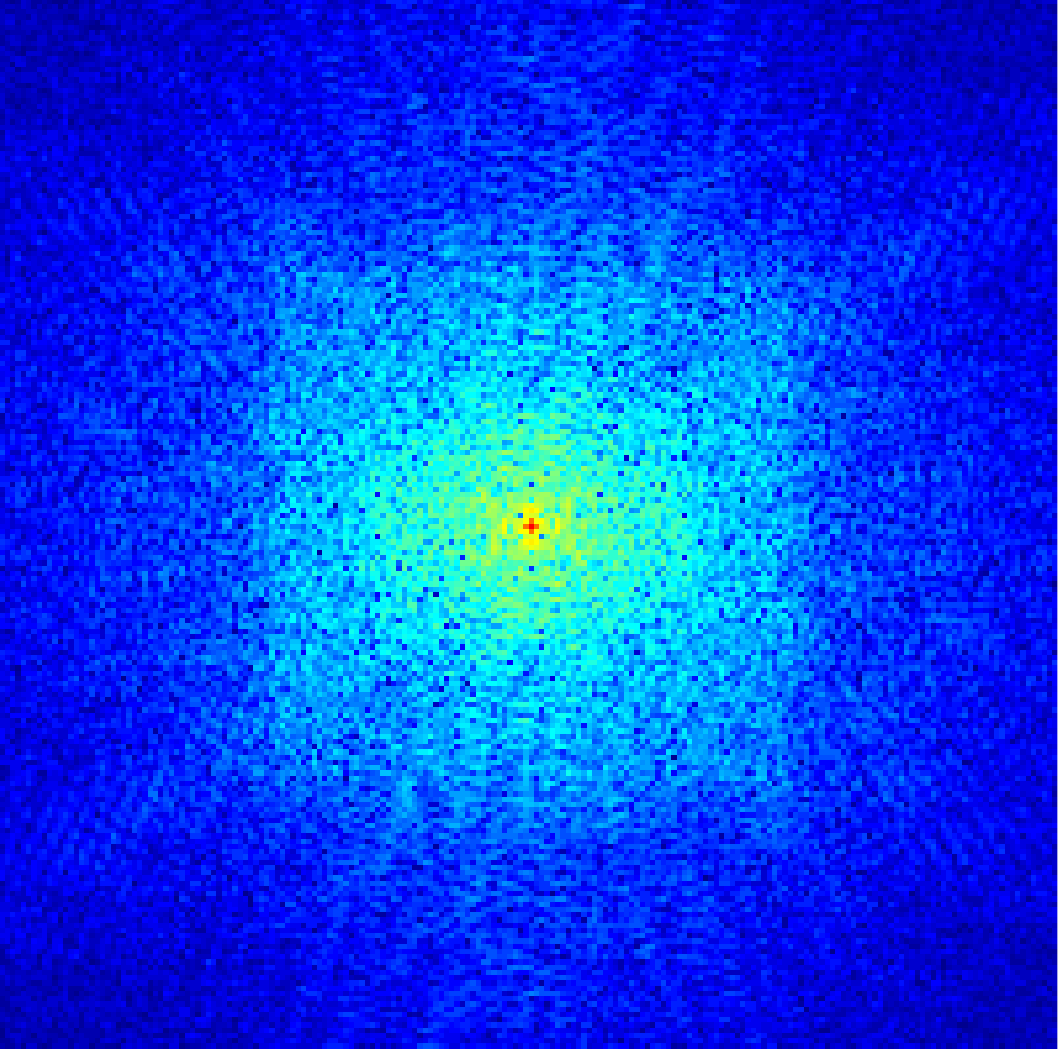}
\end{minipage}
}
\begin{minipage}{0.5cm}
\vspace{9em}
\raisebox{0cm}{
\hspace{-1.5em}
\setlength\figureheight{2.9cm}
%
%
\begin{tikzpicture}

\pgfplotsset{
    colormap={morgenstemning}{
        rgb=(0,0,0)
        rgb=(0.0980,0.2078,0.3725)
        rgb=(0.7529,0.1059,0.4353)
        rgb=(0.9882,0.8980,0)
        rgb=(1,1,1)
    }
}

\begin{axis}[
    hide axis,
    scale only axis,
    width=1em,
    height=3em,
    point meta min=0,
    point meta max=0.2,
    colorbar,
    colorbar style={
        height=\figureheight,
        width=0.7em,
        ytick={0,0.1,0.2},
        yticklabel style={
            xshift = -0.5ex,
            font = \tiny
        }
    }]
    \addplot [draw=none] coordinates {(0,0)};
\end{axis}
\end{tikzpicture}%
}

\raisebox{0.3cm}{
\hspace{-1.5em}
\setlength\figureheight{2.9cm}
%
%
\begin{tikzpicture}

\begin{axis}[
    hide axis,
    scale only axis,
    width=1em,
    height=3em,
    point meta min=0,
    point meta max=9,
    colorbar,
    colormap/jet,
    colorbar style={
        height=\figureheight,
        width=0.7em,
        ytick={0,1,...,9},
        yticklabel style={
            xshift = -0.5ex,
            font = \tiny
        }
    }]
    \addplot [draw=none] coordinates {(0,0)};
\end{axis}
\end{tikzpicture}%
}
\end{minipage}
\caption{\small Super-resolution of a single-coil MR brain image from retrospectively undersampled data. Reconstruction from $100\times 100$ center Fourier samples taken from a $200\times 200$ acquisition reconstructed onto a $200$$\times$$200$ spatial grid. The original fully sampled data is shown in (a), and zoomed in (c). A nonconvex TV regularized recovery is shown in (d), and the proposed LSLP recovery is shown in (e) which uses the sum-of-squares polynomial shown in (b) as an edge set estimate. Error images are shown in middle row of (d) and (e), and the Fourier domain recovery in log scale is shown in the bottom row. Note the proposed scheme more faithfully extrapolates the true Fourier data.}
\label{fig:realbrain_cor}
\end{figure*}

The results of this first experiment are shown in Fig.\ \ref{fig:phantoms}. Our proposed scheme is able to recover the Shepp-Logan phantom with high accuracy from as few as $65\times 49$ low-pass Fourier samples, corresponding to a 20-fold undersampling. A TV-regularized recovery from the same samples shows significant edge blurring, broadening of image features, and ringing-like artifacts in piecewise constant regions. We observe the non-convex TV approach is better able to reduce artifacts in the constant regions of the image, but still distorts the boundaries of the piecewise constant regions. We see similar benefits in the case of a brain phantom having a more complicated edge set geometry. Visually, the WTV reconstruction appears to show some advantages over LSLP in this case, in that it is better able to recovery some of the finer image features and suppress ringing artifacts. However, the LSLP recovery more faithfully recovers the true Fourier coefficients, as measured by the SNR. 

We also performed another experiment validating the method on real MR data; see Figure \ref{fig:realbrain_cor}. The dataset was obtained from a fully sampled 4-coil acquisition, which we compressed into a single virtual coil using an SVD-based technique \cite{zhang2013coil}. We then retrospectively collected low-pass Fourier samples from the single virtual coil. To simulate a noisy acquisition, we also added i.i.d.\ complex white Gaussian noise to the low-pass Fourier samples, such that the resulting SNR of the samples was approximately 30 dB. 

We observed that the data from the single virtual coil was complex-valued in image domain with smoothly varying phase over its support, as is typical with MR data. Applying our edge-set estimation procedure directly on the unprocessed low-pass samples gave poor results, since our approach assumes the underlying image is piecewise constant. Therefore, prior to estimating the edge-set, we performed a phase correction step. This was done by taking the inverse DFT of the zero-padded low-pass Fourier samples, canceling out the phase in image domain, and passing back to Fourier domain. We note this pre-processing step only requires access to the low-pass samples. To ensure fair comparisons, when computing image metrics we compared the magnitude images of the reconstructions against the magnitude image obtained from the fully sampled virtual coil. 

Additional noise and model-mismatch in real MR data makes estimating the true model order of the edge-set challenging. As a rule of thumb, we chose the annihilating filter dimensions to be half the size of the low-resolution input. We then performed the annihilating subspace denoising step \eqref{eq:cadzow_new2} with the rank threshold $r$ set to half the number of the coefficients in the filter. Empirically, we find this procedure is still able to accurately recover the significant edges in the image, as shown in Fig. \ref{fig:realbrain_cor}(b). We then pass the ``denoised'' Fourier samples that result from \eqref{eq:cadzow_new2} as the input to the second stage of our recovery scheme. 

For the real MR data we find our proposed LSLP approach to be superior to our WTV approach, hence we restrict our comparison to this case. In Fig. \ref{fig:realbrain_cor} we compare the proposed approach with ncvx-TV, which was found to outperform standard TV (not shown). In addition to SNR, we also use the structural similarity index (SSIM) as a measure of the perceptual-quality \cite{wang2004image}. We observe that on this dataset our proposed LSLP scheme shows some advantages over ncvx-TV in terms of SNR and SSIM. Visually, the LSLP recovery shows fewer noise artifacts in the constant regions of the image, and better preservation of edge features. Additionally, the Fourier domain images in Fig.\ \ref{fig:realbrain_cor} indicate the LSLP approach better extrapolates the low-pass Fourier data beyond the sampling region.

Finally, we remark that the computation time of the proposed scheme is primarily dominated by the cost in computing the SVD of the annihilation matrix in the edge-recovery stage. The dimensions of the annihilation matrix grow with the size of the annihilation filter and the number of low-resolution samples. For example, given an $128\times 128$ array of low-resolution samples and a filter size of $64\times 64$, the annihilation matrix has dimensions $8450 \times 4096$. The second stage of the recovery using the LSLP and WTV approaches were similar in run time to a standard TV regularized recovery, and a MATLAB implementation running on CPU took under a minute for the datasets considered here.

\section{Discussion}
In this work, we developed sampling guarantees for the unique recovery of a continuous domain piecewise constant image from few of its low-pass Fourier samples, assuming the edge set of the image is described by the zero set of a trigonometric polynomial. This is achieved by showing the distributional derivatives of such an image satisfy a linear annihilation relation in Fourier domain. One of our main contributions is to prove that, under ideal conditions, unique recovery of the image is possible from the annihilation relation. 

We also proposed a two step algorithm for the recovery of images, which extrapolates the known Fourier samples of the image in Fourier domain. We argue such a recovery scheme deserves to be called ``off-the-grid'', since we recover the exact Fourier coefficients of the underlying continuous domain image, and at no point do we need to discretize in spatial domain. In the first step of the algorithm, we estimate a collection of annihilating filters from the low-pass samples, which jointly encode the edge set of the image. In the second step, we perform a least squares linear prediction using these filters to extrapolate the signal Fourier domain. We show the proposed scheme has advantages over traditional single-image super-resolution schemes in MRI in its ability to suppress noise artifacts and preserve strong image edges.

While our experiments focused on the problem of super-resolution MRI, we note that this work has potentially wide-ranging applications beyond those presented here. For instance, the first stage our algorithm estimates a continuous domain representation of the edge set of an image, which can be useful for image segmentation. Additionally, the super-resolution problem we considered is not limited to MRI, and will work in any setting where one has a good estimate of the low-pass Fourier samples of image, such as in the image zoom problem studied in \cite{pan2013sampling}.

In future work, we plan to resolve the conjectures put forth in \S\ref{sec:trigcurves}, which would extend our sampling theorems to piecewise constant images with edge set given by an arbitrary trigonometric curve, not just those that are non-intersecting. We also hope to give further insight into the robustness of this scheme with respect to noise and model mismatch, especially with regard to the rank properties of the annihilation matrix.

Finally, we observe that most of the theory presented here extends directly to higher dimensions. For example, in three dimensions we could consider piecewise constant signals whose gradient is supported on a two-dimensional surface given as the zero set of a trigonometric polynomial in three variables. However, the computational burden in this case will be much more significant. For instance, the edge set estimation step would require the singular value decomposition of an extremely large matrix, which we may not even be able to hold in memory. Therefore a goal of future research is to come up with efficient algorithms to further enable the recovery of higher dimensional signals.

\appendix
\section{Algebraic Properties of Trigonometric Polynomials and Curves}
\label{sec:appendix_trig}
Here we develop the algebraic properties of trigonometric polynomials and trigonometric curves in some detail. We will slightly expand our definition of a \emph{trigonometric polynomial} \eqref{eq:trigpoly} to include those with frequencies $\mbf k$ on a shifted lattice $\mathbb{Z}^2 + \bs\sigma$ for some fixed $\bs\sigma = (\sigma_1,\sigma_2)$ with $\sigma_i\in \{0,\frac{1}{2}\}$. 
This is to ensure the results below extend to trigonometric polynomials having an even number of coefficients in one or both dimensions.

We define the \emph{degree} of $\mu$, denoted as $deg(\mu) = (K,L)$, to be the linear dimensions of the smallest rectangle $R$ whose closure contains the frequency support set, and we say $\mu$ is \emph{centered} if the frequency support rectangle $R$ is symmetric about origin $(k,l) = (0,0)$. For example, 
\begin{align*}
\mu_1(x,y) & = 4\sin(2\pi x)\cos(4\pi y)\\ 
& = je^{j2\pi(-x-2y)}+je^{j2\pi(x-2y)}-je^{j2\pi (-x+2y)}-je^{j2\pi(x+2y)}
\end{align*}
has frequency support contained in the rectangle $[-1,1]\times[-2,2]$, hence is centered with degree $(2,4)$. As another example,
\[
\mu_2(x,y) = 2 e^{-j2\pi x} + 1+ e^{j2\pi x} - e^{j2\pi(x+y)}
\]
has frequency support contained in $[-1,1]\times[0,1]$, hence degree $(2,1)$, but is \emph{not} centered. However, we can make $\mu_2$ centered by multiplying by a phase factor $\widetilde{\mu}_2(x,y) = e^{-j\pi y}\mu_2(x,y)$, which shifts the frequency support to $[-1,1]\times[-\frac{1}{2},\frac{1}{2}]$. Note $\mu_2$ and $\widetilde{\mu}_2$ have the same degree and the same zero set, since these properties are invariant under multiplication by a phase factor.

For any trigonometric polynomial $\mu$ with $deg(\mu) = (K,L)$ we can associate a unique complex polynomial $\mathcal{P}[\mu]$ in $\mathbb{C}[z,w]$ by making the substitutions $e^{j2\pi x}\mapsto z, e^{j2\pi y} \mapsto w$, and multiplying by powers of $z$ and $w$ such that $\mathcal{P}[\mu]$ has degree $K$ as a polynomial in $z$, and degree $L$ as a polynomial in $w$. Similarly, we may define an inverse mapping acting on polynomials $q(z,w)\in \mathbb{C}[z,w]$ as $\mathcal{P}^{-1}[q](x,y) := e^{-j2\pi Sx}e^{-j2\pi Ty}q(e^{j2\pi x},e^{j2\pi y})$, where $S,T$ are chosen so that the resulting trigonometric polynomial is centered. For example, using the same $\mu_1$ and $\mu_2$ as above, we have
\begin{align*}
\mathcal{P}[\mu_1](z,w) & = zw^2(jz^{-1}w^{-2} + jzw^{-2}-jz^{-1}w^{2}-zw^{2}) = j(1 + z^2 - w^4-z^2w^4),\\
\mathcal{P}[\mu_2](z,w) & = z(2z^{-1} + 1 + z - zw) = 2 + z + z^2 -z^2w
\end{align*} 
and $\mathcal{P}^{-1}[\mathcal{P}[\mu_1]](x,y) = \mu_1(x,y)$ whereas $\mathcal{P}^{-1}[\mathcal{P}[\mu_2]](x,y) = e^{-j\pi x}\mu_2(x,y)$. 

We also define $E(x,y) = (e^{j2\pi x},e^{j2\pi y})$ for all $(x,y)\in [0,1]^2$, which is a continuous bijective mapping of $[0,1)^2$ onto the complex unit torus $\mathbb{CT}^2 = \{(z,w)\in\mathbb{C}^2: |z|=|w|=1\}$. Observe that $\mu(x,y) = 0$ if and only if $\mathcal{P}[\mu](z,w) = 0$ for $(z,w) = E(x,y)$, so that $\mu = 0$ on $C \subseteq [0,1)^2$ if and only if $\mathcal{P}[\mu] = 0$ on $E(C)$. This shows we may study the algebraic properties of trigonometric polynomials and their zero sets by studying their corresponding complex polynomials under the mapping $\mathcal{P}$. Accordingly, for trigonometric polynomials $\mu$ and $\nu$ we will say $\mu$ \emph{divides} $\nu$, denoted $\mu~|~\nu$, when $\mathcal{P}[\mu]$ divides $\mathcal{P}[\nu]$ as polynomials in $\mathbb{C}[z,w]$, or equivalently, if $\nu = \mu\,\gamma$ for some trigonometric polynomial $\gamma$; $\mu$ and $\nu$ are said to have a \emph{common factor} if there is a $\gamma$ with positive degree such that $\gamma~|~\mu$ and $\gamma~|~\nu$; and we say $\mu$ is \emph{irreducible} when $\mathcal{P}[\mu]$ is irreducible as a polynomial in $\mathbb{C}[z,w]$. 

We will need the following well-known result regarding the maximum number of intersections of algebraic curves (see e.g. \cite{shafarevich1994basic}):
\begin{theorem}[B\'ezout's Theorem]
Let $p_1$ and $p_2$ be non-constant polynomials in $\mathbb{C}[z,w]$ with $deg(p_1) = d_1$ and $deg(p_2) = d_2$. If $p_1$ and $p_2$ have no common factor then the system of equations $p_1(z,w) = p_2(z,w) = 0$ has at most $d_1\,d_2$ solutions.
\label{lem1}
\end{theorem}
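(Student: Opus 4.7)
The plan is to prove B\'ezout's bound through elimination theory, using the Sylvester resultant to reduce the two-variable intersection problem to counting roots of a one-variable polynomial. This is the standard affine approach, and while it can be upgraded to the sharp projective bound if needed, for the stated affine version it suffices.

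First I would perform a generic linear change of variables on $\mathbb{C}^2$ so that, viewing $p_1$ and $p_2$ as elements of $\mathbb{C}[z][w]$, the following hold: (i) the leading coefficients of the $p_i$ with respect to $w$ are nonzero constants, and (ii) no two distinct common zeros of the system $p_1=p_2=0$ share the same $z$-coordinate. Both conditions exclude only a Zariski-closed (hence proper) subset of directions, so they can always be arranged. Next, form the resultant $R(z) := \mathrm{Res}_w(p_1, p_2) \in \mathbb{C}[z]$. By Gauss's lemma, since $p_1$ and $p_2$ share no common factor in $\mathbb{C}[z,w]$, they also share no common factor in $\mathbb{C}(z)[w]$, which implies $R \not\equiv 0$. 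A standard degree count of the Sylvester matrix gives $\deg R \leq d_1 d_2$.

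The geometric content of the resultant is that $R(z_0) = 0$ precisely when the specialized polynomials $p_1(z_0, \cdot)$ and $p_2(z_0, \cdot)$ share a common $w$-root (condition (i) rules out spurious vanishing of $R$ arising from the leading coefficients). Hence every common zero $(z_0, w_0)$ of $p_1, p_2$ produces a root $z_0$ of $R$, and by condition (ii) the map from the common zero set to $\{R=0\}$ is injective. Therefore the number of common zeros is bounded by the number of roots of $R$, which is at most $\deg R \leq d_1 d_2$.

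The main obstacle, as is typical with B\'ezout in the affine setting, is handling ``zeros at infinity'' and the degenerate case where the leading $w$-coefficient of some $p_i$ vanishes at a value of $z$ under consideration: without the generic coordinate change, $R$ can pick up extraneous factors that inflate its zero set, or the projection onto the $z$-axis may fail to be injective on the common zero locus. The generic linear change of variables side-steps these issues cleanly. A more conceptual alternative, which I would fall back on if the computational bookkeeping becomes onerous, is to homogenize $p_1$ and $p_2$ and invoke projective B\'ezout (via intersection multiplicities, or via the fact that $\dim_{\mathbb{C}} \mathbb{C}[z,w]/(p_1,p_2)$ equals the number of intersections counted with multiplicity), and then observe that the affine count is bounded by the projective count $d_1 d_2$.
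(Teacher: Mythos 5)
The paper offers no proof of this statement at all: B\'ezout's theorem is imported as a classical fact with a pointer to Shafarevich, and is only ever used as a black box in the arguments about trigonometric curves (Corollary \ref{bezout_trig} and the minimal-polynomial results). Your elimination-theoretic argument is therefore supplying something the paper deliberately omits, and it is a correct, standard route to the affine bound: a generic linear change of coordinates makes the $w$-leading coefficients nonzero constants, Gauss's lemma upgrades ``no common factor in $\mathbb{C}[z,w]$'' to ``no common factor in $\mathbb{C}(z)[w]$'' so that $R=\mathrm{Res}_w(p_1,p_2)\not\equiv 0$, the Sylvester determinant gives $\deg R\le d_1d_2$, and injectivity of the projection onto the $z$-axis converts roots of $R$ into a count of common zeros. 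The one point to tighten is the order of quantifiers in your condition (ii): the claim that the directions violating it form a proper Zariski-closed set already presupposes that the common zero locus is finite, since if it contained a curve no linear projection would separate its points and every direction would be bad. This is harmlessly repaired by running condition (i) first: $R\not\equiv 0$ leaves finitely many admissible $z_0$, and for each such $z_0$ the specializations $p_i(z_0,\cdot)$ are polynomials of positive degree in $w$ with nonvanishing leading coefficient, so the common zero set is finite; only then pick the second generic direction and conclude. With that reordering the proof is complete, and your fallback via homogenization and projective B\'ezout is also a legitimate (if heavier) alternative.
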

One important consequence of B\'ezout's theorem is that if $p(z,w)$ is irreducible and $q(z,w)$ is any other polynomial such that $p(z,w)=q(z,w)=0$ has infinitely many solutions, then $p$ must divide $q$.

We can readily translate these results to trigonometric polynomials $\mu$ and $\nu$ by mapping to the polynomials $p = \mathcal{P}[\mu]$ and $q = \mathcal{P}[\nu]$, and mapping back again. If $deg(\mu) = (K,L)$, then $p = \mathcal{P}[\mu]$ has total degree at most $K+L$ as a polynomial in $\mathbb{C}[z,w]$, and so we have the following corollary that we will also call ``B\'ezout's theorem'':
\begin{corollary}
Let $\mu_1$ and $\mu_2$ be trigonometric polynomials with $deg(\mu_1) = (K_1,L_1)$ and $deg(\mu_2) = (K_2,L_2)$. If $\mu_1$ and $\mu_2$ have no common factor then the system of equations $\mu_1(x,y) = \mu_2(x,y) = 0$ has at most $(K_1+L_1)(K_2+L_2)$ solutions in $[0,1)^2$.
\label{bezout_trig}
\end{corollary}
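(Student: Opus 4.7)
The plan is to reduce the statement to the classical B\'ezout theorem (Theorem stated just above) by the polynomial correspondence $\mathcal{P}$. Set $p_i = \mathcal{P}[\mu_i] \in \mathbb{C}[z,w]$ for $i=1,2$. By construction $p_i$ has degree at most $K_i$ in $z$ and at most $L_i$ in $w$, so its total degree satisfies $\deg(p_i) \le K_i + L_i$. Moreover, since $E:[0,1)^2 \to \mathbb{CT}^2$ is a bijection and $\mu_i(x,y) = 0$ iff $p_i(E(x,y)) = 0$ (note that the monomial prefactor $z^S w^T$ introduced by $\mathcal{P}$ never vanishes on $\mathbb{CT}^2$), the common zero set of $\mu_1$ and $\mu_2$ in $[0,1)^2$ is in one-to-one correspondence with the common zero set of $p_1$ and $p_2$ in $\mathbb{CT}^2 \subset \mathbb{C}^2$.

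Next I would verify that $p_1$ and $p_2$ share no non-monomial factor, so that B\'ezout applies. Suppose for contradiction that $q \in \mathbb{C}[z,w]$ is a non-monomial common factor of $p_1$ and $p_2$ with positive total degree. Then $q$ has at least two terms, so $\mathcal{P}^{-1}[q]$ is a trigonometric polynomial $\gamma$ of positive degree. From $p_i = q \cdot r_i$ for some $r_i \in \mathbb{C}[z,w]$ one obtains, after multiplying both sides by appropriate monomials $z^a w^b$ which become phase factors on $\mathbb{CT}^2$, that $\mu_i = \gamma \cdot \widetilde{r}_i$ for suitable trigonometric polynomials $\widetilde{r}_i$, so that $\gamma \mid \mu_1$ and $\gamma \mid \mu_2$, contradicting the hypothesis. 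Thus any common factor of $p_1$ and $p_2$ must be a monomial $z^a w^b$, which vanishes nowhere on $\mathbb{CT}^2$ and therefore does not contribute common zeros on the torus. Dividing both $p_i$ by the greatest common monomial factor yields polynomials $\tilde{p}_1, \tilde{p}_2$ of total degree at most $K_i+L_i$, sharing no common factor in $\mathbb{C}[z,w]$, and having the same zero set on $\mathbb{CT}^2$ as the original $p_i$.

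Applying B\'ezout's theorem (the preceding Theorem) to $\tilde{p}_1$ and $\tilde{p}_2$ bounds the number of common solutions in $\mathbb{C}^2$, and in particular in $\mathbb{CT}^2$, by $\deg(\tilde{p}_1) \deg(\tilde{p}_2) \le (K_1+L_1)(K_2+L_2)$. Transferring back to $[0,1)^2$ via $E$ gives the desired bound. The only real subtlety here is the bookkeeping around monomial factors --- making sure that the notion of "common factor" defined for trigonometric polynomials (via $\mathcal{P}$) does not introduce spurious non-shared structure when compared with the polynomial ring $\mathbb{C}[z,w]$; this is handled by observing that monomials are units on the torus and thus irrelevant to the zero count.
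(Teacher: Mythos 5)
Your proposal is correct and follows essentially the same route the paper takes: the paper states this corollary as an immediate consequence of the correspondence $\mathcal{P}$ between trigonometric polynomials and polynomials in $\mathbb{C}[z,w]$ together with the total-degree bound $\deg(\mathcal{P}[\mu_i]) \le K_i + L_i$, without writing out the details. Your extra care with monomial common factors (which are units on $\mathbb{CT}^2$ and are in fact excluded by the paper's definition of ``common factor'' via positive degree) correctly fills in the one subtlety the paper glosses over.
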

Consequently, if the zero sets of trigonometric polynomials $\mu$ and $\nu$ have infinite intersection, and $\mu$ is irreducible, this implies $\mu~|~\nu$.

\subsection{Existence of a minimal polynomial (Proof of Proposition \ref{prop:minpoly})}
Here we prove a slightly stronger version of Proposition \ref{prop:minpoly}, which shows the minimal polynomial satisfies an additional unique divisibility property.
 Recall that we say the zero set of a trigonometric polynomial $C = \{\mu = 0\}$ is a \emph{trigonometric curve} if $C$ infinite and has no isolated points, i.e.\ if every connected component of $C$ is one-dimensional. 
\begin{proposition}
For every trigonometric curve $C$ there is unique (up to scaling) real-valued trigonometric polynomial $\mu_0$ such that $C = \{\mu_0 = 0\}$ and for any other trigonometric polynomial $\mu$ with $\mu = 0$ on $C$ we have $deg(\mu_0) \leq deg (\mu)$ componentwise, and $\mu_0~|~\mu$.
\label{prop:minpolystrong}
\end{proposition}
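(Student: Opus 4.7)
The plan is to transfer the problem to the complex polynomial ring $\mathbb{C}[z,w]$ via the correspondence $\mathcal{P}$ developed in the appendix, and exploit unique factorization in $\mathbb{C}[z,w]$ together with B\'ezout's theorem (Corollary \ref{bezout_trig}) to extract a canonical minimal-degree trigonometric polynomial.

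First I would fix any trigonometric polynomial $\nu$ with $\{\nu=0\}=C$ and factor $\mathcal{P}[\nu] = c\,p_1^{a_1}\cdots p_m^{a_m}$ into distinct irreducibles in $\mathbb{C}[z,w]$. For each $i$, the set $Z(p_i)\cap\mathbb{CT}^2$ is either finite or infinite; relabel so that $p_1,\ldots,p_k$ are the factors with \emph{infinite} torus zero set, and define $\mu_0 := \mathcal{P}^{-1}[p_1\cdots p_k]$, i.e.\ the product of the essential irreducible factors with multiplicity one. This is the candidate minimal polynomial.

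Next I would verify that $\{\mu_0=0\}=C$. Since each $p_i$, $i\leq k$, divides $\mathcal{P}[\nu]$, we have $\{\mu_0=0\}\subseteq C$. For the reverse inclusion, writing $\mathcal{P}[\nu]=(p_1\cdots p_k)\,q$ where $q$ collects the remaining multiplicities and the non-essential factors $p_{k+1},\ldots,p_m$, the torus zero set of $q$ consists of a portion of $\{\mu_0=0\}$ together with a finite set $F$ of isolated points contributed by the non-essential factors. Thus $C = \{\mu_0=0\}\cup F$ with $F$ finite, and since $C$ is assumed to have \emph{no isolated points}, every point of $F$ is already a non-isolated point of $C$ and hence lies in the closed set $\{\mu_0=0\}$. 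For the divisibility property, suppose $\mu$ is any trigonometric polynomial with $\mu=0$ on $C$. For each essential factor $p_i$, the set $Z(p_i)\cap\mathbb{CT}^2\subseteq E(C)\subseteq Z(\mathcal{P}[\mu])$ is infinite, so B\'ezout's theorem (applied to the irreducible $p_i$ against $\mathcal{P}[\mu]$) forces $p_i \mid \mathcal{P}[\mu]$. Since the $p_i$ are pairwise distinct irreducibles, their product $p_1\cdots p_k=\mathcal{P}[\mu_0]$ divides $\mathcal{P}[\mu]$, giving $\mu_0\mid\mu$. The componentwise degree bound $deg(\mu_0)\leq deg(\mu)$ follows immediately from $\mu = \mu_0\,\gamma$.

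To guarantee $\mu_0$ can be chosen real-valued, note that $\overline{\mu_0}$ is a trigonometric polynomial of the same degree that also vanishes on $C$; by the divisibility property just established, $\mu_0\mid\overline{\mu_0}$, and equality of degrees forces $\overline{\mu_0} = e^{i\theta}\mu_0$ for some $\theta\in\mathbb{R}$. Then $e^{i\theta/2}\mu_0$ is real-valued with the same zero set and degree, so we may replace $\mu_0$ by this rescaled version. Uniqueness up to real scaling is then obtained by applying the divisibility property to two candidate real-valued minimal polynomials: mutual divisibility in $\mathbb{C}[z,w]$ implies they differ by a unit, i.e.\ a nonzero complex scalar, and the real-valuedness constraint restricts that scalar to $\mathbb{R}\setminus\{0\}$.

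The main obstacle is the seemingly innocuous step $\{\mu_0=0\}=C$: it requires carefully tracking the contribution of the non-essential irreducible factors of $\mathcal{P}[\nu]$, and arguing that the isolated torus zeros they produce must already lie on the curve $\{\mu_0=0\}$. This is exactly where the assumption that $C$ has no isolated points---built into our definition of a trigonometric curve---is used. Once this is handled, the remainder of the proof is a direct transcription of unique factorization and B\'ezout between the two polynomial settings.
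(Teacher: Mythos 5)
Your proposal is correct and follows essentially the same route as the paper's proof: pass to $\mathbb{C}[z,w]$ via $\mathcal{P}$, keep only the irreducible factors with infinite torus zero set, use continuity plus the no-isolated-points assumption to recover all of $C$, and invoke B\'ezout for divisibility and uniqueness. The only (immaterial) difference is in the real-valuedness step, where you compare $\mu_0$ with its conjugate $\overline{\mu_0}$ rather than with $\mathrm{Re}[\mu_0]=\tfrac12(\mu_0+\mu_0^*)$ as the paper does; both yield the same conclusion.
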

\begin{proof}
Let $\mu$ be any trigonometric polynomial with $C=\{\mu = 0\}$. Set $p = \mathcal{P}[\mu] \in \C[z,w]$. Then $p$ has a unique factorization (up to scaling) into irreducible factors: $p = p_1^{n_1} \cdots p_r^{n_r}$. Create a polynomial $p_0 = p_{i_1}\cdots p_{i_k}$, $k \leq r$, omitting any factors $p_i$ whose zero sets do not intersect $\mathbb{CT}^2$, or intersect at only finitely many points. Define $\mu_0 = \mathcal{P}^{-1}[p_0]$, which by construction satisfies $\mu_0 = 0$ on $C/A$ where $A$ is possibly some finite set. In fact, it must be the case that $\mu_0 = 0$ on all of $C$, since $\mu_0$ is continuous and $C$ contains no isolated points by assumption. We will show $\mu_0$ is the desired minimal polynomial.

Note the algebraic curve $X=\{p_0 = 0\}$ is the union of irreducible curves $X_j=\{p_{i_j} = 0\} \subseteq \mathbb{C}^2$. Define $C_j = E^{-1}(X_j \cap \mathbb{CT}^2)$, then likewise we have a decomposition of $C$ as the union of the curves $C_j$. If $\nu$ is any other trigonometric polynomial with $C=\{\nu = 0\}$, then $\nu=0$ on each $C_j$, and for $q=P[\nu]$ we have $q=0$ on each infinite set $E(C_j) = X_j \cap \mathbb{CT}^2$. Therefore by B\'ezout's theorem, $q = \mathcal{P}[\nu]$ must factor as $q = p_{i_1}^{m_1}\cdots p_{i_k}^{m_k}\,r$ for some $r\in \mathbb{C}[z,w]$, hence $deg(p_0) \leq deg(q)$ and $p_0~|~q$, which implies $deg(\mu_0) \leq deg(\nu)$ and $\mu_0~|~\nu$. 

We now show $\mu_0$ is real-valued under an appropriate choice of scaling. Without loss of generality assume $Re[\mu_0] \neq 0$. Since $Re[\mu_0] = \frac{1}{2}(\mu_0 + \mu_0^*)$ and $\mu_0$ is centered, we have $deg(Re[\mu_0]) \leq deg(\mu_0)$. Also, since $\{\mu_0 = 0\} \subseteq \{Re[\mu_0] = 0\}$ we have $\mu_0 | Re[\mu_0]$ and so $Re[\mu_0] = c\cdot \mu_0$ for some $c\in\mathbb{C}$ by the uniqueness of $\mu_0$. Therefore $\frac{1}{c}\mu_0$ is real-valued.

Finally, to show uniqueness suppose $\nu_0$ is another centered trigonometric polynomial satisfying $C=\{\nu_0 = 0\}$ and $deg(\nu_0)= deg(\mu_0)$. Then $\mu_0~|~\nu_0$, and so by degree constraints, $\nu_0 = c\cdot \mu_0$ for some $c\in\mathbb{C}$. 
\end{proof}

\subsection{Additional properties of minimal polynomials}
Our uniqueness theorems also make use of the following result, which shows the zero set of the minimal polynomial and the zero set of its gradient have at most a finite intersection:
\begin{proposition}
\label{prop:musquared}
Let $C$ be a trigonometric curve with minimal polynomial $\mu$. Then $\nabla\mu = 0$ for at most finitely many points on $C$.
\end{proposition}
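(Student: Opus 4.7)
First, the plan is to reduce the statement to the algebraic-geometric fact that the singular locus of a square-free plane algebraic curve is finite. Passing through the correspondence $\mathcal{P}$, set $p = \mathcal{P}[\mu] \in \mathbb{C}[z,w]$, so that $\mu(x,y) = e^{-j2\pi(Sx+Ty)}\,p(e^{j2\pi x},e^{j2\pi y})$ for appropriate shifts $S,T$. A direct chain-rule computation gives
\[
\partial_x \mu = j2\pi\,e^{-j2\pi(Sx+Ty)}\bigl(z\,\partial_z p - S\,p\bigr),\qquad \partial_y \mu = j2\pi\,e^{-j2\pi(Sx+Ty)}\bigl(w\,\partial_w p - T\,p\bigr).
\]
At any point of $C = \{\mu = 0\}$ we have $p(z,w) = 0$, and since $z,w$ are nonzero on $\mathbb{CT}^2$, the vanishing of $\nabla\mu$ at such a point is equivalent to $\partial_z p = \partial_w p = 0$. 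It therefore suffices to show the algebraic singular locus $\Sigma = \{(z,w)\in\mathbb{C}^2 : p = \partial_z p = \partial_w p = 0\}$ is a finite set.

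The crucial input is that because $\mu$ is minimal, the associated polynomial $p$ must be square-free. This follows directly from the construction in the proof of Prop.~\ref{prop:minpolystrong}, where $p_0$ is built as a product of \emph{distinct} irreducible factors; uniqueness of the minimal polynomial then forces $p = \mathcal{P}[\mu]$ to factor as $p = p_1\cdots p_k$ with the $p_i$ pairwise coprime irreducibles.

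I would then analyze a singular point $(z_0,w_0)\in\Sigma$ by cases. Since $p(z_0,w_0)=0$, some factor $p_i$ vanishes there. From the product rule $\partial_z p = \sum_j (\partial_z p_j)\prod_{\ell\neq j} p_\ell$, only the $j=i$ summand can be nonzero at $(z_0,w_0)$ unless a second factor also vanishes. This gives two cases: (i) some $p_j$ with $j\neq i$ also vanishes, so $(z_0,w_0)\in\{p_i=0\}\cap\{p_j=0\}$, a finite set by Bezout's theorem (Cor.~\ref{bezout_trig}) since distinct irreducibles share no common factor; or (ii) only $p_i$ vanishes, forcing $\partial_z p_i(z_0,w_0) = \partial_w p_i(z_0,w_0) = 0$. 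In case (ii), I apply Bezout to the irreducible $p_i$ against $\partial_z p_i$ (which has strictly smaller $z$-degree, hence no common factor with $p_i$) to conclude $\{p_i = \partial_z p_i = 0\}$ is finite.

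The only subtlety, and the main obstacle to dispatch carefully, is the degenerate situation in case (ii) where $\partial_z p_i$ (or $\partial_w p_i$) vanishes identically, in which case Bezout does not apply directly. If $\partial_z p_i \equiv 0$, then $p_i$ depends only on $w$; irreducibility forces $p_i = w - c$, so $\partial_w p_i \equiv 1$ and case (ii) is vacuous. The symmetric case $\partial_w p_i \equiv 0$ is handled the same way. Taking the union over the finitely many factors $p_i$, the singular locus $\Sigma$ is finite, and pulling back through $E$ yields the finiteness of $\{\nabla\mu = 0\}\cap C$.
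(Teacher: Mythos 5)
Your proof is correct. It rests on the same two pillars as the paper's argument---the minimal polynomial factors into distinct, pairwise coprime irreducibles, and B\'ezout's theorem bounds the intersection of coprime factors---but you organize it differently. The paper stays entirely in trigonometric-polynomial land: it first treats the irreducible case by showing that an infinite set $\{\nabla\mu=0\}\cap C$ would force $\mu\mid\partial_x\mu$ and $\mu\mid\partial_y\mu$, hence $\partial_x\mu=A\mu$ and $\partial_y\mu=B\mu$ by degree comparison, which coefficient comparison rules out; it then handles the general case by induction on the number of irreducible factors, using the product rule $\nabla\mu=\mu_n\nabla\nu+\nu\nabla\mu_n$ to split $C$ and reduce to the inductive hypothesis plus one B\'ezout count. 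You instead push everything through the correspondence $\mathcal{P}$, identify $\{\nabla\mu=0\}\cap C$ with the singular locus of the square-free polynomial $p=\mathcal{P}[\mu]$ restricted to the torus (your chain-rule reduction, using $z,w\neq 0$ and $p=0$ on $C$, is right), and give a direct, non-inductive case analysis at each singular point: either two distinct irreducible factors vanish there, or one factor vanishes together with both of its partials. Your route buys a cleaner conceptual statement---finiteness of the singular locus of a reduced plane curve---and avoids induction; the paper's route avoids the chain-rule bookkeeping and the degenerate subcases $\partial_z p_i\equiv 0$ that you correctly have to dispatch by hand via irreducibility. One small bookkeeping note: where you cite Corollary~\ref{bezout_trig} for $\{p_i=0\}\cap\{p_j=0\}$, you are really invoking the $\mathbb{C}[z,w]$ version (Theorem~\ref{lem1}), since the $p_i$ are complex polynomials rather than trigonometric ones; this is harmless.
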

\begin{proof}
Note the partial derivatives $\partial_x \mu$ and $\partial_y \mu$ are trigonometric polynomials with $deg(\partial_x \mu)\leq deg(\mu)$ and ${deg(\partial_y \mu)\leq deg(\mu)}$. First consider the case where $\mu$ is irreducible. If $\{\nabla \mu = 0\}$ and $C$ have an infinite intersection, then by B\'ezout's theorem, $\mu$ must divide $\partial_x \mu$ and $\partial_y \mu$, and so by degree considerations, $\partial_x \mu = A\, \mu$ and $\partial_y \mu = B\, \mu$ for some non-zero constants $A,B$. By comparing coefficients, we see this is only possible if $\mu$ is the zero polynomial.

The general case follows by induction on the number of irreducible factors in $\mu$: Suppose $\mu=\mu_1\cdots\mu_{n-1}\mu_n$ where each $\mu_i$ is a distinct irreducible factor (that $\mu$ must have this form follows from the proof of Prop.\ \ref{prop:minpoly}). Let $\nu = \mu_1\cdots\mu_{n-1}$, then we may write $C=C_1\cup C_2$ where $C_1=\{\nu=0\}$ and $C_2 = \{\mu_n=0\}$. By the product rule $\nabla \mu = \mu_n \nabla \nu + \nu \nabla \mu_n$. Therefore, $\nabla\mu = 0$ on $C_1$ if only if $\mu_n = 0$ or $\nabla\nu = 0$ on $C_1$. However, $\{\mu_n=0\}\cap C_1$ is finite by B\'ezout's theorem, and $\{\nabla \nu =0\}\cap C_1$ is finite by inductive hypothesis, hence $\{\nabla\mu = 0\} \cap C_1$ is finite. A similar argument shows $\{\nabla\mu = 0\} \cap C_2$ is finite, which proves the induction.
\end{proof}

The following proposition shows that minimal polynomials must always change sign across an infinite segment of its zero set. 

\begin{proposition}
Let $C$ be a trigonometric curve with minimal polynomial $\mu$. If $U_1$ and $U_2$ are distinct connected components of $[0,1]^2/C$ whose boundaries share an infinite segment of $C$, then the sign of $\mu$ must be different on $U_1$ and $U_2$.
\label{prop:coloring}
\end{proposition}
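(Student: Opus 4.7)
The plan is to argue by contradiction, leveraging Proposition \ref{prop:musquared} which bounds the number of critical points of $\mu$ on $C$. Suppose $\mu$ has the same sign on $U_1$ and $U_2$. Since the minimal polynomial $\mu$ is real-valued and continuous, and vanishes on $C = \{\mu=0\}$, its sign is constant on each connected component of the open set $[0,1]^2/C$. Let $\Gamma \subseteq \partial U_1 \cap \partial U_2 \cap C$ denote the infinite shared boundary segment hypothesized in the statement.

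The key local step is to show that at a generic point $p \in \Gamma$ with $\nabla\mu(p) \neq 0$, the sign of $\mu$ must differ across $C$ near $p$. Indeed, if $\nabla\mu(p)\neq 0$, then by the implicit function theorem $C$ coincides in a neighborhood $V$ of $p$ with a smooth simple arc, so $V\setminus C$ has exactly two connected components, and along a line transverse to $C$ through $p$ the function $\mu$ restricts to a smooth function whose derivative is $\nabla\mu(p)\cdot \mathbf{v} \neq 0$, hence changes sign at $p$. Since $p$ belongs to the closure of both $U_1$ and $U_2$, and these are distinct components of $[0,1]^2/C$, the two local half-neighborhoods of $V\setminus C$ must coincide with $V \cap U_1$ and $V \cap U_2$; thus $\mu$ takes opposite signs on $U_1$ and $U_2$ near $p$, contradicting the assumption.

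To conclude, it suffices to show such a smooth point $p$ exists in $\Gamma$. This follows immediately from Proposition \ref{prop:musquared}: the set $\{\nabla\mu = 0\}\cap C$ is finite, while $\Gamma$ is infinite by hypothesis, so $\Gamma$ contains (infinitely many) points where $\nabla\mu$ does not vanish. Any such point produces the contradiction described above.

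The main subtle point I anticipate is justifying that the two local sides of $C$ at a smooth point $p$ must belong to $U_1$ and $U_2$ respectively, rather than to a single one of these components wrapping around. This is handled by the fact that at a smooth point $\nabla\mu(p)\neq 0$, $C$ is locally an embedded arc separating $V$ into exactly two pieces, and since $p \in \partial U_1 \cap \partial U_2$, both pieces must intersect different components of the global complement, forcing one to lie in $U_1$ and the other in $U_2$. No further case analysis regarding cusps or self-intersections is needed, since those form a finite subset of $C$ by Proposition \ref{prop:musquared} and can be avoided within the infinite segment $\Gamma$.
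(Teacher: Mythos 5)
Your proposal is correct and takes essentially the same approach as the paper: both arguments rest on Proposition \ref{prop:musquared}, with the paper deducing that equal signs would make every point of the infinite shared segment a local minimum of $\mu$ and hence force $\nabla\mu=0$ there, while you run the contrapositive by selecting a single point of the segment where $\nabla\mu\neq 0$ and observing the sign change across the curve at that point. No gaps.
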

\begin{proof}
Let $U_1$ and $U_2$ be any two connected components of $[0,1]^2/C$ whose boundaries share an infinite segment $S\subseteq C$, and suppose, without loss of generality, $\mu$ is positive on both $U_1$ and $U_2$. Then for all points in $S$, $\mu$ has a local minimum, and since $\mu$ is smooth this implies $\nabla\mu = 0$ on all of $S$, which violates Proposition \ref{prop:musquared}. Therefore $\mu$ must have different signs on $U_1$ and $U_2$.
\end{proof}

Note that this result puts certain topological constraints on the geometry of trigonometric curves. For instance, if $C$ is any trigonometric curve, and $\mbf r_0 \in C$, then in any sufficiently small neighborhood of $r_0$, the set $C/\{\mbf r_0\}$ must consist of an \emph{even} number of branches, since otherwise the minimal polynomial would violate the parity rule in Prop.\ \ref{prop:coloring}. This implies, for example, that within the region $[0,1]^2$ trigonometric curves must always be closed. It also rules out such features such as triple junctions, where locally three curve segments terminate at a common intersection point.

\subsection{Bound on Connected Components (Proof of Proposition \ref{prop:connectedcomp})}
\label{sec:appendix_cc}
To bound the number of connected components in the complement of a trigonometric curve, we make use of Bernstein's theorem, also known as the BKK bound, which is a well-known refinement of B\'ezout's theorem (see, e.g., \cite{sturmfels1998polynomial,fulton1993introduction}). Bernstein's theorem relates the maximum number of isolated zeros of a Laurent polynomial system to the geometry of the coefficient supports. Here we use the notation $conv(\Lambda)$ to denote the convex hull of an index set $\Lambda \subset \mathbb{Z}^2$ treated as a subset of $\mathbb{R}^2$. For any two sets $P_1,P_2 \subset \mathbb{R}^2$ we also use $\mathcal{M}(P_1,P_2)$ to denote their \emph{mixed area}:
\[
\mathcal{M}(P_1,P_2):= area(P_1 + P_2) - area(P_1) - area(P_2)
\]
where $area(\cdot)$ is the usual Euclidean area and $P_1+P_2$ is the Minkowski sum, $\{p_1+p_2: p_1\in P_1, p_2 \in P_2\}$.
\begin{theorem}[Bernstein's Theorem]
Let $p_1(z,w)$ and $p_2(z,w)$ be Laurent polynomials with coefficient supports $\Lambda_1$ and $\Lambda_2$, and let $P_1 = \text{conv}(\Lambda_1)$ and $P_2 = \text{conv}(\Lambda_1)$. The number of isolated solutions of system $p_1(z,w) = p_2(z,w)=0$ in $(\mathbb{C}/\{0\})^2$ is at most $\mathcal{M}(P_1,P_2)$.
\end{theorem}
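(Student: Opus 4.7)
The plan is to establish Bernstein's bound via polyhedral homotopy, which is essentially Bernstein's original strategy, and has three main ingredients: a genericity reduction, a deformation to a combinatorially simple system using a coherent mixed subdivision of $P_1+P_2$, and a root count from binomial subsystems attached to the mixed cells.

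First I would reduce to generic coefficients. Because we work in $(\mathbb{C}\setminus\{0\})^2$ and fix the supports $\Lambda_i$, the number of isolated solutions is upper semicontinuous in the coefficient vectors $(c_{\mathbf{k}}^{(i)})$: on a Zariski-open set the count is constant and maximal, so any bound proved in the generic case transfers to arbitrary coefficients. This lets me assume throughout that coefficients are generic and every isolated root is simple. Next, introduce generic real lifts $\omega_i:\Lambda_i\to\mathbb{R}$ and the one-parameter family $p_i^{(t)}(z,w) = \sum_{\mathbf{k}\in\Lambda_i} c_{\mathbf{k}}^{(i)}\, t^{\omega_i(\mathbf{k})}\, z^{k_1}w^{k_2}$, so that $p_i^{(1)}=p_i$. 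The projections of the lower hull of $\mathrm{conv}\{(\mathbf{k},\omega_i(\mathbf{k}))\}$ induce a coherent mixed subdivision of $P_1+P_2$: every top-dimensional cell decomposes as a Minkowski sum $F_1+F_2$ of faces $F_i\subseteq P_i$, and the cell is \emph{mixed} when $\dim F_1=\dim F_2=1$. An inclusion–exclusion argument applied to $\mathrm{area}(P_1+P_2)$ over the cells of the subdivision shows that the total area of the mixed cells equals $\mathcal{M}(P_1,P_2)$.

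Third, count solutions via Puiseux series in $t$. Each branch $(z(t),w(t))=(z_0 t^\alpha+\cdots,\, w_0 t^\beta+\cdots)$ with $z_0,w_0\in\mathbb{C}^*$ has a valuation $(\alpha,\beta)$; substituting into $p_i^{(t)}=0$ and collecting the lowest powers of $t$ forces $(\alpha,\beta)$ to select an edge $F_i$ of $P_i$ on which the linear functional $\mathbf{k}\mapsto \alpha k_1+\beta k_2+\omega_i(\mathbf{k})$ is minimized, and forces $(z_0,w_0)$ to solve the binomial system determined by the two selected edges. For a generic lift this pairing $(F_1,F_2)$ is exactly a mixed cell of the subdivision, the associated binomial system has precisely $|\det(\mathbf{v}_1,\mathbf{v}_2)|$ solutions in $(\mathbb{C}^*)^2$ (where $\mathbf{v}_i$ is an edge-direction vector), and this determinant equals the lattice area of the mixed cell $F_1+F_2$. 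Summing over mixed cells yields at most $\mathcal{M}(P_1,P_2)$ branches, and every isolated solution of the $t=1$ system lies on such a branch, giving the bound.

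The main obstacle is the last step: one must verify that no solutions ``escape'' the torus during the deformation (which would cause the root count to drop discontinuously rather than be preserved under the homotopy), and that the correspondence between branches and mixed cells is well-defined and bijective on the generic stratum. Both are ensured by choosing $\omega_i$ sufficiently generic so that the mixed subdivision is fine — no cell is a Minkowski sum involving a face of dimension $\geq 2$ of both $P_i$ simultaneously — together with a Newton–Puiseux argument for the analytic continuation of roots. A cleaner alternative worth mentioning is the toric-variety proof: compactify $(\mathbb{C}^*)^2$ in the projective toric surface $X_\Sigma$ of the normal fan of $P_1+P_2$, so that $p_i$ extends to a section of a globally generated line bundle $\mathcal{L}_{P_i}$ with $c_1(\mathcal{L}_{P_1})\cdot c_1(\mathcal{L}_{P_2})=\mathcal{M}(P_1,P_2)$; the intersection theory on $X_\Sigma$ then bounds the toric intersection directly, bypassing Puiseux bookkeeping at the cost of importing more algebraic geometry.
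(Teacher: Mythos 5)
The paper does not actually prove this statement: Bernstein's theorem is imported as a known result, with citations to Sturmfels and to Fulton, and is used purely as a black box (via Corollary \ref{bkk_trig}) to bound the number of isolated common zeros of $\partial_x\mu$ and $\partial_y\mu$. So your polyhedral-homotopy argument is not an alternative to anything in the paper but a self-contained sketch of the cited result, and it follows the standard route: a genericity reduction, a coherent mixed subdivision induced by generic lifts, the identity expressing $\mathcal{M}(P_1,P_2)$ as the total area of the mixed cells, and the Puiseux-series/binomial-system count of $|\det(\mathbf v_1,\mathbf v_2)|$ roots per mixed cell, which is consistent with the paper's Euclidean-area normalization of $\mathcal{M}$. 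As an outline this is correct, and your explicit flagging of the ``no escape from the torus'' issue and of fineness of the subdivision identifies the right technical pressure points.

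One step, however, is justified by the wrong mechanism. You reduce to generic coefficients by asserting that the number of isolated solutions is \emph{upper} semicontinuous and constant on a Zariski-open set; but upper semicontinuity (in the usual convention that $\{N\ge k\}$ is closed) would give that the count at a special coefficient vector is \emph{at least} the generic count, which is the opposite of the inequality you need, and as stated your two premises would actually force the special count to dominate the generic one. The correct reduction is local and goes in the other direction: each isolated solution of the special system persists under small perturbation --- by a Rouch\'e/local-degree argument it contributes at least one solution of every nearby generic system inside a small neighborhood, and these neighborhoods can be taken pairwise disjoint --- hence the number of isolated special solutions is bounded by the generic root count, to which the homotopy argument then applies. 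With that repair the outline is sound, modulo the Newton--Puiseux bookkeeping you already acknowledge deferring.
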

Since a trigonometric polynomial is simply the restriction of a Laurent polynomial to the unit complex torus, we can use this result to bound the number of isolated zeros of a system of trigonometric polynomials. In particular, using that fact that the mixed area satisfies $\mathcal{M}(P_1,P_2) \leq \mathcal{M}(R_1,R_2)$ if $P_i \subseteq R_i$, $i=1,2$ (\cite{fulton1993introduction}, pg.\ 117), we have the following direct corollary:
\begin{corollary}
Let $\mu_1$ and $\mu_2$ be trigonometric polynomials with $deg(\mu_1)\leq(K_1,L_1)$ and $deg(\mu_2)\leq(K_2,L_2)$. The number of isolated solutions of $\mu_1(x,y) = \mu_2(x,y) = 0$ in $[0,1)^2$ is at most $K_1L_2 + K_2L_1$.
\label{bkk_trig}
\end{corollary}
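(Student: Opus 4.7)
The plan is to apply Bernstein's theorem directly, using the correspondence between trigonometric polynomials on $[0,1)^2$ and Laurent polynomials on $(\mathbb{C}\setminus\{0\})^2$ already established in the appendix via the map $\mathcal{P}$ and the bijection $E: [0,1)^2 \to \mathbb{CT}^2$.

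First I would pass from the trigonometric setting to the algebraic one. Let $p_i = \mathcal{P}[\mu_i]$, viewed as a Laurent polynomial (i.e.\ by \emph{not} clearing denominators through multiplication by monomials, which does not affect the zero locus on $(\mathbb{C}\setminus\{0\})^2$). Because $\mu_i$ is centered with $deg(\mu_i) \leq (K_i,L_i)$, the coefficient support $\Lambda_i$ of $p_i$ is contained in an axis-aligned rectangle $R_i \subset \mathbb{R}^2$ of side lengths $K_i \times L_i$. Under the bijection $E$, isolated zeros of the system $\mu_1=\mu_2=0$ in $[0,1)^2$ correspond exactly to isolated zeros of $p_1=p_2=0$ in $\mathbb{CT}^2 \subset (\mathbb{C}\setminus\{0\})^2$, so the count on the trigonometric side is bounded by the count on the Laurent side.

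Next I would invoke Bernstein's theorem and the quoted monotonicity $\mathcal{M}(P_1,P_2) \leq \mathcal{M}(R_1,R_2)$ whenever $P_i = \mathrm{conv}(\Lambda_i) \subseteq R_i$, reducing the problem to the computation of the mixed area of two rectangles. This is the only calculation in the argument. Since $R_1 + R_2$ is a rectangle of side lengths $(K_1+K_2) \times (L_1+L_2)$,
\begin{equation*}
\mathcal{M}(R_1,R_2) = (K_1+K_2)(L_1+L_2) - K_1 L_1 - K_2 L_2 = K_1 L_2 + K_2 L_1,
\end{equation*}
which is the stated bound.

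There is no real obstacle here; the only subtlety is making sure the substitution $\mathcal{P}$ is applied in its Laurent form so that the support rectangle is exactly $K_i \times L_i$ (rather than inflated by monomial multiplication), and that Bernstein's theorem is legitimately applied on $(\mathbb{C}\setminus\{0\})^2$, where $\mathbb{CT}^2$ sits as a subset. Both points are transparent from the setup in the appendix, so the corollary follows immediately once the mixed-area computation above is carried out.
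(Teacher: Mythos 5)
Your proposal is correct and follows the same route as the paper: the paper derives this corollary directly from Bernstein's theorem together with the monotonicity of mixed area under inclusion of the Newton polytopes in rectangles, exactly as you do. The only thing you add is the explicit mixed-area computation $(K_1+K_2)(L_1+L_2)-K_1L_1-K_2L_2=K_1L_2+K_2L_1$, which the paper leaves implicit; your worry about the Laurent versus cleared-denominator form is harmless either way, since monomial multiplication only translates the support and mixed area is translation-invariant.
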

Now we prove Proposition \ref{prop:connectedcomp}:

Let $C$ be a trigonometric curve of degree $(K,L)$, i.e.\ the minimal polynomial $\mu$ for $C$ has degree $(K,L)$, and let $U_1,...,U_n$ be the connected components of $[0,1]^2/C$. For each $i=1,...,n$, $\mu$ is continuous with $\mu\neq 0$ on $U_i$, $\mu = 0$ on $\partial U$, and $\overline{U} = U\cup\partial U$ is compact, therefore $\mu$ must attain its maximum or minimum at some point $\mbf r_i$ in the interior $U$, where $\nabla\mu(\mbf r_i) = 0$. Hence, the number of components $n$ is bounded by the cardinality of $\{\nabla \mu = 0\} = \{\partial_x\mu = 0\} \cap \{\partial_y\mu = 0\}$. When $\{\nabla \mu = 0\}$ is finite, the result $n\leq 2KL$ follows from Corollary \ref{bkk_trig} since $deg(\partial_x\mu),deg(\partial_y\mu) \leq (K,L)$. 

If $\{\nabla \mu = 0\}$ is not finite, i.e.\ $\{\partial_x\mu = 0\}$ and $\{\partial_y\mu=0\}$ share a common component, we may obtain the same bound by a standard perturbation argument, which we summarize here (see, e.g., the appendix \cite{solymosi2012incidence} for further details). Let $\mbf v \in \mathbb{R}^2$ be a sufficiently small vector. By continuity, the function $\widetilde{\mu}(\mbf r) = \mu(\mbf r) - \langle \mbf r, \mbf v\rangle$ also attains a maximum or minimum in each $U_i$, and $\nabla \widetilde{\mu} = \nabla \mu - \mbf v$ vanishes at these points, which shows $n$ is also bounded by the cardinality of $\{\nabla\mu = \mbf v\}$. Finally, note that since $\nabla\mu$ is a smooth map whose domain and range have the same dimension, for a generic vector $\mbf v$ the set $\{\nabla\mu = \mbf v\}$ is finite by the regular level set theorem (Cor.\ 8.10, \cite{lee2003smooth}), and the result again follows by Corollary \ref{bkk_trig}.
\hfill\ensuremath{\square}
\vspace{1em}
\\
We note that the bound in Proposition \ref{prop:connectedcomp} appears to be an overestimate. Empirically, we find that the maximum number of connected components of the complement to a trigonometric curve of degree $(K,L)$ never exceeds $KL$ (in the topology of the torus $\mathbb{T}^2 = (\mathbb{R}/\mathbb{Z})^2$). Moreover, this bound is only obtained in artificial cases, such as when the minimal polynomial is $\mu(x,y) = \sin(\pi K x)\sin(\pi L y)$, whose zero set is the union of $K$ vertical lines and $L$ horizontal lines.
\section{Proofs of Theorems}
\label{sec:appendix_proofs}
\subsection{Proof of Theorem \ref{thm:unique1}}
Suppose $f=1_U$, with $U$ simply connected and ${\partial U = \{\mu = 0\}}$. where $\mu$ is the minimal polynomial having coefficients $(c[\mbf k]: \mbf k\in {\Lambda})$. Let $(d[\mbf k] : \mbf k \in {\Lambda})$ be any non-trivial set of coefficients satisfying
\begin{align}
	\sum_{\mbf k \in {\Lambda}} d[\mbf k] \widehat{\nabla f}[\bs\ell - \mbf k] & = 0,~~\text{ for all } \bs\ell \in 2\Lambda.
	\label{eq:linsystem}
\end{align}
We will show the coefficients $d[\mbf k]$ must be a scalar multiple of $c[\mbf k]$. 

Let $\eta$ denote the trigonometric polynomial having coefficients $(d[\mbf k] : \mbf k \in {\Lambda})$. Observe that the sums in \eqref{eq:linsystem} represent the Fourier coefficients of $\eta \nabla f$ evaluated at $\bs \ell \in 2\Lambda$. Therefore, \eqref{eq:linsystem} can be expressed in spatial domain as
\[
\mathcal{P}_{2\Lambda}(\eta \nabla f) = 0
\]
where $\mathcal{P}_{2\Lambda}$ is Fourier projection onto the index set $2\Lambda$. If we let $\bs \varphi$ be any smooth test field, then by duality
\[
\langle \mathcal{P}_{2\Lambda}(\eta \nabla f), \bs \varphi \rangle = \langle \eta \nabla f, \mathcal{P}_{2\Lambda}\bs \varphi \rangle =0,
\]
or, equivalently,
\[
\langle \eta \nabla f, \bs \psi \rangle = 0, \text{ for all } \bs\psi \in B_{2\Lambda}^2,
\]
where $B_{2\Lambda}^2$ denotes the space all smooth vector fields $\bs \varphi = (\varphi_1,\varphi_2)$ whose components $\varphi_1,\varphi_2$ are bandlimited to $2\Lambda$. Since $f = 1_U$, using the distributional characterization of $\nabla f$ in \eqref{eq:duality} we have
\begin{equation}
\oint_{\partial U} \eta\,(\bs\psi \cdot \mbf n)\, ds  = 0, \text{ for all } \bs\psi \in B_{2\Lambda}^2.
\label{eq:forallphi}
\end{equation}
Because $\partial U$ is a level set of $\mu$, the outward normal $\mbf n$ is given pointwise by $\mbf n(\mbf r) = \pm\nabla \mu(\mbf r)/|\nabla \mu(\mbf r)|$ for all $\mbf r \in \partial U$, provided $\nabla \mu(\mbf r) \neq 0$. However, by Prop. \ref{prop:musquared}, the set of points for which $\nabla \mu = 0$ on $\partial U$ is at most finite, hence of measure zero, so the formula $\mbf n(\mbf r) = \pm\nabla \mu(\mbf r)/|\nabla \mu(\mbf r)|$ holds almost everywhere on $\partial U$. Now, without loss of generality, assume $\mu < 0$ on $U$. Since $\mu$ is a minimal polynomial, by Prop.\ \ref{prop:coloring} it must change sign across the any infinite segment of $\partial U$, hence $\mu > 0$ on $U^C$, so that $\mbf n(\mbf r) = \nabla \mu(\mbf r)/|\nabla \mu(\mbf r)|$ for all $\mbf r \in \partial U$. Setting $\bs \psi = \eta^* \nabla \mu \in B^2_{2\Lambda}$, from \eqref{eq:forallphi} we have
\[
\oint_{\partial U} |\eta|^2\,|\nabla \mu|\, ds  = 0,
\]
which is possible if and only if $\eta = 0$ on $\partial U$ since $|\nabla \mu(\mbf r)|=0$ for at most finitely many $\mbf r \in \partial U$. Therefore $\eta$ must be a scalar multiple of $\mu$ by the uniqueness of minimal polynomials.

\subsection{Proof of Theorems \ref{thm:unique2} and \ref{thm:unique_3}}
Assume $(d[\mbf k] : \mbf k \in \Lambda)$ is another solution of the system, and let $\eta$ denote the associated trigonometric polynomial. Following the same steps as in the proof of Theorem \ref{thm:unique1}, we may show 
\[
\langle \eta\,\nabla f, \bs \psi \rangle = 0, \text{ for all } \bs\psi \in B_{2\Lambda}^2,
\]
Since $f = \sum_{i=1}^n a_i 1_{U_i}$, from \eqref{eq:duality} this gives
\begin{equation}
\sum_{i=1}^n a_i \oint_{\partial U_i} \eta\,(\bs\psi \cdot \mbf n)\, ds  = 0, \text{ for all } \bs\psi \in B_{2\Lambda}^2.
\label{eq:forallphi2}
\end{equation}
Recall we assume $\partial U_i = \{\mu_i = 0\}$ for all $i=1,...,n$, with $\mu_i$ minimal, and $\mu = \mu_1\cdots\mu_n$. Note that $\cup_i \partial U_i = \{\mu=0\}$, and since we assumed the boundaries to be pairwise disjoint, $\mu$ is also the minimal polynomial. Define $\widetilde{\mu}_i = \mu/\mu_i$. Then by the product rule
\begin{equation}
\nabla \mu = \sum_{i=1}^n \widetilde{\mu}_i \nabla \mu_i. 
\label{eq:gradmu}
\end{equation}
Define the vector field
\begin{equation}
\bs\psi_j = \eta^*\,\widetilde{\mu}_j \nabla \mu_j~~\text{for all}~~j=1,...,n.
\label{eq:choice}
\end{equation}
Note the components of $\bs \psi_j$ are trigonometric polynomials in $B_{2\Lambda}$, hence $\bs \psi_j \in B^2_{2\Lambda}$ for all $j=1,\ldots,n$. Since $\widetilde{\mu}_j = 0$ on $\partial U_i$ for all $i\neq j$, substituting $\bs \psi_j$ into \eqref{eq:forallphi2} gives
\[
0 = \sum_{i=1}^n a_i \oint_{\partial U_i} |\eta|^2\, \widetilde{\mu}_j (\nabla \mu_j \cdot \mbf n) \, ds = 
a_j \oint_{\partial U_j}|\eta|^2\,\widetilde{\mu}_j (\nabla \mu_j \cdot \mbf n) \, ds = a_j \oint_{\partial U_j} |\eta|^2\, (\nabla \mu \cdot \mbf n) \, ds.
\]
where in the last step we used that $\widetilde{\mu}_j \nabla \mu_j = \nabla \mu$ on $\partial U_j$ by \eqref{eq:gradmu}.
Since we also assume each $U_i$ is connected, and the curves $\partial U_i$ do not intersect, we know $\mu$ has fixed sign on $U_i$. Therefore the outward normal on each $\partial U_j$ is given by $\mbf n = \pm\nabla \mu/|\nabla\mu|$, where the choice of sign is fixed for each $i$. Hence we have
\[
a_j \oint_{\partial U_j}|\eta|^2\,|\nabla \mu| = 0,\ \text{for all}\ j = 1,...,n,
\]
which implies $\eta$ vanishes on each $\partial U_j$. Since $\mu$ is the minimal polynomial, this is only possible if $\eta$ is a scalar multiple of $\mu$, which proves the claim in Theorem \ref{thm:unique2}.

The proof of Theorem \ref{thm:unique_3} is identical to the above, except we exchange the index set $2\Lambda$ with $\Lambda + \Lambda'$. This allows us to make the same choice of $\bs \psi_j$ as in \eqref{eq:choice}, which again implies $\eta$ vanishes on each $\partial U_j$. Hence by Prop.\ \ref{prop:minpoly} we have $\eta = \mu\, \gamma$, where $\gamma \in B_{\Lambda':\Lambda}$.

\subsection{Proof of Proposition \ref{thm:piecewisecst}}
Suppose $g\in L^1([0,1]^2)$ satisfies $\mu \nabla g = 0$ and let $U$ be any connected component of $\{\mu = 0\}^C$. Let $\bs\varphi$ be any arbitrary smooth test field with support $K$ strictly contained within $U$. Since $\mu = 0$ only on $\partial U$, and $\mu$ is smooth, the field $\bs \psi = \bs \varphi/\mu$ is smooth, and we have 
\[
\langle \mu \nabla g, \bs \psi \rangle = \langle \nabla g, \bs \varphi \rangle = 0.
\]
which shows the distribution $\nabla g$ restricted to $U$ is identically zero. We now show this implies $g$ restricted to $U$ is a constant function almost everywhere.

Let $\phi_\epsilon$ be any smooth approximation of the identity, i.e.\ for all $\epsilon > 0$ define $\phi_\epsilon(\mbf r) := \epsilon^2 \phi( \epsilon \mbf r)$ where $\phi$ is any smooth function satisfying $\phi\geq 0$, $\int \phi(\mbf r) d\mbf r= 1$, $supp(\phi) \subseteq \{|\mbf r| \leq 1\}$. Let $U'$ be any closed connected set inside $U$. Then there exists an $\epsilon'>0$ such that $dist(U',\overline{U}) < \epsilon'$. Fix $\epsilon < \epsilon'$. Since $\nabla g \ast \phi_\epsilon$ is a smooth function, $\nabla g|_{U} = 0$, and $supp(\phi_\epsilon)\subseteq U$ we have
\[
\nabla (g \ast \phi_\epsilon)(\mbf r) = (\nabla g \ast \phi_\epsilon)(\mbf r) = 0\quad\text{for all}\ \mbf r \in U'
\] 
which implies $g \ast \phi_\epsilon$ is constant on $U'$. By properties of approximations of identity, $g \ast \phi_\epsilon \rightarrow g$ pointwise almost everywhere as $\epsilon\rightarrow 0$, which shows $g$ is the limit of constant functions on $U'$, and therefore itself constant on $U'$. Since $U'\subseteq U$ was arbitrary, we may conclude $g$ restricted to $U$ is a constant function almost everywhere, proving the claim.

\subsection{Proof of Theorem \ref{thm:uniqueamplitudes}}
Suppose $g$ is another solution, and set $h = g-f$. We have
\begin{equation}
\label{eq:bv1_h} \mu \nabla h = 0,\quad\text{and}\quad \widehat{h}[\mbf k] = 0 \text{ for all }\mbf k \in {\Gamma \supseteq \Lambda}.
\end{equation}
The proof amounts to showing that the only feasible solution to \eqref{eq:bv1_h} is $h=0$.
By the first constraint and Prop.\ \ref{thm:piecewisecst}, we see that $h$ is almost everywhere a piecewise constant function: 
\begin{equation}
h = \sum_{i=1}^N b_i 1_{U_i},
\end{equation}
where the constants $b_i$ are unknown, and $U_i$ are the connected components of $\{\mu =0\}^C$. From $\widehat{h}[\mbf k] = 0$, $\mbf k \in \Gamma$, we also have $\widehat{\nabla h}[\mbf k] = 0$, $\mbf k \in \Gamma$, and so
\[
\widehat{\nabla h}[\mbf k] = \sum_i b_i \widehat{\nabla 1_{U_i}}[\mbf k] = \sum_i b_i \oint_{\partial U_i} e^{-j2\pi\mbf k \cdot \mbf r}\, \mbf n\,ds = 0,\text{ for all } \mbf k \in \Gamma.
\]
By linearity, this implies
\[
\sum_{i=1}^N b_i \oint_{\partial U_i} \bs\varphi \cdot \mbf n \, ds  = 0,
\]
for all vector fields $\bs \varphi \in B^2_{\Gamma}$. The proof now follows from the same argument in the proof to Theorem \ref{thm:unique2} (with $\eta = 1$): Using the same notation as in \eqref{eq:gradmu}, we may choose $\bs \varphi = \widetilde{\mu}_j \nabla \mu_j \in B^2_{\Lambda} \subseteq B^2_{\Gamma}$, which implies
\[
b_j \oint_{\partial U_j}|\nabla \mu| = 0,\ \text{for all}\ j = 1,\ldots,n,
\]
and so $b_i = 0$ for all $i=1,\ldots,N$, and hence $h=0$ proving the claim.

\subsection{Proof of Proposition \ref{prop:rank}}
Let $\mu$ be the minimal polynomial with coefficients $\mbf c = (c[\mbf k]: \mbf k\in \Lambda)$, and for every $\bs \ell\in \Lambda':\Lambda$ define the filter $\mbf c_{\bs \ell}$ supported in $\Lambda'$ by
\[
c_{\bs \ell}[\mbf k] = \begin{cases} 
c[\mbf k - \bs \ell] & \text{if } \mbf k-\bs\ell\in\Lambda\\
0 & \text{elsewhere},
\end{cases}
\]
that is, $\mbf c_{\bs \ell}$ is a shift of the minimal polynomial coefficients within the larger index set $\Lambda$. By the Fourier shift theorem, $\mbf c_{\bs \ell}$ are the Fourier coefficients of $e^{j2\pi\bs \ell\cdot\mbf r}\mu(\mbf r)$. Therefore, $\mbf c_{\bs \ell}$ is also an annihilating filter since the zero sets of $\mu(\mbf r)$ and $e^{j2\pi\bs \ell\cdot\mbf r}\mu(\mbf r)$ coincide. This implies $\mathcal{T}(\widehat{f})\mbf c_{\bs \ell} = 0$ for all $\bs \ell \in \Lambda':\Lambda$. Also, note that the vectors $\mbf c_{\bs \ell}$ are linearly independent because the location of their non-zero entries are distinct. Hence the nullspace of $\mathcal{T}(\widehat{f})$ has dimension at least $|\Lambda':\Lambda|$, which gives the desired rank bound.

Finally, supposing $f$ satisfies the conditions of Theorem \ref{thm:unique2}, and $\Gamma \supseteq 2\Lambda' + \Lambda$ so that $\Gamma\,{:}\,\Lambda' \supseteq \Lambda' + \Lambda$. Theorem \ref{thm:unique_3} shows that if $\mbf d$ is any null space vector, then $\mbf d$ are the Fourier coefficients of $\mu\, \gamma$, for some trigonometric polynomial $\gamma$ having coefficients in $\Lambda':\Lambda$, which shows that $\mbf d$ belongs to the span of the $\mbf c_{\bs \ell}$, proving the claim.

\subsection{Proof of Proposition \ref{prop:finitezeros}}
Let $\mbf r \in \{\mu = 0\}^C$. Then $\overline{\mu}(\mbf r) = 0$ if and only if $\gamma_i(\mbf r) = 0$ for all $i = 1,\ldots,R$, where $R =|\Lambda':\Lambda|$. 
We will show the set $P = \cap_{i=1}^R \{\gamma_i = 0\}$ is at most finite. Note that if $P$ is infinite, by B\'ezout's theorem, each $\gamma_i$ must have a common non-constant factor $\phi$, and so $\gamma_i = \rho_i\, \phi$, where each $\rho_i$ is a trigonometric polynomial with degree strictly smaller than $\gamma_i$. This means each $\rho_i \in B_{\Lambda'}$ has coefficients supported within some index set $\Lambda_1\subseteq \Lambda':\Lambda$ with $\Lambda_1\neq \Lambda':\Lambda$. Since the coefficient vectors $\mbf d_i$ are orthonormal, so are the $\mu_i$ as functions in $L^2([0,1]^2)$:
\[
\langle \mu_i , \mu_j \rangle = \int_{[0,1]^2} \rho_i(\mbf r) \overline{\rho_j(\mbf r)}\, |\phi(\mbf r) \mu(\mbf r)|^2\, d\mbf r = \delta_{i,j}.
\]
where $\delta_{i,j} = 0$ if $i\neq j$ and $\delta_{i,j} = 1$ if $i=j$. However, the above weighted integral also defines an inner product in which the $\{\rho_i\}_{i=1}^R$ are mutually orthogonal, and hence linearly independent. But this is impossible, since the $\rho_i$ are contained in a space of dimension less than $R$. Therefore the $\gamma_i$ have no common factors, which by B\`ezout's theorem implies the intersection of their zero sets is at most finite.

\subsection{Proof of Proposition \ref{prop:MUSIC}}
Note that $\mbf e_{\mbf{r}} = P_{\Lambda}\mathcal{F}\{\delta_{\mbf r}\}$, where $\delta_{r}$ is a Dirac delta centered at $\mbf r\in[0,1]^2$, and $\mathcal{F}$ is the (distributional) Fourier transform, and $P_{\Lambda}$ is projection onto the index set $\Lambda$. Since each column $\mbf d_i$ in $\mbf D$, is a filter supported in $\Lambda$, we trivially have $P_{\Lambda}(\mbf d_i) = \mbf d_i$. Therefore, by Parseval's theorem, 
\begin{align*}
\|\mbf D\mbf D^H \mbf e(\mbf r) \|^2 & = \sum_{i=1}^n |\langle \mbf e_{\mbf r}, \mbf d_i \rangle|^2 = \sum_{i=1}^n |\langle P_{\Lambda}\mathcal{F}\{\delta_{\mbf r}\},\mbf d_i \rangle|^2\\
& = \sum_{i=1}^n |\langle \mathcal{F}\{\delta_{\mbf r}\}, \mbf d_i \rangle|^2 = \sum_{i=1}^n |\langle \delta_{\mbf r},\mu_i\rangle|^2 = \sum_{i=1}^n |\mu_i(\mbf r)|^2.
\end{align*}

\section*{Acknowledgments}
The authors would like to thank Theirry Blu, Dustin Mixon, and Cynthia Vinzant for fruitful discussions about this project. Parts of this work were presented at the IEEE International Symposium on Biomedical Imaging (ISBI `15) held April 16–-19, 2015 in Brooklyn, New York, and at the International Conference on Sampling Theory and its Applications (SampTA `15) held May 25–-29, 2015 in Washington D.C.



\bibliographystyle{abbrv}
\bibliographystyle{siam}
\bibliography{root}

\begin{thebibliography}{10}

\bibitem{baraniuk2007compressive}
R.~G. Baraniuk.
\newblock Compressive sensing.
\newblock {\em Signal Processing Magazine, IEEE}, 24(4):118--121, July 2007.

\bibitem{blu2008sparse}
T.~Blu, P.-L. Dragotti, M.~Vetterli, P.~Marziliano, and L.~Coulot.
\newblock Sparse sampling of signal innovations.
\newblock {\em Signal Processing Magazine, IEEE}, 25(2):31--40, 2008.

\bibitem{burger2001level}
M.~Burger.
\newblock A level set method for inverse problems.
\newblock {\em Inverse problems}, 17(5):1327, 2001.

\bibitem{burger2005survey}
M.~Burger and S.~J. Osher.
\newblock A survey on level set methods for inverse problems and optimal
  design.
\newblock {\em European Journal of Applied Mathematics}, 16(02):263--301, 2005.

\bibitem{cadzow1988signal}
J.~A. Cadzow.
\newblock Signal enhancement-a composite property mapping algorithm.
\newblock {\em Acoustics, Speech and Signal Processing, IEEE Transactions on},
  36(1):49--62, 1988.

\bibitem{candes2013super}
E.~J. Cand{\`e}s and C.~Fernandez-Granda.
\newblock Super-resolution from noisy data.
\newblock {\em Journal of Fourier Analysis and Applications}, 19(6):1229--1254,
  2013.

\bibitem{candes2014towards}
E.~J. Cand{\`e}s and C.~Fernandez-Granda.
\newblock Towards a mathematical theory of super-resolution.
\newblock {\em Communications on Pure and Applied Mathematics}, 67(6):906--956,
  2014.

\bibitem{chan2001level}
T.~Chan and L.~Vese.
\newblock A level set algorithm for minimizing the {M}umford-{S}hah functional
  in image processing.
\newblock In {\em Variational and Level Set Methods in Computer Vision, 2001.
  Proceedings. IEEE Workshop on}, pages 161--168, 2001.

\bibitem{chartrand2009fast}
R.~Chartrand.
\newblock Fast algorithms for nonconvex compressive sensing: {MRI}
  reconstruction from very few data.
\newblock In {\em International Symposium on Biomedical Imaging: ISBI 2009},
  pages 262--265. IEEE, 2009.

\bibitem{chartrand2011frequency}
R.~Chartrand, E.~Y. Sidky, and X.~Pan.
\newblock Frequency extrapolation by nonconvex compressive sensing.
\newblock In {\em International Symposium on Biomedical Imaging: ISBI 2011},
  pages 1056--1060. IEEE, 2011.

\bibitem{chen20122d}
C.~Chen, P.~Marziliano, and A.~C. Kot.
\newblock {2D} finite rate of innovation reconstruction method for step edge
  and polygon signals in the presence of noise.
\newblock {\em Signal Processing, IEEE Transactions on}, 60(6):2851--2859,
  2012.

\bibitem{chen2014robust}
Y.~Chen and Y.~Chi.
\newblock Robust spectral compressed sensing via structured matrix completion.
\newblock {\em Information Theory, IEEE Transactions on}, 60(10):6576--6601,
  2014.

\bibitem{cheng2003review}
Q.~Cheng and Y.~Hua.
\newblock A review of parametric high-resolution methods.
\newblock In Y.~Hua, A.~Gershman, and Q.~Cheng, editors, {\em High-resolution
  and robust signal processing}. CRC Press, 2003.

\bibitem{condat2015cadzow}
L.~Condat and A.~Hirabayashi.
\newblock Cadzow denoising upgraded: A new projection method for the recovery
  of dirac pulses from noisy linear measurements.
\newblock {\em Sampling Theory in Signal and Image Processing}, 14(1):p--17,
  2015.

\bibitem{do2005contourlet}
M.~N. Do and M.~Vetterli.
\newblock The contourlet transform: an efficient directional multiresolution
  image representation.
\newblock {\em Image Processing, IEEE Transactions on}, 14(12):2091--2106,
  2005.

\bibitem{donoho2006compressed}
D.~L. Donoho.
\newblock Compressed sensing.
\newblock {\em Information Theory, IEEE Transactions on}, 52(4):1289--1306,
  2006.

\bibitem{dorn2006level}
O.~Dorn and D.~Lesselier.
\newblock Level set methods for inverse scattering.
\newblock {\em Inverse Problems}, 22(4):R67, 2006.

\bibitem{dragotti2007sampling}
P.~L. Dragotti, M.~Vetterli, and T.~Blu.
\newblock Sampling moments and reconstructing signals of finite rate of
  innovation: Shannon meets strang--fix.
\newblock {\em Signal Processing, IEEE Transactions on}, 55(5):1741--1757,
  2007.

\bibitem{edwards2012fourier}
R.~E. Edwards.
\newblock {\em Fourier series: a modern introduction}, volume~2.
\newblock Springer Science \& Business Media, 2012.

\bibitem{eslami2010robust}
R.~Eslami and M.~Jacob.
\newblock Robust reconstruction of {MRSI} data using a sparse spectral model
  and high resolution {MRI} priors.
\newblock {\em Medical Imaging, IEEE Transactions on}, 29(6):1297--1309, 2010.

\bibitem{fessler2010model}
J.~A. Fessler.
\newblock Model-based image reconstruction for {MRI}.
\newblock {\em Signal Processing Magazine, IEEE}, 27(4):81--89, 2010.

\bibitem{fulton1993introduction}
W.~Fulton.
\newblock {\em Introduction to toric varieties}.
\newblock Number 131. Princeton University Press, 1993.

\bibitem{gong2015promise}
E.~Gong, F.~Huang, K.~Ying, W.~Wu, S.~Wang, and C.~Yuan.
\newblock {PROMISE}: Parallel-imaging and compressed-sensing reconstruction of
  multicontrast imaging using sharable information.
\newblock {\em Magnetic Resonance in Medicine}, 73(2):523--535, 2015.

\bibitem{guerquin2012realistic}
M.~Guerquin-Kern, L.~Lejeune, K.~P. Pruessmann, and M.~Unser.
\newblock Realistic analytical phantoms for parallel magnetic resonance
  imaging.
\newblock {\em Medical Imaging, IEEE Transactions on}, 31(3):626--636, 2012.

\bibitem{haacke1989super}
E.~Haacke, Z.-P. Liang, and S.~Izen.
\newblock Superresolution reconstruction through object modeling and parameter
  estimation.
\newblock {\em Acoustics, Speech and Signal Processing, IEEE Transactions on},
  37(4):592--595, April 1989.

\bibitem{haacke1990image}
E.~M. Haacke, Z.-P. Liang, and F.~E. Boada.
\newblock Image reconstruction using {POCS}, model constraints, and linear
  prediction theory for the removal of phase, motion, and gibbs artifacts in
  magnetic resonance and ultrasound imaging.
\newblock {\em Optical Engineering}, 29(5):555--566, 1990.

\bibitem{haacke1989constrained}
E.~M. Haacke, Z.-P. Liang, and S.~H. Izen.
\newblock Constrained reconstruction: A superresolution, optimal
  signal-to-noise alternative to the fourier transform in magnetic resonance
  imaging.
\newblock {\em Medical Physics}, 16(3):388--397, 1989.

\bibitem{haldar2014low}
J.~P. Haldar.
\newblock Low-rank modeling of local {k}-space neighborhoods ({LORAKS}) for
  constrained {MRI}.
\newblock {\em Transactions on Medical Imaging}, 33(3):668--681, 2014.

\bibitem{haldar2008anatomically}
J.~P. Haldar, D.~Hernando, S.-K. Song, and Z.-P. Liang.
\newblock Anatomically constrained reconstruction from noisy data.
\newblock {\em Magnetic Resonance in Medicine}, 59(4):810--818, 2008.

\bibitem{hu1988slim}
X.~Hu, D.~N. Levin, P.~C. Lauterbur, and T.~Spraggins.
\newblock {SLIM}: Spectral localization by imaging.
\newblock {\em Magnetic Resonance in Medicine}, 8(3):314--322, 1988.

\bibitem{hua1992estimating}
Y.~Hua.
\newblock Estimating two-dimensional frequencies by matrix enhancement and
  matrix pencil.
\newblock {\em Signal Processing, IEEE Transactions on}, 40(9):2267--2280,
  1992.

\bibitem{Jacob:2006p2905}
M.~Jacob, Y.~Bresler, V.~Toronov, X.~Zhang, and A.~Webb.
\newblock Level set algorithm for the reconstruction of functional activation
  in near-infrared spectroscopic imaging.
\newblock {\em Journal of Biomedical Optics}, Jan 2006.

\bibitem{jacob2007improved}
M.~Jacob, X.~Zhu, A.~Ebel, N.~Schuff, and Z.-P. Liang.
\newblock Improved model-based magnetic resonance spectroscopic imaging.
\newblock {\em Medical Imaging, IEEE Transactions on}, 26(10):1305--1318, 2007.

\bibitem{jin2015general}
K.~H. Jin, D.~Lee, and J.~C. Ye.
\newblock A general framework for compressed sensing and parallel {MRI} using
  annihilating filter based low-rank {H}ankel matrix.
\newblock {\em arXiv preprint arXiv:1504.00532}, 2015.

\bibitem{khalidov2007bslim}
I.~Khalidov, D.~Van De~Ville, M.~Jacob, F.~Lazeyras, and M.~Unser.
\newblock {BSLIM}: Spectral localization by imaging with explicit field
  inhomogeneity compensation.
\newblock {\em Medical Imaging, IEEE Transactions on}, 26(7):990--1000, 2007.

\bibitem{lee2003smooth}
J.~M. Lee.
\newblock {\em Smooth manifolds}.
\newblock Springer, 2003.

\bibitem{liang1989high}
Z.-P. Liang, E.~Haacke, and C.~Thomas.
\newblock High-resolution inversion of finite {F}ourier transform data through
  a localised polynomial approximation.
\newblock {\em Inverse Problems}, 5(5):831, 1989.

\bibitem{luo2012mri}
J.~Luo, Y.~Zhu, W.~Li, P.~Croisille, and I.~E. Magnin.
\newblock {MRI} reconstruction from {2D} truncated {k}-space.
\newblock {\em Journal of Magnetic Resonance Imaging}, 35(5):1196--1206, 2012.

\bibitem{mallat1989theory}
S.~G. Mallat.
\newblock A theory for multiresolution signal decomposition: the wavelet
  representation.
\newblock {\em Pattern Analysis and Machine Intelligence, IEEE Transactions
  on}, 11(7):674--693, 1989.

\bibitem{maravic2005sampling}
I.~Maravic and M.~Vetterli.
\newblock Sampling and reconstruction of signals with finite rate of innovation
  in the presence of noise.
\newblock {\em Signal Processing, IEEE Transactions on}, 53(8):2788--2805,
  2005.

\bibitem{milanfar1995reconstructing}
P.~Milanfar, G.~C. Verghese, W.~C. Karl, and A.~S. Willsky.
\newblock Reconstructing polygons from moments with connections to array
  processing.
\newblock {\em Signal Processing, IEEE Transactions on}, 43(2):432--443, 1995.

\bibitem{mumford1988boundary}
D.~Mumford and J.~Shah.
\newblock Boundary detection by minimizing functionals.
\newblock {\em Image understanding}, pages 19--43, 1988.

\bibitem{mumford1989optimal}
D.~Mumford and J.~Shah.
\newblock Optimal approximations by piecewise smooth functions and associated
  variational problems.
\newblock {\em Communications on pure and applied mathematics}, 42(5):577--685,
  1989.

\bibitem{sampta2015}
G.~Ongie and M.~Jacob.
\newblock Recovery of piecewise smooth images from few fourier samples.
\newblock {\em Sampling Theory and Applications (SampTA)}, 2015.

\bibitem{isbi2015}
G.~Ongie and M.~Jacob.
\newblock Super-resolution {MRI} using finite rate of innovation.
\newblock {\em IEEE International Symposium on Biomedical Imaging: ISBI 2015.},
  2015.

\bibitem{pal2014grid}
P.~Pal and P.~Vaidyanathan.
\newblock A grid-less approach to underdetermined direction of arrival
  estimation via low rank matrix denoising.
\newblock {\em Signal Processing Letters, IEEE}, 21(6):737--741, 2014.

\bibitem{pan2013sampling}
H.~Pan, T.~Blu, and P.~L. Dragotti.
\newblock Sampling curves with finite rate of innovation.
\newblock {\em Signal Processing, IEEE Transactions on}, 62(2), 2014.

\bibitem{schmidt1986multiple}
R.~O. Schmidt.
\newblock Multiple emitter location and signal parameter estimation.
\newblock {\em Antennas and Propagation, IEEE Transactions on}, 34(3):276--280,
  1986.

\bibitem{schweiger20103d}
M.~Schweiger, O.~Dorn, A.~Zacharopoulos, I.~Nissila, and S.~Arridge.
\newblock {3D} level set reconstruction of model and experimental data in
  diffuse optical tomography.
\newblock {\em Optics express}, 18(1):150--164, 2010.

\bibitem{shafarevich1994basic}
I.~R. Shafarevich.
\newblock {\em Basic algebraic geometry. 1. Varieties in projective space.}
\newblock Springer-Verlag, Berlin, 1994.

\bibitem{sake}
P.~J. Shin, P.~E. Larson, M.~A. Ohliger, M.~Elad, J.~M. Pauly, D.~B. Vigneron,
  and M.~Lustig.
\newblock Calibrationless parallel imaging reconstruction based on structured
  low-rank matrix completion.
\newblock {\em Magnetic Resonance in Medicine}, 2013.

\bibitem{shukla2007sampling}
P.~Shukla and P.~L. Dragotti.
\newblock Sampling schemes for multidimensional signals with finite rate of
  innovation.
\newblock {\em Signal Processing, IEEE Transactions on}, 55(7):3670--3686,
  2007.

\bibitem{solymosi2012incidence}
J.~Solymosi and T.~Tao.
\newblock An incidence theorem in higher dimensions.
\newblock {\em Discrete \& Computational Geometry}, 48(2):255--280, 2012.

\bibitem{starck2002curvelet}
J.-L. Starck, E.~J. Cand{\`e}s, and D.~L. Donoho.
\newblock The curvelet transform for image denoising.
\newblock {\em Image Processing, IEEE Transactions on}, 11(6):670--684, 2002.

\bibitem{stoica1997introduction}
P.~Stoica and R.~L. Moses.
\newblock {\em Introduction to spectral analysis}, volume~1.
\newblock Prentice hall Upper Saddle River, NJ, 1997.

\bibitem{storath2014jump}
M.~Storath, A.~Weinmann, and L.~Demaret.
\newblock Jump-sparse and sparse recovery using potts functionals.
\newblock 2014.

\bibitem{strichartz2003guide}
R.~S. Strichartz.
\newblock {\em A guide to distribution theory and Fourier transforms}.
\newblock World Scientific, 2003.

\bibitem{sturmfels1998polynomial}
B.~Sturmfels.
\newblock Polynomial equations and convex polytopes.
\newblock {\em The American Mathematical Monthly}, 105(10):907--922, 1998.

\bibitem{tang2013near}
G.~Tang, B.~N. Bhaskar, and B.~Recht.
\newblock Near minimax line spectral estimation.
\newblock In {\em Information Sciences and Systems (CISS), 2013 47th Annual
  Conference on}, pages 1--6. IEEE, 2013.

\bibitem{tsai2001curve}
A.~Tsai, A.~Yezzi~Jr, and A.~S. Willsky.
\newblock Curve evolution implementation of the {M}umford-{S}hah functional for
  image segmentation, denoising, interpolation, and magnification.
\newblock {\em Image Processing, IEEE Transactions on}, 10(8):1169--1186, 2001.

\bibitem{vaswani2010modified}
N.~Vaswani and W.~Lu.
\newblock Modified-{CS}: Modifying compressive sensing for problems with
  partially known support.
\newblock {\em Signal Processing, IEEE Transactions on}, 58(9):4595--4607,
  2010.

\bibitem{vese2002multiphase}
L.~A. Vese and T.~F. Chan.
\newblock A multiphase level set framework for image segmentation using the
  {M}umford and {S}hah model.
\newblock {\em International journal of computer vision}, 50(3):271--293, 2002.

\bibitem{vetterli2002sampling}
M.~Vetterli, P.~Marziliano, and T.~Blu.
\newblock Sampling signals with finite rate of innovation.
\newblock {\em Signal Processing, IEEE Transactions on}, 50(6):1417--1428,
  2002.

\bibitem{wang2004image}
Z.~Wang, A.~C. Bovik, H.~R. Sheikh, and E.~P. Simoncelli.
\newblock Image quality assessment: from error visibility to structural
  similarity.
\newblock {\em Image Processing, IEEE Transactions on}, 13(4):600--612, 2004.

\bibitem{xiang2005accelerating}
Q.-S. Xiang.
\newblock Accelerating {MRI} by skipped phase encoding and edge deghosting
  ({SPEED}).
\newblock {\em Magnetic Resonance in Medicine}, 53(5):1112--1117, 2005.

\bibitem{xu2014precise}
W.~Xu, J.-F. Cai, K.~V. Mishra, M.~Cho, and A.~Kruger.
\newblock Precise semidefinite programming formulation of atomic norm
  minimization for recovering d-dimensional ($d \geq 2$) off-the-grid
  frequencies.
\newblock In {\em Information Theory and Applications Workshop (ITA), 2014}.
  IEEE, 2014.

\bibitem{ye2002self}
J.~C. Ye, Y.~Bresler, and P.~Moulin.
\newblock A self-referencing level-set method for image reconstruction from
  sparse {F}ourier samples.
\newblock {\em International Journal of Computer Vision}, 50(3):253--270, 2002.

\bibitem{zhang2013coil}
T.~Zhang, J.~M. Pauly, S.~S. Vasanawala, and M.~Lustig.
\newblock Coil compression for accelerated imaging with {C}artesian sampling.
\newblock {\em Magnetic Resonance in Medicine}, 69(2):571--582, 2013.

\bibitem{zhu2008efficient}
M.~Zhu and T.~Chan.
\newblock An efficient primal-dual hybrid gradient algorithm for total
  variation image restoration.
\newblock {\em UCLA CAM Report}, pages 08--34, 2008.

\end{thebibliography}
%

\end{document}